\colorlet{linkequation}{blue}
\crefname{equation}{equation}{equations}
\Crefname{equation}{Equation}{Equations}
\renewcommand{\P}{\mathbb{P}}
\newcommand{\E}{\mathbb{E}}
\newcommand{\Var}{{\rm Var}}
\newcommand{\cN}{\mathcal{N}}
\newcommand{\poly}{\textnormal{poly}}
\newcommand{\R}{\mathbb{R}}
\renewcommand{\S}{\mathbb{S}}
\newcommand{\eps}{\varepsilon}
\newcommand{\<}{\langle}
\renewcommand{\>}{\rangle}
\newcommand{\op}{{\rm op}}
\def\sT{{\mathsf T}}
\def\bzero{{\boldsymbol 0}}
\DeclareMathOperator*{\argmin}{arg\,min}
\DeclareMathOperator*{\argmax}{arg\,max}
\def\simiid{\sim_{iid}}
\newcommand{\norm}[1]{\lVert #1 \rVert}
\newtheoremstyle{customplain}
  {\topsep}        
  {\topsep}        
  {\itshape}       
  {}               
  {\bfseries}      
  {.}              
  {.5em}           
  {}   
\theoremstyle{customplain}
\newtheorem{theorem}{Theorem}
\newtheorem*{theorem*}{Theorem}
\newtheorem{lemma}{Lemma}
\newtheorem{assumption}{Assumption}
\newtheorem{definition}{Definition}
\newtheorem{fact}{Fact}
\newtheorem{proposition}{Proposition}
\newtheorem{claim}{Claim}
\newtheorem{corollary}{Corollary}
\newtheorem{observation}{Observation}
\newtheorem{requirement}{Requirement}
\newtheoremstyle{uprightremark}      
  {\topsep}                          
  {\topsep}                          
  {\normalfont}                      
  {}                                 
  {\bfseries}                        
  {.}                                
  {.5em}                             
  {}                                 
\theoremstyle{uprightremark}
\newtheorem{remark}{Remark}[section]
\DeclareSymbolFont{rsfs}{U}{rsfs}{m}{n}
\DeclareSymbolFontAlphabet{\mathscrsfs}{rsfs}
\def\bA{{\boldsymbol A}}
\def\bB{{\boldsymbol B}}
\def\bE{{\boldsymbol E}}
\def\bH{{\boldsymbol H}}
\def\bI{\mathbf{I}}
\def\bM{{\boldsymbol M}}
\def\bO{{\boldsymbol O}}
\def\bP{{\boldsymbol P}}
\def\bR{{\boldsymbol R}}
\def\bS{{\boldsymbol S}}
\def\bT{{\boldsymbol T}}
\def\bU{{\boldsymbol U}}
\def\bW{{\boldsymbol W}}
\def\bX{{\boldsymbol X}}
\def\bY{{\boldsymbol Y}}
\def\bZ{{\boldsymbol Z}}
\def\be{{\boldsymbol e}}
\def\bu{{\boldsymbol u}}
\def\bv{{\boldsymbol v}}
\def\bw{{\boldsymbol w}}
\def\bx{{\boldsymbol x}}
\def\by{{\boldsymbol y}}
\def\bz{{\boldsymbol z}}
\def\bDelta{{\boldsymbol \Delta}}
\def\cR{\mathcal{R}}
\def\sk{{\rm sk}}
\def\de{{\rm d}}
\def\de{{\rm d}}
\def\Unif{{\rm Unif}}
\def\cO{{\mathcal O}}
\def\cP{{\mathcal P}}
\def\cT{{\mathcal T}}
\def\cC{{\mathcal C}}
\def\cQ{{\mathcal Q}}
\def\cL{{\mathcal L}}
\def\cE{{\mathcal E}}
\def\cS{{\mathcal S}}
\def\cI{{\mathcal I}}
\def\cO{{\mathcal O}}
\def\cP{{\mathcal P}}
\def\cT{{\mathcal T}}
\def\cH{{\mathcal H}}
\def\cA{{\mathcal A}}
\def\sM{{\mathsf M}}
\def\Unif{{\sf Unif}}
\def\normal{{\sf N}}
\def\proj{{\mathsf P}}
\def\sM{{\sf M}}
\def\reals{{\mathbb R}}
\def\naturals{{\mathbb N}}
\def\normal{{\sf N}}
\def\proj{{\mathsf P}}
\def\sM{{\mathsf M}}
\def\Unif{{\sf Unif}}
\def\normal{{\sf N}}
\def\proj{{\mathsf P}}
\def\sM{{\sf M}}
\def\reals{{\mathbb R}}
\def\naturals{{\mathbb N}}
\def\proj{{\mathsf P}}
\def\cE{{\mathcal E}}
\def\cS{{\mathcal S}}
\def\cI{{\mathcal I}}
\def\He{{\rm He}}
\def\de{{\rm d}}
\def\Unif{{\rm Unif}}
\def\cE{{\mathcal E}}
\def\next{{\scriptscriptstyle \textnormal{next}}}
\def\normal{{\sf N}}
\def\bDelta{{\boldsymbol \Delta}}
\def\bA{{\boldsymbol A}}
\def\cT{{\mathcal T}}
\def\bP{{\boldsymbol P}}
\def\bS{{\boldsymbol S}}
\def\bO{{\boldsymbol O}}
\def\bR{{\boldsymbol R}}
\def\sB{\mathsf B}
\def\cY{\mathcal{Y}}
\def\obZ{\overline{\bZ}}
\def\Sym{\mathfrak{S}}
\def\sB{\mathsf B}
\def\bs{{\boldsymbol s}}
\def\sKL{\mathsf{KL}}
\def\SQ{{\sf SQ}}
\def\CSQ{{\sf CSQ}}
\def\sP{{\sf P}}
\def\sM{{\sf M}}
\def\QSQ{{\cQ}\text{-}\SQ}
\def\cK{\mathcal{K}}
\def\sLDP{\mathsf{LDP}}
\def\tbz{\Tilde{\bz}}
\def\sk{\mathsf{k}}
\def\sI{\mathsf{I}}
\def\sm{\mathsf{m}}
\def\sl{\mathsf{l}}
\def\frL{\mathfrak{L}}
\def\sQ{\mathsf{Q}}
\def\Mat{\textbf{Mat}}
\newcommand{\Span}{\text{span}}
\def\rSym{\mathrm{Sym}}
\title{Learning single-index models via harmonic decomposition}
\author[1]{Nirmit Joshi}
\author[2,3]{\;Hugo Koubbi}
\author[2]{\;Theodor Misiakiewicz}
\author[1]{\; Nathan Srebro}
\affil[1]{\small 
Toyota Technological Institute at Chicago}
\affil[2]{\small Department of Statistics and Data Science, Yale University}
\affil[3]{\small Centre Borelli, ENS Paris-Saclay}
\date{\today}
\begin{document}

\maketitle

\begin{abstract}
We study the problem of learning single-index models, where the label $y \in \mathbb{R}$ depends on the input $\boldsymbol{x} \in \mathbb{R}^d$ only through an unknown one-dimensional projection $\langle \boldsymbol{w}_*,\boldsymbol{x}\rangle$. Prior work has shown that under Gaussian inputs, the statistical and computational complexity of recovering $\boldsymbol{w}_*$ is governed by the Hermite expansion of the link function. In this paper, we propose a new perspective: we argue that \textit{spherical harmonics}---rather than \textit{Hermite polynomials}---provide the natural basis for this problem, as they capture its intrinsic \textit{rotational symmetry}. Building on this insight, we characterize the complexity of learning single-index models under arbitrary spherically symmetric input distributions. We introduce two families of estimators---based on tensor unfolding and online SGD---that respectively achieve either  optimal sample complexity or optimal runtime, and argue that estimators achieving both may not exist in general. When specialized to Gaussian inputs, our theory not only recovers and clarifies existing results but also reveals new phenomena that had previously been overlooked. 
\end{abstract}

\tableofcontents

\section{Introduction}

Single-index models (SIMs)---also known as \textit{generalized linear models}---are among the most widely studied models in statistics \cite{box1964analysis,mccullagh1984generalized,ichimura1993semiparametric,dudeja2018learning}. They generalize linear regression by introducing nonlinearity through a one-dimensional projection of the input. Due to their simplicity and flexibility, SIMs have become foundational in both (semi)parametric statistics \cite{hardle1989investigating,li1991sliced,dalalyan2008new} and machine learning \cite{kalai2009isotron,kakade2011efficient,bruna2025survey}. 
In recent years, SIMs have also emerged as prototypical models for exploring several key phenomena in modern high-dimensional learning, including: (1) \textit{Statistical-to-computational gaps}, with close ties to problems such as phase retrieval \cite{barbier2019optimal,mondelli2018fundamental,lu2020phase,maillard2020phase} and tensor PCA \cite{montanari2014statistical,damian2024smoothing}; (2) \textit{Non-convex optimization}, including multi-phase dynamics \cite{tan2023online,arous2021online,veiga2022phase,berthier2024learning} and landscape concentration \cite{mei2018landscape}; and (3) \textit{Representation learning} in neural networks trained via gradient descent, where SIMs offer a simplified yet informative setting for studying feature learning \cite{mousavi2022neural,bietti2022learning,veiga2022phase,damian2022neural,dandi2023two,arnaboldi2024online}.

Spurred by this growing interest, a recent line of work has investigated the fundamental limits of learning SIMs in high dimensions under Gaussian assumption \cite{barbier2019optimal,mondelli2018fundamental,lu2020phase,arous2021online,damian2024smoothing,damian2024computational,bruna2025survey}. In this setting, referred to as the \textit{Gaussian single-index model}, one observes i.i.d.~samples $(y_i,\bx_i) \sim \P_{\bw_*}$, where
\begin{equation}\label{eq:intro_Gaussian_SIM}
(y,\bx) \sim \P_{\bw_*}: \quad \bx \sim \normal (\bzero , \bI_d) \quad \text{and} \quad y|\bx \sim \rho ( \cdot | \<\bw_*,\bx\>),
\end{equation}
for an unknown unit vector $\bw_* \in \S^{d-1}$ and a fixed link distribution $\rho \in \cP (\cY \times \R)$, modeling the pair $(Y,G)$ with $G \sim \normal (0,1)$. Thus, the label $y$ depends only on the one-dimensional projection $\< \bw_*,\bx\>$ of the input. The goal is to recover the latent direction $\bw_*$ from these observations.

In a remarkable work, Damian et al.~\cite{damian2024computational} provided a sharp characterization of the statistical and computational complexity of learning in this model. Their analysis relies on expanding the link distribution $\rho$ in the orthonormal basis of Hermite polynomials $\{ \He_k \}_{k\geq 0}$; that is, $\E [ \He_k (G) \He_j (G) ] = \delta_{kj}$ for $G \sim \normal (0,1)$. They defined\footnote{An earlier notion---the \textit{information exponent}---was proposed in \cite{dudeja2018learning,arous2021online}. See Appendix \ref{app:CSQ_info_exponent} for discussion.} the \textit{generative exponent} (GE) of $\rho$ as
\begin{equation}\label{eq:intro_generative_exponent}
    \sk_\star (\rho) = \argmin \{ k \geq 1 \; : \; \| \zeta_k \|_{L^2 (\rho)} > 0\;\;\; \text{where} \;\;\; \zeta_k (Y) := \E_{\rho} [ \He_k (G) | Y ] \},
\end{equation}
and showed that the optimal sample size $\sm$ and runtime $\sT$ for recovering $\bw_*$ from data \eqref{eq:intro_Gaussian_SIM} scales as\footnote{Throughout, $\Tilde \Theta_d (\cdot)$ hides polylogarithmic factors in $d$.} (writing $\sk_\star = \sk_\star (\rho)$ and assuming $\sk_\star >1$ for simplicity)
\begin{equation}\label{eq:intro_Gaussian_tight_complexities}
    \sm = \Theta_d (d^{\sk_\star /2}),\qquad \sT = \widetilde \Theta_d (d^{\sk_\star/2+1}).
\end{equation}
The sample complexity is optimal among statistical query ($\SQ$) and low-degree polynomial ($\sLDP$) algorithms (up to some additional sample-runtime trade-offs, see Remark \ref{rmk:trade-offs-additional}), while the runtime is necessary simply to process $\sm$ samples in $d$ dimensions.

Several works have developed algorithms that progressively closed the gap to these optimal rates. In a seminal contribution, Ben Arous et al.~\cite{arous2021online} analyzed online stochastic gradient descent (SGD) on the single neuron model $\He_{\sk_\star} (\<\bw,\bx\>)$, and showed it recovers $\bw_*$ with suboptimal
$\sm = \widetilde \Theta_d (d^{\sk_\star - 1})$ and $\sT = \widetilde \Theta_d (d^{\sk_\star})$. To close this gap, Damian et al.~\cite{damian2024smoothing}  proposed a smoothing-based modification of SGD inspired by the tensor PCA literature \cite{biroli2020iron}, which locally averages the loss landscape and achieves near-optimal $\sm = \widetilde \Theta_d (d^{\sk_\star/2})$ and $\sT = \widetilde \Theta_d  (d^{\sk_\star/2 +1})$. Finally, the polylogarithmic factor in sample complexity was removed by Damian et al.~\cite{damian2024computational}  via a partial trace estimator---again inspired by tensor PCA \cite{hopkins2016fast}---which achieves $\sm = \Theta_d (d^{\sk_\star/2})$ and $\sT = \Theta_d  (d^{\sk_\star/2 +1}\log(d))$, thereby matching the optimal rates in \eqref{eq:intro_Gaussian_tight_complexities}.

While these results yield a sharp characterization of learning Gaussian SIMs in high dimensions, several conceptual gaps remain:
\begin{quote}
    \textit{Why is the vanilla SGD algorithm suboptimal, with runtime $d^{\sk_\star}$ instead of the optimal $d^{\sk_\star/2+1}$? Why do methods such as landscape smoothing and partial trace estimators---both borrowed from the tensor PCA literature---achieve optimal complexity\footnote{Let us emphasize here that these algorithms fail to achieve optimal complexity for (slightly) more general SIMs (see Section \ref{sec:smoothing_partial_trace}). Thus an analogy to tensor PCA is not enough to explain their success in this setting.}? And what role does the Gaussian assumption play in these results? }
\end{quote}
In this paper, we propose simple---and perhaps surprising---answers to these questions. Our key observation is that the complexity of learning SIMs is governed not by Gaussianity itself, but by the problem's \emph{rotational symmetry}. Specifically, the family $\{\P_\bw : \bw \in \S^{d-1}\}$ consists of all pushforwards under orthogonal transformations of the input, suggesting that optimal algorithms should respect this symmetry---that is, be equivariant under the action of the orthogonal group $\cO_d$.

This symmetry-based perspective naturally leads to \textit{spherical harmonics}---which arise as irreducible representations of $\cO_d$---as the appropriate basis for this problem, instead of Hermite polynomials. Adopting this basis not only clarifies the above questions, but also extends the theory beyond the Gaussian setting to arbitrary spherically symmetric distributions.

\subsection{Summary of main results}

In this paper, we characterize the sample and computational complexity of learning single-index models under general spherically symmetric input distributions. Let $\mu \in \cP(\R^d)$ be a distribution invariant under orthogonal transformations, i.e., $\bR_\# \mu = \mu$ for all $\bR \in \cO_d$. Such distributions admit a polar decomposition $\bx = r \bz$, where the radius $r = \|\bx\|_2 \sim \mu_r$ is independent of the direction $\bz = \bx / \|\bx \|_2 \sim \tau_d := \Unif(\S^{d-1})$. We define a natural generalization of Gaussian SIMs \eqref{eq:intro_Gaussian_SIM}, which we call \textit{spherical single-index models}, specified by a joint distribution $\nu_d \in \cP(\R^3)$:
\begin{equation}\label{eq:intro_joint_spherical_SIM}
    (Y,R,Z) \sim \nu_d : \quad R \sim \mu_r\;\; \perp \;\; Z \sim \tau_{d,1} \quad \text{and} \quad Y|(R,Z) \sim \nu_d (\cdot | R,Z),
\end{equation}
where $\tau_{d,1}$ is the distribution of the first coordinate of $\bz \sim \tau_d$. Samples are drawn according to:
\begin{equation}\label{eq:intro_spherical_SIM}
    (y,\bx) \sim \P_{\bw_*}: \quad \bx = (r,\bz) \sim \mu= \mu_r \otimes \tau_d  \quad \text{and} \quad y|(r,\bz) \sim \nu_d ( \cdot | r, \<\bw_*,\bz\>),
\end{equation}
for an unknown unit vector $\bw_* \in \S^{d-1}$. Thus, the label $y$ may now depend on both $(r, \<\bw_*,\bz\>)$, rather than only on the scalar projection $\<\bw_*, \bx\>$. Unlike the Gaussian case, the conditional distribution $\nu_d(\cdot |R,Z)$ is allowed to depend on the ambient dimension $d$: our learning guarantees will hold for fixed $\nu_d$, with explicit (up to universal constants), non-asymptotic bounds.

We now summarize our main results on estimating $\bw_*$ from i.i.d.~samples drawn from the spherical single-index model \eqref{eq:intro_spherical_SIM}:

\paragraph*{Harmonic decomposition and lower bounds.} To characterize the complexity of learning in this setting, we exploit the decomposition of $L^2(\S^{d-1})$ into harmonic subspaces:
\begin{equation}\label{eq:intro_harmonic_decomposition_L2}
    L^2 ( \S^{d-1} ) = \bigoplus_{\ell = 0}^\infty V_{d,\ell}, \qquad n_{d,\ell} = \dim (V_{d,\ell}) = \Theta_d (d^\ell),
\end{equation}
where $V_{d,\ell}$ denotes the space of degree-$\ell$ spherical harmonics. For each $\ell \geq 1$, we define the $\ell$-th Gegenbauer coefficient of $\nu_d$ to be
\begin{equation}
\xi_{d,\ell} (Y,R) := \E_{\nu_{d}} [Q_\ell (Z) | Y,R],
\end{equation}
where $Q_\ell \in V_{d,\ell}$ is the (normalized) degree-$\ell$ Gegenbauer polynomial in $L^2([-1,1],\tau_{d,1})$, with $\E_{\tau_{d,1}} [Q_\ell (Z)Q_k(Z)] = \delta_{kl}$.   We establish the following lower bounds on the sample complexity $\sm$ and runtime $\sT$ for recovering $\bw_*$ using the low-degree polynomial ($\sLDP$) and  statistical query ($\SQ$) frameworks:
\begin{equation}\label{eq:intro_LB}
\sm \gtrsim \left\{ d \vee \frac{d^{1/2}}{\| \xi_{d,1} \|_{L^2}^2} \right\} \;\wedge \; \inf_{\ell \geq 2} \;\frac{d^{\ell/2} }{\| \xi_{d,\ell} \|_{L^2}^2},  \qquad
\sT \gtrsim  \left\{ d^2 \vee \frac{d^{3/2}}{\| \xi_{d,1} \|_{L^2}^2} \right\} \;\wedge \; \inf_{\ell \geq 2} \;\frac{d^{\ell} }{\| \xi_{d,\ell} \|_{L^2}^2}. 
\end{equation}
These bounds effectively decouple across irreducible subspaces: each term $\ell \geq 1$ in the infimum corresponds to a lower bound for learning spherical SIMs using estimators restricted to the harmonic subspace $V_{d,\ell}$, with matching upper bounds summarized in Table \ref{tab:spherical_SIM}. Note that $\|  \xi_{d,\ell} \|_{L^2} \leq 1$ and can decay with $d$: thus, these lower bounds capture the competition between the dimensionality $\Theta_d(d^{\ell})$ of $V_{d,\ell}$ and the `signal strength' $\| \xi_{d,\ell} \|_{L^2}^2$ it carries about $\P_{\nu_d,\bw_*}$.

\begin{table}[t]
\center
\begin{tabular}{ |>{\centering\arraybackslash}p{2cm}||>{\centering\arraybackslash}p{6cm}|>{\centering\arraybackslash}p{6cm}| }
 \hline
 Subspace $V_{d,\ell}$ & Sample optimal & Runtime optimal \\
\hline
& \multicolumn{2}{c|}{\centering Spectral algorithm } \\
$\ell = 1$ & \multicolumn{2}{c|}{\parbox[c][3.5em][c]{12cm}{
 \vspace{0pt}
 \[
 \sm \asymp d \vee \frac{d^{1/2}}{\| \xi_{d,1} \|_{L^2}^2}, \qquad \sT \asymp d^2 \vee \frac{d^{3/2}}{\| \xi_{d,1} \|_{L^2}^2},
 \]
  \vspace{-5pt}}}\\
  $\ell = 2$ &  \multicolumn{2}{c|}{\parbox[c][3.5em][c]{12cm}{
 \vspace{0pt}
 \[
 \sm \asymp \frac{d}{\| \xi_{d,2} \|_{L^2}^2}, \qquad \sT \asymp \frac{d^{2}}{\| \xi_{d,2} \|_{L^2}^2} \log(d).
 \]
  \vspace{0pt}}} \\
 \hline
 
 $\ell \geq 3$ & \parbox[c][6em][c]{6cm}{\centering \vspace{-25pt} 
 Harmonic tensor unfolding
 \vspace{-4pt}
 \[
 \sm \asymp \frac{d^{\ell/2} }{\| \xi_{d,\ell} \|_{L^2}^2}, \quad \sT \asymp \frac{d^{\ell + 1} }{\| \xi_{d,\ell} \|_{L^2}^2}\log(d).
 \] 
  \vspace{-30pt}} & \parbox[c][6em][c]{6cm}{\centering \vspace{-23pt}
  Online SGD
 \vspace{-2pt}
 \[
 \begin{aligned}
 \sm \asymp \frac{d^{\ell - 1}}{\| \xi_{d,\ell} \|_{L^2}^2}, \quad \sT \asymp \frac{d^{\ell}}{\| \xi_{d,\ell} \|_{L^2}^2}.
 \end{aligned}
 \]
  \vspace{-30pt}}\\
 \hline

\end{tabular}
\vspace{5pt}
\caption{Summary of algorithms for learning spherical SIMs on each irreducible subspace $V_{d,\ell}$, with their sample complexity $\sm$ and runtime $\sT$. Here, the notation $\asymp$ hides constants that depend on $\ell$ and assumptions on $\nu_d$. The estimator in the left (resp.~right) column matches the optimal sample complexity (resp.~optimal runtime) predicted by the~$\sLDP$ (resp. $\SQ$) lower bound~\eqref{eq:intro_LB}. See Section \ref{sec:spherical_SIM} for details and formal statements.}
\label{tab:spherical_SIM}
\end{table}

\paragraph*{Optimal algorithms and trade-offs.} For each harmonic subspace $V_{d,\ell}$, we construct: (1) a sample-optimal\footnote{Throughout the paper, we refer to \emph{sample-optimal} as the optimal conjectural sample complexity for polynomial time algorithms (see Remark \ref{rmk:trade-offs-additional}), and refer to the \textit{information-theoretic optimal} sample complexity otherwise.} estimator based either on spectral methods for $\ell\in \{1,2\}$ (that is also runtime (near-)optimal) or on tensor-unfolding of reproducing harmonic operators for $\ell\geq 3$; and (2) a runtime-optimal estimator based on online SGD for $\ell \geq 3$. Their complexities are summarized in Table~\ref{tab:spherical_SIM}.

This leads to a simple strategy for learning spherical SIMs~\eqref{eq:intro_spherical_SIM}:  identify $\sl_{\sm,\star}$ (sample-optimal) or $\sl_{\sT,\star}$ (runtime-optimal) as the degree minimizing the corresponding term in the lower bound~\eqref{eq:intro_LB}, and apply the matching algorithm from Table \ref{tab:spherical_SIM}. Note that we always have $\sl_{\sm,\star} \geq \sl_{\sT,\star}$. If $\sl_{\sm,\star} = \sl_{\sT,\star} \in \{1,2\}$, the spectral algorithm achieves both optimal sample and runtime complexity. If $\sl_{\sm,\star} = \sl_{\sT,\star} >2$, it remains open whether a single estimator can achieve both\footnote{In fact, we show that for $\ell$ even, harmonic tensor unfolding achieves both optimal sample and runtime complexity, with a potential additional $\log(d)$ factor in sample complexity.}. More generally, one can construct distributions for which $\sl_{\sm,\star} \gg \sl_{\sT,\star}$, suggesting that \textit{no algorithm can simultaneously achieve optimal sample and runtime complexity in these cases}. This stands in sharp contrast to Gaussian SIMs, where both complexities are always jointly achievable. Thus,
\begin{center}
   \textit{Additional sample-runtime trade-offs appear when learning SIMs beyond the Gaussian setting.} 
\end{center}

\paragraph*{The case of Gaussian inputs.} We now specialize our results to the Gaussian single-index model \eqref{eq:intro_Gaussian_SIM}, where $\mu = \normal (0,\bI_d)$ and $\nu_d (Y,R,Z) = \rho (Y, R\cdot Z)$ with generative exponent $\sk_\star>1$. This yields a particularly transparent picture:
\begin{itemize}
\item[(1)] The optimal degrees $\sl_{\sm,\star} = \sl_{\sT,\star}$ are always either $1$ (if $\sk_\star$ is odd) or $2$ (if $\sk_\star$ is even), with the spectral algorithm from Table \ref{tab:spherical_SIM} achieving both optimal sample and runtime complexity \eqref{eq:intro_Gaussian_tight_complexities}.  Thus, \textit{for any Gaussian SIMs, optimal algorithms lie in the harmonic subspaces $V_{d,1}$ and $V_{d,2}$, corresponding to degree-$1$ or $2$ spherical harmonics in $\bz = \bx/\|\bx\|_2$}. 

\item[(2)] The SGD algorithm of \cite{arous2021online} is dominated by the high-frequency harmonics $V_{d,\sk_\star}$, while  smoothing \cite{damian2024smoothing} reweights the loss landscape toward low-frequency harmonics $V_{d,1}$ or $V_{d,2}$. The partial trace estimator~\cite{damian2024computational} explicitly projects onto them. \textit{Both methods achieve optimal complexity by effectively exploiting these low-frequency components.}

\item[(3)] We provide an alternative perspective for understanding the suboptimality of SGD: its optimization dynamics remains essentially unchanged when $\bx$ is replaced by its direction $\bz$, implying it does not exploit the radial component $r = \| \bx \|_2$. We show that algorithms that ignore $r$ must incur a runtime complexity of $\Omega_d (d^{\sk_\star})$. In this sense, SGD is runtime-optimal among methods that rely solely on the directional component.  \textit{To achieve the optimal runtime $\Theta_d(d^{\sk_\star /2+1})$, one must exploit the radial component---even though it carries no information about $\bw_*$ and asymptotically concentrates around $r/\sqrt{d} \to 1$.}
\end{itemize}
These results are summarized in Table \ref{tab:intro_Gaussian_norm}.

The remainder of the paper is organized as follows. Section~\ref{sec:background} reviews relevant background and introduces key definitions. In Section~\ref{sec:spherical_SIM}, we present our general lower and upper bounds for learning spherical single-index models (Table \ref{tab:spherical_SIM}). These results are then specialized to the Gaussian setting in Section~\ref{sec:specializing_Gaussian}, where we recover and refine existing results (Table~\ref{tab:intro_Gaussian_norm}). In light of these new insights, we revisit algorithms for learning Gaussian SIMs in Section \ref{sec:smoothing_partial_trace}. We conclude in Section~\ref{sec:conclusion} with a discussion of future directions. Additional details and proofs are deferred to the appendices.

\begin{table}[t]
    \centering

\begin{tabular}{ |>{\centering\arraybackslash}p{2.2cm}||>{\centering\arraybackslash}p{6cm}|>{\centering\arraybackslash}p{6cm}| }
 \hline
 Gaussian SIMs & Sample optimal & Runtime optimal \\
\hline
  With $\| \bx \|_2$ &  \multicolumn{2}{c|}{\parbox[c][4em][c]{12cm}{\centering Spectral algorithm ($\sl_\star = 1$ if $\sk_\star$ odd, $\sl_\star = 2$ if $\sk_\star$ even)
  \vspace{-4pt}
 \[
 \sm \asymp d^{\sk_\star/2}, \quad \sT \asymp d^{\sk_\star/2+1} \log(d).
 \]
 \vspace{-20pt}}} \\
 \hline
 Without $\| \bx \|_2$ & \parbox[c][5em][c]{6cm}{\centering Harmonic tensor unfolding ($\sl_\star = \sk_\star$)
  \vspace{-6pt}
 \[
 \sm \asymp d^{\sk_\star/2}, \quad \sT \asymp d^{\sk_\star+1}\log(d).
 \] 
  \vspace{-25pt}} & \parbox[c][5em][c]{6cm}{\centering Online SGD ($\sl_\star = \sk_\star$)
 \vspace{-4pt}
 \[
 \begin{aligned}
 \sm \asymp d^{\sk_\star - 1}, \quad \sT \asymp d^{\sk_\star } 
 \end{aligned}
 \]
 \vspace{-25pt}} \\
 \hline
\end{tabular}
\vspace{5pt}
\caption{Summary of algorithms for learning Gaussian SIMs with generative exponent $\sk_\star >1$, with or without using the radial component $r = \| \bx\|_2$. Here, the notation $\asymp$ hides constants that only depend on the link distribution $\rho$. See Section \ref{sec:specializing_Gaussian} for details and formal statements.}
    \label{tab:intro_Gaussian_norm}
\end{table}

\subsection{Related work}
The problem of learning single and multi-index models has a long history in statistics and machine learning. We refer the reader to the recent survey \cite{bruna2025survey} and references therein for a more thorough account of this rich history. Early work showed that SIMs with monotonic link function only require $n = \Theta_d (d)$ samples to learn even in the distribution-free setting, using perceptron-like algorithms \cite{kalai2009isotron,kakade2011efficient}. For non-monotonic link functions with generative exponent less of equal to $2$, such as phase retrieval $y = | \<\bw_*,\bx\>|^2 +\eps$, \cite{barbier2019optimal,maillard2020phase} established the information-theoretic limits, including the asymptotic MMSE, in the proportional regime $\alpha=n/d$. In particular, using the conjectured optimality of approximate message passing (AMP) algorithm, they showed that statistical–computational gaps appear in this model, that is, there exists an algorithmic threshold $\alpha_{\text{ALG}}> \alpha_{\text{IT}}$---the information-theoretic threshold---such that, conjecturally, no polynomial-time algorithm succeeds when $\alpha_{\text{IT}} < \alpha < \alpha_{\text{ALG}}$.  \cite{maillard2020phase,mondelli2018fundamental,lu2020phase} proposed spectral algorithms that match the algorithmic threshold, with \cite{maillard2020phase} explaining how these spectral methods can be viewed as linearizations of AMP algorithms. In parallel lines of work, \cite{chen2020learning} and references therein proposed a number of algorithms to learn single and multi-index models, \cite{diakonikolas2022gd,li2024learning,zarifis2024robustly,wang2024sample} studied the problem of robustly learning a single index model, \cite{diakonikolas2020near,goel2020sq} demonstrated near-optimal SQ lower bounds for learning ReLUs and halfspaces under Gaussian marginals, \cite{diakonikolas2023crypto,diakonikolas2022hardness} established cryptographic or worst-case hardness of learning a single ReLU neuron, and \cite{andoni2017correspondence,song2021cryptographic} showed in the noise-free setting that some lattice-based algorithms can vastly outperform SQ lower bounds. We further emphasize that in the multi-index case, the picture is much richer than in the single-index case: optimal learning happens through an adaptive multi-phase process \cite{abbe2021staircase,abbe2022merged,abbe2023sgd,bietti2023learning}. Recent work have explored optimal learning of Gaussian multi-index models in the proportional scaling \cite{chen2020learning,troiani2024fundamental,defilippis2025optimal,kovavcevic2025spectral} and in the polynomial scaling \cite{damian2025generative}. We leave the application of our harmonic framework to the multi-index case to future work.

A number of works have studied learning SIMs with gradient algorithms, and in particular, gradient-trained neural networks. \cite{sarao2020optimization,tan2023online} studied GD and online SGD for phase retrieval. The notion of an information exponent, which characterizes the sample complexity of learning Gaussian single-index models via online SGD on the square loss, was introduced by \cite{arous2021online}, sparking a series of follow-up studies \cite{goldt2019dynamics,veiga2022phase,arnaboldi2023high,zweig2023single,arnaboldi2024online,pillaudvivien2025jointlearninggaussiansingle}. A separate line of research investigates how neural networks can learn in just one gradient step \cite{ba2022high,dandi2023two}. A two-timescale approach has also been proposed in \cite{berthier2024learning}, and subsequently applied in \cite{bietti2023learning,marion2023leveraging} to analyze convergence of gradient flow dynamics in learning multi-index models. This information exponent characterizes the complexity of learning SIMs with CSQ algorithms. \cite{dandi2024benefits,lee2024neural,arnaboldi2024repetita,joshi2024complexity} showed that one can outperform this information exponent by doing multi-pass gradient descent or by changing the loss function. However, we expect these algorithms to still fall under the SQ framework and be lower bounded by the generative exponent.

As pointed out previously \cite{damian2024smoothing,damian2024computational}, learning single-index models is deeply connected to tensor PCA \cite{montanari2014statistical}. Tensor PCA exhibits a statistical-computational gap for $k\geq 3$ \cite{montanari2014statistical,perry2018optimality,dudeja2018learning,dudeja2024statistical,pmlr-v40-Hopkins15,hopkins2017power,kunisky2019notes}, and its algorithmic picture is remarkably similar to single-index models, with a correspondence between their information-theoretic, algorithmic, and local-search thresholds. Montanari and Richard \cite{montanari2014statistical} proposes power iteration and tensor unfolding to solve tensor PCA. They conjectured that the algorithmic threshold should be given by $\beta\gtrsim
 d^{(k-2)/4}$, and proves algorithmic achievability with $\beta \gtrsim
d^{(\lceil k/2 \rceil-1)/2} $ using tensor unfolding. They show that local algorithms like tensor power iteration may require even higher SNR of $\beta \gtrsim d^{(k-2)/2}$, subsequently investigated in \cite{lesieur2017statistical,arous2020algorithmic}. The optimal algorithmic threshold is achieved by a number of algorithms: Sum-Of-Squares \cite{hopkins2016fast,hopkins2018statistical}, partial trace algorithm \cite{hopkins2016fast}, homotopy method \cite{anandkumar2017homotopy} for $k=3$, landscape smoothing introduced by \cite{biroli2020iron}, and tensor unfolding \cite{zheng2015interpolating,ben2023long}.

Finally, in a concurrent and independent work, \cite{damian2025generative} introduced an (unbalanced) tensor unfolding estimator very similar to \eqref{eq:asymmetric-tensor-unfolding}---with Hermite tensor instead of harmonic tensor---in the context of learning Gaussian multi-index models. This method achieves sharp sample complexity of $n \gtrsim d^{1\wedge k^*/2}$, where $k^*$ is the leap generative exponent of the link function. 

\section{Setting and definitions}
\label{sec:background}

Throughout the paper, we assume our link functions $\nu_d$ are drawn from the following class: 

\begin{definition}[Spherical link functions]\label{def:spherical_SIM}
    Let $\frL_d$ denote the set of joint distributions $\nu_d$ on $\cY \times \R_{\geq 0} \times [-1,1]$, where $\cY$ is an arbitrary measurable space, such that the following hold:
    \begin{itemize}
        \item[(i)] The marginals of $(Y,R,Z) \sim \nu_d$ satisfy $Z \sim \tau_{d,1}$ (the distribution of the first coordinate of $\bz \sim \tau_d=\Unif (\S^{d-1})$) and $R \sim \nu_{d,R} \in \cP(\R_{\geq 0})$ not concentrated at $0$.

        \item[(ii)] Let $\nu_{d,0} = \nu_{d,Y,R} \otimes \tau_{d,1}$ be the product of marginals $(Y,R)$ and $Z \sim \tau_{d,1}$. Then $\nu_d \ll \nu_{d,0}$ and the Radon-Nikodym derivative satisfy $\frac{\de \nu_d}{\de \nu_{d,0}} \in L^2 (\nu_{d,0})$.
    \end{itemize}
\end{definition}

The corresponding spherical single-index model over $(y,\bx) \in \cY \times \R^d$ are then given by:
\begin{equation}
    (y,\bx) \sim \P_{\nu_d,\bw_*}: \quad \bx = (r,\bz) \sim \mu := \nu_{d,R} \otimes \tau_d  \quad \text{and} \quad y|(r,\bz) \sim \nu_d ( \cdot | r, \<\bw_*,\bz\>),
\end{equation}
where $\bx = r \bz$ is the polar decomposition with $r = \| \bx \|_2 \sim \nu_{d,R}$ and $\bz = \bx/\| \bx\|_2 \sim \tau_d$. 

From $\nu_{d,0} = \nu_{d,Y,R} \otimes \tau_{d,1}$, we define the null model $\P_{\nu_d,0} := \nu_{d,Y,R} \otimes \tau_d$, where $(y,r)$ is independent of the direction $\bz$. Note that assumption  $\frac{\de \nu_d}{\de \nu_{d,0}} \in L^2 (\nu_{d,0})$ is equivalent to $\frac{\de \P_{\nu_d,\bw_*}}{\de \P_{\nu_d,0}} \in L^2 (\P_{\nu_{d},0})$. This ensures that the model has `enough noise' in the label and excludes non-robust algorithms that can beat the lower bounds \eqref{eq:intro_Gaussian_tight_complexities} in the noise-free setting (e.g., see \cite{song2021cryptographic}). 

\begin{remark}
    Throughout this paper, we assume $\nu_d$ to be known. This assumption is mild in high dimensions, where the primary challenge lies in recovering $\bw_*$. When $\nu_d$ is unknown, one can  modify our algorithms and use random nonlinearities. See discussion in Appendix \ref{app:discussions_spherical_SIMs}.
\end{remark}

\paragraph*{Harmonic decomposition.} Let $L^2 (\S^{d-1}):= L^2 (\S^{d-1},\tau_d)$ denote the space of squared integrable functions on the unit sphere, with inner-product $\<f,g\>_{L^2} = \E_{\bz \sim \tau_d} [f(\bz) g(\bz)]$ and norm $\| f \|_{L^2} := \<f,f\>_{L^2}^{1/2}$. This space admits the orthogonal decomposition \eqref{eq:intro_harmonic_decomposition_L2}, with $V_{d,\ell}$ the subspace of degree-$\ell$ spherical harmonics, that is, degree-$\ell$ polynomials that are orthogonal (with respect to $\<\cdot,\cdot\>_{L^2}$) to all polynomials with degree less than $\ell$. We refer to Appendix \ref{app:harmonic_background} for background on spherical harmonics. 

We use this harmonic decomposition to expand the likelihood ratio of the model in $L^2 (\P_{\nu_d,0})$:
\begin{equation}\label{eq:setting_likelihood_decompo}
    \frac{\de \P_{\nu_d,\bw_*}}{\de \P_{\nu_d,0}} (y,\bx) = 1 + \sum_{\ell = 1}^\infty \xi_{d,\ell} (y,r) Q_{\ell} (\<\bw_*,\bz\>),  \qquad \xi_{d,\ell} (Y,R) := \E_{\nu_d} [Q_\ell (Z) | Y,R],
\end{equation}
where $Q_\ell : [-1,1] \to \R$ is the normalized degree-$\ell$ Gegenbauer polynomial. We denote $\| \xi_{d,\ell} \|_{L^2}$ the $L^2$-norm with respect to $\nu_d$. The $\chi^2$- mutual information of $((Y,R),Z)$ under $\nu_d$ is given by $I_{\chi^2}[\nu_d]:=\mathsf{D}_{\chi^2}[\nu_d \Vert \nu_{d,0}]=\sum_{\ell=1}^\infty \norm{\xi_{d,\ell}}_{L^2}^2\,.$ See Appendix \ref{app:discussions_spherical_SIMs} for further details.

\paragraph*{Lower bounds.} We establish lower bounds within two standard frameworks: statistical query ($\SQ$) and low-degree polynomials ($\sLDP$) algorithms. Our lower bounds will hold for the weaker task of distinguishing the planted model $\P_{\nu_d,\bw_*}$ from the null $\P_{\nu_d,0}$, i.e., the `detection' problem. These lower bounds directly imply lower bounds on our estimation problem.
\begin{itemize}
    \item For $\SQ$ algorithms $\cA \in \SQ(q,\tau)$, with $q$ query calls of tolerance $\tau$, we derive lower bounds on the query complexity $q/\tau^2$. Heuristically, this corresponds to a runtime lower bound of $\sT \geq q/\tau^2$, under the standard assumption that each query requires at least $\Omega (1/\tau^2)$ samples to implement. While this connection is informal, it is validated by our matching upper bounds: the actual runtime of our proposed estimators meets these $\SQ$-based lower bounds, except for a minor discrepancy when $\ell = 1$, where a tighter bound $\sm d$ (the cost of processing $\sm$ samples in $d$ dimensions) applies.

    \item For $\sLDP$ lower bounds on sample complexity, we work in an asymptotic setting $d \to \infty$. In this case, the bounds hold uniformly over sequences $\{ \nu_d \}_{d \geq 1}$, with $\nu_d \in \frL_d$ satisfying the following mild condition (the model is `solvable in polynomial time'):
\end{itemize}
\begin{assumption}\label{assumption:main-LDP}
 There exists $p \in \naturals$ such that the sequence $\{\nu_d \}_{d \geq 1}$ satisfies
 $\sM_\star (\nu_d) = O_d (d^{p/2})$.
\end{assumption}
We refer to Appendix \ref{app:SQ_LDP_lower_bounds} for background on $\SQ$ and $\sLDP$ algorithms.

\section{Learning spherical single-index models}
\label{sec:spherical_SIM}

Let $\{\nu_d\}_{d \geq 1}$ be a sequence of spherical SIMs with $\nu_d \in \frL_d$ (Definition~\ref{def:spherical_SIM}). Under mild conditions, the information-theoretic sample complexity for recovering $\bw_*$ is $\Theta_d(d)$ (see Appendix~\ref{app:information_theoretic}). However, polynomial-time algorithms may require significantly more samples---that is, the model exhibits a so-called \textit{computational-to-statistical gap}. We introduce the complexity parameters:
\begin{equation}\label{eq:LB_LDP_SQ}
   \sM_\star (\nu_d) := \inf_{\ell \geq 1} \;\; \frac{\sqrt{n_{d,\ell}}}{\| \xi_{d,\ell} \|_{L^2}^2 }, \qquad\qquad \sQ_\star (\nu_d) := \inf_{\ell \geq 1} \;\; \frac{n_{d,\ell}}{\| \xi_{d,\ell} \|_{L^2}^2 },
\end{equation}
where we recall that $n_{d,\ell}=\dim(V_{d,\ell})=\Theta_d(d^\ell)$ and $\xi_{d,\ell}$ is the $\ell$-th Gegenbauer coefficient of the likelihood ratio \eqref{eq:setting_likelihood_decompo}. These quantities govern the sample and query complexity lower bounds within the $\sLDP$ and $\SQ$ frameworks respectively, as formalized in the following theorem:

\begin{theorem}[General lower bounds]\label{thm:lower_bounds_spherical_SIMs} Let $\{\nu_d\}_{d \geq 1}$ be a sequence of spherical SIMs with $\nu_d \in \frL_d$.
    \begin{itemize}
    \item[(i)] \emph{($\SQ$ runtime lower bound.)} Any algorithm $\cA \in \SQ(q,\tau)$ that distinguishes between $\P_{\nu_d,\bw_*}$ and $\P_{\nu_d,0}$ satisfies $q/\tau^2 \geq \sQ_\star (\nu_d)$.
    
       \item[(ii)] \emph{($\sLDP$ sample lower bound.)} Assume $\{\nu_d\}_{d \geq 1}$ satisfy Assumption \ref{assumption:main-LDP}. Then, under the low-degree conjecture, no polynomial-time algorithm can distinguish $\P_{\nu_d,\bw_*}$ from $\P_{\nu_d,0}$ unless $\sm = \Omega_d (\sM_\star (\nu_d) \log (d)^{-Cp} )$ for some universal constant $C>0$.   
    \end{itemize}
\end{theorem}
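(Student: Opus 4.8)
The plan is to derive both bounds from the harmonic expansion of the single-sample likelihood ratio recorded in \eqref{eq:setting_likelihood_decompo},
\[
L_{\bw}(y,\bx)\;:=\;\frac{\de\P_{\nu_d,\bw}}{\de\P_{\nu_d,0}}(y,\bx)\;=\;1+\sum_{\ell\ge1}\xi_{d,\ell}(y,r)\,Q_\ell(\<\bw,\bz\>),
\]
together with two standard facts about spherical harmonics (Appendix \ref{app:harmonic_background}): (a) $\E_{\bz\sim\tau_d}[Q_\ell(\<\bw,\bz\>)]=0$ for every $\ell\ge1$, and (b) the reproducing identity $\E_{\bw\sim\tau_d}[Q_\ell(\<\bw,\bu\>)Q_{\ell'}(\<\bw,\bv\>)]=\delta_{\ell\ell'}\,n_{d,\ell}^{-1/2}\,Q_\ell(\<\bu,\bv\>)$ for $\bu,\bv\in\S^{d-1}$. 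The key structural point, used throughout, is that the planted direction enters $L_{\bw}$ only through the vectors $Q_\ell(\<\bw,\cdot\>)\in V_{d,\ell}$: every rotation-equivariant object therefore block-diagonalizes across harmonic levels, and averaging over $\bw\sim\tau_d$ (the prior on $\bw_*$) always returns a factor $n_{d,\ell}^{-1/2}$ via (b). Informally, this is why the $\SQ$ bound pays $n_{d,\ell}$ per level while $\sLDP$ with $\sm$ samples pays only $\sqrt{n_{d,\ell}}$, the square root reflecting that two samples are needed to detect the signal at level $\ell$.

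For part (i), I would invoke the standard reduction of $\SQ$ detection lower bounds to an operator-norm estimate (Appendix \ref{app:SQ_LDP_lower_bounds}): writing $\cM g:=\E_{\bw\sim\tau_d}\big[(L_{\bw}-1)\,\<L_{\bw}-1,g\>_{L^2(\P_{\nu_d,0})}\big]$ for the cross-correlation operator on $L^2(\P_{\nu_d,0})$, any $\cA\in\SQ(q,\tau)$ distinguishing $\P_{\nu_d,\bw_*}$ (with $\bw_*\sim\tau_d$) from $\P_{\nu_d,0}$ satisfies $q/\tau^2\ge 1/\|\cM\|_{\op}$. Using the factorization $\P_{\nu_d,0}=\nu_{d,Y,R}\otimes\tau_d$ I would decompose $L^2(\P_{\nu_d,0})=\bigoplus_{\ell\ge0}L^2(\nu_{d,Y,R})\otimes V_{d,\ell}$ and apply (b) twice — once to evaluate $\<L_{\bw}-1,\,h\otimes Y_\ell\>$ for a test vector $h\otimes Y_\ell$ with $Y_\ell\in V_{d,\ell}$, and once to re-average over $\bw$ — to show that $\cM$ is block-diagonal and acts on level $\ell\ge1$ as the rank-one operator $h\otimes Y_\ell\mapsto n_{d,\ell}^{-1}\<\xi_{d,\ell},h\>_{L^2(\nu_{d,Y,R})}\,\xi_{d,\ell}\otimes Y_\ell$, with nonzero eigenvalue $\|\xi_{d,\ell}\|_{L^2}^2/n_{d,\ell}$. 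Hence $\|\cM\|_{\op}=\sup_{\ell\ge1}\|\xi_{d,\ell}\|_{L^2}^2/n_{d,\ell}$ and $q/\tau^2\ge\inf_{\ell\ge1}n_{d,\ell}/\|\xi_{d,\ell}\|_{L^2}^2=\sQ_\star(\nu_d)$. This part is a short computation once (b) is in place.

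For part (ii), by the low-degree conjecture (Appendix \ref{app:SQ_LDP_lower_bounds}) it suffices to show that for $D=\polylog(d)$ the degree-$\le D$ advantage $\mathrm{Adv}_{\le D}:=\big\|\big(\E_{\bw_*\sim\tau_d}\prod_{i=1}^{\sm}L_{\bw_*}(y_i,\bx_i)\big)^{\le D}\big\|_{L^2(\P_{\nu_d,0}^{\otimes\sm})}$ — where $(\cdot)^{\le D}$ denotes $L^2$-projection onto polynomials of degree $\le D$ in the $\sm$ samples — remains $O_d(1)$ when $\sm\le c\,\sM_\star(\nu_d)$ for a small universal $c$. Expanding the product as $\sum_{S\subseteq[\sm]}\E_{\bw_*}\prod_{i\in S}h_{\bw_*}(y_i,\bx_i)$ with $h_{\bw}=\sum_{\ell\ge1}\xi_{d,\ell}Q_\ell(\<\bw,\cdot\>)$, fact (a) makes the $S$-indexed terms pairwise orthogonal in $L^2(\P_{\nu_d,0}^{\otimes\sm})$ and kills all $|S|=1$ terms, so $\mathrm{Adv}_{\le D}^2=1+\sum_{2\le|S|,\ \deg\le D}\big\|\E_{\bw_*}\prod_{i\in S}h_{\bw_*}\big\|^2$. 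By (b), the $|S|=2$ term equals $\sum_{\ell\ge1}n_{d,\ell}^{-1/2}\xi_{d,\ell}(y_1,r_1)\xi_{d,\ell}(y_2,r_2)Q_\ell(\<\bz_1,\bz_2\>)$, with squared norm $\sum_{\ell\ge1}\|\xi_{d,\ell}\|_{L^2}^4/n_{d,\ell}$ (using $\<\bz_1,\bz_2\>\sim\tau_{d,1}$ and orthonormality of $\{Q_\ell\}$); summing over the $\binom{\sm}{2}$ pairs yields $\tfrac12\sm^2\sum_{\ell\ge1}\|\xi_{d,\ell}\|_{L^2}^4/n_{d,\ell}\le\tfrac12\big(\sm/\sM_\star(\nu_d)\big)^2$, which already pins down the threshold $\sm=\Omega_d(\sM_\star(\nu_d))$. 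It remains to dominate the terms with $|S|=j\ge3$: expanding $\E_{\bw_*}\prod_{i\in S}Q_{\ell_i}(\<\bw_*,\bz_i\>)$ and integrating over $(\bz_i)_{i\in S}\sim\tau_d^{\otimes j}$ yields a sum over pairing/multigraph configurations on $S$, each controlled — by repeated use of (b), the growth $n_{d,\ell}=\Theta_d(d^\ell)$, and Assumption \ref{assumption:main-LDP} to cap the effective range of $\ell$ (and hence of $D$) — by roughly $C(j)\big(\sm/\sM_\star(\nu_d)\big)^{j}d^{-c'(j)}$; summing this geometric-type series over $2\le j\le D$ keeps $\mathrm{Adv}_{\le D}^2=O_d(1)$.

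The main obstacle is the control of the higher-order terms $|S|\ge3$ in part (ii): unlike the pair term, these are genuine multi-sample Gegenbauer moments, and one must track both their combinatorics (the number of configurations and its dependence on $D$) and their $d$-dependence — this is precisely the role of Assumption \ref{assumption:main-LDP}, which rules out pathological tails of the sequence $\{\|\xi_{d,\ell}\|_{L^2}\}_{\ell\ge1}$ and forces the level $\ell$ attaining the infimum in $\sM_\star(\nu_d)$ to be bounded. The remaining ingredients — the likelihood expansion \eqref{eq:setting_likelihood_decompo}, the reproducing-kernel identity, and the $\SQ$ reduction — are routine.
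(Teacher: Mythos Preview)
Your proposal is correct and, for part (i), essentially identical to the paper: your operator $\cM$ is precisely the one whose norm equals $\sup_{\phi}\Var_{\bw}[\E_{\P_{\bw}}\phi]/\Var_{\P_0}[\phi]$, and the block-diagonalization over harmonic levels is the content of the paper's Lemma~\ref{lem:SQ-identity-decoupling}.

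For part (ii), the strategy and the $|S|=2$ computation match, but your treatment of the higher-order terms $|S|=s\ge3$ diverges from the paper's and is more complicated than necessary. You propose to first average over $\bw_*$ and then compute the $L^2$-norm over samples, which indeed produces a sum over pairing/multigraph configurations on $S$. The paper instead swaps the order: writing $\|\E_{\bw_*}[\,\cdot\,]\|_{L^2}^2=\E_{\bw,\bw'}[\langle\cdot_{\bw},\cdot_{\bw'}\rangle]$ with two independent copies $\bw,\bw'\sim\tau_d$, it integrates out each sample via the reproducing identity $\E_{\bz}[Q_\ell(\<\bw,\bz\>)Q_{\ell'}(\<\bw',\bz\>)]=\delta_{\ell\ell'}\,n_{d,\ell}^{-1/2}Q_\ell(\<\bw,\bw'\>)$. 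Every $|S|=s$ term then collapses to a \emph{univariate} expectation $\E_{\bw,\bw'}\big[\prod_{i=1}^s Q_{\ell_i}(\<\bw,\bw'\>)\big]$, a product of polynomials in the single scalar $\<\bw,\bw'\>\sim\tau_{d,1}$, bounded directly by H\"older and hypercontractivity as $\prod_i\|Q_{\ell_i}\|_{L^s}\le\prod_i s^{\ell_i/2}$. The multigraph combinatorics never appears; the two-copy trick is what buys this simplification and also makes the role of Assumption~\ref{assumption:main-LDP} (capping the effective $\ell$ at $p$) transparent.

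One small slip: your inequality $\sum_{\ell\ge1}\|\xi_{d,\ell}\|_{L^2}^4/n_{d,\ell}\le 1/\sM_\star^2$ does not follow from $\|\xi_{d,\ell}\|_{L^2}^2/\sqrt{n_{d,\ell}}\le 1/\sM_\star$ alone, since many levels may contribute. The paper handles this by bounding the generating quantity $\rho(s,D)=\sum_{\ell=1}^D s^{\ell/2}\|\xi_{d,\ell}\|_{L^2}^2/\sqrt{n_{d,\ell}}$ and using Assumption~\ref{assumption:main-LDP} together with $\|\xi_{d,\ell}\|_{L^2}\le1$ and $n_{d,\ell}\gtrsim(d/\ell)^\ell$ to show $\rho(s,D)\le(p+1)s^{p/2}/\sM_\star$ for $D\le c\,d^{2/(p+4)}$; this is where the summability is actually established.
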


 The proof and detailed statements are provided in Appendix~\ref{app:SQ_LDP_lower_bounds}. 
 
 \begin{remark}
     The lower bounds in Theorem \ref{thm:lower_bounds_spherical_SIMs} are for the detection problem. A detection-recovery gap can appear in this model when the infinum is achieved at $\ell = 1$ and $\sqrt{d} / \| \xi_{d,1} \|_{L^2}^2 \ll d$. In this case, the information-theoretic lower bound $\sm = \Omega_d (d)$ is tighter (Appendix~\ref{app:information_theoretic}). Thus, the informal complexity lower bounds \eqref{eq:intro_LB} stated in the introduction are obtained as follows: the sample complexity lower bound is the maximum of the information-theoretic bound $\Omega(d)$ and $\sM_\star(\nu_d)$, while the runtime lower bound is the maximum of $\sQ_\star (\nu_d)$ and $d$ times the sample complexity lower bound---that is, the cost of processing this many samples.
 \end{remark}

 \begin{remark}\label{rmk:trade-offs-additional}

     Similarly to tensor PCA \cite{bhattiprolu2016sum,wein2019kikuchi}, one can trade-off a factor $D^{-\Theta(1)}$ less sample complexity for $d^{\Tilde \Theta(D)}$ more runtime. See Appendix \ref{app:SQ_LDP_lower_bounds} for the $\sLDP$ lower bound with this explicit trade-off, and \cite{damian2024computational} for a discussion on how to construct higher-order tensors to match it. In this paper, we ignore these additional sample-runtime trade-offs and focus on matching the exponent in $d$ in the sample complexity. We leave finer-grained analyses to future work.
 \end{remark}

Intuitively, these lower bounds decompose the problem into separate subproblems associated with each harmonic subspace $V_{d,\ell}$. Each $\ell \geq 1$ in \eqref{eq:LB_LDP_SQ} corresponds to the complexity of algorithms restricted to degree-$\ell$ spherical harmonics in $\bz$ (see Appendix~\ref{app:SQ_LDP_lower_bounds} for further discussion). Below, we introduce matching upper bounds for each subspace $V_{d,\ell}$, summarized earlier in Table~\ref{tab:spherical_SIM}. These algorithms are stated with general transformations $\cT_\ell : \cY \times \R_{\geq 0} \to \R$. For simplicity, we present our learning guarantees below under the following assumption:
\begin{assumption}\label{ass:non-linearity} For $\nu_d \in \frL_d$ and $\ell \geq 1$, there exist $\cT_\ell:\cY \times \R_{\geq0} \to \R$ and $\kappa_\ell >1$ such that $\| \cT_\ell \|_{L^2} = 1$, $\| \cT_\ell \|_\infty \leq \kappa_\ell$ and $\E_{\nu_d} [ \cT_\ell (Y,R) Q_{\ell} (Z) ] \geq \kappa_\ell^{-1} \| \xi_{d,\ell} \|_{L^2}$.
\end{assumption}
For Gaussian inputs (next section), Assumption \ref{ass:non-linearity} is satisfied with $\kappa_\ell$ only depending on $\rho$. In the appendices, we state our learning guarantees under weaker assumptions (with possible additional log factors), e.g., using $\cT_\ell := \xi_{d,\ell}/\| \xi_{d,\ell}\|_{L^2}$ under moment condition on $\xi_{d,\ell}$.

\begin{remark}[Weak to strong recovery] We further state our results below for the weak recovery task $|\<\hat{\bw},\bw_*\>| \geq 1/4$. Having achieved weak recovery, one can achieve strong recovery $|\<\hat{\bw},\bw_*\>| \geq 1-\eps$, with arbitrary $\eps>0$, using an additional $\Theta_d(d/\eps)$ samples (information-theoretic optimal) and $\Theta_d(d^2/\eps)$ runtime---similarly to prior works \cite{arous2021online,zweig2023single,damian2024smoothing}---under mild assumptions. See discussion in Appendix \ref{app:discussions_spherical_SIMs}.
\end{remark}

\paragraph*{Spectral algorithm:} For $\ell \in \{1,2\}$, we present spectral estimators---similar to \cite{lu2020phase,mondelli2018fundamental,maillard2020phase,damian2024computational}---that achieve both optimal sample and runtime complexity. Given $\sm$ samples $(y_i,\bx_i)$, these estimators are defined as:
\begin{equation}\label{eq:spectral_estimator}\tag{$\mathsf{SP}$-$\mathsf{Alg}$}
\begin{aligned}
\ell=1:&&\hat \bw_0 =& \frac{\hat \bv}{\| \hat \bv \|_2} , \qquad &\hat \bv :=& \frac{1}{\sm} \sum_{i \in [\sm]} \cT_{1} (y_i,r_i) \sqrt{d} \,\bz_i,\\
      \ell=2:&&  \hat \bw =& \argmax_{\bw \in \S^{d-1}}\; \bw^\sT \hat \bM \bw, \qquad& \hat \bM :=& \frac{1}{\sm} \sum_{i \in [\sm]} \cT_{2} (y_i,r_i) \left[ d\cdot \bz_i \bz_i^\sT - \bI_d \right].
\end{aligned}
\end{equation}
For $\ell=1$, the estimator $\hat \bw_0$ either achieves weak recovery directly or requires boosting as in \cite{zweig2023single,damian2024smoothing,damian2024computational}; we leave the description of the full algorithm to Appendix \ref{app:spectral}. For $\ell =2$, the leading eigenvector can be efficiently computed in runtime $\Theta_d (\sm d \log (d))$ via power iteration.

\begin{theorem}[Spectral algorithm]\label{thm:upper-bound-spectral} Let $\nu_d \in \frL_d$ and set $\cT_\ell$ as in \Cref{ass:non-linearity}. There exists $C_\ell\geq 0$ that only depends on $\ell$ such that for any $\delta>0$, the output $\hat \bw$ of the spectral estimator \eqref{eq:spectral_estimator} achieves $|\<\hat \bw,\bw_*\>| \geq 1/4$ with probability $1-\delta$ when
    \begin{equation}
        \begin{aligned}
            \ell=1:\;\;\;\sm \geq&~ C_{\ell} \kappa_\ell^2  \frac{d}{\norm{\xi_{d,1}}_{L^2}^2} \sqrt{\log(1/\delta)}, \quad &\text{ and } \quad&\sT \geq C_\ell \sm \cdot d,\\
            \ell=2:\;\; \;\sm \leq&~ C_{\ell} \kappa_\ell^2\frac{d }{\norm{\xi_{d,2}}_{L^2}^2}\left( 1+ \norm{\xi_{d,2}}_{L^2}\log^2(d/\delta) \right),\quad & \text{ and } \quad&\sT \geq C_\ell \sm \cdot d \log (d).
        \end{aligned}
    \end{equation}
  Furthermore, for $\ell = 1$, one can achieve better sample complexity under an additional condition: the boosted estimator achieves $|\<\hat \bw,\bw_*\>| \geq 1/4$ with probability $1-\delta$ when
      \begin{equation}
            \ell=1:\;\;\;\sm \geq C_{\ell} \kappa_\ell^2  \frac{\sqrt{d}}{\norm{\xi_{d,1}}_{L^2}^2} \sqrt{\log(1/\delta)}, \qquad \text{ and } \qquad\sT \geq C_\ell \sm \cdot d.
    \end{equation}
\end{theorem}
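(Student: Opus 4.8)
The plan is to analyze $\hat\bv$ (resp.~$\hat\bM$) as a sample average of i.i.d.~matrices/vectors whose expectation is rank-one (resp.~has a rank-one spike), and to control the deviation of the empirical object from its expectation in the relevant norm using a concentration inequality, leveraging the harmonic decomposition \eqref{eq:setting_likelihood_decompo} to compute the expectation. First I would compute the population quantities: conditioning on $\bz\sim\tau_d$ and using that $(y_i,r_i)$ depends on $\bz_i$ only through $Q_1(\<\bw_*,\bz_i\>)=\sqrt{d}\<\bw_*,\bz_i\>$ (up to the Gegenbauer normalization), one gets $\E[\hat\bv] = c_1 \E_{\nu_d}[\cT_1(Y,R)Q_1(Z)]\,\bw_* =: \alpha_1 \bw_*$ with $\alpha_1 \geq \kappa_1^{-1}\|\xi_{d,1}\|_{L^2}$ by Assumption~\ref{ass:non-linearity}; similarly $\E[\hat\bM] = \alpha_2\bigl(d\,\bw_*\bw_*^\sT - \bI_d\bigr)/(d-1)$-type expression, i.e.~a multiple of $\bw_*\bw_*^\sT$ plus a bounded multiple of identity, with spike strength $\alpha_2 \gtrsim \|\xi_{d,2}\|_{L^2}$. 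This uses only the orthogonality of Gegenbauer polynomials and the standard identities $\E_{\tau_d}[\bz\bz^\sT]=\bI_d/d$, $\E_{\tau_d}[Q_2(\<\bw_*,\bz\>)(d\bz\bz^\sT-\bI_d)] \propto \bw_*\bw_*^\sT$.

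Next, for the deviation, I would write $\hat\bv - \E[\hat\bv] = \frac1\sm\sum_i \bY_i$ with $\bY_i = \cT_1(y_i,r_i)\sqrt d\,\bz_i - \E[\cdots]$. The entries/summands are bounded in norm by $O(\kappa_1\sqrt d)$ (using $\|\cT_1\|_\infty\le\kappa_1$ and $\|\sqrt d\,\bz_i\|_2 = \sqrt d$) and have variance proxy $\E\|\bY_i\|_2^2 = O(\kappa_1^2 d)$ — more carefully, $\Var$ of each coordinate is $O(\kappa_1^2)$ so $\E\|\bY_i\|_2^2 = O(\kappa_1^2 d)$. A vector Bernstein inequality then gives $\|\hat\bv - \alpha_1\bw_*\|_2 \lesssim \kappa_1\sqrt{d\log(1/\delta)/\sm}$ with probability $1-\delta$; requiring this to be $\le \alpha_1/2 \gtrsim \|\xi_{d,1}\|_{L^2}/\kappa_1$ yields $\sm \gtrsim \kappa_1^4 d\log(1/\delta)/\|\xi_{d,1}\|_{L^2}^2$, and then $|\<\hat\bw_0,\bw_*\>|$ is bounded below by a constant via the standard fact that normalizing a vector $\alpha_1\bw_* + \be$ with $\|\be\|_2\le\alpha_1/2$ gives correlation $\ge$ const with $\bw_*$. (The slightly cleaner $\sqrt{\log(1/\delta)}$ dependence in the statement, rather than $\log(1/\delta)$, should come from a second-moment / matrix-Chebyshev argument rather than Bernstein; I would use Chebyshev on $\|\hat\bv-\E\hat\bv\|_2^2$ to get the $\sqrt{\log(1/\delta)}$ form as stated, or invoke a sharper concentration bound for bounded vectors.) For $\ell=2$, the same scheme applies with $\hat\bM$: each summand $\cT_2(y_i,r_i)(d\bz_i\bz_i^\sT-\bI_d)$ has operator norm $O(\kappa_2 d)$ and a matrix-variance $\|\E[\bY_i\bY_i^\sT]\|_\op = O(\kappa_2^2 d)$; the matrix Bernstein inequality gives $\|\hat\bM - \E\hat\bM\|_\op \lesssim \kappa_2\bigl(\sqrt{d\log(d/\delta)/\sm} + d\log(d/\delta)/\sm\bigr)$, and Davis–Kahan (or a direct quadratic-form argument, since we only need weak recovery) converts a spectral gap $\gtrsim\|\xi_{d,2}\|_{L^2}$ exceeding twice this deviation into $|\<\hat\bw,\bw_*\>|\ge 1/4$; solving for $\sm$ reproduces the stated bound, with the two-term structure $1 + \|\xi_{d,2}\|_{L^2}\log^2(d/\delta)$ coming from balancing the subgaussian and subexponential parts of matrix Bernstein. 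The runtime claims are immediate: forming $\hat\bv$ costs $O(\sm d)$, and power iteration on $\hat\bM$ (which need not be formed explicitly — each matvec costs $O(\sm d)$) converges in $O(\log d)$ iterations given the constant-factor spectral gap, for total $O(\sm d\log d)$.

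Finally, for the boosted $\ell=1$ estimator achieving $\sm\gtrsim\sqrt d/\|\xi_{d,1}\|_{L^2}^2$: the obstruction is that the naive bound above is $d/\|\xi_{d,1}\|^2$, off by $\sqrt d$. The boosting trick (following \cite{zweig2023single,damian2024smoothing,damian2024computational}) is to first obtain a rough estimate $\hat\bw_0$ on a fraction of the data that has correlation $\ge d^{-1/4}$ (say) with $\bw_*$ — which requires only $\sm\gtrsim\sqrt d/\|\xi_{d,1}\|^2$ samples since at this scale $\|\hat\bv-\alpha_1\bw_*\|_2$ only needs to beat $\alpha_1 d^{1/4}$ rather than $\alpha_1$ — and then use fresh samples to run a one-step refinement (e.g.~a projected/localized average or a second spectral step restricted to the cap around $\hat\bw_0$) that amplifies $d^{-1/4}$ correlation to a constant. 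I expect the main obstacle to be the $\ell=2$ case: getting the sharp two-term sample complexity requires careful bookkeeping of the matrix-Bernstein variance term $\|\E[\cT_2^2(d\bz\bz^\sT-\bI)^2]\|_\op$, which involves fourth moments of $\tau_d$ and must be shown to be $O(\kappa_2^2 d)$ (not $O(\kappa_2^2 d^2)$), and then translating a spectral-norm perturbation bound into a top-eigenvector correlation bound that is robust even when the spike $\|\xi_{d,2}\|_{L^2}$ is small relative to $1$ — i.e.~tracking that we only need weak recovery and that the competing eigenvalue is the identity shift $-\alpha_2$, which is well-separated. All the detailed computations are deferred to Appendix~\ref{app:spectral}.
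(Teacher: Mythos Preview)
Your proposal for $\ell=2$ and for the unboosted $\ell=1$ result (with $\sm\gtrsim d/\|\xi_{d,1}\|_{L^2}^2$) follows essentially the same approach as the paper: compute the population object via the reproducing/orthogonality property of Gegenbauer polynomials, bound the fluctuation with a matrix/vector Bernstein-type inequality, and conclude via Davis--Kahan. The paper uses a slightly sharper Bernstein variant (separating a ``weak variance'' $\sigma_*$ from $\sigma$) and hypercontractivity to handle the more general moment condition on $\cT_\ell$, but under Assumption~\ref{ass:non-linearity} your simpler route works at the level of rates.

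The boosted $\ell=1$ case has two genuine gaps. First, your initial step---``$\|\hat\bv-\alpha_1\bw_*\|_2$ only needs to beat $\alpha_1 d^{1/4}$''---does not yield correlation $\gtrsim d^{-1/4}$: if all you know is $\|\hat\bv-\alpha_1\bw_*\|_2\le C\alpha_1 d^{1/4}$, then $\langle\hat\bv,\bw_*\rangle$ could be anywhere in $[\alpha_1(1-Cd^{1/4}),\,\alpha_1(1+Cd^{1/4})]$, in particular negative. The paper instead analyzes the signal and orthogonal components separately: $|\langle\hat\bv-\alpha_1\bw_*,\,\bw_*\rangle|\le\alpha_1/4$ by a scalar concentration (variance $O(1/\sm)$, not $O(d/\sm)$), while $\|\hat\bv^\perp\|_2\lesssim\alpha_1 d^{1/4}$ via a decoupling trick exploiting rotational symmetry (writing $(\bz_i)_{-1}=\sqrt{1-z_{i,1}^2}\,\tilde\bz_i$ with $\tilde\bz_i\sim\tau_{d-1}$ independent of $(y_i,r_i,z_{i,1})$). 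Only with both pieces does one get $\langle\hat\bv,\bw_*\rangle/\|\hat\bv\|_2\gtrsim d^{-1/4}$.

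Second, and more importantly, the refinement from $d^{-1/4}$ to constant correlation does \emph{not} use the $\ell=1$ statistic at all---a ``second spectral step restricted to the cap'' cannot work here, since $Q_1'$ is constant and carries no directional information about $\bw_*$ beyond what $\hat\bv$ already extracts. The paper's boosting procedure instead performs iterative gradient steps with a \emph{higher} frequency $\ell\ge 3$: at each step, $\hat\bv_{\mathrm{new}}=\frac{1}{m}\sum_i\cT_\ell(y_i,r_i)\,Q_\ell'(\langle\bw_t,\bz_i\rangle)\,\bz_i$, whose expectation is $\beta_{d,\ell}P_\ell'(\langle\bw_t,\bw_*\rangle)\bw_*$. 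The ``additional condition'' in the theorem is precisely that some $\ell\ge 3$ satisfies $\sqrt{n_{d,\ell}}/\|\xi_{d,\ell}\|_{L^2}^2\lesssim\sqrt{d}/\|\xi_{d,1}\|_{L^2}^2\vee d$, so that each boosting step costs no more samples than the initial $\ell=1$ phase. You should identify this mechanism rather than leaving the refinement as a black box.
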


The proof and detailed statement of this theorem can be found in Appendix \ref{app:spectral}. For $\ell =2$ and $\| \xi_{d,2} \|_{L^2} = \Omega_d(\log(d)^{-2})$, the additional factor $\log^2(d)$ can be removed by following a similar argument as in 
\cite{mondelli2018fundamental}.

\paragraph*{Online SGD algorithm:} For $\ell \geq 3$, we propose an online SGD algorithm inspired by \cite{arous2021online} that achieves optimal runtime. We run projected online SGD on the population loss
\begin{equation}\label{eq:online-SGD-estimator}\tag{$\mathsf{SGD}$-$\mathsf{Alg}$}
    \min_{\bw \in \S^{d-1}} \E_{\P_{\nu_d,\bw_*}} \left[  \left( \cT_\ell (y,r) - Q_\ell (\<\bw,\bz\>) \right)^2\right],
\end{equation}
with a carefully chosen step size \cite{arous2021online,zweig2023single}. The number of samples in this algorithm corresponds to the number of  SGD iterations, and the total runtime is $\Theta_d (\sm d)$---the cost of computing a $d$-dimensional gradient at each iteration.
Details can be found in Appendix \ref{app:online-SGD}.

\begin{theorem}[Online SGD algorithm]\label{thm:upper-bound-online-SGD} Let $\nu_d \in \frL_d$ and set $\cT_\ell$ as in \Cref{ass:non-linearity}. There exists $C_\ell\geq 0$ that only depends on $\ell$ such that for any $\delta>0$, the output $\hat \bw$ of the online SGD estimator \eqref{eq:online-SGD-estimator} achieves $|\<\hat \bw,\bw_*\>| \geq 1/4$ with probability $1-\delta$ when
    \begin{equation}
        \begin{aligned}
\ell \geq 3: \;\;\;\sm \geq C_{\ell}\kappa_{\ell}^2\frac{d^{\ell-1}}{\|\xi_{d,\ell}\|_{L^2}^{2} \,}\log(1/\delta),\qquad \text{ and } \qquad\sT \geq C_\ell \sm \cdot d\,.
        \end{aligned}
    \end{equation}
\end{theorem}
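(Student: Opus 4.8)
The plan is to adapt the classical analysis of online SGD for single-index models from Ben Arous et al.~\cite{arous2021online} to our setting, tracking the single summary statistic $m_t := \<\bw_t, \bw_*\>$ along the trajectory. First I would reduce to the directional problem: since $\cT_\ell(y,r)$ is the nonlinearity fed into the loss \eqref{eq:online-SGD-estimator} and $Q_\ell(\<\bw,\bz\>)$ depends on $\bx$ only through $\bz = \bx/\|\bx\|_2$, the population loss and its gradient depend on $(y,\bx)$ only through $(\cT_\ell(y,r), \bz)$. Using the harmonic expansion \eqref{eq:setting_likelihood_decompo} of the likelihood ratio together with the fact that $\E_{\bz\sim\tau_d}[Q_\ell(\<\bw,\bz\>) Q_{\ell'}(\<\bw_*,\bz\>)] = \delta_{\ell\ell'} \cdot \frac{1}{n_{d,\ell}} Q_\ell(m)$ (the Gegenbauer reproducing/addition formula, cf.~Appendix~\ref{app:harmonic_background}), the population loss becomes, up to additive constants, a function of $m$ alone of the form $L(m) = c - \frac{2}{n_{d,\ell}}\,\E_{\nu_d}[\cT_\ell Q_\ell]\, Q_\ell(m) + \text{(higher-order self-overlap terms)}$. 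By Assumption~\ref{ass:non-linearity}, the useful coefficient is $\alpha_\ell := \E_{\nu_d}[\cT_\ell(Y,R)Q_\ell(Z)] \geq \kappa_\ell^{-1}\|\xi_{d,\ell}\|_{L^2}$, and $\|\cT_\ell\|_\infty \le \kappa_\ell$ controls all the noise terms.

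Next I would write the one-step update for $m_t$. Projected spherical SGD with step size $\eta$ gives $m_{t+1} = m_t + \eta\,(\text{drift}) + \eta\,(\text{martingale increment}) + O(\eta^2)$, where the drift is $-\<\nabla_{\mathrm{sph}} L(m_t), \text{(direction toward }\bw_*)\>$ and, using $Q_\ell'(1) = \Theta_\ell(n_{d,\ell}/d) = \Theta_\ell(d^{\ell-1})$ together with $Q_\ell(m) \approx (\text{const}_\ell)\, m^\ell$ for the relevant regime, the signal drift scales like $\eta\, \alpha_\ell\, d^{\ell-1} m_t^{\ell-1}/n_{d,\ell} \asymp \eta\,\alpha_\ell\, m_t^{\ell-1}/d$. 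The conditional variance of the martingale increment is at most $\eta^2 \kappa_\ell^2 \cdot \E\|\nabla_{\mathrm{sph}} Q_\ell(\<\bw,\bz\>)\|^2 / n_{d,\ell}^2 \asymp \eta^2 \kappa_\ell^2/d$ near the equator (this is the $\chi^2$-type bound on the gradient that drives the $d^{\ell-1}$ rate). The analysis then splits into the standard three phases: (a) escaping the "mediocrity" region $m_t = \Theta(d^{-1/2})$, where the process behaves like a biased random walk and one needs $\asymp d^{\ell-1}$ steps with the right step-size choice $\eta \asymp \kappa_\ell^{-2} d^{(\ell-2)/2}/(\text{something})$ tuned so that the drift beats the diffusion once $m_t \gtrsim d^{-1/2}$; (b) the deterministic growth phase where the drift dominates and $m_t$ climbs from $d^{-1/2}$ to a constant in $O(d^{\ell-1})$ further steps; (c) a final concentration phase showing $m_t$ stabilizes above $1/4$. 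I would control each phase with Freedman/Azuma-type martingale concentration on $m_t$ (or a suitable monotone transform such as $m_t$ or $-\log(1-m_t)$), exactly paralleling \cite{arous2021online,zweig2023single}, collecting a $\log(1/\delta)$ factor from the union bound over phases and the high-probability maximal inequalities. Since each iteration consumes one fresh sample and costs $\Theta_d(d)$ arithmetic to form the $d$-dimensional gradient, $\sm \asymp \kappa_\ell^2 d^{\ell-1}/\|\xi_{d,\ell}\|_{L^2}^2 \cdot \log(1/\delta)$ iterations give $\sT \asymp \sm\cdot d$.

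The main obstacle is the initial phase near the equator $m_0 \asymp d^{-1/2}$: there the signal drift $\asymp \eta\alpha_\ell m_t^{\ell-1}/d$ is, for $\ell\ge 3$, much smaller than the typical fluctuation of the martingale over a single step, so one cannot argue monotone increase; instead one must show that the random walk, despite being essentially unbiased at this scale, has enough variance to reach $m_t \gtrsim \mathrm{polylog}(d)\cdot d^{-1/2}$ within the sample budget, and crucially does not drift back to $-$side or get stuck — this requires a careful second-moment/hitting-time argument for the rescaled process and is where the precise power $d^{\ell-1}$ (rather than, say, $d^{\ell/2}$) is forced. A secondary technical point is making the $O(\eta^2)$ remainder terms and the higher-order self-overlap contributions to the drift genuinely negligible uniformly over the trajectory, which uses $\|\cT_\ell\|_\infty\le\kappa_\ell$ and boundedness of $Q_\ell$ and its derivatives on $[-1,1]$; these are routine but must be checked to keep the constant $C_\ell$ depending only on $\ell$. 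I would organize the full argument in Appendix~\ref{app:online-SGD} as: (1) reduction to the $m_t$ recursion with explicit drift and variance bounds; (2) step-size choice; (3) phase-by-phase martingale analysis; (4) assembling the sample and runtime bounds.
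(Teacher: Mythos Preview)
Your overall plan---reduce to the one-dimensional recursion for $m_t=\<\bw_t,\bw_*\>$, compute the drift via the Gegenbauer reproducing formula, control the martingale increments, and track the trajectory phase by phase following \cite{arous2021online,zweig2023single}---is the same framework the paper uses. Two points are worth correcting, however.

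First, your drift scaling is off by a factor of $d$. With the paper's normalization $Q_\ell(1)=\sqrt{n_{d,\ell}}$, one has $Q_\ell'(1)=\Theta_\ell(d^{\ell/2})$ (not $d^{\ell-1}$), so the population loss $\cL(\bw)=2-2\beta_{d,\ell}\,Q_\ell(m)/\sqrt{n_{d,\ell}}$ has spherical-gradient component along $\bw_*$ scaling like $\beta_{d,\ell}\,m^{\ell-1}$ times an $\ell$-dependent constant, not $\beta_{d,\ell}\,m^{\ell-1}/d$. The $d^{\ell-1}$ sample rate then arises not from a hitting-time variance argument but simply from the step-size restriction: one must take $\eta\asymp c_\ell\,\beta_{d,\ell}\,d^{-\ell/2}$ so that the martingale fluctuation does not swamp the drift near $m\asymp d^{-1/2}$, and the discrete Gr\"onwall-type recursion $m_T\gtrsim m_0+\eta\,\beta_{d,\ell}\sum_t m_t^{\ell-1}$ then gives $\eta\,\tau_{1/2}^+\lesssim d^{\ell/2-1}/\beta_{d,\ell}$, hence $\tau_{1/2}^+\lesssim d^{\ell-1}/\beta_{d,\ell}^2$.

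Second, the paper does not attempt the ``escape from mediocrity'' phase you describe. It instead conditions on the constant-probability event $m_0\ge b/\sqrt{d}$ at initialization (uniform on $\S^{d-1}$), shows via discretization bounds that the trajectory stays above $\Theta(d^{-1/2})$ and reaches $1/2$ within $C_\ell d^{\ell-1}/\beta_{d,\ell}^2$ steps with at least constant probability, and then boosts to probability $1-\delta$ by $O(\log(1/\delta))$ independent restarts on fresh samples. Your proposed variance/hitting-time analysis would also work in principle, but is more delicate and unnecessary here; the restart trick is what produces the $\log(1/\delta)$ factor in the sample bound, not martingale union bounds over phases.
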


The proof of this theorem follows by adapting the arguments in \cite{arous2021online,zweig2023single} and can be found in Appendix \ref{app:online-SGD}.

\paragraph{Harmonic tensor unfolding.} For $\ell \geq 3$, we present a tensor-unfolding algorithm inspired by the seminal work \cite{montanari2014statistical} on Tensor PCA, that achieves optimal sample complexity. We introduce a degree-$\ell$ harmonic tensor $\cH_\ell (\bz) \in \text{Sym}((\R^d)^{\otimes \ell})$ (the space of symmetric $\ell$-tensors in $d$-dimensions). It is defined via the degree-$\ell$ Gegenbauer polynomial as 
\begin{equation}
Q_\ell ( \< \bw , \bz \>) = \< \bw^{\otimes \ell}, \cH_\ell (\bz) \>, \qquad \text{for all }\bw \in \S^{d-1}.
\end{equation}
We provide an explicit expression of $\cH_\ell$ in Appendix \ref{app:tensor-unfolding}. This tensor can be seen as the projection of $\bz^{\otimes \ell}$ into the space of symmetric, trace-less tensors. In particular, it has the reproducing property 
\begin{equation}\label{eq:reproducing_property_tensor}
\E_{\bz} \left[Q_k ( \< \bw , \bz \>) \cH_{\ell} (\bz) \right] = \frac{\delta_{\ell k}}{\sqrt{n_{d,\ell}}} \, \cH_\ell (\bw).
\end{equation}
Given $\sm$ samples $(y_i,\bx_i) \sim \P_{\nu_d,\bw_*}$, we compute the empirical tensor
\begin{equation}\label{eq:empirical_tensor_harmonic}
    \hat{\bT} := \frac{1}{\sm} \sum_{ i \in [\sm]} \cT_{\ell} (y_i,r_i) \cH_\ell (\bz_i).
\end{equation}
By the reproducing property \eqref{eq:reproducing_property_tensor}, the expectation under $\P_{\nu_d,\bw_*}$ of this tensor is proportional to $\cH_\ell (\bw_*)$, which has principal component $\bw_*^{\otimes \ell}$. To extract this component, we will consider two different estimators. We follow \cite{montanari2014statistical} and construct the unfolded matrix of $\hat \bT \in (\reals^{d})^{\otimes (I+J)}$, denoted $\Mat_{I,J}(\hat \bT)\in \reals^{d^{I}\times {d^{J}}}$, with entries
\[ 
 \Mat_{I,J}(\hat \bT)_{(i_1,\ldots,i_{I}),(j_1,\ldots,j_{J})}=\hat \bT_{i_1,\ldots,i_{I},j_1,\ldots,j_{J}},
\]
where we identify $(i_1,\ldots,i_{I})$ with $1+\sum_{j=1}^{I}(i_j-1)d^{j-1}.$

For $\ell \geq 3$ even, our first estimator computes $\bs_1(\Mat_{\ell/2,\ell/2}(\hat \bT))\in \reals^{d^{\lfloor \ell/2 \rfloor}}$, the top left singular vector of $\Mat_{\ell/2,\ell/2}(\hat \bT) \in \R^{d^{\ell/2} \times d^{\ell/2}}$ via power iteration, and return
\begin{equation}\label{estimator_harmonic_tensor_b}\tag{$\mathsf{TU}$-$\mathsf{Alg}$-$\mathsf{b}$}
    \hat \bw  := \textbf{Vec}_{\ell/2}\left(\bs_1\left(\textbf{Mat}_{\ell/2,\ell/2}(\hat \bT)\right)\right),
\end{equation}
where the mapping $\textbf{Vec}_k:\reals^{d^{k}} \rightarrow \S^{d-1}$ applied to $\bu\in \reals^{d^{k}}$ returns the top left singular vector of the matrix $\bU \in \R^{d \times d^{k-1}}$ with entries $\bU_{i_1,(i_2, \ldots ,i_k)} = u_{(i_1,\ldots ,i_k)}$.

When $\ell$ is odd, however, the above estimator \eqref{estimator_harmonic_tensor_b} (now with $I = \lfloor \ell /2 \rfloor$ and $J = \lceil \ell/2 \rceil$) requires $\sm \asymp d^{\lceil\ell/2 \rceil}/ \| \xi_{d,\ell} \|_{L^2}^2$ samples, which is suboptimal by a factor $d^{1/2}$. This is due to the covariance structure of the Harmonic tensor $\cH_\ell (\bz)$: a similar problem, with same suboptimality, arises for tensor PCA with symmetric noise \cite{montanari2014statistical} (if the noise is not symmetric and all entries are assumed independent, this algorithm achieves the optimal threshold in tensor PCA \cite{pmlr-v40-Hopkins15}).

Here, we modify \eqref{estimator_harmonic_tensor_b} by removing the diagonal elements. We set $I = \lfloor (\ell - 1)/2 \rfloor$ and $J = \lceil (\ell +1)/2 \rceil$, and introduce the matrix
\begin{equation}\label{eq:asymmetric-tensor-unfolding}
    \begin{aligned}
        \hat \bM =&~ \Mat_{I,J} (\hat \bT)\Mat_{I,J} (\hat \bT)^\sT - \frac{1}{m^2} \sum_{i = 1}^m \cT_\ell (y_i,r_i)^2 \Mat_{I,J} (\cH_\ell (\bz_i))\Mat_{I,J} (\cH_\ell (\bz_i))^\sT\\
        =&~ \frac{1}{m^2} \sum_{i \neq j}\cT_\ell (y_i,r_i)\cT_\ell (y_j,r_j)\Mat_{I,J} (\cH_\ell (\bz_i))\Mat_{I,J} (\cH_\ell (\bz_j))^\sT \in \R^{d^I \times d^I}.
    \end{aligned}
\end{equation}
For $\ell \geq 3$, our second estimator computes $\bs_1(\hat \bM)\in \reals^{d^{I}}$, the top left singular vector of $\hat \bM$, via power iteration, and return
\begin{equation}\label{estimator_harmonic_tensor}\tag{$\mathsf{TU}$-$\mathsf{Alg}$}
    \hat \bw  := \textbf{Vec}_{I}\left(\bs_1\left(\hat \bM\right)\right),
\end{equation}
where the mapping $\textbf{Vec}_I$ is as defined above.


For both estimators \eqref{estimator_harmonic_tensor_b} and \eqref{estimator_harmonic_tensor},  we show that given enough samples, the leading eigenvector $\bs_1$ is well approximated by $\bw_*^{\otimes I}$, and the vectorization operation returns a good approximation of $\bw_*$. The runtime of this algorithm is dominated by the computation of the top eigenvector $\bs_1 (\hat \bM)$ via power iteration, which requires $\Theta_d (\sm (d^I + d^{J}) \log(d))$ operations.

\begin{theorem}[Harmonic tensor unfolding]\label{thm:upper-bound-tensor-unfolding} Let $\nu_d \in \frL_d$ and set $\cT_\ell$ as in \Cref{ass:non-linearity}. There exist $c_\ell,C_\ell\geq 0$ that only depend on $\ell$ such that the following holds. 
\begin{itemize}
    \item[(i)] For $\ell \geq 3$ even, the output $\hat \bw$ of the balanced harmonic tensor unfolding algorithm \eqref{estimator_harmonic_tensor_b} achieves $|\<\hat \bw,\bw_*\>| \geq 1/4$ with probability $1-\delta$ when
        \begin{equation}
        \begin{aligned}
\ell \;\text{even}: \;\;\;\sm \geq C_\ell \kappa_\ell^2 \frac{d^{\ell /2}}{\| \xi_{d,\ell}\|_{L^2}^2} \Big( 1  + \| \xi_{d,\ell}\|_{L^2} \log (d /\delta) \Big), \quad \text{ and } \quad\sT \geq C_\ell \sm \cdot d^{\ell/2} \log(d).
        \end{aligned}
    \end{equation}
       \item[(ii)] For $\ell \geq 3$, the output $\hat \bw$ of the harmonic tensor unfolding algorithm \eqref{estimator_harmonic_tensor} achieves $|\<\hat \bw,\bw_*\>| \geq 1/4$ with probability $1-e^{-d^{c_\ell}}$ when
        \begin{align}
\ell \;\text{even}:&&& \;\;\;\sm \geq C_\ell \kappa_\ell^2 \frac{d^{\ell /2}}{\| \xi_{d,\ell}\|_{L^2}^2}, \quad \text{ and } &&\sT \geq C_\ell \sm \cdot d^{\ell/2 + 1} \log(d), \\
\ell \;\text{odd}: &&&\;\;\;\sm \geq C_\ell \kappa_\ell^2 \frac{d^{\ell /2}}{\| \xi_{d,\ell}\|_{L^2}^2}, \quad \text{ and } &&\sT \geq C_\ell \sm \cdot d^{\ell/2 + 1/2} \log(d).
        \end{align}
\end{itemize}
\end{theorem}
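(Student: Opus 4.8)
The plan is a signal-plus-noise spectral analysis in the spirit of the tensor-PCA unfolding method of \cite{montanari2014statistical}, run along a common pipeline for both estimators: (1) identify the expectation of the empirical tensor as a planted, rank-one-dominated signal along $\bw_*$; (2) bound the operator norm of the centered fluctuation after unfolding; (3) conclude via an eigenvector perturbation bound that the leading left singular vector is close to $\bw_*^{\otimes I}$; (4) check that the reshaping map $\textbf{Vec}_I$ turns this into a vector $1/4$-correlated with $\bw_*$; (5) read off the cost of power iteration.

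For the signal, plugging the likelihood expansion \eqref{eq:setting_likelihood_decompo} into $\E_{\P_{\nu_d,\bw_*}}[\cT_\ell(y,r)\cH_\ell(\bz)]$ and using $(y,r)\perp\bz$ under $\P_{\nu_d,0}$, orthonormality of the $\{Q_k\}$, and the reproducing property \eqref{eq:reproducing_property_tensor} gives $\E_{\P_{\nu_d,\bw_*}}[\hat\bT]=\tfrac{\lambda_\ell}{\sqrt{n_{d,\ell}}}\cH_\ell(\bw_*)$ with $\lambda_\ell:=\E_{\nu_d}[\cT_\ell(Y,R)Q_\ell(Z)]\geq\kappa_\ell^{-1}\|\xi_{d,\ell}\|_{L^2}$ by \Cref{ass:non-linearity}; for the diagonal-removed estimator, independence across $i\neq j$ gives $\E_{\P_{\nu_d,\bw_*}}[\hat\bM]=(1-\tfrac1\sm)\tfrac{\lambda_\ell^2}{n_{d,\ell}}\Mat_{I,J}(\cH_\ell(\bw_*))\Mat_{I,J}(\cH_\ell(\bw_*))^\sT$. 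I then invoke the structural facts about unfoldings of $\cH_\ell$ proved in Appendix~\ref{app:tensor-unfolding}: $\Mat_{I,J}(\cH_\ell(\bw_*))$ has top singular value $\Theta_\ell(\sqrt{n_{d,\ell}})$, top left singular vector equal to $\bw_*^{\otimes I}$ up to $O_\ell(d^{-1/2})$, and a ratio of largest to second singular value bounded below by $1+\Omega_\ell(1)$; consequently $\opnorm{\Mat(\E\hat\bT)}=\Theta_\ell(\lambda_\ell)$, while the top eigenvalue and the eigengap of $\Mat(\E\hat\bM)$ are both $\Theta_\ell(\lambda_\ell^2)$.

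For the fluctuation, the balanced-even case (i) is the ``vanilla'' argument: $\Mat_{\ell/2,\ell/2}(\hat\bT-\E\hat\bT)$ is an average of $\sm$ i.i.d.\ centered matrices of Frobenius norm $\leq\kappa_\ell\forbnorm{\cH_\ell(\bz)}=\Theta_\ell(\sqrt{n_{d,\ell}})$ with matrix variance proxy $\opnorm{\E_\bz[\Mat(\cH_\ell(\bz))\Mat(\cH_\ell(\bz))^\sT]}=\Theta_\ell(d^{\ell/2})$ (a Schur--Weyl/Gegenbauer computation), so matrix Bernstein yields $\opnorm{\Mat(\hat\bT-\E\hat\bT)}\lesssim_\ell\kappa_\ell\big(\sqrt{d^{\ell/2}\log(d/\delta)/\sm}+d^{\ell/2}\log(d/\delta)/\sm\big)$, which is $\ll\lambda_\ell$ exactly at the stated threshold---the almost-sure term producing the extra $\|\xi_{d,\ell}\|_{L^2}\log(d/\delta)$ factor. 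Part (ii) is the crux. I Hoeffding-decompose $\hat\bM-\E\hat\bM=L+\tilde Q$ with $L$ the linear (degree-one) part and $\tilde Q=\sm^{-2}\sum_{i\neq j}\bC_i\bC_j^\sT$ the off-diagonal quadratic chaos ($\bC_i$ the centered single-sample unfolded harmonic tensors). The linear part is handled by exploiting that it acts essentially only through the rank-one signal direction, so one needs only the action of $\sm^{-1}\sum_i\bC_i$ on the fixed vector $\bw_*^{\otimes J}$; since $\E_\bz\forbnorm{\Mat(\cH_\ell(\bz))\bw_*^{\otimes J}}^2=\Theta_\ell(d^I)$, this contributes well below $d^{\ell/2}/\|\xi_{d,\ell}\|_{L^2}^2$. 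The dominant term is $\tilde Q$: decoupling plus a non-commutative Bernstein bound for the resulting Gaussian-chaos-type matrix, together with $\opnorm{\E\bC\bC^\sT}=\Theta_\ell(d^J)$, $\opnorm{\E\bC^\sT\bC}=\Theta_\ell(d^I)$ and $I+J=\ell$, give $\opnorm{\tilde Q}\lesssim_\ell\kappa_\ell^2 d^{\ell/2}/\sm$ with failure probability $e^{-d^{c_\ell}}$ (via concentration of bounded-degree polynomials of $\bz_i\sim\tau_d$ together with $\|\cT_\ell\|_\infty\leq\kappa_\ell$), which is $\ll\lambda_\ell^2$ precisely when $\sm\asymp d^{\ell/2}/\|\xi_{d,\ell}\|_{L^2}^2$. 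This is exactly where removing the $i=j$ terms pays off: keeping them would inflate the variance proxy by the symmetric-noise factor and force $\sm\asymp d^{\lceil\ell/2\rceil}/\|\xi_{d,\ell}\|_{L^2}^2$, the Montanari--Richard phenomenon.

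To finish, since the signal gap dominates the fluctuation, a one-sided Davis--Kahan/Wedin bound (in the multiplicative-gap form suited to power iteration) gives $\<\bs_1,\bw_*^{\otimes I}\>^2\geq 1-\eta$ for a small constant $\eta$; since $\bw_*^{\otimes I}$ reshapes to the rank-one matrix $\bw_*(\bw_*^{\otimes(I-1)})^\sT$ whose top left singular vector is $\bw_*$ and reshaping is an isometry, one more Davis--Kahan step shows $\textbf{Vec}_I(\bs_1)$ is $(1-O(\sqrt\eta))$-correlated with $\pm\bw_*$, and fixing constants yields $|\<\hat\bw,\bw_*\>|\geq1/4$. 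Finally, the constant-factor spectral gap makes $O(\log d)$ power-iteration steps suffice; using the explicit polynomial form of $\cH_\ell$, each matrix--vector product against $\hat\bM$ (resp.\ $\Mat_{\ell/2,\ell/2}(\hat\bT)$) costs $O_\ell(\sm(d^I+d^J))$ (resp.\ $O_\ell(\sm d^{\ell/2})$), giving the stated runtimes. The main obstacle is Step~2 for $\hat\bM$: obtaining the sharp threshold $d^{\ell/2}/\|\xi_{d,\ell}\|_{L^2}^2$---rather than $d^{\lceil\ell/2\rceil}/\|\xi_{d,\ell}\|_{L^2}^2$---with stretched-exponential probability, which requires the exact moment/variance computations for unfolded harmonic tensors, decoupling and matrix concentration for the quadratic chaos $\sum_{i\neq j}\bC_i\bC_j^\sT$, and the delicate ``action on the signal subspace'' treatment of the linear part.
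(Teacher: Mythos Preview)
Your plan is correct and closely parallels the paper's proof, with the same core ingredients: the reproducing identity \eqref{eq:reproducing_property_tensor} for the signal, the near-rank-one structure of $\Mat(\cH_\ell(\bw_*))$, matrix Bernstein for the fluctuations, Davis--Kahan for the singular-vector perturbation, and the explicit decomposition of $\cH_\ell$ for the runtime. Part (i) matches the paper almost verbatim.

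For part (ii) the organization differs. You Hoeffding-decompose $\hat\bM-\E\hat\bM$ into a linear part $L$ and a pure quadratic chaos $\tilde Q$, then decouple $\tilde Q$. The paper instead decouples the full U-statistic $\hat\bM$ first, then writes the decoupled version as $\tilde\bM=\tfrac{1}{m}\sum_i\bZ_i\bB_i^\sT$ with $\bB_i=\tfrac{1}{m-1}\sum_{j\ne i}\tilde\bZ_j$, and splits $\tilde\bM-\bE\bE^\sT=\bDelta_1+\bDelta_2$ where $\bDelta_1=\tfrac1m\sum_i\obZ_i\bB_i^\sT$ and $\bDelta_2=\tfrac1m\sum_i\bE(\tilde\bZ_i-\bE)^\sT$. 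Your linear term $L$ corresponds to the paper's $\bDelta_2$; the paper handles it directly via $\|\bE\|_\op\le C_\ell$, which is simpler than your ``action on the fixed signal vector'' argument and does not require isolating the rank-one part of $\bE$ from the $O(d^{-1/2})$ correction.

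There is one genuine soft spot in your description of $\tilde Q$. Citing $\opnorm{\E\bC\bC^\sT}=\Theta_\ell(d^J)$ and $\opnorm{\E\bC^\sT\bC}=\Theta_\ell(d^I)$ together with ``non-commutative Bernstein for a Gaussian-chaos-type matrix'' does not by itself give $\opnorm{\tilde Q}\lesssim d^{\ell/2}/m$: a naive one-shot Bernstein after conditioning would use $\sigma^2\asymp d^J\|\bD\|_\op^2/m$ and land at $d^J/m$, not $d^{\ell/2}/m$. What is actually needed---and what the paper does explicitly---is the two-layer argument: first bound $\sup_i\|\bB_i\|_\op\lesssim\kappa_\ell\sqrt{d^J/m}$ by matrix Bernstein on the second copy; then, conditionally on the second copy, apply the \emph{refined} matrix Bernstein with weak variance $\sigma_*^2(\bDelta_1)\le C_\ell\kappa_\ell^2\sup_i\|\bB_i\|_\op^2/m$ (coming from the pointwise bound $\E[\langle\cH_\ell(\bz),\bA\rangle^2]\le C_\ell\|\bA\|_F^2$) and the passage $\sigma^2\le d^I\sigma_*^2$. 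The product $\sqrt{d^I/m}\cdot\sqrt{d^J/m}=d^{\ell/2}/m$ is where the sharp rate emerges. Your proposal has the right endpoint but elides this $\sigma_*$ step, which is precisely the ``main obstacle'' you flag at the end.
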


The proof of this theorem can be found in Appendix \ref{app:tensor-unfolding}.

\begin{remark}
    A computationally more efficient partial trace estimator was proposed to recover the principal component in symmetric tensor PCA \cite{hopkins2016fast,damian2024computational}. Here, however, $\cH_\ell (\bz) [\bI_d]$ (contraction of two indices) projects onto $\cH_{\ell -2} (\bz)$ and lower order harmonics, which defeats the purpose of our estimator (and lead to suboptimal performance if $\ell$ is chosen to be the optimal degree).
\end{remark}

\paragraph*{Optimal algorithms.}
We define the sample-optimal and runtime-optimal degrees as
\begin{equation}
\sl_{\sm,\star} = \argmin_{\ell \geq 1} \frac{\sqrt{n_{d,\ell}}}{\| \xi_{d,\ell} \|_{L^2}^2},\qquad \text{and} \qquad \sl_{\sT,\star} = \argmin_{\ell \geq 1} \frac{n_{d,\ell}}{\| \xi_{d,\ell} \|_{L^2}^2}.
\end{equation}
Then choosing the corresponding algorithm associated to degree $\sl_{\sm,\star} $ (resp.~$\sl_{\sT,\star}$) achieves the optimal sample (resp.~runtime) complexity among $\SQ$ and $\sLDP$ algorithms. For $\sl_{\sm,\star} = \sl_{\sT,\star} \geq 3$ odd, we leave open the possibility that an algorithm achieves both sample and runtime complexity. More generally, in Appendix \ref{app:discussions_spherical_SIMs}, we show how to construct distributions with arbitrary large gaps $\sl_{\sm,\star} \gg \sl_{\sT,\star}$. As discussed in the introduction, this suggest that no single estimator can achieve both optimal complexities in this case, and sample-runtime trade-offs appear.

Specifically, our example proceeds as follows. Let $\sk \geq 1$ be an arbitrary integer. The label $Y$ is a mixture of two single-index models: $Y |R,Z \sim \nu_{1,d}$ with probability $d^{-2\sk}$ and $Y |R,Z\sim \nu_{2,d}$ with probability $1 - d^{-2\sk}$. The two models $\nu_{1,d}$ and $\nu_{2,d}$ are chosen such that optimal sample complexity is achieved at $\sl_{\star,\sm} = 10\sk$ (thanks to $\nu_{2,d}$) and optimal runtime is achieved at $\sl_{\star,\sT} = 4\sk$ (thanks to $\nu_{1,d}$). Thus, in this example:
\begin{itemize}
    \item \textbf{Sample-optimal algorithm:} the harmonic tensor unfolding algorithm at $\sl_{\star,\sm} = 10\sk$ achieves
    \[
    \sm = \widetilde \Theta_d \left( d^{5 \sk} \right), \quad \text{ and } \quad \sT = \widetilde \Theta_d \left( d^{10 \sk} \right).
    \]
        \item \textbf{Runtime-optimal algorithm:} the harmonic tensor unfolding algorithm at $\sl_{\star,\sT} = 4\sk$ achieves
    \[
    \sm = \widetilde \Theta_d \left( d^{6 \sk} \right), \quad \text{ and } \quad \sT = \widetilde \Theta_d \left( d^{8 \sk} \right).
    \]
\end{itemize}

\section{Learning Gaussian single-index models}
\label{sec:specializing_Gaussian}

We now specialize our results to the case of Gaussian single-index model \eqref{eq:intro_Gaussian_SIM}. Recall that $\nu_{d,R} := \chi_d$ and $\nu_d (Y|R,Z) = \rho (Y|R\cdot Z)$. We assume below that $\rho $ is a fixed link distribution of generative exponent $\sk_\star >1$ (defined in Eq.~\eqref{eq:intro_generative_exponent}). In particular, the notations $\asymp$ and $\lesssim$ only hide constants that depend on $\rho$.  From \Cref{sec:spherical_SIM}, we need to decompose $\rho$ into the spherical harmonics basis. Using the decomposition of Hermite polynomials into Gegenbauer polynomials proved in Appendix \ref{app:harmonic_background}:
\begin{equation}\label{eq:hermite-to-gegenbauer}
    \He_{k} (\<\bw_*,\bx\>) = \sum_{\substack{\ell=0\\ \ell\equiv k \text{ mod } 2}}^{k} \beta_{k,\ell} (\| \bx \|_2) Q_{\ell} ( \< \bw_*, \bz \>),\qquad \;\; \|  \beta_{k,\ell} \|_{L^2(\chi_d)}^2 = \Theta_d ( d^{- (k-\ell)/2}),
\end{equation}
we show the following bounds on the $\ell$-th Gegenbauer coefficient $\xi_{d,\ell}$ associated to $\rho$:
\begin{lemma}\label{lem:with-norm-xis-gaussian}
    For all $\ell \leq \sk_\star$, we have 
    \[
\| \xi_{d,\ell} \|^2_{L^2} \asymp d^{-(\sk_\star - \ell)/2} \text{ for }\ell \equiv \sk_\star\textnormal{ mod }2 \quad \text{and} \quad  \| \xi_{d,\ell} \|^2_{L^2} \lesssim d^{-(\sk_\star - \ell +1)/2} \text{ for } \ell \not \equiv \sk_\star \textnormal{ mod }2\,.
\] 
\end{lemma}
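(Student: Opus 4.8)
The plan is to compute $\xi_{d,\ell}(Y,R) = \E_{\nu_d}[Q_\ell(Z)\mid Y,R]$ by relating the Gegenbauer coefficients of $\rho$ to its Hermite coefficients through the expansion \eqref{eq:hermite-to-gegenbauer}. The starting point is the likelihood decomposition: since $\nu_d(Y\mid R,Z) = \rho(Y\mid R\cdot Z)$ and the Hermite coefficients of $\rho$ are $\zeta_k(Y) = \E_\rho[\He_k(G)\mid Y]$, I would first expand the likelihood ratio $\frac{\de\P_{\nu_d,\bw_*}}{\de\P_{\nu_d,0}}$ in the Hermite basis in the variable $G = \<\bw_*,\bx\> = R\cdot Z$, writing it as $1 + \sum_{k\geq 1}\zeta_k(Y)\He_k(R\cdot Z)$ (valid because the squared-integrability in Definition~\ref{def:spherical_SIM}(ii) is inherited from the Gaussian case). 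Then I would substitute \eqref{eq:hermite-to-gegenbauer} to re-expand each $\He_k(R\cdot Z) = \He_k(\<\bw_*,\bx\>)$ in the Gegenbauer basis $Q_\ell(Z)$ with radial coefficients $\beta_{k,\ell}(R)$. Collecting the coefficient of $Q_\ell(Z)$ and matching against \eqref{eq:setting_likelihood_decompo} identifies
\[
\xi_{d,\ell}(Y,R) = \sum_{\substack{k\geq \ell\\ k\equiv \ell\bmod 2}} \zeta_k(Y)\,\beta_{k,\ell}(R).
\]

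Next I would take $L^2(\nu_d)$ norms. Because $R\perp Z$ under $\nu_{d,0}$ and the $Q_\ell$ are orthonormal, $\|\xi_{d,\ell}\|_{L^2}^2 = \E_{\nu_{d,0}}\big[\xi_{d,\ell}(Y,R)^2\big]$, which expands into $\sum_{k,k'\equiv\ell} \E[\zeta_k(Y)\zeta_{k'}(Y)]\,\E_{\chi_d}[\beta_{k,\ell}(R)\beta_{k',\ell}(R)]$. The norm bounds $\|\beta_{k,\ell}\|_{L^2(\chi_d)}^2 = \Theta_d(d^{-(k-\ell)/2})$ from \eqref{eq:hermite-to-gegenbauer} control the diagonal terms; the cross terms I would bound by Cauchy–Schwarz, giving $|\E_{\chi_d}[\beta_{k,\ell}\beta_{k',\ell}]| = O_d(d^{-(k-\ell)/4 - (k'-\ell)/4})$, so that all off-diagonal contributions are of strictly smaller order than the leading diagonal term. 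The dominant term is the one with smallest $k\equiv\ell\bmod 2$ having $\E[\zeta_k(Y)^2] = \|\zeta_k\|_{L^2(\rho)}^2 > 0$. Now the definition of the generative exponent \eqref{eq:intro_generative_exponent} tells us $\zeta_k \equiv 0$ for $1\leq k < \sk_\star$ and $\zeta_{\sk_\star}\not\equiv 0$. I would split by parity: if $\ell \equiv \sk_\star \bmod 2$, the smallest admissible $k$ is exactly $\sk_\star$ (all smaller ones vanish by the generative-exponent definition, since they have the same parity as $\ell$ only when $\leq \sk_\star$ — and here $\ell \le \sk_\star$ so $k=\sk_\star$ is the first surviving term), yielding $\|\xi_{d,\ell}\|_{L^2}^2 \asymp \|\zeta_{\sk_\star}\|_{L^2(\rho)}^2\, d^{-(\sk_\star-\ell)/2} \asymp d^{-(\sk_\star-\ell)/2}$; if $\ell \not\equiv \sk_\star\bmod 2$, the smallest $k\geq\ell$ with $k\equiv\ell\bmod 2$ and $\zeta_k\not\equiv 0$ is at least $\sk_\star+1$, giving $\|\xi_{d,\ell}\|_{L^2}^2 \lesssim d^{-(\sk_\star-\ell+1)/2}$.

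The main obstacle is the careful handling of the off-diagonal cross terms in the norm expansion: I need a uniform-in-$d$ bound showing $\sum_{k,k'\equiv\ell}|\E[\zeta_k\zeta_{k'}]|\,|\E_{\chi_d}[\beta_{k,\ell}\beta_{k',\ell}]|$ converges and is dominated by the leading diagonal term. This requires knowing that $\sum_k \|\zeta_k\|_{L^2(\rho)}^2 < \infty$ (which follows from $\frac{\de\rho}{\de\rho_0}\in L^2$, i.e. the Gaussian analogue of Definition~\ref{def:spherical_SIM}(ii)) together with an explicit decay estimate on $|\E_{\chi_d}[\beta_{k,\ell}\beta_{k',\ell}]|$ — not just on the diagonal norms — which I would extract from the explicit formula for $\beta_{k,\ell}$ in terms of the $\chi_d$-moments established in Appendix~\ref{app:harmonic_background}; an $\ell$- and $k$-dependent (but $d$-independent) constant in front of $d^{-(k-\ell)/4-(k'-\ell)/4}$ suffices since the link distribution $\rho$, hence $\sk_\star$, is fixed. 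The remaining steps — the Hermite-to-Gegenbauer substitution and the parity bookkeeping — are routine given \eqref{eq:hermite-to-gegenbauer} and the definition of $\sk_\star$.
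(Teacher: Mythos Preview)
Your approach has the right skeleton but contains a genuine gap: you conflate two different null distributions. The Hermite expansion $1+\sum_{k\geq 1}\zeta_k(Y)\He_k(R\cdot Z)$ is the likelihood ratio of $\nu_d$ against the \emph{completely decoupled} null $\bar\nu_{d,0}=\nu_{d,Y}\otimes\chi_d\otimes\tau_{d,1}$, not against $\nu_{d,0}=\nu_{d,Y,R}\otimes\tau_{d,1}$ from Definition~\ref{def:spherical_SIM}. Consequently your identity for $\xi_{d,\ell}$ is off by a denominator: matching Gegenbauer coefficients in \eqref{eq:setting_likelihood_decompo} actually gives
\[
\xi_{d,\ell}(Y,R)=\frac{\sum_{k\equiv\ell}\zeta_k(Y)\,\beta_{k,\ell}(R)}{1+\sum_{k}\zeta_k(Y)\,\beta_{k,0}(R)},
\]
since the ratio of the two nulls is $\bar\xi_{d,0}(Y,R)=1+\sum_k\zeta_k(Y)\beta_{k,0}(R)$. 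The same conflation invalidates your factorization step: under $\nu_{d,0}$ the pair $(Y,R)$ is \emph{not} independent (in a Gaussian SIM, $Y$ depends on $G=R\cdot Z$ and $G$ is correlated with $R=\|\bx\|_2$), so $\E_{\nu_{d,0}}[\zeta_k(Y)\zeta_{k'}(Y)\beta_{k,\ell}(R)\beta_{k',\ell}(R)]$ does not split as a product of a $Y$-expectation and an $R$-expectation.

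The paper's proof addresses exactly this point. It works with the quantity $\bar\xi_{d,\ell}(Y,R)=\sum_{k\equiv\ell}\zeta_k(Y)\beta_{k,\ell}(R)$ (your numerator), for which the factorization \emph{does} hold under $\bar\nu_{d,0}$, and establishes the claimed rates on $\E_{\bar\nu_{d,0}}[\bar\xi_{d,\ell}^2]$ via the diagonal/off-diagonal bookkeeping you outline. It then transfers these rates to $\|\xi_{d,\ell}\|_{L^2(\nu_d)}^2$ by a change-of-measure argument: writing $\psi(Y,R)=\bar\xi_{d,0}(Y,R)-1$ and showing $\|\psi\|_{L^p(\bar\nu_{d,0})}\lesssim (p/d^{1/4})^{\sk_\star}$ (via hypercontractivity of $\beta_{k,0}(R)$ under $\chi_d$), one gets that the correction from the denominator is lower order. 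Your proposal omits this step entirely; without it the argument does not close.
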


Plugging these estimates in Theorem \ref{thm:lower_bounds_spherical_SIMs}, we deduce the following complexity lower bounds and associated optimal degrees for learning $\rho$. For sample complexity, we obtain
$$ \sm \gtrsim \sM_\star(\nu_d)\asymp \inf_{\ell \geq 1} \frac{d^{\ell/2}}{ \norm{\xi_{d,\ell}}_{L^2}^2} \asymp \inf_{ \substack{\ell \geq 1 \\ \ell\equiv \sk_\star \text{ mod } 2}} \frac{d^{\ell/2}}{ d^{-(\sk_\star-\ell)/2}} \asymp \inf_{\substack{\ell \geq 1 \\ \ell\equiv \sk_\star \text{ mod } 2}} d^{\sk_\star/2} \asymp  d^{\sk_\star/2}\,, $$
and any $\ell \leq \sk_\star$ with $\ell\equiv \sk_\star \text{ mod } 2$ is sample optimal (for asymptotic rates). For runtime, we get
$$ \sT \gtrsim \sQ_\star(\nu_d)\asymp \inf_{\ell \geq 1} \frac{d^{\ell}}{ \norm{\xi_{d,\ell}}_{L^2}^2} \asymp \inf_{\substack{\ell \geq 1 \\ \ell\equiv \sk_\star \text{ mod } 2}} \frac{d^{\ell}}{ d^{-(\sk_\star-\ell)/2}} \asymp \inf_{\substack{\ell \geq 1 \\ \ell\equiv \sk_\star \text{ mod } 2}} d^{(\sk_\star+\ell)/2}\asymp d^{\lceil (\sk_\star+1)/2 \rceil}, $$
and only $\ell = 1$ ($\sk_\star$ odd) or $\ell =2$ ($\sk_\star$ even) are runtime optimal. Thus, although $\rho$ has vanishing projection on degree $\ell < \sk_\star$ harmonic spaces, this is offset by smaller $n_{d,\ell} = \dim(V_{d,\ell})$, and we should always choose $\sl_{\sT, \star} = \sl_{\sm,\star} \in \{1,2\}$ (depending on $\sk_\star$ parity). In particular, we can use our general spectral algorithm \eqref{eq:spectral_estimator} to learn $\rho$ with optimal sample and runtime complexity:

\begin{corollary}\label{cor:optimal_spectral_Gaussian}
    The estimator \eqref{eq:spectral_estimator} achieves optimal $\sm = \Theta_d (d^{\sk_\star/2})$ and $\sT = \Theta_d ( d^{\sk_\star/2+1} \log(d))$.
\end{corollary}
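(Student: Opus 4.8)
The plan is to run the spectral algorithm \eqref{eq:spectral_estimator} at degree $\sl_\star=1$ when $\sk_\star$ is odd and at degree $\sl_\star=2$ when $\sk_\star$ is even, and to combine the per-subspace guarantee of Theorem~\ref{thm:upper-bound-spectral} with the Gegenbauer estimates of Lemma~\ref{lem:with-norm-xis-gaussian} and the lower bounds of Theorem~\ref{thm:lower_bounds_spherical_SIMs}. First I would record that, for Gaussian inputs, Assumption~\ref{ass:non-linearity} holds at degree $\sl_\star$ with $\kappa_{\sl_\star}$ depending only on $\rho$ (as asserted in Section~\ref{sec:background}; since $\rho$ is fixed and $\sk_\star<\infty$ one may take $\cT_{\sl_\star}$ a suitably bounded truncation of $\xi_{d,\sl_\star}/\|\xi_{d,\sl_\star}\|_{L^2}$), so that \eqref{eq:spectral_estimator} is well-defined and Theorem~\ref{thm:upper-bound-spectral} applies with $\SQ/\sLDP$-optimal degree. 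Since $\sl_\star\equiv \sk_\star \bmod 2$, Lemma~\ref{lem:with-norm-xis-gaussian} gives $\|\xi_{d,\sl_\star}\|_{L^2}^2\asymp d^{-(\sk_\star-\sl_\star)/2}$, which is all that is needed to evaluate the bounds of Theorem~\ref{thm:upper-bound-spectral}.

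For $\sk_\star$ odd I would invoke the boosted $\ell=1$ bound in Theorem~\ref{thm:upper-bound-spectral}; the point requiring care is that the ``additional condition'' of the boosted estimator is met by every fixed Gaussian link distribution of finite generative exponent, which I would verify in Appendix~\ref{app:spectral} as a mild non-degeneracy statement on the degree-one component. Granting this and taking $\delta$ constant, $\sm\asymp \sqrt{d}/\|\xi_{d,1}\|_{L^2}^2\asymp d^{1/2}\cdot d^{(\sk_\star-1)/2}=d^{\sk_\star/2}$ and $\sT\asymp \sm\,d\asymp d^{\sk_\star/2+1}$ (up to logarithmic factors from the boosting step). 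For $\sk_\star$ even I would use the $\ell=2$ bound: $d/\|\xi_{d,2}\|_{L^2}^2\asymp d^{\sk_\star/2}$, with correction factor $1+\|\xi_{d,2}\|_{L^2}\log^2 d$. When $\sk_\star\ge 4$, $\|\xi_{d,2}\|_{L^2}\asymp d^{-(\sk_\star-2)/4}=o(\log^{-2}d)$, so the factor is $1+o(1)$; when $\sk_\star=2$, $\|\xi_{d,2}\|_{L^2}\asymp 1=\Omega(\log^{-2}d)$ and the remark following Theorem~\ref{thm:upper-bound-spectral} removes the $\log^2 d$ factor. In either parity one obtains $\sm\asymp d^{\sk_\star/2}$ and $\sT\asymp \sm\,d\log d\asymp d^{\sk_\star/2+1}\log d$, the $\log d$ coming only from the power-iteration extraction of the top eigenvector of $\hat\bM$ when $\sk_\star$ is even.

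For the matching $\Omega$ direction I would plug the same estimates of Lemma~\ref{lem:with-norm-xis-gaussian} into Theorem~\ref{thm:lower_bounds_spherical_SIMs}. The sequence $\{\nu_d\}$ satisfies Assumption~\ref{assumption:main-LDP} (with $p=\sk_\star$, since $\sM_\star(\nu_d)\asymp d^{\sk_\star/2}$ by the computation preceding the corollary), so Theorem~\ref{thm:lower_bounds_spherical_SIMs}(ii) together with the information-theoretic bound gives $\sm=\Omega(\max\{d,\sM_\star(\nu_d)\})=\Omega(d^{\sk_\star/2})$ for $\sk_\star>1$, and Theorem~\ref{thm:lower_bounds_spherical_SIMs}(i) plus the $\sm d$ processing cost gives $\sT=\Omega(\max\{\sQ_\star(\nu_d),\,d\cdot d^{\sk_\star/2}\})=\Omega(d^{\sk_\star/2+1})$ (using $\sQ_\star(\nu_d)\asymp d^{\lceil(\sk_\star+1)/2\rceil}\le d^{\sk_\star/2+1}$). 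Matching these with the upper bounds of the previous paragraph yields $\sm=\Theta_d(d^{\sk_\star/2})$ and $\sT=\Theta_d(d^{\sk_\star/2+1}\log d)$, as claimed.

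The only step that is not pure bookkeeping is the verification, in the Gaussian regime, of the side condition under which the boosted $\ell=1$ spectral estimator attains the improved $\sqrt{d}/\|\xi_{d,1}\|_{L^2}^2$ sample rate; everything else is substituting Lemma~\ref{lem:with-norm-xis-gaussian} into Theorems~\ref{thm:lower_bounds_spherical_SIMs} and~\ref{thm:upper-bound-spectral} and separately tracking the two parity cases, with $\sk_\star=2$ as a minor edge case on the logarithmic factor.
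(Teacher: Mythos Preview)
Your proposal is correct and follows essentially the same route as the paper: the corollary is stated as an immediate consequence of plugging Lemma~\ref{lem:with-norm-xis-gaussian} into Theorem~\ref{thm:upper-bound-spectral} at degree $\sl_\star\in\{1,2\}$ (depending on the parity of $\sk_\star$), together with the lower bounds of Theorem~\ref{thm:lower_bounds_spherical_SIMs}. One small imprecision: the ``additional condition'' for the boosted $\ell=1$ estimator is not a non-degeneracy condition on the degree-one component, but rather the existence of some $\ell\ge 3$ with $\sqrt{d^\ell}/\|\xi_{d,\ell}\|_{L^2}^2\lesssim \sqrt{d}/\|\xi_{d,1}\|_{L^2}^2\vee d$ (see the assumption preceding the boosting subsection in Appendix~\ref{app:spectral}); for Gaussian SIMs this follows from Lemma~\ref{lem:with-norm-xis-gaussian} by taking any $\ell\le\sk_\star$ with $\ell\equiv\sk_\star\bmod 2$, exactly as the paper notes.
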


\paragraph*{Estimating using only directional information.} Consider the same Gaussian SIM with link function $\rho $, but suppose we only observe the pair $(y,\bz)$, i.e., the label and the direction of the input. This defines a new spherical SIM $\nu_d$ with fixed radius $\nu_{d,R}=\delta_{R = 1}$ and conditional distribution 
\[
\nu_d ( Y |R,Z) = \E_{\Tilde R}[\rho ( Y |\Tilde R Z)],\quad \text{where} \;\;\Tilde{R}\sim \chi_d.
\]
For example, this setting corresponds to the common practice in statistics and machine learning of normalizing input vectors to unit norm. We show below that, in Gaussian single-index models, such normalization necessarily leads to a quadratic increase in runtime complexity.

Under this model, the Gegenbauer coefficients of $\nu_d$ scale as:
\begin{lemma}\label{lem:without-norm-xis-gaussian}
 For all $\ell \leq \sk_\star$, we have
    \[
\| \xi_{d,\ell} \|^2_{L^2} \asymp d^{-(\sk_\star - \ell)} \text{ for }\ell \equiv \sk_\star\textnormal{ mod }2 \qquad \text{and} \qquad  \| \xi_{d,\ell} \|^2_{L^2} \lesssim d^{-(\sk_\star - \ell +1)} \text{ for } \ell \not \equiv \sk_\star \textnormal{ mod }2\,.
\] 
\end{lemma}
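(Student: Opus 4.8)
The plan is to reduce the computation of the Gegenbauer coefficients of the "directional-only" model $\nu_d(Y|R,Z) = \E_{\tilde R}[\rho(Y|\tilde R Z)]$ to the already-established estimates of Lemma \ref{lem:with-norm-xis-gaussian} for the full Gaussian model. The key observation is that the likelihood ratio of the directional model is obtained from that of the full Gaussian model by averaging over the radius $\tilde R \sim \chi_d$. Concretely, writing $\xi_{d,\ell}^{\mathrm{full}}(Y,R)=\E_\rho[Q_\ell(Z)|Y,R]$ for the coefficients from Lemma \ref{lem:with-norm-xis-gaussian} (which depend on $R$), the directional coefficients are a conditional average: $\xi_{d,\ell}(Y) = \E[\, \xi_{d,\ell}^{\mathrm{full}}(Y,\tilde R)\, \psi(Y,\tilde R)\mid Y]$ for an appropriate reweighting $\psi$ coming from the change of measure between $\nu_{d,Y,R}$ and the marginal law of $Y$ in the directional model. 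I would first set up this relation carefully using the harmonic expansion \eqref{eq:setting_likelihood_decompo} applied to both models, and the fact that $Q_\ell$ does not depend on $R$, so the $\ell$-th harmonic component of the directional likelihood ratio is exactly the $R$-average of the $\ell$-th harmonic component of the full one.

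Next, I would translate the $R$-averaging into the Hermite-to-Gegenbauer language of \eqref{eq:hermite-to-gegenbauer}. Recall that for the full model, the generative exponent $\sk_\star$ means $\E_\rho[\He_k(G)|Y]=0$ for $k<\sk_\star$ and is nonzero for $k=\sk_\star$; expanding $\He_k(\<\bw_*,\bx\>)$ into $\sum_\ell \beta_{k,\ell}(\|\bx\|_2)Q_\ell(\<\bw_*,\bz\>)$ and taking conditional expectations gives, for $\ell\equiv \sk_\star$ mod $2$ and $\ell\le \sk_\star$, a leading contribution from $k=\sk_\star$ proportional to $\beta_{\sk_\star,\ell}(R)\,\zeta_{\sk_\star}(Y)$, with $\|\beta_{\sk_\star,\ell}\|_{L^2(\chi_d)}^2 = \Theta_d(d^{-(\sk_\star-\ell)/2})$. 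The point is that in the directional model one integrates out $R$: the coefficient becomes $\E_{\tilde R\sim\chi_d}[\beta_{\sk_\star,\ell}(\tilde R)]\,\zeta_{\sk_\star}(Y)$ (up to the reweighting $\psi$, which is $O(1)$ in $L^2$ and does not affect the order). I would then show that this expectation satisfies $\big(\E_{\tilde R}[\beta_{\sk_\star,\ell}(\tilde R)]\big)^2 = \Theta_d(d^{-(\sk_\star-\ell)})$ — i.e., averaging over $\tilde R$ produces an extra factor $d^{-(\sk_\star-\ell)/2}$ compared to $\|\beta_{\sk_\star,\ell}\|_{L^2}^2 = \Theta_d(d^{-(\sk_\star-\ell)/2})$. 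This is the crux: it reflects that $\beta_{\sk_\star,\ell}(r)$, as a function of $r$, has mean of order $d^{-(\sk_\star-\ell)/2}$ relative to its $L^2(\chi_d)$-norm, essentially because $\beta_{\sk_\star,\ell}(r)$ behaves like a degree-$(\sk_\star-\ell)/2$ polynomial in $(r^2-d)/\sqrt{d}$ (a Laguerre/generalized-Hermite-type object orthogonal to constants), so $\chi_d$-averaging against it kills the top term and leaves something smaller by the stated factor. I would obtain the explicit form of $\beta_{k,\ell}$ from Appendix \ref{app:harmonic_background} and compute this mean directly.

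For the off-parity case $\ell\not\equiv\sk_\star$ mod $2$, the same argument applies starting from the smallest admissible Hermite level $k=\sk_\star+1$ (since $\He_k$ with $k\equiv\ell$ mod $2$ is the lowest one contributing to $Q_\ell$ and the first such $k\ge\ell$ beyond what parity-of-$\sk_\star$ allows is $\sk_\star+1$), together with the bound $\|\beta_{\sk_\star+1,\ell}\|_{L^2(\chi_d)}^2 = \Theta_d(d^{-(\sk_\star+1-\ell)/2})$ and possibly nonzero higher conditional Hermite coefficients; the $R$-averaging again contributes the extra $d^{-(\sk_\star+1-\ell)/2}$, yielding the claimed upper bound $\|\xi_{d,\ell}\|_{L^2}^2 \lesssim d^{-(\sk_\star-\ell+1)}$. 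The main obstacle I anticipate is making the "$R$-averaging kills the leading term" step fully rigorous and uniform in $d$: one must control not just the leading $k=\sk_\star$ (resp.\ $\sk_\star+1$) Hermite contribution but verify that all higher-$k$ contributions, after $\chi_d$-averaging and reweighting by $\psi$, remain of the same or smaller order, so that no cancellation or enhancement changes the exponent; handling the reweighting $\psi$ (whose $L^2$-boundedness follows from $\frac{\de\nu_d}{\de\nu_{d,0}}\in L^2$ in Definition \ref{def:spherical_SIM}) and the interplay between the sum over $k$ and the sum over $\ell$ will require some care, but should go through by the same kind of estimates used to prove Lemma \ref{lem:with-norm-xis-gaussian}.
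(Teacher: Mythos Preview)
Your plan is essentially correct and lands on the same key estimate as the paper, namely $\big(\E_{\tilde R\sim\chi_d}[\beta_{k,\ell}(\tilde R)]\big)^2 \asymp d^{-(k-\ell)}$ (this is exactly the second part of Lemma~\ref{lem:moments-of-coefficients}); once you have this, the diagonal term at $k=\sk_\star$ dominates and the cross terms are controlled just as in the proof of Lemma~\ref{lem:with-norm-xis-gaussian}. The one place where your route is more complicated than necessary is the reweighting $\psi$. You set things up by writing $\xi_{d,\ell}(Y)=\E[\xi_{d,\ell}^{\mathrm{full}}(Y,\tilde R)\,\psi(Y,\tilde R)\mid Y]$, which forces you to control the conditional law of $R$ given $Y$ under $\nu_d$. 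The paper avoids this entirely: rather than going through $\xi_{d,\ell}^{\mathrm{full}}$, it expands the likelihood ratio of the \emph{original} Gaussian model against the fully decoupled null $\bar\nu_{d,0}=\nu_Y\otimes\chi_d\otimes\tau_{d,1}$ in the Hermite basis, and then simply integrates out $r\sim\chi_d$ (independent under $\bar\nu_{d,0}$), obtaining directly $\frac{\de\upsilon_d}{\de\upsilon_{d,0}}(y,z)-1=\sum_\ell Q_\ell(z)\sum_{k\in\cI_\ell}\zeta_k(y)\,\E_r[\beta_{k,\ell}(r)]$ and hence $\xi_{d,\ell}(y)=\sum_{k\in\cI_\ell}\zeta_k(y)\,\E_r[\beta_{k,\ell}(r)]$ with no reweighting at all. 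The reason this works cleanly is that in the directional model $R=1$ deterministically, so the partially decoupled null $\nu_{d,Y,R}\otimes\tau_{d,1}$ and the fully decoupled null coincide; thus the change-of-measure step (the analogue of Claim~\ref{clm:apprxoimation-of-psi} in the proof of Lemma~\ref{lem:with-norm-xis-gaussian}) is unnecessary here, which removes precisely the obstacle you flag at the end of your plan. After this simplification the remainder of the computation is identical to your Cases (a) and (b).
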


Similarly as above, it is easy to verify that the sample-optimal degree is now always $\sl_{\sm,\star}=\sk_\star$, while the runtime-optimal degrees are $\ell\equiv \sk_\star \text{ mod }2$, $\ell \leq \sk_\star$. The new complexity lower bounds  are given by
\[
\sm \gtrsim d^{\sk_\star / 2}  , \qquad \sT \gtrsim d^{\sk_\star}.
\]
Thus, while the sample complexity stays the same, the optimal runtime goes from $d^{\sk_\star/2+1}$ to $d^{\sk_\star}$. Again, we can use our general estimators from Section \ref{sec:spherical_SIM} to match these lower bounds:

\begin{corollary}
    For any $3 \leq \ell \leq \sk_\star$, $\ell \equiv \sk_\star \text{ mod }2$, estimator \eqref{eq:online-SGD-estimator} achieves $\sm =\Theta_d( d^{\sk_\star -1})$ and $\sT =\Theta_d(d^{\sk_\star})$. For $\ell = \sk_\star$, estimator \eqref{estimator_harmonic_tensor} achieves $\sm = \Theta_d(d^{\sk_\star/2})$ and $\sT=\Theta_d(d^{\sk_\star+1}\log(d)) $.
\end{corollary}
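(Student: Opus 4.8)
The plan is to combine Lemma~\ref{lem:without-norm-xis-gaussian} with the general upper bounds from Section~\ref{sec:spherical_SIM}, checking in each case that Assumption~\ref{ass:non-linearity} holds with $\kappa_\ell$ depending only on $\rho$. First I would verify the nonlinearity condition: for the directional-only model $\nu_d$ with generative exponent $\sk_\star$, the natural choice is to take (a suitable rescaling of) the post-processing $\zeta_{\sk_\star}(Y)$ witnessing the generative exponent of $\rho$, pulled back through the relation between $\He_{\sk_\star}$ and the Gegenbauer expansion in \eqref{eq:hermite-to-gegenbauer}. Concretely, for $\ell \equiv \sk_\star \bmod 2$ with $\ell \le \sk_\star$, one shows $\E_{\nu_d}[\cT_\ell(Y)Q_\ell(Z)] \gtrsim \|\xi_{d,\ell}\|_{L^2}$ with bounded, normalized $\cT_\ell$; since $\|\xi_{d,\ell}\|_{L^2}^2 \asymp d^{-(\sk_\star - \ell)}$ by Lemma~\ref{lem:without-norm-xis-gaussian}, the constant $\kappa_\ell$ can be taken to depend only on $\rho$ and $\ell$ (and hence only on $\rho$ once $\ell \le \sk_\star$ is fixed). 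This step essentially amounts to transporting the Gaussian-Hermite argument of \cite{damian2024computational} to the directional coordinate, and is the place where the boundedness/$L^2$ control of $\cT_\ell$ must be carefully justified.

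Next I would plug these estimates into the two algorithmic theorems. For the first claim, apply Theorem~\ref{thm:upper-bound-online-SGD} at any fixed degree $3 \le \ell \le \sk_\star$ with $\ell \equiv \sk_\star \bmod 2$: it gives $\sm \gtrsim d^{\ell-1}/\|\xi_{d,\ell}\|_{L^2}^2 \asymp d^{\ell - 1} \cdot d^{\sk_\star - \ell} = d^{\sk_\star - 1}$ and $\sT \asymp \sm \cdot d \asymp d^{\sk_\star}$, matching the stated $\SQ$/runtime lower bound $\sT \gtrsim d^{\sk_\star}$. For the second claim, apply Theorem~\ref{thm:upper-bound-tensor-unfolding}(ii) at $\ell = \sk_\star$: since $\|\xi_{d,\sk_\star}\|_{L^2}^2 \asymp d^{0} = \Theta_d(1)$, the sample bound $\sm \gtrsim d^{\sk_\star/2}/\|\xi_{d,\sk_\star}\|_{L^2}^2 \asymp d^{\sk_\star/2}$ matches the $\sLDP$ lower bound, and the runtime bound reads $\sT \asymp \sm \cdot d^{\sk_\star/2 + 1}\log d \asymp d^{\sk_\star + 1}\log d$ when $\sk_\star$ is even, and $\sT \asymp \sm \cdot d^{\sk_\star/2 + 1/2}\log d \asymp d^{\sk_\star + 1/2}\log d$ when $\sk_\star$ is odd; in either case this is $\Theta_d(d^{\sk_\star + 1}\log d)$ up to the claimed factor (using $\sk_\star \ge 2$ here, or stating the sharper odd-case rate). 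One should also note that the lower bounds $\sm \gtrsim d^{\sk_\star/2}$, $\sT \gtrsim d^{\sk_\star}$ quoted just before the corollary follow from Theorem~\ref{thm:lower_bounds_spherical_SIMs} applied to this $\nu_d$ together with Lemma~\ref{lem:without-norm-xis-gaussian}, so the corollary's ``achieves'' statements are indeed matching upper bounds.

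The main obstacle is the verification of Assumption~\ref{ass:non-linearity} for the directional model, i.e.\ producing a bounded normalized test function $\cT_\ell(Y)$ with $L^2$-correlation $\gtrsim \|\xi_{d,\ell}\|_{L^2}$ against $Q_\ell(Z)$ uniformly in $d$. The subtlety is that $\nu_d(Y|R,Z) = \E_{\tilde R\sim\chi_d}[\rho(Y|\tilde R Z)]$ mixes over the radius, so the clean Hermite structure of $\rho$ is partially washed out; one must argue that the degree-$\sk_\star$ Gegenbauer component survives this averaging with the right order of magnitude (this is precisely the content of Lemma~\ref{lem:without-norm-xis-gaussian}, whose proof I would lean on) and, moreover, that the optimal test function can be taken bounded. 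If boundedness of $\cT_\ell$ cannot be arranged directly, one falls back on the weaker moment-based version of the assumption mentioned after Assumption~\ref{ass:non-linearity}, at the cost of extra $\polylog(d)$ factors in $\sm$ --- which is harmless for the $\Theta_d(\cdot)$ rates claimed here only if one is content with $\widetilde\Theta_d$; so some care is needed to get the clean $\Theta_d$ constants, likely by invoking the fixed-$\rho$ regularity already used to prove the Gaussian-case Corollary~\ref{cor:optimal_spectral_Gaussian}. The remaining steps (plugging the $\xi$-estimates into the theorems, matching against the lower bounds) are routine arithmetic.
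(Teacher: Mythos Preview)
Your proposal is correct and follows the same route as the paper: the corollary is obtained by plugging the estimate $\|\xi_{d,\ell}\|_{L^2}^2 \asymp d^{-(\sk_\star-\ell)}$ from Lemma~\ref{lem:without-norm-xis-gaussian} into Theorems~\ref{thm:upper-bound-online-SGD} and~\ref{thm:upper-bound-tensor-unfolding}(ii), with Assumption~\ref{ass:non-linearity} verified via the bounded truncation $\cT_\ell \approx \zeta_{\sk_\star}/\|\zeta_{\sk_\star}\|_{L^2}$ (the construction from \cite[Lemma F.2]{damian2024computational} that the paper invokes in Section~\ref{sec:smoothing_partial_trace}). Your observation that the tensor-unfolding runtime is actually $\Theta_d(d^{\sk_\star+1/2}\log d)$ when $\sk_\star$ is odd is correct; the corollary as stated records only the even-case (worst-case) rate.
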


The proofs of all the results in this section can be found in Appendix \ref{app:Gaussian-SIMs-proofs}.

\section{Revisiting vanilla SGD, landscape smoothing, and partial trace}
\label{sec:smoothing_partial_trace}

In light of our results, we revisit the three algorithms for learning Gaussian SIMs mentioned in the introduction \cite{arous2021online,damian2024smoothing,damian2024computational}, and reinterpret their behavior through the lens of harmonic decomposition. Below, we state informal observations which aim to build intuition rather than make formal statements. We provide supporting computations in Appendix \ref{app:discussions_Gaussian_SIMs}.

Consider a Gaussian single-index model~\eqref{eq:intro_Gaussian_SIM} with link function $\rho \in \cP(\R^2)$ and generative exponent $\sk_\star := \sk_\star(\rho)$ as defined in Eq.~\eqref{eq:intro_generative_exponent}. Throughout, let $\cT_*: \R \to \R$ be a transformation of the label satisfying:
\[
\| \cT_* \|_{L^2} = 1, \qquad \| \cT_* \|_{\infty} \leq C, \qquad \Gamma_{\sk_\star} :=  \E_\rho [ \cT_* (Y) \He_{\sk_\star} (G) ] \geq \frac{1}{C}\| \zeta_{\sk_\star} \|_{L^2} .\]
Such transformations always exist (see \cite[Lemma F.2]{damian2024computational}). Informally, one can construct $\cT_*$ by truncating $ \zeta_{\sk_\star} / \| \zeta_{\sk_{\star}}\|_{L^2}$ (with truncation at large enough value as to approximately preserve the correlation with $\He_{\sk_\star}$).



\paragraph*{Online SGD with Hermite neuron.} In a seminal paper, Ben Arous et al.~\cite{arous2021online} studied online SGD on a non-convex loss over $\bw \in \S^{d-1}$, with planted signal $\bw_*$ and a $\sk_\star$-order saddle at the equator $\<\bw,\bw_*\> = 0$. Adapting their results to the task of learning Gaussian SIMs, their algorithm performs online SGD on the population loss
\begin{equation}\tag{$\textsf{HeSGD}$}\label{eq:oSGD}
    \min_{\bw \in \S^{d-1}} \cL (\bw) := \E_{(y,\bx) \sim \P_{\bw_*}} \left[\Big( \cT_* (y) - \He_{\sk_\star} (\<\bw,\bx\>) \Big)^2 \right],
\end{equation}
and succeeds with suboptimal $\sm = \widetilde \Theta_d (d^{\sk_\star - 1})$ samples and $\sT = \widetilde \Theta_d (d^{\sk_\star})$. 


\begin{observation}[Informal, harmonic structure of the loss]\label{claim:1_SGD_Hermite}
  When $|\< \bw,\bw_*\>| \gtrsim d^{-1/2}$, the loss landscape $\cL(\bw)$ is dominated by degree-$\sk_\star$ spherical harmonics. The resulting SGD dynamics behaves similarly to online SGD using $Q_{\sk_\star} (\<\bw,\bz\>)$ in place of $\He_{\sk_\star} (\<\bw,\bx\>)$.
\end{observation}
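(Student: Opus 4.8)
The plan is to compute the population loss $\cL(\bw)$ of \eqref{eq:oSGD} in closed form and then read off its decomposition into harmonics of the direction $\bz=\bx/\|\bx\|_2$. Since $y\mid\bx\sim\rho(\cdot\mid\<\bw_*,\bx\>)$ we have $y\perp\<\bw,\bx\>\mid\<\bw_*,\bx\>$, while $(\<\bw_*,\bx\>,\<\bw,\bx\>)$ is a centered bivariate Gaussian with correlation $\gamma:=\<\bw,\bw_*\>$ and unit marginals. Conditioning on $\<\bw_*,\bx\>$ and using the standard Hermite identity $\E_{\xi\sim\normal(0,1)}\big[\He_{\sk_\star}(\gamma t+\sqrt{1-\gamma^2}\,\xi)\big]=\gamma^{\sk_\star}\He_{\sk_\star}(t)$ together with $\|\cT_*\|_{L^2}=1$ and $\E[\He_{\sk_\star}(\<\bw,\bx\>)^2]=1$, one obtains the \emph{exact} identity
\[
\cL(\bw)=2-2\,\Gamma_{\sk_\star}\,\<\bw,\bw_*\>^{\sk_\star},
\]
so that $\cL$ is a univariate function of $\gamma$ with drift $\gamma\mapsto-2\Gamma_{\sk_\star}\gamma^{\sk_\star}$.

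Next I would decompose $\gamma\mapsto\gamma^{\sk_\star}$ (equivalently, $\cL$ as a function of $\bw$) in the Gegenbauer basis $\{Q_\ell\}$ of $L^2([-1,1],\tau_{d,1})$. The degree-$\sk_\star$ component of $\gamma^{\sk_\star}$ equals $n_{d,\sk_\star}^{-1/2}Q_{\sk_\star}(\gamma)=\gamma^{\sk_\star}+O_d(\gamma^{\sk_\star-2}/d)$, since the subleading monomials of $n_{d,\sk_\star}^{-1/2}Q_{\sk_\star}$ carry coefficients of order $O_d(d^{-1})$; hence the total sub-degree-$\sk_\star$ harmonic content of $\cL$ is an $O_d(1/(d\gamma^2))$ fraction of its degree-$\sk_\star$ content, which is exactly the claimed domination whenever $|\gamma|\gtrsim d^{-1/2}$. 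Equivalently, \eqref{eq:hermite-to-gegenbauer} gives $\He_{\sk_\star}(\<\bw,\bx\>)=\sum_{0\le\ell\le\sk_\star,\,\ell\equiv\sk_\star(2)}\beta_{\sk_\star,\ell}(\|\bx\|_2)Q_\ell(\<\bw,\bz\>)$ with $\|\beta_{\sk_\star,\ell}\|_{L^2(\chi_d)}^2=\Theta_d(d^{-(\sk_\star-\ell)/2})$, so the neuron---hence $\cL$---is, up to $O_d(d^{-1/2})$ relative error, its top term $\beta_{\sk_\star,\sk_\star}Q_{\sk_\star}(\<\bw,\bz\>)$, the radius $r=\|\bx\|_2$ entering only through the concentrating factors $\beta_{\sk_\star,\ell}(r)$.

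For the dynamical assertion I would compare with the $Q_{\sk_\star}$-neuron online SGD, i.e., algorithm \eqref{eq:online-SGD-estimator} at $\ell=\sk_\star$ with $\cT_{\sk_\star}=\xi_{d,\sk_\star}/\|\xi_{d,\sk_\star}\|_{L^2}$. Using the Gegenbauer product formula
\[
\E\big[Q_{\sk_\star}(\<\bw,\bz\>)\,\big|\,\<\bw_*,\bz\>\big]=n_{d,\sk_\star}^{-1/2}\,Q_{\sk_\star}(\gamma)\,Q_{\sk_\star}(\<\bw_*,\bz\>)
\]
(a consequence of \eqref{eq:reproducing_property_tensor}) and $y\perp\<\bw,\bz\>\mid\<\bw_*,\bz\>$, its population loss is $\widetilde\cL(\bw)=2-2\|\xi_{d,\sk_\star}\|_{L^2}\,n_{d,\sk_\star}^{-1/2}Q_{\sk_\star}(\gamma)=2-2\|\xi_{d,\sk_\star}\|_{L^2}\big(\gamma^{\sk_\star}+O_d(\gamma^{\sk_\star-2}/d)\big)$. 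Since $\|\xi_{d,\sk_\star}\|_{L^2}\asymp1$ (Lemma~\ref{lem:with-norm-xis-gaussian}) and $\Gamma_{\sk_\star}\ge\|\zeta_{\sk_\star}\|_{L^2}/C$ is a positive constant, the two population drifts---and their Riemannian gradients $\propto\gamma^{\sk_\star-1}\proj_{\bw^\perp}\bw_*$, both displaying a $\sk_\star$-order saddle at the equator $\gamma=0$---coincide up to a constant factor throughout $|\gamma|\gtrsim d^{-1/2}$. To lift this to the stochastic dynamics I would check that the conditional covariance of the one-sample gradient, along $\bw_*$ and in the orthogonal bulk, has the same scaling in $d$ for the two neurons (after suitably rescaling $\bz$), so that the one-step recursion for $\gamma_t$ matches, up to constants, the one analyzed in \cite{arous2021online,zweig2023single}, giving the same phase diagram and rates $\sm=\widetilde\Theta_d(d^{\sk_\star-1})$ and $\sT=\widetilde\Theta_d(d^{\sk_\star})$ (consistent with Theorem~\ref{thm:upper-bound-online-SGD} at $\ell=\sk_\star$).

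\textbf{Main obstacle.} The delicate step is this last transfer: making the fluctuation comparison quantitative and, above all, controlling the region $|\gamma|\asymp d^{-1/2}$, which is exactly where a uniformly random initialization lives and where the Hermite/Gegenbauer correspondence is weakest, so as to conclude an honest equivalence of escape times rather than only of the population landscapes. As Observation~\ref{claim:1_SGD_Hermite} is stated informally, I would prove the exact loss identity and the harmonic-domination statement rigorously, argue the drift-and-noise equivalence at the level of scalings, and leave a fully rigorous derivation to an adaptation of \cite{arous2021online,zweig2023single}.
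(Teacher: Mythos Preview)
Your proposal is correct and follows essentially the same route as the paper: compute the exact identity $\cL(\bw)=2-2\Gamma_{\sk_\star}\gamma^{\sk_\star}$ via the Hermite correlation formula, then decompose the correlation term into Gegenbauer harmonics (the paper does this by applying \eqref{eq:hermite-to-gegenbauer} to both neurons and contracting, obtaining the coefficients $\|\beta_{\sk_\star,\ell}\|_{L^2}^2/\sqrt{n_{d,\ell}}=\Theta_d(d^{-\sk_\star/2})$, which is equivalent to your direct Gegenbauer expansion of $\gamma^{\sk_\star}$), and conclude that the $\ell=\sk_\star$ term dominates once $|\gamma|\gtrsim d^{-1/2}$ since $Q_\ell(\gamma)\sim d^{\ell/2}\gamma^\ell$. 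Your additional discussion of the $Q_{\sk_\star}$-neuron population loss and the drift/noise comparison goes somewhat beyond what the paper provides for this informal observation, but is in the right spirit and consistent with its conclusion.
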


This suggests that algorithm~\eqref{eq:oSGD} effectively restricts itself to the degree-$\sk_\star$ subspace $V_{d,\sk_\star}$. As a consequence, we expect its performance to be constrained by the query complexity lower bound $\Omega_d(d^{\sk_\star})$, and its behavior to be similar to the degree-$\sk_\star$ online SGD estimator \eqref{eq:online-SGD-estimator} in Section~\ref{sec:spherical_SIM}. 
We further provide an alternative perspective that highlights the role of the norm $\| \bx\|_2$ of the input data in learning Gaussian SIMs:

\begin{observation}[Informal, norm-invariance of dynamics]\label{claim:2_SGD_Hermite}
    The SGD dynamics of \cite{arous2021online} remains essentially the same if the input $\bx$ is replaced by $\Tilde r \cdot \bx/\|\bx\|_2$, where $\Tilde r \sim \chi_d$ is sampled independently.
\end{observation}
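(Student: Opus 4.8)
Following Ben Arous et al.~\cite{arous2021online}, the analysis of \eqref{eq:oSGD} is driven by a handful of scalar functions of the overlap $\alpha := \<\bw,\bw_*\>$: the population drift $-\partial_\alpha\cL$ (in particular the order of its zero at the equatorial saddle $\alpha=0$) and the second moment of the one-sample stochastic gradient (the noise level). The plan is to show each of these is preserved, up to lower-order corrections, when $\bx$ is replaced by $\tilde\bx := \tilde r\,\bz$ with $\tilde r\sim\chi_d$ drawn independently, so that the escape-from-saddle and coupling estimates of \cite{arous2021online} apply to the modified dynamics. The clean starting point is that $\tilde r\sim\chi_d$ and $\bz\sim\Unif(\S^{d-1})$ being independent makes $\tilde\bx$ \emph{exactly} a standard Gaussian vector (it merely shares its direction with $\bx$), so $\E[\He_{\sk_\star}(\<\bw,\tilde\bx\>)^2]=1$ just as for $\bx$; hence $\widetilde\cL(\bw) = 1+\|\cT_*\|_{L^2}^2-2\widetilde F(\alpha)$ with $\widetilde F(\alpha):=\E[\cT_*(y)\He_{\sk_\star}(\<\bw,\tilde\bx\>)]$ (expectation over $(y,\bx)\sim\P_{\bw_*}$ and $\tilde r\sim\chi_d$), structurally identical to $\cL(\bw)=1+\|\cT_*\|_{L^2}^2-2F(\alpha)$ with $F(\alpha):=\E[\cT_*(y)\He_{\sk_\star}(\<\bw,\bx\>)]=\Gamma_{\sk_\star}\,\alpha^{\sk_\star}$ (the latter by the Hermite addition formula $\E_{G_\perp}[\He_{\sk_\star}(\alpha G+\sqrt{1-\alpha^2}\,G_\perp)]=\alpha^{\sk_\star}\He_{\sk_\star}(G)$ and the definition of $\Gamma_{\sk_\star}$). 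Everything thus reduces to comparing $\widetilde F$ with $F$.

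To compare them I would decouple $\tilde r$ from the label via the Hermite-to-Gegenbauer expansion \eqref{eq:hermite-to-gegenbauer}: writing $\He_{\sk_\star}(\<\bw,\tilde\bx\>)=\sum_{\ell\le\sk_\star,\,\ell\equiv\sk_\star\pmod 2}\beta_{\sk_\star,\ell}(\tilde r)\,Q_\ell(\<\bw,\bz\>)$ and using $\tilde r\perp(y,\bz)$ together with the reproducing property of the $Q_\ell$,
\[
\widetilde F(\alpha)\;=\;\sum_{\ell\le\sk_\star,\;\ell\equiv\sk_\star\pmod 2}\E_{\tilde r\sim\chi_d}\!\big[\beta_{\sk_\star,\ell}(\tilde r)\big]\cdot\E\big[\cT_*(y)\,\xi_{d,\ell}(y,r)\big]\cdot\frac{Q_\ell(\alpha)}{\sqrt{n_{d,\ell}}}\,.
\]
For the top term $\ell=\sk_\star$, radial concentration ($\tilde r^2/d\to1$) gives $\E_{\tilde r}[\beta_{\sk_\star,\sk_\star}(\tilde r)]=1-o(1)$, one has $\E[\cT_*(y)\,\xi_{d,\sk_\star}(y,r)]\asymp\Gamma_{\sk_\star}$, and $Q_{\sk_\star}(\alpha)/\sqrt{n_{d,\sk_\star}}=\alpha^{\sk_\star}(1+o(1))$ once $|\alpha|\ge d^{-1/2}\,\mathrm{polylog}(d)$, so this term reproduces $\Gamma_{\sk_\star}\alpha^{\sk_\star}$. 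For $\ell<\sk_\star$ one combines Lemma~\ref{lem:with-norm-xis-gaussian} (giving $|\E[\cT_*(y)\,\xi_{d,\ell}(y,r)]|\le\|\xi_{d,\ell}\|_{L^2}\lesssim d^{-(\sk_\star-\ell)/4}$), a radial-concentration bound $|\E_{\tilde r}[\beta_{\sk_\star,\ell}(\tilde r)]|\lesssim d^{-(\sk_\star-\ell)/2}$ (the mean of $\beta_{\sk_\star,\ell}$ is far below its $L^2$ norm: it is the $Q_\ell$-coefficient of $s\mapsto\E_{\tilde r}[\He_{\sk_\star}(\tilde r s)]$, which is $O(d^{-(\sk_\star-\ell)/2})$-close to a pure multiple of $Q_{\sk_\star}$, checkable from $\E[\tilde r^{2m}]=d(d+2)\cdots(d+2m-2)$), and $|Q_\ell(\alpha)|/\sqrt{n_{d,\ell}}\lesssim|\alpha|^{\ell}$, to conclude each such term is smaller than the $\ell=\sk_\star$ term by a factor $o(1)$ once $|\alpha|\ge d^{-1/2}\,\mathrm{polylog}(d)$. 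Hence $\widetilde F(\alpha)=\Gamma_{\sk_\star}\alpha^{\sk_\star}(1+o(1))$ uniformly over $d^{-1/2}\,\mathrm{polylog}(d)\le|\alpha|\le1-\eps$, the window traversed by the iterates once they escape the pure-noise region around the equator; differentiating term by term transfers the same control to $\partial_\alpha\widetilde F$, preserving the order-$(\sk_\star-1)$ vanishing at $\alpha=0$.

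The last ingredient is the noise: the one-sample gradient has squared norm of order $(\cT_*(y)-\He_{\sk_\star}(\<\bw,\tilde\bx\>))^2\,\He_{\sk_\star}'(\<\bw,\tilde\bx\>)^2\,\|\tilde\bx\|_2^2$, with $\|\tilde\bx\|_2^2=\tilde r^2$ concentrating at $d$ exactly as $\|\bx\|_2^2=r^2$ does; expanding the scalar prefactor again in the $Q_\ell$ basis and using $\tilde r\perp(y,\bz)$ shows its contribution matches the original to leading order. With matching saddle order, drift, and noise, the comparison theorems of \cite{arous2021online} carry over and the modified dynamics reaches $|\alpha|\ge1/4$ in the same $\widetilde\Theta_d(d^{\sk_\star-1})$ iterations. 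I expect the main obstacle to be the second step: unlike the \emph{exact} identity $F(\alpha)=\Gamma_{\sk_\star}\alpha^{\sk_\star}$, the low-degree harmonics $\ell<\sk_\star$ genuinely appear in $\widetilde F$ and, at the equatorial scale $|\alpha|\sim d^{-1/2}$, are suppressed only by a $\mathrm{polylog}(d)$ factor relative to the leading term; ruling out that they lower the effective order of the saddle or disturb the escape argument is exactly what requires the sharper radial-concentration bound on $\E_{\tilde r}[\beta_{\sk_\star,\ell}(\tilde r)]$ rather than the crude $\|\beta_{\sk_\star,\ell}\|_{L^2}\asymp d^{-(\sk_\star-\ell)/4}$. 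Since the statement is an informal observation, I would ultimately phrase the conclusion as equality of these driving statistics to leading order rather than a trajectory-level coupling.
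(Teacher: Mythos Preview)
Your proposal is correct and follows essentially the same route as the paper: expand via the Hermite-to-Gegenbauer decomposition~\eqref{eq:hermite-to-gegenbauer} and observe that the dominant $\ell=\sk_\star$ harmonic coefficient of the correlation is preserved when $r$ is replaced by an independent $\tilde r$. The paper's justification is shorter only because it first simplifies $\cT_*(y)$ to $\He_{\sk_\star}(\<\bw_*,\bx\>)$ (as in Observation~\ref{claim:1_SGD_Hermite}), so the $\ell$-th coefficient of the modified correlation becomes $\E[\beta_{\sk_\star,\ell}]^2/\sqrt{n_{d,\ell}}$ versus $\|\beta_{\sk_\star,\ell}\|_{L^2}^2/\sqrt{n_{d,\ell}}$ in the original---both read off directly from Lemma~\ref{lem:moments-of-coefficients}, with the $\ell=\sk_\star$ terms coinciding; your treatment keeps $\cT_*$, combines Lemmas~\ref{lem:with-norm-xis-gaussian} and~\ref{lem:moments-of-coefficients}, and also addresses the stochastic-gradient noise term, which the paper does not discuss.
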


This indicates that the algorithm does not exploit the radial component $\|\bx\|_2$, and effectively operates on the normalized direction $\bz = \bx / \|\bx\|_2$. From our theory (Section~\ref{sec:specializing_Gaussian}), any such estimator incurs a query complexity of $\Omega_d(d^{\sk_\star})$. In this sense, algorithm~\eqref{eq:oSGD} is runtime-optimal among methods that ignore radial information.

\paragraph*{Landscape smoothing.}
To address the suboptimality of~\eqref{eq:oSGD}, Damian et al.~\cite{damian2024smoothing} introduced a landscape smoothing operator that averages the loss on a sphere around each parameter $\bw \in \S^{d-1}$:
\begin{equation}\tag{$\textsf{SmLD}$}\label{eq:SmLD}
    \min_{\bw \in \S^{d-1}} \E_{\bu \sim \Unif (\S^{d-1})} \left[ \cL \left( \frac{\bw + \lambda \bu}{\| \bw + \lambda \bu \|_2} \right) \right].
\end{equation}
This modification achieves near-optimal complexities: $\sm = \widetilde \Theta_d (d^{\sk_\star/2})$ and $\sT = \widetilde \Theta_d  (d^{\sk_\star/2 +1})$. 

\begin{observation}[Informal, low-pass filtering effect]\label{claim:3_smoothing}
Landscape smoothing suppresses high-frequency components of the loss, effectively amplifying lower-degree harmonics. The initial phase of SGD dynamics behaves essentially like optimization over $Q_1(\<\bw, \bz\>)$ and $Q_2(\<\bw, \bz\>)$.
\end{observation}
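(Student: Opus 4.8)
The plan is to use the rotational symmetry of the smoothing map. Write $S_\lambda f(\bw) := \E_{\bu\sim\Unif(\S^{d-1})}\big[f\big((\bw+\lambda\bu)/\|\bw+\lambda\bu\|_2\big)\big]$. Substituting $\bu\mapsto\bR\bu$ shows $S_\lambda(f\circ\bR)=(S_\lambda f)\circ\bR$ for every $\bR\in\cO_d$, so $S_\lambda$ preserves each harmonic subspace $V_{d,\ell}$ and, by Schur's lemma, acts on it as a scalar $c_{d,\ell}(\lambda)$; equivalently, $S_\lambda$ is a zonal convolution on $\S^{d-1}$ whose eigenfunctions are the Gegenbauer polynomials $Q_\ell$. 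So the first step is to compute these eigenvalues, and the second is to track how they reweight the harmonic expansion of the loss $\cL$ in \eqref{eq:oSGD}.

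For the eigenvalues, evaluating $S_\lambda$ on a zonal harmonic $Q_\ell(\<\cdot,\be\>)$ at the pole gives $c_{d,\ell}(\lambda)=Q_\ell(1)^{-1}\,\E_{u}\big[Q_\ell\big((1+\lambda u)/\sqrt{1+\lambda^2+2\lambda u}\big)\big]$ with $u=\<\bu,\be\>\sim\tau_{d,1}$. In high dimension $\sqrt d\,u\Rightarrow\cN(0,1)$, and more generally $\<\bw',\bw_*\>=(\<\bw,\bw_*\>+\lambda\<\bu,\bw_*\>)/\sqrt{1+\lambda^2}+O\!\big(\lambda(1+\lambda^2)^{-1}d^{-1/2}\big)$ for $\bw'=(\bw+\lambda\bu)/\|\bw+\lambda\bu\|_2$. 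Combining this with the scaling-limit approximation $Q_\ell(t)\approx\He_\ell(\sqrt d\,t)/\sqrt{\ell!}$ valid on scale $t\asymp d^{-1/2}$, with $Q_\ell(1)=\sqrt{n_{d,\ell}}\asymp d^{\ell/2}$, and with the Hermite identity $\E_{G\sim\cN(0,1)}[\He_\ell(\alpha x+\beta G)]=\alpha^\ell\He_\ell(x)$ whenever $\alpha^2+\beta^2=1$ (immediate from the generating function $e^{xs-s^2/2}$), one obtains $c_{d,\ell}(\lambda)\asymp(1+\lambda^2)^{-\ell/2}$ in the relevant regime. This is precisely a low-pass filter: smoothing damps degree-$\ell$ content by a factor decreasing in $\ell$.

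Next, by \eqref{eq:hermite-to-gegenbauer} the loss in \eqref{eq:oSGD} has the expansion $\cL(\bw)=\mathrm{const}-\sum_{\ell\le\sk_\star,\ \ell\equiv\sk_\star\text{ mod }2}\gamma_{d,\ell}\,Q_\ell(\<\bw,\bw_*\>)$, and a short computation using \Cref{lem:with-norm-xis-gaussian} and $\|\beta_{\sk_\star,\ell}\|_{L^2(\chi_d)}^2\asymp d^{-(\sk_\star-\ell)/2}$ gives $\gamma_{d,\ell}\sqrt{n_{d,\ell}}\asymp d^{(\ell-\sk_\star)/2}$, so at $|\<\bw,\bw_*\>|\asymp 1$ the degree-$\sk_\star$ harmonic dominates and the landscape carries the hard $\sk_\star$-order saddle of \Cref{claim:1_SGD_Hermite}. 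After smoothing, the degree-$\ell$ coefficient picks up $c_{d,\ell}(\lambda)$; for $\lambda$ growing with $d$ (e.g.\ $1+\lambda^2\asymp d$, or the annealed schedule of \cite{damian2024smoothing}) one gets $c_{d,\ell}(\lambda)\sqrt{n_{d,\ell}}\asymp\big(d/(1+\lambda^2)\big)^{\ell/2}$, so near the equator $\<\bw,\bw_*\>\asymp d^{-1/2}$ the smoothed loss is dominated by $Q_1(\<\bw,\bz\>)$ (when $\sk_\star$ is odd) or $Q_2(\<\bw,\bz\>)$ (when $\sk_\star$ is even), the residual higher harmonics surviving only with smaller weight. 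Feeding this reweighted objective into the weak-recovery analysis of \cite{arous2021online} then reproduces the near-optimal $\sm=\widetilde\Theta_d(d^{\sk_\star/2})$, $\sT=\widetilde\Theta_d(d^{\sk_\star/2+1})$ rates, matching the degree-$1$/degree-$2$ behavior of the spectral estimator \eqref{eq:spectral_estimator} (\Cref{thm:upper-bound-spectral}).

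The main obstacle is the uniform version of step two: after smoothing, the argument of $Q_\ell$ fluctuates at the same scale $d^{-1/2}$ as its deterministic part, so the Gegenbauer-to-Hermite approximation and the change-of-variables density $(1-u^2)^{(d-3)/2}$ must be controlled together over all $\ell\le\sk_\star$ and over the range of $\lambda$; and turning ``the loss is reweighted'' into ``the trajectory behaves like the degree-$\le 2$ dynamics'' requires rerunning the saddle-escape analysis with the reweighted objective and checking that the surviving higher-degree harmonics do not perturb the dominant drift near the equator. Since the statement is an explicitly informal observation, the appendix establishes the equivariance and the eigenvalue estimate rigorously and treats the dynamical consequence at the heuristic level.
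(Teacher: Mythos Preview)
Your proposal is correct and takes essentially the same approach as the paper: both identify the smoothing operator as a zonal averaging that is diagonal on the harmonic subspaces, write the same eigenvalue formula $m_\ell(\lambda)=n_{d,\ell}^{-1/2}\,\E_{Z\sim\tau_{d,1}}\big[Q_\ell\big((1+\lambda Z)/\sqrt{1+2\lambda Z+\lambda^2}\big)\big]$ (your $c_{d,\ell}(\lambda)$ is exactly the paper's $m_\ell(\lambda)$), and conclude that for large $\lambda$ this decays like $\lambda^{-\ell}$, so the reweighted loss near the equator is dominated by $\ell\in\{1,2\}$.

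The only differences are cosmetic. You add the Schur-lemma framing and derive the asymptotic $c_{d,\ell}(\lambda)\asymp(1+\lambda^2)^{-\ell/2}$ via the Gegenbauer-to-Hermite scaling limit together with the identity $\E_G[\He_\ell(\alpha x+\beta G)]=\alpha^\ell\He_\ell(x)$; the paper simply asserts $m_\ell(\lambda)\asymp\lambda^{-\ell}$ for $\lambda\gg 1$ without spelling out a derivation. Both treatments leave the dynamical consequence at the heuristic level, consistent with the statement being an informal observation.
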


Thus, smoothing can be interpreted as projecting the dynamics onto low-degree harmonic components---specifically, the statistics of the spectral algorithm \eqref{eq:spectral_estimator} associated to spherical harmonics of degree $\ell \in \{1,2\}$. In this sense, the first phase of~the dynamics on \eqref{eq:SmLD} essentially corresponds to running SGD on the optimal spectral estimator \eqref{eq:spectral_estimator}.

\paragraph*{Partial trace estimator.} In a subsequent work, Damian et al.~\cite{damian2024computational} proposed an estimator based on the partial trace of a Hermite tensor, inspired by techniques from tensor PCA. Their construction begins with the empirical Hermite tensor:
\begin{equation}
    \hat \bT := \frac{1}{m} \sum_{i\in [m]} \cT_\star (y_i) {\boldsymbol \He}_{\sk_\star} (\bx_i) \in (\R^d)^{\otimes k}, 
\end{equation}
where ${\boldsymbol \He}_{\sk_\star} (\bx)$ denotes the rank-$\sk_\star$ multivariate Hermite tensor, and $\cT_\star$ is the transformation defined earlier. The expectation $\E[\hat \bT]$ is proportional to $\bw_*^{\otimes \sk_\star}$. To extract this principal component, they compute a \textit{partial trace} of the empirical tensor by contracting $\hat \bT $ with identity tensors. This results in an empirical vector or matrix, depending on whether $\sk_\star$ is odd or even. The resulting estimator is
\begin{equation}\tag{$\textsf{PrTR}$}\label{eq:PrTR}
    \begin{aligned}
        \text{$\sk_\star$ odd:}&&\;\; &\hat \bw_0 = \frac{\hat \bv}{\|\hat \bv\|_2}, \;\;  &&\hat \bv := \frac{1}{m}\sum_{i \in [m]} \cT_\star (y_i) P_{\sk_\star}(\|\bx_i\|_2) \bx_i,\\
        \text{$\sk_\star$ even:}&&\;\; &\hat \bw = \argmax_{\bw \in \S^{d-1}} \bw^\sT \hat \bM \bw, \;\; && \hat \bM := \frac{1}{m} \sum_{i = 1}^{m} \cT_\star (y_i ) P_{\sk_\star} (\| \bx_i \|_2)  \left[ \bx_i \bx_i^\sT - \bI_d \right],
    \end{aligned}
\end{equation}
where $P_{\sk_\star}$ is a univariate polynomial derived from the contraction of the Hermite tensor. In the odd case, a second refinement phase (see Section \ref{app:spectral}) is used to boost $\hat \bw_0$ from $d^{-1/4}$ to constant correlation with $\bw_*$. This estimator achieves the optimal sample complexity $\Theta_d(d^{\sk_\star/2})$ and (near) optimal runtime $\Theta_d(d^{\sk_\star/2 + 1} \log (d))$, matching the lower bounds for learning Gaussian SIMs~\eqref{eq:intro_Gaussian_tight_complexities}.

Importantly, estimator \eqref{eq:PrTR} corresponds precisely to the general spectral estimator~\eqref{eq:spectral_estimator}, associated to the optimal degree $\ell \in \{1,2\}$ harmonic subspaces, using
\[
\text{$\cT_\ell(y,r) = \cT_\star (y) P_{\sk_\star} (r) \;\;$ with $\;\ell = 1 \;$ if $\sk_\star$ is odd,$\;\;$ and $\;\ell = 2\;$ if $\sk_\star $ is even.}
\]

\begin{observation}[Informal, lower-frequency projection]\label{claim:4_trace_projection}
Partial trace effectively projects the high-degree Hermite tensor onto lower-degree spherical harmonic subspaces ($\ell = 1$ or $2$ for partial trace over all but $1$ or $2$ coordinates).
\end{observation}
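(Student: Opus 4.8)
\textbf{Proof proposal for Observation~\ref{claim:4_trace_projection}.} The plan is to make this precise by proving that the partial-trace estimator~\eqref{eq:PrTR} is \emph{literally an instance} of the general spectral estimator~\eqref{eq:spectral_estimator} on the harmonic subspace $V_{d,1}$ (when $\sk_\star$ is odd) or $V_{d,2}$ (when $\sk_\star$ is even), with label transform $\cT_\ell(y,r)\propto\cT_\star(y)\,P_{\sk_\star}(r)$ after $L^2$-normalization. Concretely: starting from $\hat\bT=\tfrac1m\sum_i\cT_\star(y_i)\,{\boldsymbol \He}_{\sk_\star}(\bx_i)$ and applying $(\sk_\star-1)/2$ contractions with $\bI_d$ (odd case, leaving one free coordinate) or $(\sk_\star-2)/2$ contractions (even case, leaving two free coordinates)---this is the ``partial trace over all but $1$ or $2$ coordinates''---one identifies the resulting vector/matrix statistic with a degree-$1$/degree-$2$ spherical-harmonic feature in the \emph{direction} $\bz=\bx/\|\bx\|_2$, times an irrelevant radial factor (and, in the even case, plus an $\argmax$-irrelevant multiple of $\bI_d$).

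I would carry this out as follows. First, note that $\bx\mapsto{\boldsymbol \He}_{\sk_\star}(\bx)$ is $\cO_d$-equivariant, hence so is any partial trace of it; since an $\cO_d$-equivariant $\R^d$-valued (resp.\ symmetric-matrix-valued) function of $\bx$ must take the form $g(\|\bx\|_2)\,\bx$ (resp.\ $a(\|\bx\|_2)\,\bx\bx^\sT+b(\|\bx\|_2)\,\bI_d$), the partial-trace statistic is forced to be $P_{\sk_\star}(\|\bx\|_2)\,\bx$ in the odd case and $Q(\|\bx\|_2)\,\bx\bx^\sT+R(\|\bx\|_2)\,\bI_d$ in the even case, for scalar polynomials $P_{\sk_\star},Q,R$---exactly the form in~\eqref{eq:PrTR}, and requiring no tensor algebra beyond equivariance. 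Second, pass to polar coordinates $\bx=r\bz$: using $\cH_1(\bz)=\sqrt d\,\bz$ and $\cH_2(\bz)=c_d\,(d\,\bz\bz^\sT-\bI_d)$ for an explicit constant $c_d$ (both read off from $\langle\bw^{\otimes\ell},\cH_\ell(\bz)\rangle=Q_\ell(\langle\bw,\bz\rangle)$, the $L^2(\tau_{d,1})$-normalization of $Q_1,Q_2$, and $\Tr\cH_2(\bz)=0$), one rewrites $P_{\sk_\star}(r)\,\bx=\tfrac{rP_{\sk_\star}(r)}{\sqrt d}\,\cH_1(\bz)$ and $Q(r)\,\bx\bx^\sT+R(r)\,\bI_d=\tfrac{r^2Q(r)}{c_d d}\,\cH_2(\bz)+\big(\tfrac{r^2Q(r)}{d}+R(r)\big)\bI_d$; the trailing $\bI_d$ term is invisible to $\argmax_{\bw\in\S^{d-1}}\bw^\sT\hat\bM\bw$. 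Third, fold the radial prefactor into the label transform: set $\cT_\ell(y,r)$ to be the $L^2(\nu_d)$-normalization of $\cT_\star(y)\,rP_{\sk_\star}(r)$ (odd, $\ell=1$) or $\cT_\star(y)\,r^2Q(r)$ (even, $\ell=2$). Then~\eqref{eq:PrTR} becomes verbatim the empirical statistic $\hat\bv=\tfrac1\sm\sum_i\cT_1(y_i,r_i)\sqrt d\,\bz_i$, resp.\ $\hat\bM=\tfrac1\sm\sum_i\cT_2(y_i,r_i)\big[d\,\bz_i\bz_i^\sT-\bI_d\big]$, of~\eqref{eq:spectral_estimator}. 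Finally, check that this $\cT_\ell$ meets Assumption~\ref{ass:non-linearity} with a constant depending only on $\rho$, so that Theorem~\ref{thm:upper-bound-spectral} and Corollary~\ref{cor:optimal_spectral_Gaussian} apply and the ``projection onto $V_{d,1}$ or $V_{d,2}$'' is exact rather than heuristic.

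The one substantive point---everything else being bookkeeping with Hermite/Gegenbauer normalizations---is that the radial polynomials $P_{\sk_\star}$ and $Q$ produced by these contractions are \emph{not} identically zero and retain an $\Omega_d(1)$ fraction of the correlation $\Gamma_{\sk_\star}=\E_\rho[\cT_\star(Y)\He_{\sk_\star}(G)]\gtrsim\|\zeta_{\sk_\star}\|_{L^2}$, so that $\E_{\P_{\bw_*}}[\cT_\ell(y,r)\,Q_\ell(\langle\bw_*,\bz\rangle)]\gtrsim\|\xi_{d,\ell}\|_{L^2}$ with a $d$-uniform constant. Equivalently, among the harmonic rungs $\ell=\sk_\star,\sk_\star-2,\dots$ into which ${\boldsymbol \He}_{\sk_\star}$ decomposes (cf.~\eqref{eq:hermite-to-gegenbauer}), it is the top rung $\ell=\sk_\star$---whose radial coefficient $\beta_{\sk_\star,\sk_\star}$ satisfies $\|\beta_{\sk_\star,\sk_\star}\|_{L^2(\chi_d)}^2=\Theta_d(1)$, the others being $o_d(1)$ by~\eqref{eq:hermite-to-gegenbauer}---that supplies the surviving degree-$1$/degree-$2$ component after contraction. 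The nonvanishing and the explicit form of $P_{\sk_\star}$ were established in~\cite{damian2024computational}, so I would cite that directly; this is where I expect the only real friction. A secondary check justifies the parenthetical in the statement: $(\sk_\star-1)/2$ contractions leave exactly one free coordinate when $\sk_\star$ is odd and $(\sk_\star-2)/2$ leave two when $\sk_\star$ is even---``all but $1$ or $2$ coordinates''---while stopping earlier, at rank $\ell>2$, produces by the same equivariance argument a radial multiple of the harmonic tensor $\cH_\ell(\bz)$, recovering the full harmonic ladder and linking this picture to the harmonic tensor unfolding of Section~\ref{sec:spherical_SIM}.
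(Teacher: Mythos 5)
Your overall route—showing that the partial-trace estimator \eqref{eq:PrTR} is literally the spectral estimator \eqref{eq:spectral_estimator} on $V_{d,1}$ or $V_{d,2}$ with $\cT_\ell(y,r)\propto \cT_\star(y)P_{\sk_\star}(r)$ after folding the radial factor into the label transform, and then invoking Theorem~\ref{thm:upper-bound-spectral}—is exactly the paper's reading of this observation (this identification is stated verbatim just before Observation~\ref{claim:4_trace_projection}), and your $\cO_d$-equivariance argument for why the contracted statistic must have the form $g(r)\,\bx$, resp.\ $a(r)\,\bx\bx^\sT+b(r)\,\bI_d$, is a clean way to avoid the explicit Hermite-tensor algebra of \cite{damian2024computational}.

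The genuine gap is in your last paragraph, where the harmonic bookkeeping is backwards, and this is precisely the step that is supposed to certify the signal strength. Harmonic components of different degrees are orthogonal, so the top rung $\ell=\sk_\star$ of ${\boldsymbol \He}_{\sk_\star}(\bx)$—the piece $\beta_{\sk_\star,\sk_\star}(r)\,\cH_{\sk_\star}(\bz)$ with $\|\beta_{\sk_\star,\sk_\star}\|_{L^2(\chi_d)}=\Theta_d(1)$—cannot ``supply'' a degree-$1$ or degree-$2$ component; on the contrary, $\cH_{\sk_\star}(\bz)$ is traceless, so the partial trace annihilates it (cf.\ the remark following Theorem~\ref{thm:upper-bound-tensor-unfolding}: contracting $\cH_\ell(\bz)$ with $\bI_d$ lands in degree $\leq \ell-2$). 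What survives in $V_{d,1}$ or $V_{d,2}$ is the \emph{low} rung of the Hermite tensor, with radial coefficient $\beta_{\sk_\star,\ell}$, $\ell\in\{1,2\}$, and by Lemma~\ref{lem:moments-of-coefficients} this has $\|\beta_{\sk_\star,\ell}\|_{L^2(\chi_d)}=\Theta_d(d^{-(\sk_\star-\ell)/4})$, which is vanishing, not $\Omega_d(1)$. Consequently your claim that the $L^2$-normalized $\cT_\ell$ ``retains an $\Omega_d(1)$ fraction of $\Gamma_{\sk_\star}$'' cannot be what makes Assumption~\ref{ass:non-linearity} hold: after normalization the correlation $\E[\cT_\ell(y,r)Q_\ell(\<\bw_*,\bz\>)]$ is itself $\Theta_d(d^{-(\sk_\star-\ell)/4})$, and the point of the verification is that this matches $\|\xi_{d,\ell}\|_{L^2}$ up to a $\rho$-dependent constant—which is exactly the content of Lemma~\ref{lem:with-norm-xis-gaussian} (proved via the moment estimates on $\beta_{k,\ell}$ and a change-of-measure argument in Appendix~\ref{app:Gaussian-SIMs-proofs}), not something that follows from ``the top rung has $\Theta(1)$ mass.'' A secondary but real issue in the same step: $\cT_\star(y)\,r\,P_{\sk_\star}(r)$ is unbounded in $r$, so it cannot satisfy the $\|\cT_\ell\|_\infty\leq\kappa_\ell$ clause of Assumption~\ref{ass:non-linearity} literally; you need the relaxed moment-growth condition (Requirement~\ref{req:requirement-on-T}, under which the appendix versions of the spectral guarantee are actually proved). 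With those two corrections—replacing the ``top-rung'' heuristic by the $\beta_{\sk_\star,\ell}$/Lemma~\ref{lem:with-norm-xis-gaussian} computation, and invoking the moment-based assumption—your argument goes through and coincides with the paper's.
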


Although our estimator recovers~\eqref{eq:PrTR} in the Gaussian case, we emphasize that its derivation is very different (including constructing the non-linearity $\cT_\ell (y,r)$, see Appendix \ref{app:Gaussian-SIMs-proofs}). We construct it directly from rotational invariance and harmonic decomposition, without relying on prior knowledge of tensor PCA, contractions of Hermite tensors, or Gaussian-specific identities. We believe this alternative, transparent derivation of \eqref{eq:PrTR} highlights the advantages of the harmonic perspective when learning single-index models.

\begin{remark}
    We further remark that if we normalize the input $\bx$ and apply landscape smoothing or partial trace algorithms to the data $(y_i,\sqrt{d} \cdot \bx_i/\| \bx_i\|_2)_{i \in [m]}$, the sample complexity increases to at least $d^{\sk_\star -1}$, and these estimators become suboptimal.
\end{remark}

\begin{remark}
Both landscape smoothing \cite{biroli2020iron} and partial trace estimators \cite{hopkins2016fast} originate in the tensor PCA literature, where a similar gap between `local' and optimal algorithms arises---with $d^{\sk_\star/2}$ versus $d^{\sk_\star - 1}$ gap in signal strength. It is intriguing to connect the phenomena observed in Gaussian single-index models (Observations~\ref{claim:3_smoothing} and \ref{claim:4_trace_projection}) to analogous behaviors in tensor PCA. We leave this direction for future work.
\end{remark}

\section{Conclusion}
\label{sec:conclusion}

In this paper, we introduced a generalization of Gaussian single-index models, which we termed \textit{spherical single-index models}, that allows for arbitrary spherically symmetric input distributions and general dependence of the label on the norm of the input. We provided a sharp characterization of both the statistical and computational complexity of learning in these models. A key insight is that the $\SQ$ and $\sLDP$ lower bounds decouple across the irreducible subspaces $V_{d,\ell}$ of degree-$\ell$ spherical harmonics. For each such subspaces, we established two matching estimators: an online SGD algorithm that achieves the optimal runtime among $\SQ$ algorithms, and a harmonic tensor unfolding estimators that achieves the optimal sample complexity among $\sLDP$ algorithms. The optimal algorithm is then obtained by selecting the degree $\sl_{\sm,\star}$ or $\sl_{\sT,\star}$ that minimizes sample or runtime complexity respectively. In general, these may differ---i.e., $\sl_{\sm,\star} \ne \sl_{\sT,\star}$---implying that no single estimator can achieve both optimal sample and runtime complexity. We applied this framework to the Gaussian case, recovering and unifying prior results while clarifying the role of the harmonic decomposition in their performance. Below, we discuss two directions for future work.

\paragraph*{Multi-index models.} A natural extension is to \textit{multi-index models}, where the label depends on a low-dimensional projection $y \sim \rho( \cdot | \bW_*^\sT \bx )$, with $\bW_* \in \R^{d \times s}$ an unknown rank-$s$ subspace. Recent work \cite{abbe2023sgd,bietti2023learning} has shown that learning in such models proceeds via a sequential recovery of directions in the signal subspace. Unlike the single-index case---where degree-$1$ and $2$ spherical harmonics suffice---multi-index models require higher-order harmonics. For instance, the multivariate Hermite monomial $x_1 x_2 \cdots x_s = r^s z_1 z_2 \cdots z_s$ is a degree-$\ell$ spherical harmonic with no projection onto lower-degree spaces. In such cases, landscape smoothing and partial trace estimators fail, and the harmonic tensor unfolding estimator on $V_{d,s}$ becomes necessary. We expect our harmonic framework to extend naturally to the spherical multi-index setting, and leave this direction to future work.

\paragraph*{General symmetry groups.} Finally, our lower bounds in Theorem~\ref{thm:lower_bounds_spherical_SIMs}---and the decoupling into irreducible subspaces---hinges on Schur’s orthogonality relations for the action representation of $\cO_d$ on $L^2(\S^{d-1})$. More broadly, the Peter–Weyl theorem ensures that such decompositions exist for any compact group. This suggests that our bounds, and the decoupling into irreducible representations, hold beyond SIMs and $\cO_d$. We hope to explore this broader setting in future work.

\addcontentsline{toc}{section}{References}
\bibliographystyle{amsalpha}
\bibliography{bibliography.bib}

\clearpage

\appendix

\section{Additional discussions and details from the main text}
\label{app:additional_details}



\subsection{Discussion on learning spherical SIMs}
\label{app:discussions_spherical_SIMs}

\paragraph*{Likelihood ratio.} In our \Cref{def:spherical_SIM}, we consider spherical SIMs such that $\nu_d\ll \nu_{d,0}$ and the associated Radon-Nikodym derivative satisfies $\norm{\frac{\de\nu_d}{\de \nu_{d,0}}}_{L^2(\nu_{d,0})} < \infty$. We can expand the likelihood ratio in $L^2(\nu_{d,0})$ into the Gegenbauer basis: 
\begin{equation}\label{eq:app_likelihood_decompo}
    \frac{\de \nu_d}{\de \nu_{d,0}} (y,r,z) \overset{L^2(\nu_{d,0})}{=} 1 + \sum_{\ell = 1}^\infty \xi_{d,\ell} (y,r) Q_{\ell} (z)\,, 
\end{equation}
where 
$$ \xi_{d,\ell} (Y,R) := \E_{Z\sim \tau_{d,1}}\left[ \frac{\de \nu_d}{\de \nu_{d,0}} (Y,R,Z) Q_\ell(Z) \right]= \E_{\nu_d} \left[Q_\ell (Z) | Y,R \right]\,.$$
The mutual $\chi^2$-mutual information divergence of $((Y,R),Z)$ is given by 
$$I_{\chi^2}[\nu_d]=\E_{\nu_{d,0}}\left[ \left(\frac{\de \nu_d}{\de \nu_{d,0}}\right)^2\right]-1=\sum_{\ell=1}^\infty \norm{\xi_{d,\ell}}_{L^2}\,.$$
\paragraph*{Link function $\nu_d$ unknown.} When $\nu_d$ is unknown, similar to \cite{damian2024computational}, one can still hope to learn the planted direction $\bw_*$, as long as it is possible to approximate the non-linearity $\cT_\ell(y,r)$ using random linear combinations of the first few orthogonal functions in a basis of $L^2(\nu_{d,Y,R})$. Similar to \cite[Assumption 4.1]{damian2024computational} this requires assuming that the expansion of $\cT_\ell$ has non-vanishing mass on these first few basis functions of $L^2(\nu_{d,Y,R})$. Intuitively, this amounts to a `smoothness' condition on the link function $\nu_d$. We leave this as a direction for future work.

\paragraph*{Weak to strong recovery.} In our framework, it is only meaningful to restrict ourselves to the sequence $\{\nu_d\}_{d\geq 1}$ such that $I_{\chi^2}[\nu_d]$ is non-vanishing and there exists a component $\ell\geq 1$ (independent of $d$) such that $\norm{\xi_{d,\ell}}_{L^2}>c>0$. For example, in Gaussian SIMs with the generative exponent $\sk_\star$, such an $\ell=\sk_\star$ always (both with or without using the norm). 
Under this mild assumption, we can carry out the online SGD algorithm similar to  the final phase algorithms of \cite{arous2021online,damian2024smoothing,zweig2023single} but now on the frequency $Q_\ell$. As we have non-vanishing signal $\norm{\xi_{d,\ell}}=\Omega_d(1)$, we can achieve strong recovery $|\<\hat\bw,\bw_*\>| \geq 1-\eps$ using $O(d/\eps)$ samples, and so $O(d^2/\eps)$ runtime hiding constants in $\ell$.

\paragraph*{Spherical SIMs with sample-runtime trade-off.} Below we show how to construct examples of spherical single-index models with $\sl_{\sm,\star} > \sl_{\sT,\star}$. We construct $\nu_d$ such that it is a mixture of two spherical SIMs. Consider $\nu_{d}^{(1)}$ and $\nu_{d}^{(2)}$ associated to two Gaussian SIMs with generative exponents $k_1$ and $k_2$, where we marginalized over the norm (that is, $R=1$). In particular, as shown in \cite[Theorem 5.1]{damian2024computational}, we can choose the Gaussian SIMs to be $y^{(j)} = \sigma_j (R \cdot Z) +\tau \normal(0,1) $, with $\| \sigma_j \|_\infty < \infty$ and $\tau$ sufficiently large such that $C^{-1} \leq \nu^{(1)}_d(y)/\nu_d^{(2)} (y) \leq C$. 

Assume that $(Y,Z) \sim \nu_d$ (recall $R=1$ here, and we remove it for clarity) is drawn with probability $d^{-\alpha}$ from $\nu_d^{(1)}$ and with probability $1 - d^{-\alpha}$ from $\nu_d^{(2)}$, where $\alpha >0$ is a constant chosen later. In this model, we have
\[
\xi_{d,\ell} (Y) =\E_{\nu_d} [ Q_\ell (Z) |Y] =  d^{-\alpha } C^{(1)}_{\alpha} (Y) \xi^{(1)}_{d,\ell} (Y) + C^{(2)}_{\alpha} (Y) \xi^{(2)}_{d,\ell} (Y),
\]
where
\[
\xi^{(1)}_{d,\ell} (Y) = \E_{\nu_d^{(1)}} [ Q_\ell (Z) |Y], \qquad \xi^{(2)}_{d,\ell} (Y) = \E_{\nu_d^{(2)}} [ Q_\ell (Z) |Y],
\]
and
\[
C^{(1)}_{\alpha} (Y)  = \frac{1}{d^{-\alpha} + (d^\alpha - 1)\nu_{d}^{(2)}(Y)/\nu_{d}^{(1)} (Y)}, \qquad C^{(2)}_{\alpha} (Y)  = \frac{1}{d^{-\alpha}\nu_{d}^{(1)}(Y)/\nu_{d}^{(2)} (Y) + 1 - d^{-\alpha}}.
\]
From our choice of $\nu^{(j)}_d$, there exists a constant $C$, such that for $d \geq  C$, we have $ C^{-1} \leq C^{(1)}_{\alpha} (y), C^{(2)}_{\alpha} (y ) \leq  C$ for all $y \in \R$. We deduce that there exist a constant $\Tilde C$ sufficiently large but independent of $d$ such that
\begin{equation}\label{eq:example_mixture_coefficients}
\begin{aligned}
    \| \xi_{d,\ell} \|_{L^2}^2 \leq &~ \Tilde C \max ( d^{-2\alpha} \| \xi^{(1)}_{d,\ell}\|_{L^2}^2 , \| \xi^{(2)}_{d,\ell}\|_{L^2}^2 ),\\
     \| \xi_{d,\ell} \|_{L^2}^2 \geq &~ \Tilde C^{-1} \max \Big( d^{-2\alpha} \| \xi^{(1)}_{d,\ell}\|_{L^2}^2 - \Tilde C^2 \| \xi^{(2)}_{d,\ell}\|_{L^2}^2, \| \xi^{(2)}_{d,\ell}\|_{L^2}^2 - \Tilde C^2 d^{-2\alpha}\| \xi^{(1)}_{d,\ell}\|_{L^2}^2\Big),
     \end{aligned}
\end{equation}
where the $L^2$-norms are with respect to the associated nulls $\nu_{d,0}$, $\nu_{d,0}^{(1)}$, and $\nu_{d,0}^{(2)}$.

Consider $\sk_\star $ a multiple of $10$ for simplicity, and set $k_2 = \sk_\star$, $k_1 = 2\sk_\star/5$, and $\alpha = \sk_\star/5$. Using Lemma \ref{lem:without-norm-xis-gaussian}, we can bound the contributions from $\nu_{d}^{(1)}$ and $\nu_{d}^{(2)}$:


\begin{itemize}
\item Consider the contributions of $\nu^{(1)}_d$ to the sample and runtime complexity:
\begin{itemize}
    \item Sample complexity:
    \begin{equation}
        \begin{aligned}
            \ell \leq k_1, \; \ell \equiv k_1 [ 2]:&& \quad &d^{2\alpha} \frac{\sqrt{n_{d,\ell}}}{\| \xi^{(1)}_{d,\ell} \|_{L^2}^2} \asymp d^{2k_1 - \ell /2} = d^{4\sk_\star/5 - \ell / 2}, \\
            \ell \leq k_1, \; \ell  \not \equiv k_1  [ 2]:&& \quad &d^{2\alpha} \frac{\sqrt{n_{d,\ell}}}{\| \xi^{(1)}_{d,\ell} \|_{L^2}^2} \asymp d^{2k_1 - \ell /2 +1 } = d^{4\sk_\star/5 - \ell / 2 +1 }, \\
            \ell > k_1: &&\quad &d^{2\alpha} \frac{\sqrt{n_{d,\ell}}}{\| \xi^{(1)}_{d,\ell} \|_{L^2}^2} \gtrsim d^{k_1 +\ell /2} = d^{2\sk_\star/5 + \ell / 2}.
        \end{aligned}
    \end{equation}
    Thus the optimal sample complexity is achieved at degree $\ell = k_1 = 2 \sk_\star/5$ with $d^{3 \sk_\star/ 5}$ lower bound.

     \item Runtime complexity:
    \begin{equation}
        \begin{aligned}
            \ell \leq k_1, \; \ell \equiv k_1 [ 2]:&& \quad &d^{2\alpha} \frac{n_{d,\ell}}{\| \xi^{(1)}_{d,\ell} \|_{L^2}^2} \asymp d^{2k_1} = d^{4\sk_\star/5}, \\
            \ell \leq k_1, \; \ell  \not \equiv k_1  [ 2]:&& \quad &d^{2\alpha} \frac{n_{d,\ell}}{\| \xi^{(1)}_{d,\ell} \|_{L^2}^2} \asymp d^{2k_1 +1 } = d^{4\sk_\star/5 +1 }, \\
            \ell > k_1: &&\quad &d^{2\alpha} \frac{n_{d,\ell}}{\| \xi^{(1)}_{d,\ell} \|_{L^2}^2} \gtrsim d^{k_1 +\ell} = d^{2\sk_\star/5 + \ell }.
        \end{aligned}
    \end{equation}
    Thus the optimal runtime complexity is achieved at degrees $\ell \leq  k_1 = 2 \sk_\star/5$, $\ell \equiv k_1 [ 2]$ with $d^{4 \sk_\star/ 5}$ lower bound.   
\end{itemize}

\item Consider the contributions of $\nu^{(2)}_d$ to the sample and runtime complexity:
\begin{itemize}
    \item Sample complexity:
    \begin{equation}
        \begin{aligned}
            \ell \leq k_2, \; \ell \equiv k_2 [ 2]:&& \quad &\frac{\sqrt{n_{d,\ell}}}{\| \xi^{(2)}_{d,\ell} \|_{L^2}^2} \asymp d^{k_2 - \ell /2} = d^{\sk_\star - \ell / 2}, \\
            \ell \leq k_2, \; \ell  \not \equiv k_2  [ 2]:&& \quad &\frac{\sqrt{n_{d,\ell}}}{\| \xi^{(2)}_{d,\ell} \|_{L^2}^2} \asymp d^{k_1 - \ell /2 +1 } = d^{\sk_\star - \ell / 2 +1 }, \\
            \ell > k_2: &&\quad & \frac{\sqrt{n_{d,\ell}}}{\| \xi^{(2)}_{d,\ell} \|_{L^2}^2} \gtrsim d^{\ell /2}.
        \end{aligned}
    \end{equation}
    Thus the optimal sample complexity is achieved at degree $\ell = k_2 =  \sk_\star$ with $d^{ \sk_\star/ 2}$ lower bound.

     \item Runtime complexity:
    \begin{equation}
        \begin{aligned}
            \ell \leq k_2, \; \ell \equiv k_2 [ 2]:&& \quad & \frac{n_{d,\ell}}{\| \xi^{(2)}_{d,\ell} \|_{L^2}^2} \asymp d^{k_2} = d^{\sk_\star}, \\
            \ell \leq k_2, \; \ell  \not \equiv k_2  [ 2]:&& \quad & \frac{n_{d,\ell}}{\| \xi^{(2)}_{d,\ell} \|_{L^2}^2} \asymp d^{k_2 +1 } = d^{\sk_\star +1 }, \\
            \ell > k_2: &&\quad & \frac{n_{d,\ell}}{\| \xi^{(2)}_{d,\ell} \|_{L^2}^2} \gtrsim d^{\ell} .
        \end{aligned}
    \end{equation}
    Thus the optimal runtime complexity is achieved at degrees $\ell \leq  k_2 =  \sk_\star$, $\ell \equiv k_2 [ 2]$ with $d^{ \sk_\star}$ lower bound.   
\end{itemize}
\end{itemize}

From the bounds \eqref{eq:example_mixture_coefficients}, the sample or runtime complexity for each $\ell$ is the minimum of the two contributions associated to $\nu_d^{(1)}$ and $\nu_{d}^{(2)}$. We deduce that in this model:
\begin{itemize}
    \item Optimal sample complexity is achieved at degree $\sl_{\sm,\star} = \sk_\star$, with a matching algorithm that succeeds with $\sm = \Theta_d (d^{\sk_\star/2})$ samples and $\sT = \Theta (d^{\sk_\star})$ runtime (thanks to contribution $\nu_{d}^{(2)}$).

    \item Optimal runtime is achieved at degrees $\sl_{\sT,\star} = \ell$ with $\ell \leq 2 \sk_\star/5$ and $\ell \equiv 2\sk_\star/5 [2]$. For example, choosing $\sl_{\sT,\star} = 2\sk_\star/5$, we have a matching algorithm that succeeds with $\sT = \Theta_d (d^{4\sk_\star/5})$ runtime and $\sm = \Theta_d ( d^{3 \sk_\star/ 5})$ samples  (thanks to contribution $\nu_{d}^{(1)}$).
\end{itemize}

We conjecture that for this distribution $\nu_d$, no algorithm exists that achieves both optimal sample complexity $\sm = \Theta_d (d^{\sk_\star/2})$ and optimal runtime complexity $\sT = \Theta_d (d^{4\sk_\star/5})$. Further note, that by choosing intermediary degrees $\ell$, one can trade-off sample and runtime complexity.

\subsection{Discussion on Gaussian SIM phenomenology}
\label{app:discussions_Gaussian_SIMs}

We provide below quick computations to justify the observations in Section \ref{sec:smoothing_partial_trace}.

\paragraph*{Observation \ref{claim:1_SGD_Hermite}: harmonic decomposition of the loss.} First, note that
\[
\cL (\bw) = 2 -2 \beta_{\sk_\star} \E_{\bx} [ \He_{\sk_\star} (\<\bw_*,\bx\>) \He_{\sk_\star}  (\<\bw,\bx\>)] = 2 - 2\Gamma_{\sk_\star} \<\bw_*,\bw\>^k, 
\]
and it is enough to consider the correlation loss.
Let's decompose the landscape into contributions from the different harmonic subspaces: using the Hermite to Gegenbauer polynomial decomposition in Eq.~\eqref{eq:hermite-to-gegenbauer}, we get
\begin{equation}\label{eq:loss_decomposition_harmonic}
\begin{aligned}
\E_\bx [ \He_{\sk_\star} (\<\bw_*,\bx\>) \He_{\sk_\star}  (\<\bw,\bx\>) ] = &~ \sum_{\substack{\ell \leq k \\ \ell \equiv \sk_\star [2]}}  \E[ \beta_{\sk_\star,\ell} (r)^2] \E[Q_\ell (\<\bw_*,\bz\>) Q_\ell (\<\bw,\bz\>)]\\
=&~ \sum_{\substack{\ell \leq k \\ \ell \equiv k [2]}} \frac{\| \beta_{\sk_\star,\ell} \|_{L^2}^2}{\sqrt{n_{d,\ell} }} Q_{\ell} (\<\bw,\bw_*\>),
\end{aligned}
\end{equation}
where $\| \beta_{\sk_\star,\ell} \|_{L^2}^2/ \sqrt{n_{d,\ell} } = \Theta_d (d^{-\sk_\star/2 })$ (see Appendix \ref{app:harmonic_background}). 
For $|\<\bw_*,\bw\>| \geq C_{\sk_\star} d^{-1/2} $, the leading contribution in the loss (and its gradient) is $\ell = \sk_\star$ (recall that the leading term in $Q_{\ell} (\<\bw,\bw_*\>)$ is $\Theta_d (d^{\ell/2}) \<\bw,\bw_*\>^\ell$). Informally, this implies that we could have replaced $\He_{\sk_\star} (\<\bw_*,\bx\>)$ by $Q_{\sk_\star} (\<\bw_*,\bz\>)$ in the above loss.

\paragraph*{Observation \ref{claim:2_SGD_Hermite}: dynamics with independent norm.} Let's consider the loss \eqref{eq:loss_decomposition_harmonic} when we have independent norms between the input and the signal:
\begin{equation}\label{eq:loss_decomposition_harmonic_no_norm}
\begin{aligned}
\E [ \He_{\sk_\star} (r \cdot \<\bw_*,\bz\>) \He_{\sk_\star}  (\Tilde r \cdot \<\bw,\bz\>) ] = &~ \sum_{\substack{\ell \leq k \\ \ell \equiv \sk_\star [2]}}  \frac{\E[ \beta_{\sk_\star,\ell} (r)]^2}{\sqrt{n_{d,\ell}}}  Q_{\ell} (\<\bw,\bw_*\>),
\end{aligned}
\end{equation}
where $\E[ \beta_{\sk_\star,\ell} (r)]^2/\sqrt{n_{d,\ell}} = \Theta_d (d^{-\sk_\star + \ell /2})$ (see Appendix \ref{app:harmonic_background}).
In particular, the leading term $\ell = \sk_\star$ remain the same between Eq.~\eqref{eq:loss_decomposition_harmonic_no_norm} and Eq.~\eqref{eq:loss_decomposition_harmonic}. Following the proof in \cite{arous2021online} (see Section \ref{app:online-SGD}), the dynamics with same hyperparameters behaves similarly between the two losses \eqref{eq:loss_decomposition_harmonic_no_norm} and \eqref{eq:loss_decomposition_harmonic}.

\paragraph*{Observation \ref{claim:3_smoothing}: low-pass filtering of landscape smoothing.} Again, it is enough to directly consider the correlation term. Let's decompose
\begin{equation*}
    \begin{aligned}
        \E_{\bu \sim \tau_d} \E_{\bx} \left[ \He_{\sk_\star} (\<\bw_*,\bx\>)  \He_{\sk_\star} \left( \frac{\bw + \lambda \bu}{\| \bw + \lambda \bu\|_2} \cdot \bx \right)  \right] 
       = &~\sum_{\substack{\ell \leq k \\ \ell \equiv \sk_\star [2]}}   m_{\ell} (\lambda) \frac{ \| \beta_{\sk_\star,\ell} \|_{L^2}^2}{\sqrt{n_{d,\ell}}} Q_\ell (\<\bw,\bw_*\>),
    \end{aligned}
\end{equation*}
where each frequency \eqref{eq:loss_decomposition_harmonic} in the original loss $\cL(\bw)$ is now reweighted by
\[
\begin{aligned}
  m_{\ell} (\lambda) = \frac{1}{\sqrt{n_{d,\ell}}} \E_{\bu} \left[Q_{\ell} \left( \frac{\bw + \lambda \bu}{\| \bw + \lambda \bu\|_2} \cdot \bw \right)  \right]  = \frac{1}{\sqrt{n_{d,\ell}}} \E_{Z \sim \tau_{d,1}} \left[ Q_\ell \left( \frac{1 + \lambda Z}{\sqrt{1 + 2\lambda Z + \lambda^2}}\right) \right].
\end{aligned}
\]
When $\lambda = 0$, we indeed have $m_\ell (0) = Q_\ell (1)/\sqrt{n_{d,\ell}} = 1$. For $\lambda \gg 1$, we have $m_\ell (\lambda) \asymp 1/\lambda^\ell$, and as long as $|\< \bw, \bw_* \> | \ll \lambda^{-1}$, the loss (and its gradient) are dominated by frequencies $\ell \in \{1,2\}$.

\subsection{Correlation queries and the information exponent}
\label{app:CSQ_info_exponent}

In the main text, we focused on the \textit{generative exponent} introduced by \cite{damian2024computational}: this notion tightly capture the optimal complexity of learning Gaussian single-index models among Statistical Query and Low-Degree Polynomial algorithms. An earlier notion---the \textit{information exponent}---was proposed in \cite{dudeja2018learning,arous2021online}. Specifically, for scalar labels $\cY \subseteq \R$, the \emph{information exponent (IE)} of $\rho$ is defined by
\begin{equation}
    \sk_\sI (\rho) = \argmin \{ k \geq 1 \; : \; \E_{\rho} [Y \He_k (G)] \neq 0 \}.
\end{equation}
This exponent captures the complexity of learning with so-called \textit{correlation statistical query} ($\CSQ$) algorithms, which only access labels through correlation statistics $y \phi (\bx)$. In other words, using the terminology introduced in Appendix \ref{app:SQ_lower_bounds}, it captures the complexity of $\cQ$-restricted $\SQ$ algorithms, with
\[
\cQ = \cQ_{\CSQ}:= \{ \phi(y,\bx) = y \Tilde \phi (x) : \; \Tilde \phi \text{ measurable function}\}.
\]
We denote $\CSQ (q,\tau) := \cQ_{\CSQ}$-$\SQ(q,\tau)$ this restricted class of $\SQ$ algorithms.

For $\CSQ$ algorithm, Damian et al.~\cite{damian2024smoothing} showed lower bounds within the $\cQ$-$\SQ$ framework of
\begin{equation}\label{eq:CSQ_complexity_Gaussian }
    \sm = \Theta_d (d^{\sk_\sI (\rho) /2}),\qquad \sT = \Theta_d (d^{\sk_\sI (\rho)/2+1}).
\end{equation}
Note however, that only the generative exponent reflects the fundamental hardness of the learning task: indeed, we always have $ \sk_\star (\rho) \leq \sk_\sI (\rho)$, with $ \sk_\star (\rho)$ always one or two for all $y$ polynomial function of $\bx$ (while $\sk_{\sI} (\rho) = k$ if $y = \He_k (G)$). In the case $ \sk_\star (\rho) < \sk_\sI (\rho)$, the complexity predicted by the information exponent can be improved upon by using non-correlation queries, such as using a non-correlation loss  \cite{damian2024computational} or by reusing samples \cite{dandi2024benefits}. Nonetheless, IE remains relevant in several natural settings, such as online stochastic gradient descent on the squared or cross-entropy loss.

Below, we discuss how to recover this information exponent from our harmonic framework when considering $\cQ_\CSQ$-$\SQ$ algorithms. Introduce the $\CSQ$ query complexity
\[
\sQ_\star^\CSQ (\nu_d) = \min_{\ell \geq 1} \frac{n_{d,\ell}}{\| \xi^{\CSQ}_{d,\ell} \|_{L^2}^2},
\]
where we defined
\[
\xi^{\CSQ}_{d,\ell} (Y,R) := Y q_{\star,\ell} (R), \qquad q_{\star,\ell} (R) := \frac{1}{\| Y\|_{L^2}}\E_{\nu_d} [Y Q_\ell (Z) |R].
\]

Adapting the proofs in Appendix \ref{app:SQ_lower_bounds}, we obtain the following query complexity lower bound:
\begin{proposition}[$\CSQ$ lower bound]\label{prop:CSQ_learning} Fix $\nu_d \in \frL_d$. If an algorithm $\cA \in \CSQ (q,\tau)$ succeeds at distinguishing $\P_{\nu_d,\bw}$ from $\P_{\nu_d,0}$, then we must have
\begin{equation}\label{eq:alignment_lower_bound}
    q/\tau^2 \geq \sQ_\star^\CSQ (\nu_d).
    \end{equation}
\end{proposition}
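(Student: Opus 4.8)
The plan is to mirror, in the correlation-query setting, the standard $\SQ$ lower bound argument that underlies Theorem~\ref{thm:lower_bounds_spherical_SIMs}(i), with the only change being that the test functions $\phi$ are now restricted to the class $\cQ_{\CSQ} = \{\phi(y,\bx) = y\,\tilde\phi(\bx)\}$. Recall the general recipe: to show that an $\SQ(q,\tau)$ algorithm cannot distinguish the planted family $\{\P_{\nu_d,\bw}:\bw\in\S^{d-1}\}$ from the null $\P_{\nu_d,0}$, one exhibits a distribution over planted instances (here $\bw\sim\tau_d$) such that for every admissible query $\phi$ with $\|\phi\|_{L^2(\P_{\nu_d,0})}\le 1$, the \emph{average correlation} $\E_{\bw}\big[\big(\E_{\P_{\nu_d,\bw}}[\phi] - \E_{\P_{\nu_d,0}}[\phi]\big)^2\big]$ is at most $1/\sQ_\star^\CSQ(\nu_d)$ (up to constants). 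A standard $\SQ$-dimension / decision-tree argument then forces $q/\tau^2 \gtrsim \sQ_\star^\CSQ(\nu_d)$. So the real content is a bound on this averaged correlation for correlation queries.

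First I would compute $\E_{\P_{\nu_d,\bw}}[\phi] - \E_{\P_{\nu_d,0}}[\phi]$ for $\phi(y,\bx) = y\,\tilde\phi(\bx)$. Writing $\bx = (r,\bz)$ and using the likelihood decomposition \eqref{eq:setting_likelihood_decompo}, this difference equals $\E_{\P_{\nu_d,0}}\!\big[y\,\tilde\phi(r,\bz)\sum_{\ell\ge1}\xi_{d,\ell}(y,r)Q_\ell(\<\bw,\bz\>)\big]$. Now integrate out $y$ first: since the query only multiplies the label linearly, only the component of $\xi_{d,\ell}$ that correlates linearly with $Y$ survives, i.e.\ $\E_{\nu_d}[Y\xi_{d,\ell}(Y,R)\mid R] = \E_{\nu_d}[Y\,\E_{\nu_d}[Q_\ell(Z)\mid Y,R]\mid R] = \E_{\nu_d}[Y Q_\ell(Z)\mid R] = \|Y\|_{L^2}\,q_{\star,\ell}(R)$, which is exactly the object appearing in the definition of $\xi^{\CSQ}_{d,\ell}(Y,R) = Y\,q_{\star,\ell}(R)$. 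Hence the mean difference is a linear functional of $\tilde\phi$ against $\sum_{\ell\ge1}\big(\E_{\nu_d}[Y\xi_{d,\ell}\mid r]\big)Q_\ell(\<\bw,\bz\>)$, decomposed across harmonic subspaces $V_{d,\ell}$. Next I would expand $\tilde\phi$ (or rather the relevant radial-weighted slices of it) in spherical harmonics and apply Parseval together with the key addition-theorem / Schur-orthogonality identity $\E_{\bw\sim\tau_d}\big[Q_\ell(\<\bw,\bz\>)Q_\ell(\<\bw,\bz'\>)\big] = \frac{1}{n_{d,\ell}}Q_\ell(\<\bz,\bz'\>)$ to average over $\bw$. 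This collapses the averaged squared correlation into $\sum_{\ell\ge1}\frac{1}{n_{d,\ell}}\|\xi^{\CSQ}_{d,\ell}\|_{L^2}^2\cdot\langle\text{projection of }\tilde\phi\text{ onto }V_{d,\ell}\rangle$, and since $\tilde\phi$ has $L^2(\P_{\nu_d,0})$ norm at most one, the total is bounded by $\max_\ell \frac{\|\xi^{\CSQ}_{d,\ell}\|_{L^2}^2}{n_{d,\ell}} = 1/\sQ_\star^\CSQ(\nu_d)$.

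The final step is to feed this averaged-correlation bound into the standard machinery translating it into a query-complexity lower bound: either via the statistical-dimension argument of Feldman et al.\ or the decision-tree / "many orthogonal planted hypotheses" argument as used in Appendix~\ref{app:SQ_lower_bounds} for the unrestricted case. Since $\cQ_\CSQ$-$\SQ$ is just $\SQ$ with a restricted query class, the exact same combinatorial argument applies verbatim once the correlation bound is in hand; one only needs to check that the relevant "planted directions" $\bw$ can be taken roughly orthogonal (a packing of $\S^{d-1}$), which is classical.

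The main obstacle, I expect, is \textbf{cleanly isolating the linear-in-$Y$ projection} in the harmonic expansion: one must be careful that restricting to correlation queries replaces the full Gegenbauer coefficient $\xi_{d,\ell}(Y,R) = \E_{\nu_d}[Q_\ell(Z)\mid Y,R]$ by its "$Y$-linear part" $\E_{\nu_d}[YQ_\ell(Z)\mid R]/\|Y\|_{L^2}\cdot Y$, and that this is genuinely the extremal query (achieved, up to constants, by $\tilde\phi \propto \sum_\ell q_{\star,\ell}(r)Q_\ell(\<\hat\bw,\bz\>)$ for a candidate direction $\hat\bw$). A secondary technical point is handling the normalization $\|\phi\|_{L^2(\P_{\nu_d,0})}\le1$ correctly — since $\phi = y\tilde\phi$, this constrains a $Y$-weighted norm of $\tilde\phi$ rather than its plain $L^2(\tau_d\otimes\nu_{d,R})$ norm — but this is bookkeeping that matches the definition of $\xi^\CSQ_{d,\ell}$ (which already carries the $\|Y\|_{L^2}$ normalization), so it should go through. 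I would not expect the packing/decision-tree part to pose difficulty, as it is entirely parallel to the unrestricted $\SQ$ proof already established in the appendix.
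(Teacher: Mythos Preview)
Your proposal is correct and follows essentially the same route as the paper: apply the general $\cQ$-restricted $\SQ$ lower bound (Proposition~\ref{prop:weak_learning}) with $\cQ = \cQ_{\CSQ}$, then compute the supremum over correlation queries via harmonic decomposition, isolating the $Y$-linear part $\E_{\nu_d}[YQ_\ell(Z)\mid R]$ to land on $\|\xi^{\CSQ}_{d,\ell}\|_{L^2}^2/n_{d,\ell}$. The paper in fact gives no separate proof of Proposition~\ref{prop:CSQ_learning}, only the remark that it follows by ``adapting the proofs in Appendix~\ref{app:SQ_lower_bounds}'', so your writeup is more detailed than what the paper provides.

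One correction worth noting: you describe the final ``conversion'' step as either the Feldman-style statistical-dimension argument or a packing/many-orthogonal-hypotheses argument, and say this is what Appendix~\ref{app:SQ_lower_bounds} does. It is not. The paper's Proposition~\ref{prop:weak_learning} is much simpler: the adversary deterministically returns $v_t = \E_{\P_{\nu_d,0}}[\phi_t]$, and a union bound plus Markov's inequality over $\bw\sim\tau_d$ shows that if $q/\tau^2 < [\sup_\phi \Var_\bw\{\E_{\P_{\nu_d,\bw}}\phi\}/\Var_{\P_{\nu_d,0}}\{\phi\}]^{-1}$, these responses are valid for a positive-measure set of planted $\bw$. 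No packing of $\S^{d-1}$ is needed. So your harder machinery works, but the paper's route is more elementary, and your remark that you ``would not expect the packing/decision-tree part to pose difficulty, as it is entirely parallel to the unrestricted $\SQ$ proof'' slightly mischaracterizes that proof.
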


Using the non-linearity $\cT_\ell (Y,R) := Y q_{\star,\ell} (R)$ in our algorithms \eqref{eq:spectral_estimator}, \eqref{eq:online-SGD-estimator} and \eqref{estimator_harmonic_tensor} described in Section \ref{sec:spherical_SIM}, we can prove the same Theorems \ref{thm:upper-bound-spectral}, \ref{thm:upper-bound-online-SGD}, and \ref{thm:upper-bound-tensor-unfolding} with sample complexities replaced by $\sqrt{n_{d,\ell}}/\| \xi^{\CSQ}_{d,\ell} \|_{L^2}^2$ and runtime complexities replaced by  $n_{d,\ell}/\| \xi^{\CSQ}_{d,\ell} \|_{L^2}^2$ (one simply plug these nonlinearities in the theorems in Appendices \ref{app:spectral}, \ref{app:online-SGD} and \ref{app:tensor-unfolding}).

Specializing to the Gaussian case, one recover the exact same result as in Section \ref{sec:specializing_Gaussian}, but now with $\sk_\star$ (generative exponent) replaced by $\sk_\sI$ (information exponent) of the Gaussian SIM $\rho$. In particular,  for all $\ell \leq \sk_\sI$, we have 
    \[
\| \xi^{\CSQ}_{d,\ell} \|^2_{L^2} \asymp d^{-(\sk_\sI - \ell)/2} \text{ for }\ell \equiv \sk_\sI\textnormal{ mod }2 \quad \text{and} \quad  \| \xi^{\CSQ}_{d,\ell} \|^2_{L^2} \lesssim d^{-(\sk_\sI - \ell +1)/2} \text{ for } \ell \not \equiv \sk_\sI \textnormal{ mod }2.
\] 
Similarly to the generative exponent case (and general $\SQ$), the optimal degrees for learning Gaussian SIMs with $\CSQ$ algorithms are always achieves at $\sl_{\sm,\star} = \sl_{\sT,\star} \in \{1,2\}$, with the spectral estimator \eqref{eq:spectral_estimator} achieving 
\[
    \sm = \Theta_d (d^{\sk_\sI (\rho) /2}),\qquad \sT = \Theta_d (d^{\sk_\sI (\rho)/2+1}).
\]
Similar results as in Section \ref{sec:specializing_Gaussian} hold for learning with $\CSQ$ algorithms without using the norm $\| \bx \|_2$. We note, however, that here, non-CSQ algorithms can achieve much better performance (attaining the complexity predicted by the generative exponent).

\clearpage

\section{Harmonic analysis on the sphere}
\label{app:harmonic_background}

In this section, we overview some basic properties of spherical harmonics, Gegenbauer polynomials, and Hermite polynomials. We refer the reader to \cite{szeg1939orthogonal,chihara2011introduction,dai2013approximation} for additional background. In addition to these classical results, we provide an explicit harmonic decomposition of Hermite polynomials into Gegenbauer polynomials, which we use in our analysis of Gaussian single-index models.

\subsection{Spherical harmonics, Gegenbauer and Hermite polynomials}
\paragraph{Spherical Harmonics.}
Consider the $d$-dimensional sphere $\S^{d-1}:=\{\bz: \norm{\bz}_2=1\}$ with uniform probability measure $\tau_d \equiv \Unif(\S^{d-1})$, and its associated function space $L^2 (\S^{d-1}) := L^2(\S^{d-1},\tau_d)$ equipped with the inner product:
$$\<f,g\>_{L^2(\S^{d-1})}=\int_{\bz \in \S^{d-1}} f(\bz) g(\bz)\,\tau_d(\de \bz), \;\; \text{ for any } f,g \in L^2(\S^{d-1}).$$
We will denote $\<\cdot,\cdot \>_{L^2} := \<\cdot ,\cdot\>_{L^2(\S^{d-1})}$ and $\| f \|_{L^2} = \<f,f\>_{L^2(\S^{d-1})}^{1/2}$ when clear from context.

For $\ell \in \naturals$, consider $\Tilde{V}_{d,\ell}$ be the space of degree $\ell$ homogeneous harmonic polynomials (i.e. homogeneous polynomial $q:\R^d \to \R$ with $\Delta q(\cdot ) \equiv 0$). Let $V_{d,\ell}$ be the space of functions by restricting the domain to $\S^{d-1}$ of functions in $\Tilde{V}_{d,\ell}$, that is degree-$\ell$ spherical harmonics on $\S^{d-1}$. We have the following orthogonal decomposition
\begin{equation}\label{eq:L2-decomposition-sph-harmonics}
    L^2(\S^{d-1})\,=\, \bigoplus_{\ell=0}^{\infty} V_{d, \ell}\,.
\end{equation}
The dimension of each subspace is given by:  
$$\dim(V_{d,\ell})=n_{d,\ell}= \frac{d+2\ell-2}{d-2} \binom{d+\ell-3}{\ell}\,.$$
For each $\ell \in \naturals$, we further fix $\{Y_{\ell i}^{(d)}: i \in n_{d,\ell}\}$ an orthonormal basis on $V_{d,\ell}$: 
$$ \<Y_{\ell i}^{(d)}, Y_{kj}^{(d)}\>_{\tau_d} = \delta_{\ell k} \delta_{ij}.$$

\begin{remark}\label{rem:why-sph-harmonics}
    If one considers the unitary representation $\rho: \cO_d \to U(L^2(\S^{d-1}))$ of the orthogonal group $\cO_d=\{\bR \in \R^{d\times d}: \bR^\sT \bR =\bI_d\}$ given by
$$\rho(\bR) \, f(\bz)\,=\, f(\bR^{\sT} \bz).$$
The decomposition~\eqref{eq:L2-decomposition-sph-harmonics} corresponds to the irreducible decomposition of this representation, that is, the decomposition of $L^2(\S^{d-1})$ into a direct sum of irreducible representations of $\cO_d$ (see \cite{dai2013approximation} for a detailed treatment on the subject).
\end{remark}

\paragraph{Gegenbauer Polynomials.} Let $\tau_{d,1}$ denote the marginal distribution of the first coordinate $\<\bz,\be_1\>$ with $\bz \sim \tau_d$. We consider the family of Gegenbauer polynomials on $L^2([-1,1], \tau_{d,1})$, denoted by $\{Q^{(d)}_\ell: \ell \in \naturals\}$, where $Q_\ell^{(d)}$ is the degree-$\ell$ polynomial satisfying
$$ \int_{-1}^{1} Q_\ell^{(d)}(z) Q_{k}^{(d)}(z) \,\tau_{d,1}(\de z) = \int_{\S^{d-1}} Q_\ell^{(d)}(\<\bz, \be_1\>) Q_{k}^{(d)}(\<\bz, \be_1\>) \, \tau_d(\de \bz) \,=\, \delta_{\ell k}\,.$$
A relationship between the spherical harmonics and Gegenbauer polynomials is as follows:
\begin{equation}\label{eq:Gegenbauer-Sharmoincs-raltion}
    Q_{\ell}^{(d)}(\<\bz,\bz'\>) = \frac{1}{\sqrt{n_{d,\ell}}} \sum_{s \in [n_{d,\ell}]} Y_{\ell i}^{(d)}(\bz) Y_{\ell i}^{(d)}(\bz'), \text{ for all } \bz,\bz'\in \S^{d-1}\,.
\end{equation}
Another important relationship is for any $\bw,\bv \in \S^{d-1}$:
\begin{equation}\label{eq:E[Q(uz)Q(vz)]}
  \left\<Q_\ell^{(d)}(\<\cdot,\bw\>),Q_{k}^{(d)}(\<\cdot, \bv\>) \right\>_{\tau_{d,1}} = \,\,  \frac{ \delta_{\ell k} \cdot Q^{(d)}_\ell(\<\bw,\bv\>)}{Q_\ell^{(d)}(1)}\,,  
\end{equation}
where $Q_{\ell}^{(d)}(1)= \sqrt{n_{d,\ell}}$ as derived in Eq.~\eqref{eq:Qvalue-at-1} in the next section. We note that the normalization of Gegenbauer polynomials considered here is such that  $\norm{Q_{\ell}^{(d)}}_{L^2(\tau_{d,1})}=1$ holds. Another popular choice is such that the value at $1$ evaluates to $1$ which we shall explicitly refer by the family of polynomials $\{P_\ell^{(d)}\}_{\ell \in \naturals}$. The normalizing factor is such that $P_{\ell}^{(d)}(\cdot)=Q^{(d)}_\ell(\cdot) / \ \sqrt{{n_{d,\ell}}} $. 

The derivative of the $\ell^{\mathrm{th}}$ Gegenbauer polynomial for $\ell \geq 1$ can be expressed as 
\begin{equation}\label{eq:gegenbauer-derivative-identity}
    \frac{\de}{\de z}\,Q_\ell^{(d)}(z)=Q_{\ell}^{(d)}(z)'=\frac{\ell(\ell+d-2)\sqrt{n_{d,\ell}}}{(d-1)\sqrt{B(d+2,\ell-1)}}Q_{k-1}^{d+2}(z)= C(d,\ell) \, Q_{k-1}^{(d+2)}(z)\,,
\end{equation}
where for a fixed constant $\ell$ and growing $d$, we have $C(d,\ell)=\Theta_d(\sqrt{d})$. Let $f \in L^{2}(\S^{d-1}, \tau_d)$ such that $f$ is invariant by the action of $\cO_{\bw^\perp}=\{\bW\in \cO_d \colon \bW^{\sT}\bw=\bw\}$ which is the set of orthogonal matrices which keeps the direction $\bw$ fixed, i.e. $f$ only depends on the projection $\<\bw,\bz\>$. Then $f$ admits the following decomposition 
\begin{equation}\label{eq:gegenbauer_decomp}
    f(\bz)=\sum_{\ell=0}^{\infty}\alpha_\ell Q_{\ell}^{(d)}(\<\bw,\bz\>).
\end{equation}
\paragraph{Hermite polynomials.} Consider the probabilist's Hermite polynomials $\{\He_k: k \in \naturals\}$, in the normalization that form an orthonormal basis of $L^2(\R, \gamma)$, where $\gamma(\de x) = \frac{e^{-x^2/2}}{\sqrt{2\pi}}$ is the standard Gaussian measure. $\He_k$ is a polynomial of degree $k$ and 
$$\E_{G \sim \normal(0,1)}\left[\He_j(G)\He_k(G)\right]=\delta_{jk} \, .$$
As a consequence, for any $g \in L^2(\R, \gamma)$, we have the following decomposition
$$g(x)= \sum_{k=0}^{\infty} \mu_k(g)\,\He(x), \quad \mu_k(g) = \E_{G \sim \normal(0,1)}[g(G)\He_k(G)]\,.$$

\subsection{Harmonic decomposition of Hermite into Gegenbauer polynomials}
The Gaussian distribution $\bx \sim \normal(0, \bI_d)$ admits the polar decomposition
$$\bx = \norm{\bx}_2 \cdot \frac{\bx}{\norm{\bx}_2}, \quad \text{where }  \norm{\bx}_2=: r \sim  \chi_d \text{ and } \frac{\bx}{\norm{\bx}_2} =: \bz \sim \Unif(\S^{d-1})\,\,\,\text{are independent}.$$
Therefore, $x_1 = r \cdot u_1$, where $x_1 \sim \normal(0,1)$ and $z_1 \sim \tau_{d,1}$. In what follows, we denote $x=x_1$ and $z=z_1$ for convenience. The Gegenbauer polynomials are an orthonormal basis for $\tau_{d,1}$ and Hermite polynomials are (unnormalized) orthogonal basis for $\normal(0,1)$. Our goal is to explicitly express $\He_k(x)=\He_k(r \cdot z)$ in terms Gegenbauer polynomials $\{Q_\ell^{(d)}(z)\}$, formalized in the following proposition.  

\begin{proposition}[Decomposing Hermite into Gegenbauer]\label{prop:geg-hermite-decomposing} For any $k \in \naturals$, we have 
    \begin{equation}\label{eq:desired-decomposition}
    \He_k(r \cdot z) = \sum_{\ell=0}^{\infty} \beta_{k,\ell}(r) Q_{\ell}^{(d)}(z) \, ,
\end{equation}
where $\beta_{k,\ell}(r)=0$ if $(\ell>k)$ or $(\ell \not \equiv k \mod 2)$, and otherwise
 \begin{equation}\label{eq:c_ell(d,r)}
    \beta_{k,\ell}(r):= \frac{\sqrt{k!}\sqrt{K(d,\ell)}}{(N!) \, 2^N} \left( \sum_{i=0}^{N} \frac{ \binom{N}{i}(-1)^{N-i} \, r^{\ell+2i}}{\,  \prod_{j=0}^{i+\ell-1}(d+2j)}\right)\,,
\end{equation}
where $N=(k-\ell)/2$ and $K(d,\ell)\asymp d^\ell$ as $d\rightarrow \infty$ and $\ell$ is constant, i.e. $K(d,\ell)=\Theta_d(d^\ell)$.
\end{proposition}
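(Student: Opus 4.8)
## Proof Plan for Proposition \ref{prop:geg-hermite-decomposing}

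The plan is to reduce the identity to a one–dimensional moment computation against the measure $\tau_{d,1}$, and then extract the closed form by a monomial expansion (with a generating–function computation as a cleaner alternative).

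First I would record the structural reduction. For each fixed $r$, the map $z \mapsto \He_k(rz)$ is a univariate polynomial of degree $\le k$ whose parity equals that of $k$, since $\He_k$ has these two properties. Because $\{Q_\ell^{(d)}\}_{\ell \ge 0}$ is an orthonormal basis of $L^2([-1,1],\tau_{d,1})$ made of polynomials of exact degree $\ell$ and parity $\ell$, it follows at once that $\He_k(rz) = \sum_{\ell\ge 0}\beta_{k,\ell}(r)\,Q_\ell^{(d)}(z)$ with $\beta_{k,\ell}(r) = \E_{z\sim\tau_{d,1}}[\He_k(rz)\,Q_\ell^{(d)}(z)]$, and that $\beta_{k,\ell}(r) = 0$ whenever $\ell > k$ or $\ell \not\equiv k \pmod 2$. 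This settles the vanishing claims and leaves only the explicit evaluation of $\beta_{k,\ell}(r)$ for $0 \le \ell \le k$ with $\ell \equiv k \pmod 2$.

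For that evaluation I would expand $\He_k$ in monomials using $\He_k(x) = k!\sum_{a\ge 0}\frac{(-1)^a}{2^a a!(k-2a)!}x^{k-2a}$, so that $\beta_{k,\ell}(r) = k!\sum_a \frac{(-1)^a r^{k-2a}}{2^a a!(k-2a)!}\,\E_{z\sim\tau_{d,1}}[z^{k-2a}Q_\ell^{(d)}(z)]$. The remaining ingredient is the connection coefficient $\E_{z\sim\tau_{d,1}}[z^m Q_\ell^{(d)}(z)]$, which I would compute from the explicit representation $Q_\ell^{(d)} = \sqrt{n_{d,\ell}}\,C_\ell^{(\lambda)}/C_\ell^{(\lambda)}(1)$ with $\lambda = (d-2)/2$, together with the even moments $\E_{z\sim\tau_{d,1}}[z^{2p}] = \frac{(2p-1)!!}{\prod_{j=0}^{p-1}(d+2j)}$ (odd moments vanishing). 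Substituting and reindexing $m = \ell+2i$, $a = N-i$ with $N = (k-\ell)/2$, the double sum over $(a,i)$ collapses: the $a$–dependence recombines into a binomial coefficient $\binom{N}{i}$ with sign $(-1)^{N-i}$, the factorials assemble into the prefactor $\sqrt{k!}/(N!\,2^N)$, the products of even moments yield the denominators $\prod_{j=0}^{i+\ell-1}(d+2j)$, and all the remaining $d$–dependent normalization constants carried by $Q_\ell^{(d)}$ get absorbed into one factor, which I would simply \emph{define} to be $\sqrt{K(d,\ell)}$; this produces \eqref{eq:c_ell(d,r)}. (An equivalent but more transparent route is to start from the Hermite generating function $\sum_k \frac{t^k}{k!}\He_k(rz) = e^{trz-t^2/2}$, apply the classical Gegenbauer–Bessel integral $\int_{-1}^1 e^{sz}C_\ell^{(\lambda)}(z)(1-z^2)^{\lambda-1/2}\,dz \propto s^{-\lambda}I_{\ell+\lambda}(s)$, expand $I_{\ell+\lambda}$ and $e^{-t^2/2}$ in power series, and read off the coefficient of $t^k$ — the Cauchy product of the two series is precisely what generates $\binom{N}{i}$.)

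Finally, with $K(d,\ell)$ now an explicit expression — a ratio of products of the form $\prod(d+2j)$, Pochhammer symbols, and $\ell$–dependent factorials coming from $n_{d,\ell}$, $C_\ell^{(\lambda)}(1)$ and the leading coefficient of $C_\ell^{(\lambda)}$ — I would estimate it directly: for fixed $\ell$ and $d\to\infty$, each $\prod_{j=0}^{m}(d+2j) = d^{m+1}(1+O(1/d))$, $n_{d,\ell} = \Theta_d(d^\ell)$, and the residual $\Gamma$–ratios are $\Theta_d(1)$, giving $K(d,\ell) = \Theta_d(d^\ell)$; I would cross-check the two base cases $K(d,0)=1$, $K(d,1)=d$ against $\beta_{0,0}(r)=1$, $\beta_{1,1}(r)=r/\sqrt d$. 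The main obstacle is the bookkeeping in the middle step: routing every convention-dependent constant from the Gegenbauer/spherical-harmonic normalizations into exactly the combination the statement calls $\sqrt{K(d,\ell)}$, and verifying that the double sum over $(a,i)$ telescopes into the single binomial sum of \eqref{eq:c_ell(d,r)}; the generating-function route trades this for a careful application of the Bessel integral identity and the Cauchy product, which I expect to be somewhat cleaner.
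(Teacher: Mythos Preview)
Your proposal is correct and follows essentially the same route as the paper: expand $\He_k$ in monomials, convert each monomial $z^m$ into the Gegenbauer basis, and collect terms. The only real difference is that the paper shortcuts your middle step by invoking the known inverse connection identity $z^n = \frac{n!}{2^n}\sum_{l}\frac{\alpha+n-2l}{\alpha}\frac{1}{l!(\alpha+1)_{n-l}}C^{(\alpha)}_{n-2l}(z)$ (DLMF 18.18.17) rather than recomputing $\E_{\tau_{d,1}}[z^m Q_\ell^{(d)}(z)]$ from moments; after substituting $C^{(\alpha)}_\ell = \sqrt{K(d,\ell)}\,Q_\ell^{(d)}$ and swapping the order of summation, the single sum over $i$ in \eqref{eq:c_ell(d,r)} falls out directly. (One minor slip: your monomial expansion of $\He_k$ is the unnormalized one with prefactor $k!$; the paper uses $\|\He_k\|_{L^2}=1$, hence the $\sqrt{k!}$ you correctly quote in the target formula.)
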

Essentially, we are decomposing the Hermite basis into the Gegenbauer polynomials, which is the correct basis for the ``directional'' component $z$, and explicitly computing the coefficients that depend on the radial component $r$. Such relationship is derived by technical algebraic manipulations, and in similar spirit to  \cite{lopez2023connection}, relating Hermite and Gegenbauer polynomials. However, the precise expression is sensitive to the normalization used for Gegenbauer polynomials. Thus, we provide an explicit calculation of this decomposition in Section \ref{sec:proof_prop_geg-hermite}.

 For our upper and lower bound analyses, in order to measure correlation of $\He_k(r
 \cdot z)$ with $Q_{\ell}^{(d)}(z)$, using both type of queries (with or without norm), the asymptotic bounds on the following moments of these coefficients will play an important role. 
 \begin{lemma}\label{lem:moments-of-coefficients} For any fixed $\ell\leq k \in \naturals$ with same parity, i.e. $k\equiv \ell \mod 2$, we have 
$$\E_{r\sim \chi_d}[\beta_{k,\ell}(r)^2] \asymp d^{-\frac{(k-\ell)}{2}} \quad \text{ and } \quad \E_{r\sim \chi_d}[\beta_{k,\ell}(r)]^2 \asymp d^{-(k-\ell)}\,.$$     
 \end{lemma}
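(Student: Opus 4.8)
The plan is to work directly from the explicit formula \eqref{eq:c_ell(d,r)} for $\beta_{k,\ell}(r)$ and track the $d$-dependence of the relevant moments. Fix $\ell \le k$ of the same parity and write $N = (k-\ell)/2$, so that
\[
\beta_{k,\ell}(r) = \frac{\sqrt{k!}\,\sqrt{K(d,\ell)}}{N!\,2^N}\,\sum_{i=0}^{N}\frac{\binom{N}{i}(-1)^{N-i}\,r^{\ell+2i}}{\prod_{j=0}^{i+\ell-1}(d+2j)}.
\]
The prefactor contributes a factor $K(d,\ell) = \Theta_d(d^\ell)$; the constants $\sqrt{k!}/(N!\,2^N)$ and the finitely many binomial coefficients are $\Theta_d(1)$. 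The denominator $\prod_{j=0}^{i+\ell-1}(d+2j) = \Theta_d(d^{i+\ell})$ for each fixed $i$. So term $i$ in the sum scales like $d^{-(i+\ell)}\,r^{\ell+2i}$, and after multiplying by $\sqrt{K(d,\ell)} = \Theta_d(d^{\ell/2})$, the $i$-th term of $\beta_{k,\ell}(r)$ scales like $d^{\ell/2 - \ell - i}\,r^{\ell+2i} = d^{-\ell/2 - i}\,r^{\ell+2i}$.

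First I would compute $\E_{r\sim\chi_d}[\beta_{k,\ell}(r)^2]$. Expanding the square gives a double sum over $i, i' \in \{0,\dots,N\}$ of terms proportional to $d^{-\ell - i - i'}\,\E_{r\sim\chi_d}[r^{2\ell+2i+2i'}]$. The key input is the moment asymptotics of the chi distribution: $\E_{r\sim\chi_d}[r^{2m}] = d(d+2)\cdots(d+2m-2) = \Theta_d(d^m)$ for fixed $m$. Hence the $(i,i')$ term scales like $d^{-\ell - i - i'}\cdot d^{\ell + i + i'} = \Theta_d(1)$ — every term is order one, so the double sum is $\Theta_d(1)$, provided the leading coefficients do not cancel. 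To get the claimed $d^{-(k-\ell)/2} = d^{-N}$, I need to combine this with the prefactor more carefully: actually the $\sqrt{K(d,\ell)} = \Theta(d^{\ell/2})$ is already included above, so I should double-check the bookkeeping — the point is that $\beta_{k,\ell}(r)^2$ is a polynomial in $r$ whose top-degree term $r^{2k}$ has coefficient $\asymp d^{-\ell}\cdot(\text{const})$ times $K(d,\ell)=\Theta(d^\ell)$, i.e. $\Theta(1)\cdot d^{-2N}$ after accounting correctly, and $\E[r^{2k}] = \Theta(d^k)$, giving $\Theta(d^{k-2N}) = \Theta(d^{\ell})$... this needs the exponent arithmetic redone cleanly: the honest route is to note $\beta_{k,\ell}(r)/\sqrt{K(d,\ell)}$ has $i$-th term of order $d^{-(i+\ell)}r^{\ell+2i}$, so $\E[(\beta_{k,\ell}/\sqrt{K})^2] = \Theta(d^{-2\ell})\cdot\Theta(d^\ell) = \Theta(d^{-\ell})$ when the leading terms survive; then multiply by $K(d,\ell)=\Theta(d^\ell)$ to get $\Theta(1)$ — but we want $\Theta(d^{-N})$. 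The resolution must be that the leading-order terms \emph{do} cancel: because $\He_k$ is orthogonal to lower-degree polynomials, the alternating sum $\sum_i \binom{N}{i}(-1)^{N-i}(\cdots)$ kills the naive leading contribution, and the surviving term is smaller by exactly $d^{-N}$. So the crux is an exact cancellation argument.

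The cleanest way to establish the cancellation is to avoid brute-force expansion and instead use a known identity: $\E_{r\sim\chi_d}[\beta_{k,\ell}(r)^2]$ is precisely the squared $L^2(\tau_{d,1})$-norm of the degree-$\ell$ Gegenbauer component of $x\mapsto \He_k(x)$ viewed through the polar decomposition, i.e. it equals $\|\beta_{k,\ell}\|^2_{L^2(\chi_d)}$, and one can extract this from Parseval together with the relation between Hermite and Gegenbauer. Concretely, $\sum_\ell \E[\beta_{k,\ell}(r)^2] = \E_{G}[\He_k(G)^2] = 1$ (from \eqref{eq:desired-decomposition} and orthonormality of the $Q_\ell^{(d)}$ in $L^2(\tau_{d,1})$), and the dominant term is $\ell = k$ ($N=0$), for which $\beta_{k,k}(r) = \sqrt{k!}\sqrt{K(d,k)}/\prod_{j=0}^{k-1}(d+2j)\cdot r^k = \Theta(d^{k/2})\cdot d^{-k}\cdot r^k$, giving $\E[\beta_{k,k}(r)^2] = \Theta(d^{-k})\cdot\Theta(d^k) = \Theta(1)$. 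For $\ell<k$ the remaining mass is $O(d^{-1})$, and a more refined look (using that the $i=0,\dots,N-1$ contributions to the coefficient of $r^{\ell+2i}$ in $\beta_{k,\ell}$, after combining with cross-terms, produce cancellations forced by $\langle \He_k, r^{\ell+2i}z^\ell\rangle$-type orthogonality) pins the rate at $d^{-N}$. The second moment claim, $\E_{r\sim\chi_d}[\beta_{k,\ell}(r)]^2 \asymp d^{-(k-\ell)} = d^{-2N}$, follows the same template: $\E_{r\sim\chi_d}[r^{m}] = 2^{m/2}\Gamma((d+m)/2)/\Gamma(d/2) = \Theta(d^{m/2})$, so $\E[\beta_{k,\ell}(r)] = \sum_i c_i\, d^{-(i+\ell)}\cdot\Theta(d^{\ell/2})\cdot\Theta(d^{(\ell+2i)/2}) = \sum_i c_i\,\Theta(d^{-\ell/2})$; here the sum $\sum_i c_i$ does not vanish identically but the first $N$ orders in the Stirling expansion of $\E[r^{\ell+2i}]$ cancel against each other (again by the orthogonality $\E_G[\He_k(G) G^j]=0$ for $j<k$, which after integrating out the direction becomes a statement about $\E_r[\He_k(rz)\,\text{(low-degree in }r)]$), leaving the leading survivor of size $d^{-\ell/2 - N}$, and squaring gives $d^{-\ell - 2N} = d^{\ell}\cdot d^{-2\ell-2N}$... the arithmetic should be carried out in terms of the shift $r\mapsto$ "drop $k$ powers" rather than chased loosely; I will organize it as: (a) expand $\E_r[r^{\ell+2i}]$ to all orders in $1/d$ via the Gamma-function ratio, (b) recognize the leading $N$ coefficients of the alternating sum in $i$ as discrete differences of a polynomial of degree $<N$, hence zero, (c) read off the first nonvanishing order. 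The main obstacle, and where I expect to spend most effort, is step (b): making the cancellation fully rigorous — identifying exactly which polynomial-in-$i$ expression appears and citing/proving that $\sum_{i=0}^N\binom{N}{i}(-1)^{N-i} p(i) = 0$ for $\deg p < N$ (a standard finite-difference fact) and then that the degree-$N$ part is nonzero, which amounts to a non-degeneracy check on the leading coefficient $\sqrt{k!}\sqrt{K(d,\ell)}/(N!2^N)$ times an explicit nonzero rational. Everything else is routine chi-moment asymptotics and bookkeeping.
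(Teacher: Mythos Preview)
Your proposal correctly diagnoses the heart of the matter --- the naive term-by-term power counting gives $\Theta_d(1)$, so there must be an $N$-fold cancellation, and you correctly name the mechanism as the finite-difference identity $\sum_{i=0}^N \binom{N}{i}(-1)^{N-i}p(i)=0$ for $\deg p<N$. That is exactly the engine the paper uses. However, your plan for $\E[\beta_{k,\ell}(r)^2]$ has a genuine gap: the Parseval relation $\sum_\ell \|\beta_{k,\ell}\|^2_{L^2(\chi_d)}=1$ only tells you the terms sum to one, and the vague ``$\langle \He_k, r^{\ell+2i}z^\ell\rangle$-type orthogonality'' constraints you invoke give linear relations among mixed moments, not the individual rate $\|\beta_{k,\ell}\|_{L^2}^2\asymp d^{-N}$ for each $\ell$. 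You would still need a direct computation, and your fallback --- expand $\E_r[r^m]$ asymptotically in $1/d$ via Gamma ratios and chase the first surviving order --- can be made to work but forces you to track $N$ orders of a Stirling-type expansion, each yielding a polynomial in the summation index, and to verify non-degeneracy at the last step.

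The paper sidesteps all of this by using the \emph{exact} chi-moment formula $\E_{r\sim\chi_d}[r^{2p}]=\prod_{j=0}^{p-1}(d+2j)$ rather than its asymptotic expansion. Plugging this into the double sum for $\E[\beta_{k,\ell}(r)^2]$, the products telescope against the denominators $\prod_{j=0}^{i+\ell-1}(d+2j)$, and after pulling out a common factor, the inner sum over $m$ becomes literally the $N$-th forward difference of the shifted binomial polynomial $g(m)=\binom{d/2+m+n-1}{n}$ --- which is identically $0$ for $n<N$ and exactly $1$ for $n=N$. This collapses the double sum to a single nonzero term with value $2^N N!$, times the prefactor $d^{-N}$. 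The same trick handles $\E[\beta_{k,\ell}(r)]$: after inserting the exact moments, the alternating sum reduces to the $N$-th difference of $\binom{d/2+i+N-1}{N}$, again equal to $1$. So the exact-moment route turns what you anticipated as ``the main obstacle'' into a one-line application of the finite-difference fact, with no asymptotic bookkeeping and the non-degeneracy check coming for free.
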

 In order to show this lemma, we will use the following well-known facts.
 \begin{fact}[Moments of $\chi_d$ distribution]\label{fct:momoent-of-chi} For any $p \in \naturals$, the even and odd moments of $\chi_d$ distribution are given by,
    \begin{equation}\label{eq:even-monet-chi}
    \begin{aligned}
        \E_{r\sim \chi_d}[r^{2p}]=&~\prod_{j=0}^{p-1}(d+2j),\\ \E_{r\sim \chi_d}[r^{2p+1}]=&~ \E[r]\prod_{j=0}^{p-1}(d+2j+1)=\frac{\sqrt{2}\,\Gamma(\frac{d+1}{2})}{\Gamma(d/2)} \prod_{j=0}^{p-1}(d+2j+1) .
        \end{aligned}
    \end{equation}
Therefore, for any fixed $m \in \naturals$, the asymptotic behavior of the $m^\mathrm{th}$ moment as $d \rightarrow \infty$ is given by
$$\E_{r\sim \chi_d}[r^{m}] \asymp \left(\sqrt{d}\cdot \mathbf{1}\{m \equiv 1 \textnormal{ mod } 2\} \right)  \prod_{j=0}^{\lfloor m/2 \rfloor-1}(d+2j)\,.$$
 \end{fact}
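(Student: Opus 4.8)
The plan is to compute the single-variable moment $\E_{r\sim\chi_d}[r^m]$ in closed form from the density of the $\chi_d$ distribution, then read off the even and odd cases as stated and extract their asymptotics. Recall that if $\bx\sim\normal(\bzero,\bI_d)$ then $r=\|\bx\|_2\sim\chi_d$ has density $f_d(r)=\frac{1}{2^{d/2-1}\Gamma(d/2)}\,r^{d-1}e^{-r^2/2}$ on $\R_{\geq 0}$. Substituting $t=r^2/2$ in
\[
\E_{r\sim\chi_d}[r^m]=\frac{1}{2^{d/2-1}\Gamma(d/2)}\int_0^\infty r^{d+m-1}e^{-r^2/2}\,\de r
\]
turns the integral into $2^{(d+m-2)/2}\Gamma\!\left(\tfrac{d+m}{2}\right)$, which yields the uniform formula
\[
\E_{r\sim\chi_d}[r^m]=2^{m/2}\,\frac{\Gamma\!\left(\frac{d+m}{2}\right)}{\Gamma(d/2)}.
\]

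Next I would specialize this to the two parity cases. For $m=2p$, apply the Pochhammer telescoping identity $\Gamma(z+p)/\Gamma(z)=\prod_{j=0}^{p-1}(z+j)$ with $z=d/2$, obtaining $\E[r^{2p}]=2^p\prod_{j=0}^{p-1}(d/2+j)=\prod_{j=0}^{p-1}(d+2j)$. For $m=2p+1$, write $\Gamma\!\left(\tfrac{d+2p+1}{2}\right)=\Gamma\!\left(\tfrac{d+1}{2}+p\right)$, apply the same telescoping with $z=(d+1)/2$, and collect powers of $2$: the prefactor $2^{p+1/2}\cdot 2^{-p}=\sqrt2$ survives, giving $\E[r^{2p+1}]=\sqrt2\,\frac{\Gamma((d+1)/2)}{\Gamma(d/2)}\prod_{j=0}^{p-1}(d+2j+1)$; taking $p=0$ (empty product) identifies the prefactor $\sqrt2\,\Gamma((d+1)/2)/\Gamma(d/2)$ as $\E[r]$, which recovers the form $\E[r]\prod_{j=0}^{p-1}(d+2j+1)$ in the statement.

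Finally I would address the asymptotics for fixed $m$ as $d\to\infty$. Since $\lfloor m/2\rfloor$ is fixed, each of the finitely many factors satisfies $d+2j=d\,(1+O(1/d))$, so $\prod_{j=0}^{\lfloor m/2\rfloor-1}(d+2j)\asymp d^{\lfloor m/2\rfloor}$; for even $m$ this is exactly the formula above and gives $\E[r^m]\asymp d^{m/2}$. For odd $m=2p+1$ one additionally invokes the classical ratio asymptotic $\Gamma((d+1)/2)/\Gamma(d/2)\sim\sqrt{d/2}$ as $d\to\infty$ (an instance of $\Gamma(z+a)/\Gamma(z)\sim z^{a}$, immediate from Stirling), whence $\E[r^{2p+1}]\asymp\sqrt d\cdot d^{p}=d^{m/2}$. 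In both cases $\E[r^m]\asymp d^{m/2}$, and since an extra factor of $\sqrt d$ is present precisely when $m$ is odd, this matches the displayed bound $\bigl(\sqrt d\bigr)^{\mathbf 1\{m\equiv 1\bmod 2\}}\prod_{j=0}^{\lfloor m/2\rfloor-1}(d+2j)$.

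There is no real obstacle here: the computation is a change of variables plus the Pochhammer telescoping identity, and the only nonelementary input is the standard Gamma-ratio asymptotic needed for the odd moments. The one point requiring a word of care is simply the reading of the displayed asymptotic expression, where the $\sqrt d$ factor is to be understood as contributing exactly in the odd-$m$ case.
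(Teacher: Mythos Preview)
Your derivation is correct and entirely standard. The paper itself does not prove this statement: it is stated as a ``Fact'' (a well-known identity for the moments of the $\chi_d$ distribution) and used directly, so there is no paper proof to compare against. Your approach via the density, the substitution $t=r^2/2$, and the Pochhammer telescoping $\Gamma(z+p)/\Gamma(z)=\prod_{j=0}^{p-1}(z+j)$ is exactly the textbook route, and the use of $\Gamma(z+a)/\Gamma(z)\sim z^a$ for the odd-moment asymptotic is the right ingredient.
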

 \begin{fact}\label{fct:fwd-finite-difference} For any univariate polynomial $g: \R \to \R$, the $n^\mathrm{th}$ forward finite difference of $g$ at any value $u$ is given by 
 $$ \Delta^n g(u):=\sum_{i=0}^{n}\binom{n}{i}(-1)^{n-i} g(u+i)\,.$$
For any polynomial $g$, the $n^\mathrm{th}$ forward finite difference $\Delta^ng(u) \equiv 0$ if $\deg(g)<n$ and $\Delta^n g (u) $ is a non-zero constant if $\deg(g)=n$. Moreover, for any polynomial given by shifted binomial coefficient of degree $n$, with shift $u_0\in \R$
$$g_n(u) = \frac{(u+u_0)(u+u_0-1)\cdots(u+u_0-n+1)}{n!} =:\binom{u+u_0}{n}\,,$$
the constant value of $n^\mathrm{th}$ forward finite difference is unity, i.e. $\Delta^ng(u)=1$. 
 \end{fact}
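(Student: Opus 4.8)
The plan is to prove the three assertions in order, all from the basic calculus of the forward difference operator $\Delta$ defined by $(\Delta g)(u) := g(u+1) - g(u)$, with $\Delta^n$ its $n$-fold composition. First I would establish the stated closed form $\Delta^n g(u) = \sum_{i=0}^{n} \binom{n}{i}(-1)^{n-i} g(u+i)$ by induction on $n$: the case $n=1$ is the definition, and for the inductive step one applies $\Delta$ to the expansion at level $n$ and collects the coefficient of each $g(u+i)$ using Pascal's rule $\binom{n}{i-1} + \binom{n}{i} = \binom{n+1}{i}$ together with the alternating signs. This is a routine index-shift computation.

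Next, for the degree statements, I would show that $\Delta$ sends a polynomial of degree $d \geq 1$ to one of degree exactly $d-1$, and annihilates constants. Indeed, if $g(u) = c u^d + (\text{lower order})$ with $c \neq 0$, then $\Delta g(u) = c\big((u+1)^d - u^d\big) + \Delta(\text{lower order}) = c\, d\, u^{d-1} + (\text{lower order})$, since $(u+1)^d - u^d = d u^{d-1} + \cdots$ and $\Delta$ applied to a polynomial of degree $< d$ produces degree $< d-1$. Iterating, $\Delta^n g$ has degree $\deg(g) - n$ when $n \leq \deg(g)$, and is identically zero once $n > \deg(g)$. In particular, when $\deg(g) = n$ the output is the \emph{constant} $n!\,c_n$, where $c_n \neq 0$ is the leading coefficient of $g$; this simultaneously gives the vanishing claim for $\deg(g) < n$ and the nonzero-constant claim for $\deg(g) = n$.

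Finally, for the shifted binomial coefficient $g_n(u) = \binom{u+u_0}{n} = \frac{(u+u_0)(u+u_0-1)\cdots(u+u_0-n+1)}{n!}$, I would invoke the generalized Pascal identity $\binom{v+1}{n} - \binom{v}{n} = \binom{v}{n-1}$, valid for all real $v$. This follows either from a one-line manipulation of the product formula (factor out the common terms $(v)(v-1)\cdots(v-n+2)/n!$ and simplify $(v+1) - (v-n+1) = n$), or by noting that both sides are polynomials in $v$ of degree $n$ that agree at every nonnegative integer and hence coincide identically. Taking $v = u + u_0$ gives $\Delta g_n = g_{n-1}$, the shifted binomial coefficient of degree $n-1$ with the same shift, and iterating $n$ times yields $\Delta^n g_n(u) = g_0(u) = \binom{u+u_0}{0} = 1$.

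There is no genuine obstacle here: every step is a standard identity. The only point deserving explicit care is the generalized Pascal rule for non-integer arguments, which is why I would spell out the ``agree on all integers, hence agree as polynomials'' argument rather than leave it implicit; everything else is a short induction or a direct expansion of leading terms.
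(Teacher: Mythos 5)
Your proof is correct and complete. The paper states this result as a \textbf{Fact} without proof, treating it as standard finite-difference calculus, so there is no argument in the paper to compare against; your write-up supplies exactly the canonical argument one would give: induction with Pascal's rule for the closed form, the observation that $\Delta$ lowers degree by one and multiplies the leading coefficient by the degree (so $\Delta^n g \equiv 0$ when $\deg g < n$ and $\Delta^n g = n!\,c_n \neq 0$ when $\deg g = n$), and the generalized Pascal identity $\binom{v+1}{n}-\binom{v}{n}=\binom{v}{n-1}$ for real $v$, iterated to get $\Delta^n \binom{u+u_0}{n} = \binom{u+u_0}{0} = 1$. Your identification of the constant as $n!$ times the leading coefficient is also consistent with how the paper uses the fact (the shifted binomial coefficient has leading coefficient $1/n!$, giving the value $1$), and your care with the non-integer-argument Pascal rule is the only point that genuinely needed spelling out.
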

 
\begin{proof}[Proof of \Cref{lem:moments-of-coefficients}]
We will use the above facts directly along with $K(d,\ell)=d^\ell$ and $N,k,\ell$ are constants (not dependent on $d$)  from \Cref{prop:geg-hermite-decomposing} throughout the proof. We start by the first part of the claim
\begin{align*}
    & \E_{r\sim \chi_d}[\beta_{k,\ell}(r)^2] = \frac{(k!)}{(N!)^2 \, 2^{2N}} K(d,\ell)\left( \sum_{n=0}^{N}\sum_{m=0}^{N} \frac{ \binom{N}{n}\binom{N}{m}(-1)^{2N-n-m} \, \E[r^{2\ell+2n+2m}]}{\,  \prod_{j=0}^{n+\ell-1}(d+2j) \prod_{j=0}^{m+\ell-1}(d+2j)}\right)\\
    &\asymp d^\ell \left( \sum_{n=0}^{N}\sum_{m=0}^{N} \frac{ \binom{N}{n}\binom{N}{m}(-1)^{2N-n-m} \,\prod_{j=0}^{\ell+m+n-1}(d+2j)}{\,  \prod_{j=0}^{n+\ell-1}(d+2j) \prod_{j=0}^{m+\ell-1}(d+2j)}\right) \tag{using \Cref{fct:momoent-of-chi}}\\
    & \asymp \frac{d^\ell}{\prod_{j=0}^{N+\ell-1}(d+2j)} \left( \sum_{n=0}^{N}\sum_{m=0}^{N}  \binom{N}{n}\binom{N}{m}(-1)^{2N-n-m} \,\prod_{j=m+\ell}^{\ell+m+n-1}(d+2j)\prod_{j=n+\ell}^{N+\ell-1}(d+2j)\right)\\
    & \asymp d^{-N} \left(\sum_{n=0}^{N}\sum_{m=0}^N \binom{N}{n}\binom{N}{m}(-1)^{2N-n-m} \,\prod_{j=m+\ell}^{\ell+m+n-1}(d+2j)\prod_{j=n+\ell}^{N+\ell-1}(d+2j)\right)\\
    &= d^{-\frac{(k-\ell)}{2}} \left(\sum_{n=0}^{N}\binom{N}{n}(-1)^{N-n} \prod_{j=n+\ell}^{N+\ell-1}(d+2j) \, \left(\sum_{m=0}^N  \binom{N}{m} (-1)^{N-m}\prod_{j=m+\ell}^{\ell+m+n-1}(d+2j) \right)\right)\\
    &= d^{-\frac{(k-\ell)}{2}} \left(\sum_{n=0}^{N}\binom{N}{n}(-1)^{N-n} \prod_{j=n+\ell}^{N+\ell-1}(d+2j) \, \left(\sum_{m=0}^N  \binom{N}{m} (-1)^{N-m}\binom{\frac{d}{2}+m+n-1}{n} 2^n n! \right)\right).
\end{align*}

We are now going to show that term inside the parenthesis is some constant independent of $d$. A priori it may seem that it depends on $d$, however, we have a sum with alternative positive and negative signs and we will show that all the terms that depend on $d$ mutually cancel out. 

To show this we first observe that $g(m)=\binom{\frac{d}{2}+m+n-1}{n}$ is a polynomial of degree $n$ given by binomial coefficient. And, therefore, the $N^\mathrm{th}$ forward finite difference $\Delta^Ng(m) \equiv 0$ for any $n<N$, or simply $1$ for $n=N$ using \Cref{fct:fwd-finite-difference}. Formally,
 $$\sum_{m=0}^N\binom{N}{m} (-1)^{N-m}\binom{\frac{d}{2}+m+n-1}{n} 2^n n!=2^n \, n! \,\cdot \delta_{Nn} \,,$$
where the scaling factor of $2^n n!$ of the polynomial $g(m)$ can be taken out as the forward finite difference operator $\Delta^N$ is linear. We conclude that 
\begin{align*}
   &\E[\beta_{k,\ell}(r)^2] \\
   &\asymp d^{-\frac{(k-\ell)}{2}} \left(\sum_{n=0}^{N}\binom{N}{n}(-1)^{N-n} \prod_{j=n+\ell}^{N+\ell-1}(d+2j) \, \left(\sum_{m=0}^N  \binom{N}{m} (-1)^{N-m}\binom{\frac{d}{2}+m+n-1}{n} 2^n n! \right)\right)\\
   &=d^{-\frac{(k-\ell)}{2}} \left(\sum_{n=0}^{N}\binom{N}{n}(-1)^{N-n} \prod_{j=n+\ell}^{N+\ell-1}(d+2j) \,\, \cdot  \,\,\delta_{Nn} \, \, \cdot 2^n n! \right) = 2^N N!\,\, d^{-\frac{(k-\ell)}{2}}\\
   &\asymp d^{-\frac{(k-\ell)}{2}}\,.
\end{align*}

We now show the second part that
\begin{align*}
    \E[\beta_{k,\ell}(r)]^2 \asymp d^{-(k-\ell)}\,.
\end{align*}
Let us consider any $k,\ell$ such that they have the same parity $k \equiv \ell \mod 2$. We have
\begin{align*}
    &\E[\beta_{k,\ell}(r)]=\frac{\sqrt{(k!)\,K(d,\ell)}}{(N!) \, 2^N} \left( \sum_{i=0}^{N} \frac{ \binom{N}{i}(-1)^{N-i} \, \E[r^{\ell+2i}]}{\,  \prod_{j=0}^{\ell+i-1}(d+2j)}\right)\,\\
   & \asymp  d^{\ell/2}  \left(\sqrt{d}\cdot  \mathbf{1}\{\ell\equiv 1 \text{ mod }2\}\right)\left( \sum_{i=0}^{N} \binom{N}{i}(-1)^{N-i}\frac{\prod_{j=0}^{\lfloor\ell/2 \rfloor+i-1}(d+2j)}{\prod_{j=0}^{\ell+i-1}(d+2j)} \right) \tag{using \Cref{fct:momoent-of-chi}}\\
   & \asymp  d^{\lceil \ell/2\rceil} \left( \sum_{i=0}^{N} \binom{N}{i}(-1)^{N-i}\frac{1}{\prod_{j=\ell/2+i}^{\ell+i-1}(d+2j)} \right)\\
   & \asymp d^{\lceil \ell/2 \rceil}\sum_{i=0}^{N} \binom{N}{i}(-1)^{N-i} \frac{\prod_{j=i}^{N+i-1}(d+2j)}{\prod_{j=i}^{N+i-1}(d+2j)\prod_{j=\lfloor\ell/2 \rfloor+i}^{\ell+i-1}(d+2j)} \\
   & \asymp \frac{d^{\lceil \ell/2 \rceil }}{d^{\lceil k/2 \rceil }} \sum_{i=0}^{N} (1+o_d(1))\,\binom{N}{i}(-1)^{N-i} \prod_{j=i}^{N+i-1}(d+2j).
\end{align*}
In the last line, we used the fact that $k \equiv \ell \mod 2$ and $N=(k-\ell)/2$ and thus for every $0 \leq i \leq N$,
\begin{align*}
    \prod_{j=i}^{N+i-1}(d+2j)\prod_{j=\ell/2+i}^{i+\ell-1}(d+2j)&=d^{N+\ell/2}\prod_{j=i}^{N+i-1}\left(1+\frac{2j}{d}\right) \prod_{j=\lfloor\ell/2 \rfloor+i}^{\ell+i-1}\left(1+\frac{2j}{d} \right)\\
    =&(1+o_d(1))\, d^{N+\ell-\lfloor \ell/2\rfloor}\,,
\end{align*}
and 
$$N+\ell-\lfloor \ell/2\rfloor= \frac{(k-\ell)}{2}+\ell-\lfloor \ell/2\rfloor=\frac{k+\ell}{2}-\lfloor \ell /2\rfloor=\lceil k/2\rceil\,.$$
Continuing to simplify the original expression
\begin{align*}
    \E[\beta_{k,\ell}(r)]& \asymp \frac{d^{\lceil \ell/2 \rceil}}{d^{\lceil k /2 \rceil}} \sum_{i=0}^{N} (1+o_d(1))\,\binom{N}{i}(-1)^{N-i} \prod_{j=i}^{N+i-1}(d+2j)\\
    & = \frac{1}{d^{\frac{k-\ell}{2}}} \sum_{i=0}^N \binom{N}{i}(-1)^{N-i} \binom{\frac{d}{2}+i+N-1}{N} 2^N N!\\
     & = d^{-\frac{(k-\ell)}{2}} 2^N N! \, \, \asymp\,\, d^{-\frac{(k-\ell)}{2}}\,.
\end{align*}
Here the last line followed from the fact that the polynomial $g(i)=\binom{\frac{d}{2}+i+N-1}{N}$ is of degree $N$ given by a shifted binomial coefficient, and thus, the $N^\mathrm{th}$ forward finite difference of $g$ is constant, which in this case is just $1$ by \Cref{fct:fwd-finite-difference}. We finally conclude the proof by noting that $\E[\beta_{k,\ell}(r)]^2 \asymp d^{-(k-\ell)}$.
\end{proof}
\subsection{Proof of \texorpdfstring{\Cref{prop:geg-hermite-decomposing}}{}}\label{sec:proof_prop_geg-hermite}
We now return to the deferred proof of \Cref{prop:geg-hermite-decomposing}. First, we use the explicit expression of $\He_k$ so that $\norm{\He_k}_{L^2}=1$. 
\begin{equation}\label{eq:hermite-to-power}
   \He_{k}(r \cdot z)= \He_{k}(x)= \sqrt{k!} \sum_{m=0}^{\lfloor k/2 \rfloor} \frac{(-1)^m}{m!(k-2m)!} \frac{x^{k-2m}}{2^m} = \sqrt{k!} \sum_{m=0}^{\lfloor k/2 \rfloor} \frac{(-1)^m r^{k-2m}}{m!(k-2m)!} \frac{z^{k-2m}}{2^m}\,.
\end{equation}
Our goal is to express $z^{k-2m}$ in terms of $Q_{\ell}^{(d)}(z)$ to get the final decomposition. To this end, we use the explicit expressions computed with (a different normalization) of Gegenbauer polynomials. In particular, using \cite[\href{https://dlmf.nist.gov/18.18E17}{Eq. 18.18.17}]{NIST:DLMF}
\begin{equation}\label{eq:power-to-gegenbauer}
   z^n = \frac{n!}{2^n} \sum_{l=0}^{\lfloor n/2 \rfloor} \frac{\alpha+n-2l}{\alpha} \frac{1}{l! (\alpha+1)_{n-l}} C^{(\alpha)}_{n-2l}(z)\, ,
\end{equation}
where $(b)_a$ is the rising factorial and $C^{(\alpha)}_\ell(z)$ is an unnormalized Gegenbauer polynomial $Q_{\ell}^{(d)}(z)$ with $\alpha= (d-2)/2\,$ satisfying 
\begin{equation}\label{eq:inner-product-wikipedia-gegenbauer}
    \int_{-1}^{+1} C_{\ell}^{(\alpha)} (z) C_{k}^{(\alpha)}(z) (1-z^2)^{\alpha-\frac{1}{2}} \de z = \delta_{\ell k}  \,  \frac{\pi 2^{1-2\alpha} \Gamma(\ell+2\alpha)}{\ell! (\ell+\alpha) [\Gamma(\alpha)]^2}
\end{equation}
We can express $C_\ell^{(\alpha)}( z )/  \sqrt{K(d,\ell)} = \,Q_{\ell}^{(d)}(z) $ where $K(d,\ell)$ can be computed using
\begin{align*}
    1 = \frac{1}{K(d,\ell)} \int_{-1}^{+1} C_\ell^{(\alpha)}(z)^2 \, \tau_{d,1}(\de z) = \frac{1}{\scriptstyle{K(d,\ell)\sB\left(\alpha+\frac{1}{2}, \frac{1}{2} \right)}} \int_{-1}^{+1} C_\ell^{(\alpha)}(z)^2 \, (1-z^2)^{\alpha - \frac{1}{2}} \tau_{d,1}(\de z) \, 
\end{align*}
where $\sB(\cdot, \cdot)$ is the standard Beta function. We can use Eq.~\eqref{eq:inner-product-wikipedia-gegenbauer} to compute 
\begin{equation}\label{eq:normalizing-constant-wiki-ours}
    K(d,\ell) = \frac{\pi 2^{1-2\alpha} \Gamma(\ell+2\alpha)}{\sB\left(\alpha+\frac{1}{2}, \frac{1}{2}\right) \ell! (\ell+\alpha) [\Gamma(\alpha)]^2}
\end{equation}
It is straight-forward to simplify 
\begin{align*}
   K(d,\ell)&= \frac{\sqrt{\pi} \, 2^{1-2\alpha} \Gamma(\ell+2\alpha) \Gamma(\alpha+1)}{\Gamma(\alpha+\frac{1}{2})\ell! (\ell+\alpha) [\Gamma(\alpha)]^2}=\frac{\sqrt{\pi} \, 2^{1-2\alpha} \Gamma(\ell+2\alpha) \alpha}{\Gamma(\alpha+\frac{1}{2})\ell! (\ell+\alpha) \Gamma(\alpha)}\\
   &= \frac{ \alpha \Gamma(\ell+2\alpha)}{ \ell! \,(\alpha+\ell) \Gamma(2\alpha) } \tag{$\because \Gamma(\alpha)\gamma(\alpha+\frac{1}{2})=\sqrt{\pi}2^{1-2\alpha} \Gamma(2\alpha)$}\\
   &= \frac{ (d-2) \Gamma(d-2+\ell)}{ \ell!\,(d-2+2\ell) \Gamma(d-2) } = \frac{(d-2)}{(d+2\ell-2)}\binom{d+\ell-3}{\ell}= \Theta_d(d^{\ell})\,. 
\end{align*}
Also substituting $C_{\ell}^{(\alpha)}(1)$ from \cite{enwiki:1244508564}, we obtain
\begin{equation}\label{eq:Qvalue-at-1}
    Q_{\ell}^{(d)}(1) = \frac{C^{(\alpha)}_{\ell}(1)}{\sqrt{K(d,\ell)}} = \sqrt{\frac{ \ell!\,(\alpha+\ell) \Gamma(2\alpha) }{ \alpha \Gamma(2\alpha+\ell)}}  \cdot \frac{\Gamma(2\alpha+\ell)}{\Gamma(2\alpha) \ell!}= \sqrt{\frac{d+2\ell-2}{d-2} \binom{d+\ell-3}{\ell}} = \sqrt{n_{d,\ell}}\,.
\end{equation}
We are now ready to combine the equations derived and compute the desired decomposition Eq.~\eqref{eq:desired-decomposition}. For any $M \in \naturals$, we let $\boxed{M}=\{m\in \naturals: m\leq M \text{ and } m \equiv M \mod 2\}$. Recall from \eqref{eq:hermite-to-power}
\begin{align*}
      &\He_{k}(r \cdot u) = \sqrt{k!} \sum_{m=0}^{\lfloor k/2 \rfloor} \frac{(-1)^m r^{k-2m}}{m!(k-2m)!} \frac{u^{k-2m}}{2^m}= \sqrt{k!} \sum_{m \in \boxed{k}}\frac{(-1)^{(k-m)/2} r^{m}}{m!((k-m)/2)!} \frac{u^{m}}{2^{(k-m)/2}} \tag{Change of variables}\\
      &= \sqrt{k!} \sum_{m \in \boxed{k}}\frac{(-1)^{(k-m)/2} r^{m}}{m!((k-m)/2)!} \frac{m!}{2^m 2^{(k-m)/2}} \sum_{l=0}^{\lfloor m/2 \rfloor} \frac{\alpha+m-2l}{\alpha} \frac{1}{l! (\alpha+1)_{m-l}} C^{(\alpha)}_{m-2l}(u) \tag{Using \eqref{eq:power-to-gegenbauer}}\\
      &= \sqrt{k!} \sum_{m \in \boxed{k}}\frac{(-1)^{(k-m)/2} r^{m}}{((k-m)/2)!} \frac{1}{2^{(k+m)/2}} \sum_{\ell \in \boxed{m}} \frac{\alpha+\ell}{\alpha} \frac{1}{((m-\ell)/2)! (\alpha+1)_{(m+\ell)/2}} \sqrt{K(d,\ell)}\, Q^{(d)}_{\ell}(u)  \tag{Changing $\ell=m-2l$ and $C^{(\alpha)}_\ell =\sqrt{K(d,\ell)}\, Q^{(d)}_\ell $}\\
      &=\sqrt{(k!)\, K(d,\ell)} \sum_{\ell \in \boxed{k}} Q^{(d)}_{\ell}(u)\frac{\alpha+\ell}{\alpha} \left( \sum_{\substack{m=\ell \\  m \equiv k \text{ mod } 2}}^{k} \frac{(-1)^{(k-m)/2} r^{m}}{((k-m)/2)!} \frac{1}{2^{(k+m)/2} ((m-\ell)/2)! (\alpha+1)_{(m+\ell)/2}} \right)\\
      &:= \sum_{\ell=0}^{\infty} Q_{\ell}^{(d)}(u) \beta_{k,\ell}(r)\, , \text{ where if $\ell \not\in \boxed{k}$ then $\beta_{k,\ell}(r)=0$.}
\end{align*}
Otherwise, letting $N=(k-\ell)/2$
\begin{align*}
    \beta_{k,\ell}(r) &=\sqrt{k!\,K(d,\ell)} \frac{\alpha+\ell}{\alpha} \left( \sum_{\substack{m=\ell \\ m \equiv k \text{ mod }2}}^{k} \frac{(-1)^{(k-m)/2} r^{m}}{((k-m)/2)! \, 2^{(k+m)/2} ((m-\ell)/2)! (\alpha+1)_{(m+\ell)/2}} \right)\,\\
    &= \sqrt{k!\,K(d,\ell)} \frac{\alpha+\ell}{\alpha} \left( \sum_{i=0}^{N} \frac{(-1)^{N-i} r^{\ell+2i}}{(N-i)! \, 2^{(k+\ell)/2+i} (i)! (\alpha+1)_{\ell+i}} \right)\, \tag{changing $m=\ell+2i$}\\
     &= \frac{\sqrt{k!\,K(d,\ell)}}{(N!)\,2^N} \frac{\alpha+\ell}{\alpha} \left( \sum_{i=0}^{N}\binom{N}{i} \frac{(-1)^{N-i} r^{\ell+2i}}{ 2^{\ell+i}(\alpha+1)_{\ell+i}} \right)\,.
\end{align*}
Recall that $\alpha=(d-2)/2$ here, and thus  
$$2^{\ell+i}(\alpha+1)_{\ell+i}=2^{\ell+i}\prod_{j=0}^{\ell+i-1}(\alpha+1+j)=2^{\ell+i}\prod_{j=0}^{\ell+i-1}\left(\frac{d-2}{2}+1+j \right)=\prod_{j=0}^{\ell+i-1}\left(d+2j \right)\,.$$
Thus, for $\ell \equiv k \mod 2$
\begin{align*}
    \beta_{k,\ell}(r) 
    &= \frac{\sqrt{k!\,K(d,\ell)}}{(N!)\, 2^N} \frac{d+2\ell-2}{d-2} \left( \sum_{i=0}^{N}\binom{N}{i} \frac{(-1)^{N-i} r^{\ell+2i}}{ \prod_{j=0}^{\ell+i-1}(d+2j)} \right)\,.
\end{align*}
The lemma follows by redefining $K(d,\ell)$ with $K(d,\ell)(d+2\ell-2)^2/(d-2)^2 = \Theta_d(d^\ell)\,.$

\clearpage

\section{Statistical Query (SQ) and Low-Degree Polynomial (LDP) lower bounds}
\label{app:SQ_LDP_lower_bounds}

In this appendix, we briefly review the statistical query ($\SQ$) and low-degree polynomial ($\sLDP$) frameworks and present the proof of Theorem \ref{thm:lower_bounds_spherical_SIMs}. In particular, we provide an interpretation of our lower bounds in terms of subproblems with queries restricted to the harmonic subspace $V_{d,\ell}$.

\subsection{Statistical Query lower bounds}
\label{app:SQ_lower_bounds}

The Statistical Query ($\SQ$) framework, introduced by Kearns \cite{kearns1998efficient}, models algorithms that interact with data only through expectations of query functions, rather than direct access to samples. The complexity of these algorithms is measured up to some worst-case tolerance on these expectations. While based on worst-case error rather than sampling error encountered in practice, the $\SQ$ framework has proven remarkably effective in analyzing the computational complexity of statistical problems, often yielding accurate predictions for algorithmic feasibility. We refer to \cite{bshouty2002using,reyzin2020statistical} for additional background.

Below, it will be useful to present a variant of $\SQ$ algorithms, called \textit{query-restricted statistical query algorithms}, introduced in \cite{joshi2024complexity}. In this model, queries are restricted to a set $\cQ \subseteq \R^{\cY \times \R^d} $ of measurable functions $\cY \times \R^d \to \R$. We denote $\cQ$-$\SQ (q,\tau)$ this class of algorithms, with number of queries $q$ and tolerance $\tau >0$. We will mainly consider the standard case of \textit{unrestricted queries}, denoted $\SQ (q,\tau)$, where $\cQ$ contains all measurable functions. In Appendix \ref{app:CSQ_info_exponent} we discuss the case of \textit{correlation statistical queries} ($\CSQ$).

Our lower bounds hold for the detection problem (hypothesis testing) of distinguishing between 
\begin{equation}\label{eq:app_hypotheses_testing}
\{ \P_{\nu_d,\bw } : \bw \in \S^{d-1} \} \qquad \text{v.s.}\qquad \{ \P_{\nu_d,0} \}.
\end{equation}
Below, we describe $\cQ$-$\SQ$ algorithms in this context.

\paragraph*{$\cQ$-restricted $\SQ$ algorithm.} For a number of queries $q$ and tolerance $\tau>0$, a $\cQ$-restricted $\SQ$ algorithm $\cA \in \cQ$-$\SQ (q,\tau)$ for detecting SIMs takes an input distribution $\P$ from \eqref{eq:app_hypotheses_testing} and operates in $q$ rounds where at each round $t \in \{1,\ldots , q\}$, it issues a query $\phi_t \in \cQ$, and receives a response $v_t$ such that
\begin{equation}\label{eq:SQ_responses}
    \big| v_t - \E_{\P} [ \phi_t (y,\bx)] \big| \leq \tau \sqrt{\Var_{\P_0}(\phi_t)},
\end{equation}
where we set $\P_{0}$ to be the null distribution (that is, $\P_{\nu_d,0}$ here). The query $\phi_t$ can depend on the past responses $v_1, \ldots, v_{t-1}$. After issuing $q$ queries, the learner outputs $\cA(\P) \in \{0,1\}$. We say that $\cA$ succeeds in distinguishing $\P_{\nu_d,\bw }$ and $\P_{\nu_d,0}$, if $\cA(\P_{\nu_d,\bw }) = 1$ for all $\bw \in \S^{d-1}$, and $\cA(\P_{\nu_d,0 }) = 0$.

\begin{remark}
    The variance scaling on the right-hand side in \eqref{eq:SQ_responses} is non-standard in the $\SQ$ literature. It is introduced here as a convenient way to normalize queries, which is necessary for $\tau$ to be meaningful. We note that other normalizations are possible and refer to \cite[Remark 3.1]{joshi2024complexity} for a discussion.
\end{remark}

\paragraph*{General lower-bound.} The following proposition is a simple, standard lower bound on the query complexity based on the second moment method (e.g., see \cite{joshi2024complexity}):
\begin{proposition}[General $\cQ$-restricted $\SQ$ lower bound]\label{prop:weak_learning} Fix $\nu_d \in \frL_d$. If an algorithm $\cA \in \QSQ (q,\tau)$ succeeds at distinguishing $\P_{\nu_d,\bw}$ from $\P_{\nu_d,0}$, then we must have
\begin{equation}\label{eq:alignment_lower_bound_general}
    q/\tau^2 \geq \left[\sup_{\phi \in \cQ}\frac{\Var_{\bw \sim \tau_d} \{ \E_{\P_{\nu_d,\bw}} \phi \}}{\Var_{\P_{\nu_d,0}} \{ \phi \}}\right]^{-1}.
    \end{equation}
\end{proposition}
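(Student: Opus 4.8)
The plan is to establish Proposition~\ref{prop:weak_learning} via the standard second-moment argument for $\SQ$ lower bounds, adapted to the detection-style success criterion. First I would observe that because $\cA$ succeeds, the transcript of query-response pairs $(\phi_1,v_1),\dots,(\phi_q,v_q)$ must differ between the case $\P = \P_{\nu_d,0}$ and at least one $\P = \P_{\nu_d,\bw}$; otherwise $\cA$ would output the same bit on both inputs. The key point is that a valid response $v_t$ to a query $\phi_t$ on the null $\P_{\nu_d,0}$ is also a valid response on $\P_{\nu_d,\bw}$ whenever
\[
\bigl| \E_{\P_{\nu_d,\bw}}[\phi_t] - \E_{\P_{\nu_d,0}}[\phi_t] \bigr| \leq \tau \sqrt{\Var_{\P_{\nu_d,0}}(\phi_t)}.
\]
So if for \emph{every} $\phi \in \cQ$ this inequality held for \emph{every} $\bw \in \S^{d-1}$ simultaneously, an adversarial oracle could answer all $q$ queries with the null values, and $\cA$ could not distinguish — contradiction. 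Hence there must exist a query $\phi \in \cQ$ and a positive-measure set of directions $\bw$ for which the deviation exceeds $\tau\sqrt{\Var_{\P_{\nu_d,0}}(\phi)}$.

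Next I would convert this ``exists a bad $\bw$'' statement into an averaged (second-moment) bound using the rotational symmetry of the problem. Fixing such a $\phi$, consider $F(\bw) := \E_{\P_{\nu_d,\bw}}[\phi] - \E_{\P_{\nu_d,0}}[\phi]$; note $\E_{\bw\sim\tau_d}[F(\bw)] = 0$ because averaging $\P_{\nu_d,\bw}$ over $\bw\sim\tau_d$ gives exactly $\P_{\nu_d,0}$ (the label becomes independent of direction). The adversary's strategy is to answer each query with $v_t = \E_{\P_{\nu_d,0}}[\phi_t]$; this is a valid response for the null and for all $\bw$ with $|F_t(\bw)| \le \tau\sqrt{\Var_{\P_{\nu_d,0}}(\phi_t)}$. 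For $\cA$ to succeed against this oracle, for each $\bw$ there must be some round $t$ where the oracle is forced off the null value, i.e. $|F_t(\bw)| > \tau \sqrt{\Var_{\P_{\nu_d,0}}(\phi_t)}$. Union-bounding over the $q$ rounds and using Chebyshev/Markov on $\Var_{\bw\sim\tau_d}(F_t)$:
\[
1 = \Pr_{\bw\sim\tau_d}\Bigl[\exists\, t:\ |F_t(\bw)| > \tau\sqrt{\Var_{\P_{\nu_d,0}}(\phi_t)}\Bigr] \le \sum_{t=1}^{q} \frac{\Var_{\bw\sim\tau_d}(F_t)}{\tau^2\,\Var_{\P_{\nu_d,0}}(\phi_t)} \le q\cdot \sup_{\phi\in\cQ}\frac{\Var_{\bw\sim\tau_d}\{\E_{\P_{\nu_d,\bw}}\phi\}}{\tau^2\,\Var_{\P_{\nu_d,0}}\{\phi\}},
\]
where I used $\Var_{\bw\sim\tau_d}(F_t) = \Var_{\bw\sim\tau_d}\{\E_{\P_{\nu_d,\bw}}\phi_t\}$ since $F_t$ and $\E_{\P_{\nu_d,\bw}}\phi_t$ differ by a constant. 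Rearranging gives exactly $q/\tau^2 \ge [\sup_{\phi\in\cQ} \Var_{\bw\sim\tau_d}\{\E_{\P_{\nu_d,\bw}}\phi\}/\Var_{\P_{\nu_d,0}}\{\phi\}]^{-1}$.

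I expect the main subtlety — not a deep obstacle, but the step requiring care — to be the precise handling of the adaptive oracle argument: the queries $\phi_t$ depend on earlier responses $v_1,\dots,v_{t-1}$, so I must fix the adversary's answering strategy (always return the null mean) \emph{before} unrolling the interaction, which makes the entire transcript deterministic given the input distribution and hence makes the sequence $\phi_1,\dots,\phi_q$ well-defined and independent of $\bw$. Once the transcript is pinned down this way, the union bound over $t$ and the Chebyshev step are routine. A secondary point is measurability of the set $\{\bw : |F_t(\bw)| > \tau\sqrt{\Var_{\P_{\nu_d,0}}(\phi_t)}\}$ and the fact that the success requirement ``$\cA(\P_{\nu_d,\bw})=1$ for all $\bw$'' lets us take the probability over $\bw\sim\tau_d$ to be exactly $1$; I would note that replacing ``all $\bw$'' by ``$\tau_d$-almost all $\bw$'' changes nothing. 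The remaining identity $\E_{\bw\sim\tau_d}[\P_{\nu_d,\bw}] = \P_{\nu_d,0}$ follows immediately from the definition of the spherical SIM, since marginalizing a uniformly random $\bw_*$ makes $\<\bw_*,\bz\>$ independent of $(y,r)$ structure — in fact it is exactly the statement that the degree-$\ell$ Gegenbauer components average to zero, consistent with the likelihood expansion \eqref{eq:setting_likelihood_decompo}.
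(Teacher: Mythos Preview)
Your proposal is correct and takes essentially the same approach as the paper: fix the adversary to always return the null mean $v_t = \E_{\P_{\nu_d,0}}[\phi_t]$, so the entire query sequence $\phi_1,\dots,\phi_q$ becomes deterministic and independent of $\bw$; then union-bound over the $q$ rounds and apply Markov/Chebyshev to $\Var_{\bw\sim\tau_d}\{\E_{\P_{\nu_d,\bw}}\phi_t\}$. You also correctly identify the one point requiring care---handling adaptivity by fixing the oracle strategy upfront---which is exactly how the paper proceeds.
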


\begin{proof}
     Consider $\cA \in \QSQ(q,\tau)$ and denote $\phi_1, \ldots , \phi_q \in \cQ$ the sequence of queries issued by $\cA$ when it receives responses $v_t = \E_{\P_{\nu_d,0}}[\phi_t], t\in [q]$. Here the responses are fixed and deterministic, and the queries $\{\phi_t\}_{t \in [q]}$ do not depend on the source distribution $\P_{\nu_d,\bw}$, and in particular $\bw$. By union bound and Markov's inequality,
     \[
     \begin{aligned}
     \P_{\bw \sim \tau_d} \left( \exists t \in [q], \; | \E_{\P_{\nu_d,\bw}} [\phi_t] - v_t| > \tau \sqrt{\Var_{\P_{\nu_d,0}} [\phi_t]} \right) \leq&~ \frac{q}{\tau^2}\cdot \sup_{t \in [q]} \frac{\Var_{\bw \sim \tau_d} \{ \E_{\P_{\nu_d,\bw}} \phi_t \}}{\Var_{\P_{\nu_d,0}} \{ \phi_t \}}\\
     \leq&~  \frac{q}{\tau^2}\cdot \sup_{\phi \in \cQ}\frac{\Var_{\bw \sim \tau_d} \{ \E_{\P_{\nu_d,\bw}} \phi \}}{\Var_{\P_{\nu_d,0}} \{ \phi \}}.
     \end{aligned}
     \]
     This implies that the $v_t = \E_{\P_{\nu_d,0}}[\phi_t]$ responses are compatible for all $q$ queries with positive probability over $\bw \sim \tau_d$ whenever inequality \eqref{eq:alignment_lower_bound_general} is not satisfied, and $\cA$ fails the detection task in that case. This concludes the proof.
 \end{proof}

The query complexity bound in Theorem \ref{thm:lower_bounds_spherical_SIMs}.(i) follows from Proposition \ref{prop:weak_learning} with unrestricted queries $\cQ_\SQ$ and the following identity:

\begin{lemma}\label{lem:SQ-identity-decoupling}
    For $\nu_d \in \frL_d$ and $\cQ_{\SQ}$ the class of unrestricted queries (all measurable functions), we have the identity
    \begin{equation}\label{eq:SQ-identity-decoupling}
        \sup_{\phi \in \cQ_{\SQ}}\frac{\Var_{\bw \sim \tau_d} \{ \E_{\P_{\nu_d,\bw}} \phi \}}{\Var_{\P_{\nu_d,0}} \{ \phi \}} = \sup_{\ell \geq 1} \; \frac{\| \xi_{d,\ell} \|_{L^2}^2}{n_{d,\ell}} = : [\sQ_\star (\nu_d)]^{-1}.
    \end{equation}
\end{lemma}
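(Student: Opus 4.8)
\textbf{Proof plan for Lemma \ref{lem:SQ-identity-decoupling}.}

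The plan is to compute both the numerator and denominator of the ratio in \eqref{eq:SQ-identity-decoupling} using the harmonic decomposition of the likelihood ratio \eqref{eq:setting_likelihood_decompo}, and then optimize over $\phi$. Write a generic query $\phi(y,\bx) = \phi(y,r,\bz)$ and expand it in $L^2(\P_{\nu_d,0})$. Since $\P_{\nu_d,0} = \nu_{d,Y,R}\otimes\tau_d$ and $L^2(\S^{d-1}) = \bigoplus_\ell V_{d,\ell}$, we can write $\phi = \sum_{\ell\geq 0}\sum_{i\in[n_{d,\ell}]} c_{\ell i}(y,r) Y^{(d)}_{\ell i}(\bz)$ where the $c_{\ell i}$ are functions in $L^2(\nu_{d,Y,R})$. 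Without loss of generality (it does not change the ratio) assume $\E_{\P_{\nu_d,0}}[\phi] = 0$, i.e., $\E_{\nu_{d,Y,R}}[c_{0}(y,r)] = 0$ for the constant-harmonic component. Then by orthonormality of the $Y^{(d)}_{\ell i}$ under $\tau_d$, we have $\Var_{\P_{\nu_d,0}}\{\phi\} = \sum_{\ell\geq 0}\sum_{i}\|c_{\ell i}\|_{L^2(\nu_{d,Y,R})}^2$.

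For the numerator, first compute $\E_{\P_{\nu_d,\bw}}[\phi]$ using \eqref{eq:setting_likelihood_decompo}. Since $\frac{\de\P_{\nu_d,\bw}}{\de\P_{\nu_d,0}} = 1 + \sum_{\ell\geq1}\xi_{d,\ell}(y,r)Q_\ell(\<\bw,\bz\>)$, we get $\E_{\P_{\nu_d,\bw}}[\phi] = \sum_{\ell\geq1}\E_{\P_{\nu_d,0}}[\xi_{d,\ell}(y,r)Q_\ell(\<\bw,\bz\>)\phi(y,r,\bz)]$ (the degree-$0$ part vanishes by our centering). Using the addition formula \eqref{eq:Gegenbauer-Sharmoincs-raltion}, $Q_\ell(\<\bw,\bz\>) = n_{d,\ell}^{-1/2}\sum_i Y^{(d)}_{\ell i}(\bw)Y^{(d)}_{\ell i}(\bz)$, and orthonormality of harmonics under $\tau_d$, this becomes $\E_{\P_{\nu_d,\bw}}[\phi] = \sum_{\ell\geq1} n_{d,\ell}^{-1/2}\sum_i \<\xi_{d,\ell}, c_{\ell i}\>_{L^2(\nu_{d,Y,R})}\, Y^{(d)}_{\ell i}(\bw)$. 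Now take the variance over $\bw\sim\tau_d$; again by orthonormality of the $Y^{(d)}_{\ell i}$ (and the fact that $Y^{(d)}_{\ell i}$ for $\ell\geq1$ have mean zero under $\tau_d$), distinct $(\ell,i)$ terms are uncorrelated, so $\Var_{\bw\sim\tau_d}\{\E_{\P_{\nu_d,\bw}}\phi\} = \sum_{\ell\geq1}\frac{1}{n_{d,\ell}}\sum_i \<\xi_{d,\ell}, c_{\ell i}\>_{L^2(\nu_{d,Y,R})}^2$.

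It remains to optimize the ratio $\frac{\sum_{\ell\geq1} n_{d,\ell}^{-1}\sum_i\<\xi_{d,\ell},c_{\ell i}\>^2}{\sum_{\ell\geq0}\sum_i\|c_{\ell i}\|^2}$ over all choices of $\{c_{\ell i}\}$. The denominator is only made larger by nonzero $c_{0}$ or components of $c_{\ell i}$ orthogonal to $\xi_{d,\ell}$, so the optimal choice sets $c_0 = 0$ and $c_{\ell i} = a_{\ell i}\,\xi_{d,\ell}/\|\xi_{d,\ell}\|_{L^2}$ for scalars $a_{\ell i}$; then $\<\xi_{d,\ell},c_{\ell i}\> = a_{\ell i}\|\xi_{d,\ell}\|_{L^2}$ and the ratio becomes $\frac{\sum_{\ell\geq1}(\|\xi_{d,\ell}\|_{L^2}^2/n_{d,\ell})\sum_i a_{\ell i}^2}{\sum_{\ell\geq1}\sum_i a_{\ell i}^2}$, a weighted average of the quantities $\|\xi_{d,\ell}\|_{L^2}^2/n_{d,\ell}$. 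This is maximized by concentrating all the mass of $\{a_{\ell i}\}$ on the index $\ell$ attaining $\sup_{\ell\geq1}\|\xi_{d,\ell}\|_{L^2}^2/n_{d,\ell}$, giving exactly the right-hand side of \eqref{eq:SQ-identity-decoupling}; conversely that value is an upper bound for every choice, so the supremum is attained (in the limit, if the sup is not achieved by a finite $\ell$). I do not anticipate a serious obstacle here — the only points requiring a little care are the interchange of sum and expectation in the likelihood-ratio expansion (justified by the $L^2(\nu_{d,0})$ membership assumed in Definition \ref{def:spherical_SIM} together with Cauchy–Schwarz), and making the "concentrate the mass'' optimization argument rigorous when the supremum over $\ell$ is only approached asymptotically, which is handled by taking $a_{\ell i}$ supported on a near-optimal $\ell$.
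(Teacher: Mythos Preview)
Your proposal is correct and follows essentially the same approach as the paper: expand $\phi$ in spherical harmonics with $L^2(\nu_{d,Y,R})$-valued coefficients, use the likelihood-ratio expansion \eqref{eq:setting_likelihood_decompo} together with the addition formula \eqref{eq:Gegenbauer-Sharmoincs-raltion} to compute the numerator, and then reduce the optimization to a weighted-average argument over $\ell$. The paper's proof differs only cosmetically, working with $\Delta_\phi(\bw) = \E_{\P_\bw}[\phi] - \E_{\P_0}[\phi]$ rather than centering $\phi$ upfront.
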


We defer the proof of this lemma below to Section \ref{app:proof_identity_SQ_decoupling}. The identity \eqref{eq:SQ-identity-decoupling} shows that the lower bound effectively decouples across the different harmonic subspaces. Below, we provide an interpretation of this result: if we restrict the queries $\cQ$ to be in $V_{d,\ell}$, then the $\SQ$ lower bound becomes $n_{d,\ell}/\| \xi_{d,\ell } \|_{L^2}^2$. Specifically, for each $\ell \geq 1$, define $\cQ_{\SQ,\ell}$ to be the set of all queries $\phi(y,\bx)$ than can be written as
\begin{equation}\label{eq:restricted_queries_V_ell}
\phi(y,\bx) = \sum_{s \in [ n_{d,\ell}]} g_{\ell} (y,r) Y_{\ell s} (\bz),
\end{equation}
where $\{ Y_{\ell s} \}_{s \in [n_{d,\ell}]} $ is a basis of $V_{d,\ell}$. Then by Proposition \ref{prop:weak_learning} and the proof of Lemma \ref{lem:SQ-identity-decoupling}, we have:

\begin{corollary} Fix $\nu_d \in \frL_d$ and $\ell \geq 1$. If an algorithm $\cA \in \cQ_{\SQ,\ell} $-$\SQ (q,\tau)$ succeeds at distinguishing $\P_{\nu_d,\bw}$ from $\P_{\nu_d,0}$, then we must have
\begin{equation}\label{eq:decoupling_lower_bound}
    q/\tau^2 \geq \left[\sup_{\phi \in \cQ_{\SQ,\ell}}\frac{\Var_{\bw \sim \tau_d} \{ \E_{\P_{\nu_d,\bw}} \phi \}}{\Var_{\P_{\nu_d,0}} \{ \phi \}}\right]^{-1}\; =\; \frac{n_{d,\ell}}{\| \xi_{d,\ell} \|_{L^2}^2}.
    \end{equation}
\end{corollary}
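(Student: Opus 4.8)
The plan is to deduce the corollary from Proposition~\ref{prop:weak_learning} together with an exact evaluation of the restricted signal-to-noise ratio. The first inequality in \eqref{eq:decoupling_lower_bound} is immediate: apply Proposition~\ref{prop:weak_learning} with the query class $\cQ = \cQ_{\SQ,\ell}$. It then remains to prove the identity
\[
\sup_{\phi \in \cQ_{\SQ,\ell}}\frac{\Var_{\bw\sim\tau_d}\{ \E_{\P_{\nu_d,\bw}} \phi \}}{\Var_{\P_{\nu_d,0}}\{ \phi \}} \;=\; \frac{\| \xi_{d,\ell} \|_{L^2}^2}{n_{d,\ell}},
\]
which is the single-degree analogue of Lemma~\ref{lem:SQ-identity-decoupling}; indeed one could alternatively just invoke the proof of that lemma with the supremum restricted to degree $\ell$.

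\textbf{Step 1: expand the query expectations.} Write $\phi(y,\bx) = \sum_{s\in[n_{d,\ell}]} g_s(y,r)\,Y_{\ell s}(\bz)$ with $g_s \in L^2(\nu_{d,Y,R})$. Plugging the likelihood expansion \eqref{eq:setting_likelihood_decompo} into $\E_{\P_{\nu_d,\bw}}[\phi] = \E_{\P_{\nu_d,0}}[\phi\, \tfrac{\de\P_{\nu_d,\bw}}{\de\P_{\nu_d,0}}]$ and using that, under $\P_{\nu_d,0}$, the pair $(y,r)$ is independent of $\bz\sim\tau_d$, the computation reduces to evaluating $\E_{\bz\sim\tau_d}[Y_{\ell s}(\bz)\, Q_k(\<\bw,\bz\>)]$. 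By the harmonic addition formula \eqref{eq:Gegenbauer-Sharmoincs-raltion} and orthonormality of $\{Y_{kj}\}$ in $L^2(\tau_d)$, this equals $\delta_{k\ell}\, Y_{\ell s}(\bw)/\sqrt{n_{d,\ell}}$, and $\E_{\bz}[Y_{\ell s}(\bz)]=0$ since $\ell\geq 1$. Hence $\E_{\P_{\nu_d,0}}[\phi]=0$ and $\E_{\P_{\nu_d,\bw}}[\phi] = n_{d,\ell}^{-1/2}\sum_s a_s\, Y_{\ell s}(\bw)$ with $a_s := \<g_s,\xi_{d,\ell}\>_{L^2(\nu_{d,Y,R})}$. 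Here the assumption $\nu_d\in\frL_d$ (so $\xi_{d,\ell}\in L^2$) is what licenses the interchange of summation and integration.

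\textbf{Step 2: compute both variances.} For the numerator, $\E_{\bw\sim\tau_d}[Y_{\ell s}(\bw)]=0$ for $\ell\geq 1$, and orthonormality of $\{Y_{\ell s}\}$ in $L^2(\tau_d)$ gives $\Var_{\bw\sim\tau_d}\{\E_{\P_{\nu_d,\bw}}\phi\} = n_{d,\ell}^{-1}\sum_s a_s^2$. For the denominator, using again the $(y,r)\perp\bz$ product structure of $\P_{\nu_d,0}$ and orthonormality of $\{Y_{\ell s}\}$, one gets $\Var_{\P_{\nu_d,0}}\{\phi\} = \E_{\P_{\nu_d,0}}[\phi^2] = \sum_s \|g_s\|_{L^2(\nu_{d,Y,R})}^2$. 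The ratio therefore equals $n_{d,\ell}^{-1}\big(\sum_s a_s^2\big)\big/\big(\sum_s \|g_s\|^2\big)$.

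\textbf{Step 3: optimize and conclude.} By Cauchy--Schwarz in $L^2(\nu_{d,Y,R})$, $a_s^2 \le \|g_s\|^2 \|\xi_{d,\ell}\|^2$, so $\sum_s a_s^2 \le \|\xi_{d,\ell}\|_{L^2}^2 \sum_s \|g_s\|^2$, which bounds the ratio by $\|\xi_{d,\ell}\|_{L^2}^2/n_{d,\ell}$; equality is attained by taking $g_1 = \xi_{d,\ell}$ (a legitimate choice since $\xi_{d,\ell}\in L^2(\nu_{d,Y,R})$) and $g_s = 0$ otherwise. This establishes the identity, and combined with the first inequality it yields $q/\tau^2 \ge n_{d,\ell}/\|\xi_{d,\ell}\|_{L^2}^2$. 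The degenerate case $\|\xi_{d,\ell}\|_{L^2}=0$ is consistent with the statement, as every $\phi \in \cQ_{\SQ,\ell}$ then has the same expectation under $\P_{\nu_d,\bw}$ and $\P_{\nu_d,0}$, so no such restricted algorithm can distinguish them, matching the vacuous bound. I do not expect any genuine obstacle here: the entire argument is bookkeeping with the harmonic addition formula plus one Cauchy--Schwarz, the only delicate points being the cross-degree cancellations (orthogonality of $Y_{\ell s}\in V_{d,\ell}$ to $Q_k(\<\bw,\cdot\>)\in V_{d,k}$ for $k\neq\ell$ and to constants) and the Fubini steps, both handled by $\nu_d\in\frL_d$.
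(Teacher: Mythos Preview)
Your proposal is correct and takes essentially the same approach as the paper, which simply states the corollary follows ``by Proposition~\ref{prop:weak_learning} and the proof of Lemma~\ref{lem:SQ-identity-decoupling}.'' You have spelled out exactly that: apply the general restricted-$\SQ$ lower bound, then rerun the harmonic-decomposition computation from the proof of Lemma~\ref{lem:SQ-identity-decoupling} with the sum over $\ell$ collapsed to the single degree in question (your Cauchy--Schwarz in Step~3 is the same as the paper's variational identity $\sup_{\psi}\<\xi_{d,\ell},\psi\>^2/\|\psi\|^2 = \|\xi_{d,\ell}\|^2$).
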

For each $\ell \geq 1$, we show algorithms with queries restricted to $V_{d,\ell}$ as in \eqref{eq:restricted_queries_V_ell} that matches this lower bound. Thus, effectively, the problem decouples into subproblems, one for each $V_{d,\ell}$: on each harmonic subspace, we have a matching upper and lower bound on the query complexity, and the optimal algorithm is obtained by choosing the optimal degree $\ell$ that attains the maximum in \eqref{eq:SQ-identity-decoupling}.

\subsubsection{Proof of Lemma \ref{lem:SQ-identity-decoupling}}
\label{app:proof_identity_SQ_decoupling}

For clarity, we drop the subscript $\nu_d$ below and denote $\P_{\bw} := \P_{\nu_d,\bw}$ and $\P_{0} := \P_{\nu_d,0}$. Note that a property of the null distribution is that
\[
\E_{\bw} \left[ \E_{\P_{\bw}} [ \phi ]\right] = \E_{\P_0} [ \phi], 
\]
that is, $\P_0$ is the marginal distribution of $(y,\bx)$ under the uniform prior $\bw \sim \tau_d$. Thus,
\[
\Var_{\bw} \{ \E_{\P_{\bw}} \phi \} = \E_{\bw} \left[ |\Delta_\phi (\bw ) |^2\right], \quad \text{where}\quad \Delta_\phi (\bw ) = \E_{\P_{\bw}} [\phi] - \E_{\P_0}[\phi].
\]
Let's introduce the Radon-Nikodym derivative and write
    \begin{align*}
       \Delta_\phi (\bw )&= \E_{\P_0}\left[\left(\frac{\de \P_{\bw}}{\de \P_{0}} (y_{0},\bz,r)-1 \right) \phi(y_0, \bz,r) \right].
    \end{align*}
Recall that the likelihood ratio decomposes into Gegenbauer polynomials as (equality in $L^2 (\P_0)$)
\[
\frac{\de \P_{\bw}}{\de \P_{0}} (y_{0},\bz,r)
 - 1=\sum_{\ell=1}^{\infty} \xi_{d,\ell} (y_0,r) Q_{\ell}(\<\bw,\bz\>)\,, \qquad \xi_{d,\ell} (y_0,r) = \E_{\nu_d} [Q_\ell (Z) | Y=y_0, R=r].
\]
Similarly, we can expand $\phi \in L^2 (\P_0)$ as
\[
\phi(y_0,\bz,r) = \sum_{\ell = 0}^\infty \sum_{s \in [n_{d,\ell}]} \alpha_{\ell s} (y_0, r) Y_{\ell s} (\bz),
\]
where $\{ Y_{\ell s}\}_{\ell \geq 0,s \in [n_{d,\ell}]}$ is an orthonormal basis of spherical harmonics in $L^2 (\S^{d-1})$. Using the identity $Q_\ell (\<\bw,\bz\>) = n_{d,\ell}^{-1/2} \sum_{s \in [n_{d,\ell}]} Y_{\ell s} (\bw) Y_{\ell s} (\bz)$, we obtain the decomposition
\[
\Delta_\phi (\bw ) = \sum_{\ell = 1}^\infty \sum_{s \in [n_{d,\ell}]} Y_{\ell s} (\bw) \frac{\E_{\P_0} [ \xi_{d,\ell} (y_0,r) \alpha_{\ell s} (y_0,r)]}{\sqrt{n_{d,\ell}}},
\]
and thus,
\[
\E_{\bw} [ | \Delta_\phi (\bw ) |^2] =\sum_{\ell = 1}^\infty \sum_{s \in [n_{d,\ell}]} \frac{\E_{\P_0}[ \xi_{d,\ell} (y_0,r) \alpha_{\ell s} (y_0,r)]^2}{n_{d,\ell}}.
\]
Denote $\proj_\ell \phi = \sum_{s\in [n_{d,\ell}]} \alpha_{\ell s}  Y_{\ell s } $ the projection on the degree-$\ell$ harmonics. We can decompose the supremum over $\phi \in \cQ_\SQ$ as
\[
\begin{aligned}
&~\sup_{\phi \in \cQ_{\SQ}} \frac{\E_{\bw} [ | \Delta_\phi (\bw ) |^2] }{ \Var_{\P_0} (\phi)} \\
=&~ \sup_{\phi \in \cQ_\SQ} \frac{1}{\sum_{\ell \geq 1} \| \proj_\ell \phi \|_{L^2}^2} \sum_{\ell \geq 1} \frac{\| \proj_\ell \phi \|_{L^2}^2}{n_{d,\ell}} \left[ \sum_{s \in [n_{d,\ell}]} \frac{\E_{\P_0}[ \xi_{d,\ell} (y_0,r) \alpha_{\ell s} (y_0,r)]^2}{\| \proj_\ell \phi \|_{L^2}^2}\right] \\
=&~ \sup_{\phi \in \cQ_\SQ} \frac{1}{\sum_{\ell \geq 1} \| \proj_\ell \phi \|_{L^2}^2} \sum_{\ell \geq 1} \frac{\| \proj_\ell \phi \|_{L^2}^2}{n_{d,\ell}}  \left[ \sup_{\psi \in L^2 (\nu_{d,Y,R})} \frac{ \< \xi_{d,\ell} , \psi \>_{L^2}^2 }{\| \psi \|_{L^2}^2} \right] \\
=&~ \sup_{\phi \in \cQ_\SQ} \frac{\sum_{\ell \geq 1}\| \proj_\ell \phi \|_{L^2}^2  \frac{\| \xi_{d,\ell} \|_{L^2}^2}{n_{d,\ell}}}{\sum_{\ell \geq 1} \| \proj_\ell \phi \|_{L^2}^2}  = \sup_{\ell \geq 1} \frac{\| \xi_{d,\ell} \|_{L^2}^2}{n_{d,\ell}},
\end{aligned}
\]
which concludes the proof of this lemma.

\subsection{Low-Degree  Polynomial lower bounds}

We now consider sample complexity lower bounds within the Low-Degree Polynomial ($\sLDP$) framework, another powerful tool for studying computational hardness in statistical inference problems. We refer to \cite{hopkins2018statistical,kunisky2019notes,schramm2022computational,wein2025computational} for background.

Below we follow the presentation of \cite{damian2024computational}. The planted distribution with $m$ samples is generated  by first drawing $\bw \sim \tau_d$ (uniformly at random on the sphere), then sampling $m$ points $(y_i, \bx_i) \sim_{iid} \P_{\nu_d,\bw_*}$. The null distribution corresponds to $(y_i, \bx_i) \sim_{iid} \P_{\nu_d,0}$. The likelihood ratio in this model is given by
\[
\cR ( (y_i,\bx_i)_{i \in [m]} ) = \E_\bw \left[ \prod_{i \in [m]} \frac{\de \P_{\nu_d,\bw}}{\de \P_{\nu_d,0}} (y_i,\bx_i) \right].
\]
We consider the orthogonal projection $\cP_{\leq D}$ (in $L^2 (\P_{\nu_d,0}^{\otimes m})$) onto degree at most $D$ polynomial in $\bz_i$, that is, we allow arbitrary degree on the scalars $(y_i,r_i)$. We denote
\begin{equation}\label{eq:projection_likelihood}
\cR_{\leq D} ( (y_i,\bx_i)_{i \in [m]} ) = \cP_{\leq D} \cR ( (y_i,\bx_i)_{i \in [m]} ) .
\end{equation}
Informally, the low-degree conjecture \cite{hopkins2018statistical} states that for $D = \omega_d (\log d)$:
\begin{itemize}
    \item \textit{Weak detection hardness:} If $\| \cR_{\leq D}  \|_{L^2}^2 = 1 + o_d(1)$, then no polynomial time algorithm can achieve weak detection between $\E_\bw [\P_{\nu_d,\bw}^{\otimes m}]$ and $\P_{\nu_d,0}^{\otimes m}$, that is, have a non-vanishing advantage compared to random guessing.

    \item \textit{Strong detection hardness:} If $\| \cR_{\leq D}  \|_{L^2}^2 = O_d(1)$, then no polynomial time algorithm can achieve strong detection between $\E_\bw [\P_{\nu_d,\bw}^{\otimes m}]$ and $\P_{\nu_d,0}^{\otimes m}$, that is, have vanishing type I and II errors.
\end{itemize}

Below, we state our results for weak and strong detection for a sequence of spherical SIMs $\{\nu_d \}_{d \geq 1}$ with $\nu_d \in \frL_d$. Recall that we defined:
\[
\sM_\star (\nu_d) = \inf_{\ell \geq 1} \frac{\sqrt{n_{d,\ell}}}{\| \xi_{d,\ell} \|_{L^2}^2}
\]
Without loss of generality, we will assume that $\sM_\star (\nu_d) = O_d(\poly (d))$---that is, the model can be solved in polynomial time---as stated in the following assumption:

\begin{assumption}\label{ass:App_LDP} There exists $p \in \naturals$ such that the sequence $\{\nu_d \}_{d \geq 1}$ satisfies
 $\sM_\star (\nu_d) = O_d (d^{p/2})$.
\end{assumption}

We can now state our bound on the low-degree projection of the likelihood ratio in this problem.

\begin{theorem}\label{thm:low-degree-ratio-bound}
    Let $\{ \nu_d \}_{d \geq 1}$ be a sequence of spherical SIMs $\nu_d \in \frL_d$ satisfying Assumption \ref{ass:App_LDP} for some integer $p \in \naturals$. Consider the detection task with $m$ samples as defined above. There exists a constant $c >0$ that only depends on the constants in Assumption \ref{ass:App_LDP} such that if $D \leq c d^{2/(p+4)}$, then 
    \begin{equation}
     \| \cR_{\leq D}\|_{L^2}^2 - 1 \leq  \sum_{s = 1}^D \left( m  \frac{ D^{p/2 - 1} }{\sM_\star (\nu_d)} [e (p+1)] \right)^s.
    \end{equation}
    In particular,
    \begin{itemize}
        \item[(i)] \emph{(Weak detection.)} If $m = o_d \left( \frac{\sM_\star (\nu_d)}{D^{p/2 - 1}}\right) $, then $\| \cR_{\leq D}\|_{L^2}^2 = 1 + o_d(1)$. 

        \item[(ii)] \emph{(Strong detection.)} If $m = O_d \left( \frac{\sM_\star (\nu_d)}{D^{p/2 - 1}}\right) $, then $\| \cR_{\leq D}\|_{L^2}^2 = O_d(1)$. 
    \end{itemize}
\end{theorem}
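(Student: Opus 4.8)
The plan is to run the standard low-degree second-moment argument, using the harmonic structure on $\S^{d-1}$ in place of the Hermite structure on $\R^d$. First, expand the per-sample likelihood ratio in the Gegenbauer basis,
\[
\frac{\de \P_{\nu_d,\bw}}{\de \P_{\nu_d,0}}(y,\bx) \;=\; 1 + \sum_{\ell\geq 1}\xi_{d,\ell}(y,r)\,Q_\ell(\<\bw,\bz\>),
\]
substitute into $\cR = \E_{\bw\sim\tau_d}\big[\prod_{i\in[m]}\tfrac{\de\P_{\nu_d,\bw}}{\de\P_{\nu_d,0}}(y_i,\bx_i)\big]$, and expand the product over samples. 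A generic term is indexed by a subset $S\subseteq[m]$ and degrees $(\ell_i)_{i\in S}$ with $\ell_i\geq 1$, and equals $\big(\prod_{i\in S}\xi_{d,\ell_i}(y_i,r_i)\big)\,\E_{\bw}\big[\prod_{i\in S}Q_{\ell_i}(\<\bw,\bz_i\>)\big]$. Since $\bz\mapsto Q_\ell(\<\bw,\bz\>)$ lies in $V_{d,\ell}$, averaging over $\bw$ preserves the $\bz_i$-degree in each coordinate, so this term has total $\bz$-degree exactly $\sum_{i\in S}\ell_i$; hence $\cP_{\leq D}$ retains it iff $\sum_{i\in S}\ell_i\leq D$.

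Second, evaluate $\norm{\cR_{\leq D}}_{L^2}^2$ in $L^2(\P_{\nu_d,0}^{\otimes m})$. Terms with distinct $(S,(\ell_i))$ are orthogonal: on a coordinate in the symmetric difference of $S$ and $S'$ use $\E_{\nu_{d,0}}[\xi_{d,\ell}]=0$ for $\ell\geq 1$, and on a coordinate with mismatched degrees use orthogonality of $V_{d,\ell}$ across $\ell$. Hence the norm squared is the sum of squared norms of the individual terms. Using independence of $(y,r)$ and $\bz$ under the null, the reproducing identity $\<Q_\ell(\<\cdot,\bw\>),Q_\ell(\<\cdot,\bw'\>)\>_{\tau_d}=Q_\ell(\<\bw,\bw'\>)/\sqrt{n_{d,\ell}}$, and $\<\bw,\bw'\>\sim\tau_{d,1}$ for independent uniform $\bw,\bw'$, this gives
\[
\norm{\cR_{\leq D}}_{L^2}^2 - 1 \;=\; \sum_{s=1}^{D}\binom{m}{s}\sum_{\substack{\ell_1,\dots,\ell_s\geq 1\\ \ell_1+\cdots+\ell_s\leq D}}\ \prod_{j=1}^{s}\frac{\norm{\xi_{d,\ell_j}}_{L^2}^2}{\sqrt{n_{d,\ell_j}}}\ \cdot\ \E_{t\sim\tau_{d,1}}\Big[\,\prod_{j=1}^{s}Q_{\ell_j}(t)\,\Big],
\]
all summands non-negative by positivity of the Gegenbauer linearization coefficients. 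Two crude inputs feed in: $\prod_j\norm{\xi_{d,\ell_j}}_{L^2}^2/\sqrt{n_{d,\ell_j}}\leq \sM_\star(\nu_d)^{-s}$ by definition of $\sM_\star$, and $\binom{m}{s}\leq m^s/s!$.

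Third --- the crux --- is a sharp estimate on the scalar Gegenbauer moment $\E_{t\sim\tau_{d,1}}[\prod_j Q_{\ell_j}(t)]$. I would derive, from the Clebsch--Gordan (linearization) identity for Gegenbauer polynomials together with the asymptotics $n_{d,\ell}=\Theta_d(d^\ell)$ (uniform for $\ell\leq D$ in the regime $D\leq c\,d^{2/(p+4)}$), a bound of the shape $\mathrm{comb}(s,L)\cdot\prod_j\sqrt{n_{d,\ell_j}}/n_{d,\lceil L/2\rceil}$ with $L=\sum_j\ell_j$, which moreover vanishes unless $L$ is even and $\ell_j\leq\sum_{k\neq j}\ell_k$ for all $j$. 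Plugging this in, the factors $\sqrt{n_{d,\ell_j}}$ cancel, leaving $\mathrm{comb}(s,L)\cdot\prod_j\norm{\xi_{d,\ell_j}}_{L^2}^2/n_{d,\lceil L/2\rceil}$. Assumption~\ref{ass:App_LDP} ($\sM_\star(\nu_d)=O_d(d^{p/2})$) forces every degree $\ell$ carrying non-negligible $\norm{\xi_{d,\ell}}_{L^2}^2$ to satisfy $\ell\lesssim p$; this bounds the number of effective per-sample degree choices by $O(p)$ and converts the residual $d$-powers into powers of $D$. Carrying the Stirling constants through the double sum over $(s,(\ell_j))$ and collecting the $s!$ factors, one obtains $\norm{\cR_{\leq D}}_{L^2}^2-1\leq\sum_{s=1}^{D}\big(m\,D^{p/2-1}e(p+1)/\sM_\star(\nu_d)\big)^{s}$, as claimed. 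Parts (i)--(ii) then follow: if $m=o_d(\sM_\star(\nu_d)/D^{p/2-1})$ the ratio $m D^{p/2-1}e(p+1)/\sM_\star(\nu_d)$ is $o_d(1)$ ($p$, hence $e(p+1)$, being fixed), so the series is $o_d(1)$; if $m=O_d(\sM_\star(\nu_d)/D^{p/2-1})$ with sufficiently small implicit constant the ratio is bounded below $1$ and the series is $O_d(1)$.

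I expect the main obstacle to be precisely the sharp Gegenbauer moment estimate, and in particular the bookkeeping of the constants $D^{p/2-1}$ and $e(p+1)$ against the combinatorial count of admissible degree tuples and through the regime constraint $D\leq c\,d^{2/(p+4)}$; by contrast, verifying the $\bz$-degree homogeneity used in the projection step and the uniformity of the $\Theta_d(d^\ell)$ asymptotics for $\ell\leq D$ is routine.
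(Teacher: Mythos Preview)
Your first two steps---the Gegenbauer expansion of the likelihood ratio and the orthogonality computation reducing $\|\cR_{\leq D}\|_{L^2}^2-1$ to the sum over $(s,(\ell_j))$ times the scalar moment $\E_{t\sim\tau_{d,1}}\big[\prod_j Q_{\ell_j}(t)\big]$---match the paper exactly.

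The divergence is at your ``crux.'' The paper does \emph{not} go through Clebsch--Gordan linearization. It bounds the Gegenbauer moment crudely by H\"older and spherical hypercontractivity:
\[
\E_{t\sim\tau_{d,1}}\Big[\prod_{j=1}^s Q_{\ell_j}(t)\Big]\;\leq\;\prod_{j=1}^s\|Q_{\ell_j}\|_{L^s(\tau_{d,1})}\;\leq\;\prod_{j=1}^s s^{\ell_j/2}.
\]
This decouples the inner sum into $\rho(s,D)^s$ with $\rho(s,D)=\sum_{\ell=1}^D s^{\ell/2}\|\xi_{d,\ell}\|_{L^2}^2/\sqrt{n_{d,\ell}}$. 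One then bounds $\rho(s,D)$ by splitting at $\ell=p$: for $\ell\leq p$ use $\|\xi_{d,\ell}\|^2/\sqrt{n_{d,\ell}}\leq 1/\sM_\star$ directly (giving $\leq p\,s^{p/2}/\sM_\star$), and for $\ell>p$ use $\|\xi_{d,\ell}\|^2\leq 1$ together with $1\leq C d^{p/2}/\sM_\star$ (from Assumption~\ref{ass:App_LDP}), so the tail becomes a geometric series controlled by $D^{p/2+2}/d$, negligible precisely when $D\leq c\,d^{2/(p+4)}$. This yields $\rho(s,D)\leq (p+1)s^{p/2}/\sM_\star$, and $\binom{m}{s}\leq (em/s)^s$ finishes.

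Your linearization route could perhaps be made to work, but as written it has a genuine gap: the claim ``Assumption~\ref{ass:App_LDP} forces every degree $\ell$ carrying non-negligible $\|\xi_{d,\ell}\|_{L^2}^2$ to satisfy $\ell\lesssim p$'' is false. The bound $\sM_\star(\nu_d)\leq C d^{p/2}$ only says that \emph{some} $\ell$ has $\|\xi_{d,\ell}\|^2/\sqrt{n_{d,\ell}}\geq c\,d^{-p/2}$; it places no smallness constraint on $\|\xi_{d,\ell}\|$ for large $\ell$, which may be $\Theta(1)$ for all $\ell$ up to $D$. The paper handles $\ell>p$ not by smallness of $\xi_{d,\ell}$ but by the trivial bound $\|\xi_{d,\ell}\|^2\leq 1$ paired with $\sqrt{n_{d,\ell}}\gg d^{p/2}$. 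So your ``$O(p)$ effective per-sample degree choices'' is unjustified, and with it the extraction of $D^{p/2-1}$. The hypercontractivity shortcut both simplifies the moment bound dramatically and sidesteps this issue entirely.
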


The proof of this theorem can be found in Section \ref{sec:proof_LDP} below.

Combining this theorem with the low-degree conjecture stated above, we conclude that no-polynomial time algorithm can detect (and thus, estimate) the spherical single-index model $\P_{\nu_d,\bw}$ unless 
\[
m \gtrsim M_\star (\nu_d).
\]
We further remark that we recover the tight threshold $\sM_\star (\nu_d)/ D^{p/2-1}$ from \cite{damian2024computational}. Indeed, consider the case of Gaussian SIM with information exponent $\sk_\star$. We can set $p = \sk_\star$, and our bound recover the (conjectured) optimal computational-statistical trade-off $d^{\sk_\star/2} / D^{\sk_\star/2 - 1}$ from \cite{damian2024computational}, which matches the optimal known trade-off in tensor PCA \cite{wein2019kikuchi}.

\paragraph*{Decoupling across harmonic subspaces.} Again, we provide an interpretation of this lower bound as the optimal lower bound among subproblems indexed by $\ell \geq 1$. For each $\ell \geq 1$, we consider the task of detecting single-index models only using degree-$\ell$ spherical harmonics. Consider polynomials that are product of degree-$\ell$ spherical harmonics in $\bz_i$, and denote $\cP_{\leq D,\ell}$ the projection onto this subspace, that is
\[
 \cR_{\leq D,\ell_*} ( (y_i,\bx_i)_{i \in [m]} ) := \cP_{\leq D,\ell} \cR ( (y_i,\bx_i)_{i \in [m]} ) = \sum_{S \subset [m], |S| \leq \lfloor D/\ell \rfloor} \E_{\bw} \left[ \prod_{ i\in S} \xi_{d,\ell} (y_i,r_i) Q_{\ell} (\< \bw,\bz_i\>)\right].
\]
Then we have the following upper bound on the norm of this projected likelihood ratio.

\begin{corollary}\label{cor:LDP_subspace_restricted}
    Let $\{ \nu_d \}_{d \geq 1}$ be a sequence of spherical SIMs $\nu_d \in \frL_d$ satisfying Assumption \ref{ass:App_LDP} for some integer $p \in \naturals$. Consider the detection task with $m$ samples as defined above and fix an integer $\ell_* \geq 1$. Then for all $D \geq 1$, we have 
    \begin{equation}
     \| \cR_{\leq D,\ell}\|_{L^2}^2 - 1 \leq  \sum_{s= 1}^{\lfloor D/\ell_* \rfloor}   \left( m \frac{ e D^{\ell_*/2 - 1} \| \xi_{d,\ell_*} \|_{L^2}^2}{\sqrt{n_{d,\ell_*}}} \right)^s.
    \end{equation}
\end{corollary}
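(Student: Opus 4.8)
The plan is to reprise the proof of Theorem~\ref{thm:low-degree-ratio-bound}, but to exploit the fact that projecting onto a single harmonic degree $\ell_*$ fully decouples the samples, so that no upper restriction on $D$ is needed. Starting from the per-sample likelihood ratio decomposition~\eqref{eq:app_likelihood_decompo},
\[
\frac{\de\P_{\nu_d,\bw}}{\de\P_{\nu_d,0}}(y,\bx) = 1 + \sum_{\ell\ge 1}\xi_{d,\ell}(y,r)\,Q_\ell(\langle\bw,\bz\rangle),
\]
I would expand $\prod_{i\in[m]}\frac{\de\P_{\nu_d,\bw}}{\de\P_{\nu_d,0}}(y_i,\bx_i)$ inside $\E_{\bw\sim\tau_d}$. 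For each $i$, the function $\bz_i\mapsto Q_\ell(\langle\bw,\bz_i\rangle)$ is a degree-$\ell$ zonal harmonic, i.e.\ it lies in $V_{d,\ell}$ by Eq.~\eqref{eq:Gegenbauer-Sharmoincs-raltion}; hence applying the projection $\cP_{\le D,\ell_*}$ onto products of degree-$\ell_*$ spherical harmonics in the $\bz_i$'s retains exactly those monomials in which each sample contributes either the constant $1$ or its $\ell=\ell_*$ term, with at most $\lfloor D/\ell_*\rfloor$ samples contributing the latter. This reproduces the displayed expression
\[
\cR_{\le D,\ell_*} = \sum_{S\subseteq[m],\,|S|\le\lfloor D/\ell_*\rfloor}\E_{\bw}\Bigl[\prod_{i\in S}\xi_{d,\ell_*}(y_i,r_i)\,Q_{\ell_*}(\langle\bw,\bz_i\rangle)\Bigr].
\]

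Next I would compute $\|\cR_{\le D,\ell_*}\|_{L^2(\P_{\nu_d,0}^{\otimes m})}^2$ by squaring and introducing two independent copies $\bw,\bw'\sim\tau_d$. The expectation factorizes across samples, and three facts collapse it: (i) under $\P_{\nu_d,0}$ the scalar pair $(y_i,r_i)$ is independent of $\bz_i\sim\tau_d$ and has $(Y,R)$-marginal $\nu_{d,Y,R}$, so $\E_{\P_{\nu_d,0}}[\xi_{d,\ell_*}(y_i,r_i)^2]=\|\xi_{d,\ell_*}\|_{L^2}^2$; (ii) $\E_{\bz\sim\tau_d}[Q_{\ell_*}(\langle\bw,\bz\rangle)]=0$ for $\ell_*\ge 1$, which forces every cross term with $S\ne S'$ to vanish (a sample in $S\triangle S'$ produces a vanishing factor); (iii) for $S=S'$ the sample-$i$ factor equals $\|\xi_{d,\ell_*}\|_{L^2}^2\cdot\E_{\bz}[Q_{\ell_*}(\langle\bw,\bz\rangle)Q_{\ell_*}(\langle\bw',\bz\rangle)]=\|\xi_{d,\ell_*}\|_{L^2}^2\,Q_{\ell_*}(\langle\bw,\bw'\rangle)/\sqrt{n_{d,\ell_*}}$ by Eq.~\eqref{eq:E[Q(uz)Q(vz)]}. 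Grouping the surviving diagonal terms by $s=|S|$ and using rotational invariance to write $\langle\bw,\bw'\rangle\sim\tau_{d,1}$, this yields
\[
\|\cR_{\le D,\ell_*}\|_{L^2}^2-1 = \sum_{s=1}^{\lfloor D/\ell_*\rfloor}\binom{m}{s}\Bigl(\frac{\|\xi_{d,\ell_*}\|_{L^2}^2}{\sqrt{n_{d,\ell_*}}}\Bigr)^{s}\E_{Z\sim\tau_{d,1}}\bigl[Q_{\ell_*}(Z)^{s}\bigr].
\]

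Finally, I would control the Gegenbauer moment by hypercontractivity of degree-$\ell_*$ spherical harmonics: since $\|Q_{\ell_*}\|_{L^2(\tau_{d,1})}=1$ one has $\|Q_{\ell_*}\|_{L^q(\tau_{d,1})}\le(q-1)^{\ell_*/2}$, which after passing to integer exponents gives $\E_{Z\sim\tau_{d,1}}[Q_{\ell_*}(Z)^{s}]\le s^{s\ell_*/2}$. Combining with $\binom{m}{s}\le m^s/s!\le(em/s)^s$, the $s$-th summand is at most $\bigl(e\,m\,s^{\ell_*/2-1}\|\xi_{d,\ell_*}\|_{L^2}^2/\sqrt{n_{d,\ell_*}}\bigr)^{s}$, and since $s\le\lfloor D/\ell_*\rfloor\le D$ one may replace $s^{\ell_*/2-1}$ by $D^{\ell_*/2-1}$, giving the claimed bound. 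The only step that requires real work is this moment estimate — the hypercontractive bound on $\E_{Z\sim\tau_{d,1}}[Q_{\ell_*}(Z)^s]$ uniform in $d$ — which is precisely the ingredient already established in the proof of Theorem~\ref{thm:low-degree-ratio-bound}; the remainder of the argument is strictly simpler than in the theorem, because fixing a single degree $\ell_*$ removes all cross-degree interactions and hence the need for any upper bound on $D$.
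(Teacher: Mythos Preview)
Your proposal is correct and follows essentially the same approach as the paper's proof: both arrive at the identity $\|\cR_{\le D,\ell_*}\|_{L^2}^2-1=\sum_{s=1}^{\lfloor D/\ell_*\rfloor}\binom{m}{s}\bigl(\|\xi_{d,\ell_*}\|_{L^2}^2/\sqrt{n_{d,\ell_*}}\bigr)^s\E[Q_{\ell_*}(\langle\bw,\bw'\rangle)^s]$, bound the Gegenbauer moment by $s^{s\ell_*/2}$ via hypercontractivity, use $\binom{m}{s}\le(em/s)^s$, and replace $s^{\ell_*/2-1}$ by $D^{\ell_*/2-1}$. Your write-up is in fact more explicit than the paper's about why the cross terms $S\ne S'$ vanish, but the argument is the same.
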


By analogy with the low-degree conjecture, we expect that no polynomial-time algorithm only using degree-$\ell$ spherical harmonics will succeed at the detection task, unless
\begin{equation}\label{eq:LB_restricted_LDP}
m \gtrsim \frac{\sqrt{n_{d,\ell_*}}}{\| \xi_{d,\ell_*}\|_{L^2}^2}.
\end{equation}
It would be interesting to make this subspace-restricted low-degree polynomial statement more formal, and we leave it to future work. Our harmonic tensor unfolding estimator matches this heuristic sample lower bound \eqref{eq:LB_restricted_LDP} for each $\ell_* \geq 3$.

\subsubsection{Proof of Theorem \ref{thm:low-degree-ratio-bound}}
\label{sec:proof_LDP}

Recalling the expansion of the likelihood function into Gegenbauer polynomials, we can write
\[
\begin{aligned}
\cR_{\leq D} ( (y_i,r_i,\bz_i)_{i \in [m]} )  =&~ \cP_{\leq D} \E_{\bw} \left[ \prod_{i \in [m]} \frac{\de \P_{\nu_d,\bw}}{\de \P_{\nu_d,0}} (y_i,r_i,\bz_i) \right] \\
=&~  \sum_{\ell_1 + \ldots + \ell_m \leq D}  \E_{\bw} \left[ \prod_{i \in [m]} \xi_{d,\ell_i} (y_i,r_i) Q_{\ell_i} ( \<\bw , \bz_i \>) \right].
\end{aligned}
\]
The norm of this projection with respect to $\P_0^{\otimes m}$ is then given by
\begin{equation}\label{eq:intial_decompo_R_D}
    \| \cR_{\leq D}\|_{L^2}^2 =\sum_{ \ell_1 + \ldots + \ell_m \leq D}    \E_{\bw,\bw'} \left[\prod_{i \in [m]}  \frac{\| \xi_{d,\ell_i} \|_{L^2}^2}{\sqrt{n_{d,\ell_i}}}  Q_{\ell_i} ( \<\bw , \bw' \>) \right],
\end{equation}
where we used  
\[
\E_\bz [ Q_\ell (\<\bw,\bz\>) Q_{k} (\<\bz,\bw'\>)] = \frac{\delta_{\ell k}}{\sqrt{n_{d,\ell}}} Q_{\ell} (\<\bw,\bw'\>).
\]
Let's separate the zero degrees from the non-zero degrees in \eqref{eq:intial_decompo_R_D}:
\[
\begin{aligned}
     \| \cR_{\leq D}\|_{L^2}^2 - 1 = \sum_{s= 1}^D   {\binom{m}{s}} \sum_{\substack{1 \leq \ell_1 ,  \ldots , \ell_s \leq D \\\ell_1 + \ldots + \ell_s \leq D} } \E_{\bw,\bw'} \left[\prod_{i \in [s]}  \frac{\| \xi_{d,\ell_i} \|_{L^2}^2}{\sqrt{n_{d,\ell_i}}}  Q_{\ell_i} ( \<\bw , \bw' \>) \right].
\end{aligned}
\]
To upper bound the expectation, we will not be careful and simply use H\"older's inequality and the hypercontractivity (Lemma \ref{lem:Hypercontractivity}) of Gegenbauer polynomials, 
\[
\E_{\bw,\bw'} \left[\prod_{i \in [s]}Q_{\ell_i} ( \<\bw , \bw' \>) \right] \leq \prod_{i \in [s] } \| Q_{\ell_i} \|_{L^{s}} \leq \prod_{i \in [s]} s^{\ell_i/2}.
\]
We obtain the upper bound
\[
\begin{aligned}
     \| \cR_{\leq D}\|_{L^2}^2 - 1 \leq&~ \sum_{s= 1}^D   {\binom{m}{s}} \sum_{\substack{1 \leq \ell_1 ,  \ldots , \ell_s \leq D \\\ell_1 + \ldots + \ell_s \leq D} }\;\;\prod_{i \in [s]} s^{\ell_i/2}  \frac{\| \xi_{d,\ell_i} \|_{L^2}^2}{\sqrt{n_{d,\ell_i}}}  \leq \sum_{s= 1}^D   {\binom{m}{s}} \rho(s,D)^s,
\end{aligned}
\]
where
\[
\rho(s,D) = \sum_{\ell = 1}^D s^{\ell /2} \frac{\| \xi_{d,\ell} \|_{L^2}^2}{\sqrt{n_{d,\ell}}}.
\]

Let's upper bound $\rho(s,D)$. By definition $\| \xi_{d,\ell} \|_{L^2}^2/n_{d,\ell} \leq 1/\sM_\star$ for all $\ell \geq 1$ (where we denote $\sM_\star = \sM_\star (\nu_d)$ for simplicity). Furthermore, by Assumption \ref{ass:App_LDP}, we have $\sM_\star \leq C d^{p/2}$. Using $\| \xi_{d,\ell } \|_{L^2} \leq 1$, we deduce a second upper bound
\[
\frac{\| \xi_{d,\ell} \|_{L^2}^2 }{  \sqrt{n_{d,\ell}}}   \leq C \frac{d^{p/2}}{\sM_{\star} \sqrt{n_{d,\ell}}}.
\]
Separating $\ell \leq p$ and $\ell >p$, we get
\[
\begin{aligned}
\rho(s,D) \leq \sum_{\ell = 1}^p \frac{s^{\ell /2}}{\sM_\star} + C \sum_{\ell = p+1}^D \frac{s^{\ell / 2} d^{p/2} }{\sM_\star \sqrt{n_{d,\ell}}} \leq \frac{s^{p/2}}{\sM_*} \left[ p + C \sum_{\ell = p+1}^D \frac{D^{(\ell - p)/2}d^{p/2} }{\sqrt{n_{d,\ell}}} \right].
\end{aligned}
\]
Using that for $\ell \leq D \leq \sqrt{d}$, we have $n_{d,\ell} \geq c {\binom{d}{\ell}} \geq c (d/\ell)^{\ell}$ for some constant $c>0$, the sum simplifies to
\[
\sum_{\ell = p+1}^D \frac{D^{(\ell - p)/2}d^{p/2} }{\sqrt{n_{d,\ell}}} \leq  C' \sum_{\ell = p+1}^d \frac{D^{(\ell - p)/2}d^{p/2} \ell^{\ell / 2} }{d^{\ell/2} } \leq  C' D^{p/2} \sum_{\ell =1}^\infty \left( \frac{D^2}{d}\right)^\ell \leq C'' \frac{D^{p/2+2}}{d}.
\]
Assuming that $D\leq d^{2/(p+4)}/\Tilde C$, we deduce that 
\[
\rho (s,D) \leq \frac{s^{p/2}}{\sM_\star} (p+1).
\]
Thus, we obtain
\[
\begin{aligned}
     \| \cR_{\leq D}\|_{L^2}^2 - 1 \leq&~ \sum_{s= 1}^D   {\binom{m}{s}}  \frac{s^{sp/2}}{\sM_\star^s} (p+1)^s \leq \sum_{s = 1}^D \left( \frac{m D^{p/2 - 1} }{\sM_\star} [e (p+1)] \right)^s,
\end{aligned}
\]
which concludes the proof of Theorem \ref{thm:low-degree-ratio-bound}.

\paragraph*{Restricted projection.} Consider the likelihood ratio projected onto product of degree-$\ell_*$ spherical harmonics. The proof is particularly simple in this case:
\[
\begin{aligned}
     \| \cR_{\leq D, \ell_*}\|_{L^2}^2 - 1 =&~ \sum_{s= 1}^{\lfloor D/\ell_* \rfloor}   {\binom{m}{s}}  \left( \frac{\| \xi_{d,\ell_*} \|_{L^2}^2}{\sqrt{n_{d,\ell_*}}} \right)^s \E_{\bw,\bw'} \left[ Q_{\ell_*} ( \<\bw , \bw' \>)^{s} \right] \\
     \leq&~ \sum_{s= 1}^{\lfloor D/\ell_* \rfloor}  \left( \frac{e m }{s}\right)^s \left( \frac{\| \xi_{d,\ell_*} \|_{L^2}^2}{\sqrt{n_{d,\ell_*}}} \right)^s s^{s \ell_*/2}\\
     \leq &~ \sum_{s= 1}^{\lfloor D/\ell_* \rfloor}   \left( m \frac{ e D^{\ell_*/2 - 1} \| \xi_{d,\ell_*} \|_{L^2}^2}{\sqrt{n_{d,\ell_*}}} \right)^s,
\end{aligned}
\]
which proves Corollary \ref{cor:LDP_subspace_restricted}.

\clearpage

\section{Spectral estimators}
\label{app:spectral}

In this section, we provide details for the spectral algorithm \eqref{eq:spectral_estimator} and prove Theorem \ref{thm:upper-bound-spectral}. 
\begin{requirement}\label{req:requirement-on-T} We are going to implement our algorithms on $\cT_\ell$ satisfying the following criteria. 
    \begin{enumerate}
    \item $\norm{\cT_\ell}_2=1$ and $\E_{(y,r,z)\sim \nu_d}[\cT_\ell(y,r) Q_\ell(z)]:= \beta_{d,\ell}>0 \,$ (w.l.o.g.).  
    \item There exits $\kappa_\ell >1$, $k \in \naturals$, such that, for any $p  \geq 3$, we have $\norm{\cT_\ell}_p \leq \kappa_\ell\,  p^{k/2}$
\end{enumerate}
\end{requirement}
Note that a transformation $\cT_\ell$ satisfying \Cref{ass:non-linearity} is a special case of this requirement with $k=0$ and $\beta_{d,\ell} \geq \norm{\xi_{d,\ell}}_{L^2}/\kappa_\ell$, and thus the theorem will follow by invoking the guarantee for this more general $\cT_\ell$ satisfying \Cref{req:requirement-on-T}. We first specify the spectral algorithm.

\begin{algorithm}[H]\label{alg:spectral}
\SetAlgoLined
\SetKwInOut{Input}{Input}
\SetKwInOut{Output}{Output}

\Input{An example set $S=\{(\bx_i,y_i) : i \in [m]\} \simiid \P_{\bw_*}$, the frequency $\ell\in \{1,2\}$, and a transformation $\cT_\ell$.}
\Output{An estimator $\hat{\bw}\in \R^d$\,. }
 Decompose $\bx_i=(r_i,\bz_i)$.\\
\If{$\ell=1$}{
     Let $ \hat \bv_m := \frac{1}{m} \sum_{i \in [m]} \cT_\ell (y_i,r_i)\, \sqrt{d}\, \bz_i.$\\
    $\hat \bw = \frac{\hat \bv_m}{\norm{\hat{\bv_m}}_2}$
}
\If{$\ell=2$}{ 
Let $\bM_m = \frac{1}{m}\sum_{i=1}^m \cT_\ell(y_i,r_i) (d \bz_i\bz_i^\sT -\bI_d)$\,.\\
Let $\hat{\bw}=\bv_1(\bM_m)$ be the eigenvector associated with the highest magnitude eigenvalue\,.
}
Return $\hat{\bw}$\,.
\caption{A spectral algorithm on the frequency $\ell=1$ and $\ell=2$.}
\end{algorithm}
\subsection{Analysis of \texorpdfstring{$\ell=2$}{}}

Let $\bM^*:=\E[\bM_m]$ and $(\lambda_i^*, \bv_i^*)_{i \in [d]}$ be eigenpairs of $\bw^*$ such that $|\lambda_1^*|\geq \dots \geq |\lambda_d^*|$. We first show that the top eigenvector $\bv_1^*=\bw_*$ with $\lambda_1^*=(1+o_d(1)) \beta_{d,2}$ and the other eigenvalues are of vanishing order relative to $\lambda_{1}^*$.

\begin{lemma}\label{lem:E[M*]=lambdaw*w*T} We have that $\bM^*$ has top eigenvalue $\lambda_1^*=(1+o_d(1))\,\beta_{d,2}$ with $\bv_1(\bM^*)=\bw_*$ and for any $ 2 \leq i \leq d$, we have 
$|\lambda_{i}^*| \lesssim \frac{\lambda_1^*}{d}\,.$ 
\end{lemma}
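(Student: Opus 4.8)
The plan is to compute $\bM^* = \E[\bM_m]$ directly via the harmonic decomposition of the likelihood ratio and the reproducing property of Gegenbauer polynomials, and then to verify that the resulting matrix has the claimed spectrum. First I would write
\[
\bM^* = \E_{\P_{\nu_d,\bw_*}}\!\left[\cT_2(y,r)\,(d\,\bz\bz^\sT - \bI_d)\right]
= \E_{\P_{\nu_d,0}}\!\left[\frac{\de\P_{\nu_d,\bw_*}}{\de\P_{\nu_d,0}}(y,\bx)\,\cT_2(y,r)\,(d\,\bz\bz^\sT - \bI_d)\right],
\]
and expand the likelihood ratio using \eqref{eq:setting_likelihood_decompo}. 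Since $d\,\bz\bz^\sT - \bI_d$ has entries that are (up to normalization) degree-$2$ spherical harmonics in $\bz$, only the $\ell = 2$ term of the expansion survives after integrating over $\bz \sim \tau_d$ (orthogonality of the $V_{d,\ell}$), together with a possible contribution from $\ell = 0$ that vanishes because $\E_{\bz}[d\,\bz\bz^\sT - \bI_d] = \bzero$. The surviving term is $\E_{\P_{\nu_d,0}}[\xi_{d,2}(y,r)\,Q_2(\<\bw_*,\bz\>)\,(d\,\bz\bz^\sT - \bI_d)]$, and by the reproducing property \eqref{eq:E[Q(uz)Q(vz)]} (equivalently \eqref{eq:reproducing_property_tensor} for $\ell = 2$) this evaluates to a scalar multiple of $\bw_*\bw_*^\sT - \tfrac{1}{d}\bI_d$, i.e.\ a rank-one perturbation of the identity. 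Matching the scalar against $\beta_{d,2} = \E_{\nu_d}[\cT_2(Y,R)Q_2(Z)]$ and tracking the $\Theta_d(d^{-1/2})$ normalization of $Q_2$ gives $\bM^* = c_d\,\beta_{d,2}\,(\bw_*\bw_*^\sT - \tfrac1d\bI_d)$ with $c_d = 1 + o_d(1)$, from which the eigenstructure is immediate: the top eigenvector is $\bw_*$ with eigenvalue $\lambda_1^* = (1+o_d(1))\beta_{d,2}$, and every other eigenvalue equals $-\tfrac1d c_d\beta_{d,2}$, hence $|\lambda_i^*| \lesssim \lambda_1^*/d$ for $i \geq 2$.

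The key steps, in order, are: (1) rewrite the expectation under the null via the Radon–Nikodym derivative; (2) substitute the Gegenbauer expansion of the likelihood ratio and use orthogonality of spherical harmonic subspaces to kill all terms except $\ell = 2$; (3) apply the reproducing identity for $Q_2$ to evaluate $\E_\bz[Q_2(\<\bw_*,\bz\>)(d\,\bz\bz^\sT - \bI_d)]$ explicitly as a multiple of $\bw_*\bw_*^\sT - \tfrac1d\bI_d$, carefully computing the constant through the normalization $Q_2(1) = \sqrt{n_{d,2}}$ and $n_{d,2} = \Theta_d(d^2)$; (4) read off the spectrum and assemble the bounds. Throughout I would use that $\cT_2$ depends only on $(y,r)$ and is independent of $\bz$ under $\P_{\nu_d,0}$, which is what makes the $\bz$-integral decouple cleanly.

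I expect the main obstacle to be \emph{bookkeeping the normalization constants} correctly: the identity $d\,\bz\bz^\sT - \bI_d$ must be matched to the orthonormal degree-$2$ Gegenbauer polynomial $Q_2$ (which carries a $d$-dependent normalization), and one must confirm that the proportionality constant relating $\E_\bz[\cT_2 Q_2(\<\bw_*,\bz\>)(d\,\bz\bz^\sT-\bI_d)]$ to $\beta_{d,2}(\bw_*\bw_*^\sT - \tfrac1d\bI_d)$ is indeed $1 + o_d(1)$ and not some nontrivial dimensional factor. This requires the explicit low-degree Gegenbauer formulas (e.g.\ $Q_2(t) \propto d t^2 - 1$ up to normalization, together with $\E_{\bz\sim\tau_d}[z_1^2] = 1/d$ and $\E[z_1^4] = 3/(d(d+2))$) to pin down the constant; everything else is a routine consequence of orthogonality and the reproducing property. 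A minor secondary point is handling the $\ell = 0$ (constant) term of the likelihood ratio, but this contributes $\E_\bz[d\,\bz\bz^\sT - \bI_d] = \bzero$ and so drops out immediately.
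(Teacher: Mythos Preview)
Your proposal is correct and follows essentially the same route as the paper. Both arguments reduce to (i) expanding in the Gegenbauer basis so that only the $\ell=2$ component survives by orthogonality, and (ii) invoking the reproducing identity \eqref{eq:E[Q(uz)Q(vz)]} for $Q_2$. The only cosmetic difference is where the expansion is placed: the paper expands $\E_{\P_{\bw_*}}[\cT_2(y,r)\mid\bz]=\sum_\ell \beta_{d,\ell}Q_\ell(\<\bw_*,\bz\>)$ and then analyzes the quadratic form $\bw^\sT\bM^*\bw$, whereas you expand the likelihood ratio $\frac{\de\P_{\bw_*}}{\de\P_0}$ under the null and compute $\bM^*$ directly as a matrix proportional to $d\bw_*\bw_*^\sT-\bI_d$. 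Your worry about the normalization constant is the right place to be careful; once $d\,t^2-1$ is matched to $Q_2(t)$ via $n_{d,2}$, both calculations give the same constant (up to the same $1+o_d(1)$ slack the paper uses), and the eigenstructure follows immediately.
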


\begin{proof} We have that $\E_{(y,r,\bz)\sim \sP_{\bw_*}}[\cT(y,r)\mid \bz]=\sum_{\ell=0}^{\infty} \beta_{d,\ell} Q_{\ell}(\<\bw_*,\bz\>)$. 
We now analyze $\bM^*$ through its quadratic form: for any $\bw\in \S^{d-1}$, consider
\begin{align*}
\bw^\sT \bM^* \bw&= \frac{1}{m}\sum_{i=1}^{m}\E_{(y_i,r_i,\bz_i)\sim \P_{\bw_*}}[\cT(y_i,r_i) d\bw^\sT(\bz_i\bz_i^\sT-\bI_d)\bw] \\
&= \E_{(y,r,\bz)\sim \P_{\bw_*}}[\cT(y,r) (d\<\bw,\bz\>^2-1)] \\
&= (1+o_d(1)) \E_{(y,r,\bz)\sim \P_{\bw_*}}[\cT(y,r)Q_2^{(d)}(\<\bw,\bz\>)] \\ 
&= (1+o_d(1)) \beta_{d,2} \frac{Q_{2}^{(d)}(\<\bw,\bw_*\>)}{\sqrt{n_{d,2}}},
\end{align*}
where we used the fact that $Q_2^{(d)}(z)=(1+o_d(1))(dz^2-1)$. As $Q_2^{(d)}(\cdot)$ has its maximum value at $1$. Clearly, the $\bw^\sT\bM^*\bw$ is maximized for $\bw=\bw_*$, and thus it is an eigenvector with the eigenvalue
$$\lambda_1^*=(1+o_d(1)) \beta_{d,2} Q_2^{(d)}(1)/\sqrt{n_{d,2}}=(1+o_d(1)) \beta_{d,2}\,.$$
It suffices to show that the other eigenvalues are of much lower magnitude. For any $\bw \perp \bw_*$, we have $Q_2^{(d)}(\<\bw,\bw_*\>)=Q_2^{(d)}(0)=(1+o_d(1)) (-1)$, and thus, for $ 2 \leq i \leq d $, we have
$$|\lambda_i^*|=(1+o_d(1))\frac{\beta_{d,2}}{\sqrt{n_{d,2}}} \lesssim \frac{\lambda_1^*}{d},$$
which concludes the proof of this lemma.
\end{proof}
Our goal is to ensure that the top eigenvectors $\hat{\bw}=\bv_1(\bM_m)$ and $\bw_*=\bv_1(\bM^*)$ are close to each other, when $m$ is chose sufficiently large. By Davis-Kahan's theorem, it suffices to show the concentration between the empirically estimated matrix $\bM_m$, and its expectation, in the following sense.
\begin{lemma} There exists a constant $C>0$ (only depending on $k$) such that, for any $\delta>0$, with 
$$m \geq C \frac{\kappa_\ell\,d}{\beta_{d,2}^2}\left(1+\beta_{d,2} \log^{k/2+2} \left(\frac{d}{\delta \beta^2_{d,2}}\right)\right),$$
we have that with probability $1-\delta$,
$$\norm{\bM_m-\bM^*}_\op \leq \frac{\lambda_1^*}{8},$$
where recall that $\lambda_1^*$ is the top eigenvalue of $\bM^*$ (see \Cref{lem:E[M*]=lambdaw*w*T}).
\end{lemma}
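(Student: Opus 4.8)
The plan is to decompose $\bM_m - \bM^*$ as an average of i.i.d.\ centered random matrices $\bA_i := \cT_\ell(y_i,r_i)(d\,\bz_i\bz_i^\sT - \bI_d) - \bM^*$ and apply a truncated matrix Bernstein inequality. The immediate difficulty is that the summands are heavy-tailed: the scalar $\cT_\ell(y_i,r_i)$ only has controlled $L^p$ norms (Requirement \ref{req:requirement-on-T}), and the matrix $d\,\bz_i\bz_i^\sT - \bI_d$ has operator norm of order $d$, so there is no deterministic almost-sure bound of the right size. First I would introduce a truncation level $\Lambda = \Lambda(d,\delta)$ (to be chosen of order $\kappa_\ell \log^{k/2}(d/(\delta\beta_{d,2}^2))$ for the scalar part, times the natural $\Theta(d)$ scale of the matrix part) and split each $\bA_i$ into a bounded part $\bA_i^{\le \Lambda}$ and a tail part $\bA_i^{> \Lambda}$. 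By a union bound over $i\in[m]$ using the $L^p$ tail bound $\norm{\cT_\ell}_p \le \kappa_\ell p^{k/2}$ (via Markov at the optimal $p \asymp \log(m/\delta)$), the tail parts vanish identically with probability $1-\delta/2$, so on that event $\bM_m - \bM^*$ coincides with the average of the truncated, re-centered matrices.

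Next I would apply matrix Bernstein to $\frac1m\sum_i (\bA_i^{\le\Lambda} - \E\bA_i^{\le\Lambda})$. This requires two inputs: (i) a bound on the operator norm of each truncated summand, which is $\|\bA_i^{\le\Lambda}\|_\op \lesssim \Lambda \cdot d$ after accounting for the $d$-scaling of $d\bz\bz^\sT - \bI_d$; and (ii) a bound on the matrix variance proxy $\sigma^2 := \big\|\sum_i \E[(\bA_i^{\le\Lambda})^2]\big\|_\op$. For (ii), the key computation is $\E[\cT_\ell(y,r)^2 (d\bz\bz^\sT - \bI_d)^2]$; using $\|\cT_\ell\|_2 = 1$ and the identity $\E_{\bz}[(d\bz\bz^\sT - \bI_d)^2] = \Theta(d)\,\bI_d$ (up to a rank-one correction along $\bw_*$), together with Cauchy–Schwarz to peel off $\cT_\ell^2$ against the $\Theta(d^2)$ entries of the matrix square (controlled again by the higher moments of $\cT_\ell$), one finds $\sigma^2 \lesssim m\,d\,\polylog$. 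Matrix Bernstein then yields, with probability $1-\delta/2$,
\[
\Big\|\tfrac1m\textstyle\sum_i(\bA_i^{\le\Lambda} - \E\bA_i^{\le\Lambda})\Big\|_\op \;\lesssim\; \sqrt{\frac{d\,\log(d/\delta)}{m}} \;+\; \frac{\Lambda\, d\,\log(d/\delta)}{m},
\]
plus a deterministic bias term $\|\E\bA_i^{\le\Lambda} - \E\bA_i\|_\op = \|\E\bA_i^{>\Lambda}\|_\op$ from the truncation, which is bounded using $\|\cT_\ell\|_p$ and shown to be lower order. Plugging in $\Lambda \asymp \kappa_\ell \log^{k/2}(d/(\delta\beta_{d,2}^2))$ and recalling from Lemma \ref{lem:E[M*]=lambdaw*w*T} that $\lambda_1^* = (1+o_d(1))\beta_{d,2}$, one checks that the requirement
\[
m \;\ge\; C\,\frac{\kappa_\ell\, d}{\beta_{d,2}^2}\Big(1 + \beta_{d,2}\log^{k/2+2}\!\big(\tfrac{d}{\delta\beta_{d,2}^2}\big)\Big)
\]
makes both the $\sqrt{d\log/m}$ term and the $\Lambda d\log/m$ term at most $\lambda_1^*/16$, so the total is at most $\lambda_1^*/8$.

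The main obstacle I anticipate is the bookkeeping in the variance bound (ii): one must correctly separate the $\Theta(d)$-dimensional bulk of $\E_\bz[(d\bz\bz^\sT-\bI_d)^2]$ from the $\Theta(d^2)$ spike, and verify that after weighting by $\cT_\ell^2$ (which is only $L^2$-normalized, not bounded) and applying Cauchy–Schwarz, the resulting variance proxy scales like $d\cdot\polylog$ rather than $d^2$ — otherwise the sample complexity would degrade to $d^2$. A secondary technical point is choosing the truncation threshold so that the $\log^{k/2+2}$ power (rather than $\log^{k/2}$) emerges: the extra two logarithmic factors come from (a) the union bound over $m \le \poly(d)$ samples and (b) optimizing the Markov exponent $p$ in the tail estimate. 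Once these are pinned down, the conclusion follows by combining the three high-probability events (tail truncation, matrix Bernstein, and the deterministic bias estimate) via a union bound. The closeness of $\hat\bw = \bv_1(\bM_m)$ to $\bw_* = \bv_1(\bM^*)$ then follows from Davis–Kahan together with the eigenvalue gap $|\lambda_1^*| - |\lambda_2^*| \gtrsim \lambda_1^*$ established in Lemma \ref{lem:E[M*]=lambdaw*w*T}.
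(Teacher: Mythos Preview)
Your overall architecture matches the paper's: write $\bM_m-\bM^*$ as an average of centered i.i.d.\ matrices, bound the matrix variance, and apply a Bernstein-type inequality. Two technical differences are worth flagging. First, the paper does not truncate and then apply standard matrix Bernstein; it invokes the refined version in Lemma~\ref{lem:super-matrix-bernstein}, which separates the matrix variance $\sigma^2=\|\E[(\bM_m-\bM^*)^2]\|_\op$ from the weak variance $\sigma_*^2=\sup_{\bu,\bv}\E[\langle\bu,(\bM_m-\bM^*)\bv\rangle^2]$ and yields $2\sigma+\sigma_* t^{1/2}+R^{1/3}\sigma^{2/3}t^{2/3}+Rt$ rather than $\sigma t^{1/2}+Rt$. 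Here $\sigma_*^2\lesssim \log\kappa_\ell/m$ is a factor $d$ smaller than $\sigma^2\lesssim d\log\kappa_\ell/m$, so this saves a $\sqrt{\log(d/\delta)}$ factor on the leading term; your standard-Bernstein route gives $\sqrt{d\log(d/\delta)/m}$ instead of $\sqrt{d/m}$, and the ``$1$'' inside the parenthesis of the stated sample bound would pick up an extra logarithm. Second, for the variance itself the paper does not decompose into bulk and spike or use Cauchy--Schwarz: it writes $\bw^\sT\E[\bM_1^2]\bw=\E[\cT^2 g(\bz)]$ with $g(\bz)=(d^2-2d)\langle\bw,\bz\rangle^2+1$, notes $\|g\|_{L^2}\lesssim d$ and $\|g\|_{L^p}\lesssim pd$ by hypercontractivity, and then applies Lemma~\ref{lem:super-Cauchy-Schwarz} to get $\E[\cT^2 g]\lesssim d\log\kappa_\ell$ directly. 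Your plan would prove a version of the lemma with a mildly worse logarithmic dependence; to match the statement exactly you need these two refinements. (The Davis--Kahan step you mention is the downstream application of the lemma, not part of its proof.)
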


\begin{proof}
Our goal is to use \Cref{lem:super-matrix-bernstein}, with $\bY_i=\frac{1}{m}\left( \cT(y_i,r_i)(d\bz_i\bz_i^\sT-\bI_d)-\bM^*\right) \in \R^{d \times d}$ which are zero mean, and thus, $\bY=\bM_m-\bM^*$. Let us bound the each quantity of interest
\begin{align*}
    \sigma^2 &= \norm{\E[ (\bM_m-\bM^*)^2}_2= \frac{1}{m} \norm{\E[ (\bM_1 -\bM^*)^2}_2\leq \frac{2}{m} \norm{\E[ \bM_1^2]}_2\\
     &\leq \frac{2}{m} \sup_{\bw\in \S^{d-1}} \, \bw^\sT\E[ \bM_1^2] \bw\\
    & = \frac{2}{m} \sup_{\bw \in \S^{d-1}}\E_{(y,r,\bz) \sim \P_{\bw_*}} \bw^\sT \left[\cT(y,r)^2 \left( (d^2-2d) \bz\bz^\sT+\bI_d \right)\right] \bw\\
&= \frac{2}{m} \sup_{\bw \in \S^{d-1}}\E_{(y,r,\bz) \sim \P_{\bw_*}} \left[\cT(y,r)^2 ((d^2 -2d)\<\bw,\bz\>^2+1) \right].
\end{align*}
We now note that $g(\bz)=(d^2-2d)\<\bw,\bz\>^2+1$ is a polynomial of degree 2 with $\norm{g(\bz)}_{L^2(\tau_d)} \lesssim d $\,. Therefore using spherical hypercontractivity (\Cref{lem:Hypercontractivity}), we have $\norm{g(\bz)}_{L^p(\tau_d)} \lesssim p \cdot d \,.$ Applying \Cref{lem:super-Cauchy-Schwarz} then gives us
\begin{align*}
    \E_{(y,r,\bz) \sim \P_{\bw_*}} \left[\cT(y,r)^2 \cdot ((d^2-2d) \cdot \<\bw,\bz\>^2 +1)\right] &
    \lesssim d \cdot \norm{\cT}_2^2 \cdot \max\left( 1,\log\left( \frac{\norm{\cT}_4}{\norm{\cT}_2}\right)\right) \lesssim d \log(\kappa_\ell) 
\end{align*}
where we used the fact that $\norm{\cT}_2^2 = 1$ and $\norm{\cT}_4 \lesssim \log(\kappa_\ell)$. We finally obtain that
$$ \sigma \lesssim \sqrt{\frac{d \log\kappa_\ell}{m}}\,.$$
We next analyze the other variance term using similar idea:
\begin{align*}
    \sigma_*^2 & := \sup_{\bu,\bv \in \S^{d-1}}\E\left[  (\bu^\sT(\bM_m-\bM^*)\bv)^2\right] = \frac{1}{m} \sup_{\bu,\bv \in \S^{d-1}}\E\left[  (\bu^\sT(\bM_1-\bM)\bv)^2\right] \\
    &\leq \frac{4}{m} \sup_{\bu,\bv \in \S^{d-1}} \E[(\bu^\sT\bM_1\bv)^2] \\
    & \leq \,\frac{4}{m}\,  \sup_{\bu,\bv \in \S^{d-1}} \E[\cT(y,r)^2 (d \<\bu,\bz\>\<\bv,\bz\>-\<\bu,\bv\>)^2].
\end{align*}
We would like to use \Cref{lem:super-Cauchy-Schwarz} to bound the expectation, for which we will first obtain a tight bound on all the moments of $g(\bz):=( d\<\bu,\bz\>\<\bv,\bz\>-\<\bu,\bv\>)^2$. Let us compute
\begin{align*}
    \norm{g}_{L^2(\tau_d)}& \lesssim \E[d^4\<\bu,\bz\>^4\<\bv,\bz\>^4+1]^{1/2}= (d^4\E[\<\bu,\bz\>^4 \<\bv,\bz\>^4]+1)^{1/2} \\
    & \leq  (d^4 \sqrt{\E[\<\bu,\bz\>^8]\E[\<\bv,\bz\>^8] }+1)^{1/2}=  (d^4 \cdot \E[z_1^8]+1)^{1/2}\lesssim 1\,\,.
\end{align*}
In the above, we used the Cauchy-Schwarz inequality, rotational invariance of $\tau_d$, and $\E[z_1^8] \lesssim 1/d^4 $ respectively. Using hypercontractivity (\Cref{lem:Hypercontractivity}), we have $\norm{g}_{L^p(\tau_d)} \lesssim (p-1)^2 $. We now use \Cref{lem:super-Cauchy-Schwarz} to conclude that
\[
\begin{aligned} \sigma_* \lesssim&~ \sqrt{\frac{1}{m}\sup_{\bu,\bv\in \S^{d-1}}\E[\cT(y,r)^2 (d \<\bu,\bz\>\<\bv,\bz\>-\<\bu,\bv\>)^2]} \\
\lesssim&~ 
 \sqrt{\frac{\norm{\cT}_2^2\cdot \max\left(1, \log(\frac{\norm{\cT}_4}{\norm{\cT}_2})\right)}{m}} \lesssim \sqrt{\frac{\log(\kappa_\ell)}{m}}\,,
\end{aligned}
\]
where again we used that $\norm{\cT}_4 \lesssim \kappa_\ell$. Our next goal is to compute $\Bar{R}=\E\left[ \max_{i\in [n]}\norm{\bY_i}_2^2 \right]^{1/2}$, where $\bY_i=\frac{1}{m}(\cT(y_i,r_i)d \bz_i\bz_i^\sT-\bM^*)$. And thus, $\norm{\bY_i}_2 \leq \frac{1}{m}(d|\cT(y_i,r_i)|+\norm{\bM^*}_2) \lesssim \frac{1}{m}(d|\cT(y_i,r_i)|+\beta_{d,2}) \,,$ using \Cref{lem:E[M*]=lambdaw*w*T}. For any $p \geq 3$, bounding the $p^{\mathrm{th}}$ moment
$$ \E[\norm{\bY_i}_2^p]^{1/p} \lesssim \frac{1}{m}\left(d\norm{\cT}_p+\beta_{d,2}\right) \lesssim \frac{d \, \kappa_\ell \, p^{k/2}}{m} \,,$$
Using \Cref{lem:max-of-n-iid-rv}, we have
$$ \Bar{R} = \E\left[\max_{i \in [m]} \norm{\bY_i}_2^2\right]^{1/2} \lesssim \frac{d \cdot\kappa_\ell\, \log^{k/2} m}{m}\,.$$
The threshold for choosing $R$ is:
$$\sigma^{1/2}\Bar{R}^{1/2}+\sqrt{2}\Bar{R} \lesssim \sqrt{\left(\frac{d \log \kappa_\ell}{m}\right)^{1/2} \cdot \frac{d\, \kappa_\ell\, \log^{k/2} m}{  m}}+\frac{d\,\kappa_\ell\, \log^{k/2} m}{ m } \lesssim \frac{d \, \kappa_\ell\, \log^{k/2} m }{m}\,,$$
where in the last step, we used the fact that we are in the regime $m \geq d $, only keeping the dominant term. Therefore, for any $\delta \geq 0$ we can choose some $R$ that satisfies 
$$R\lesssim\frac{d\,\kappa_\ell \log^{k/2}(m/\delta)}{m} \quad \quad \text{ and }\quad \quad  \P(\max_{i\in [m]}\norm{\bY_i}_2 \geq R) \leq \delta/2\,,$$
where in the last inequality, we used \Cref{lem:max-of-n-iid-rv}. Finally, we apply \Cref{lem:super-matrix-bernstein}. With probability $1-\delta/2-de^{-t}$, we have
$$\norm{\bM_m-\bM^*}_{\op} \lesssim \sqrt{\frac{d \log(\kappa_\ell)}{m}}+t^{1/2}\sqrt{\frac{\log \kappa_\ell}{m}}+\left(\frac{d \,\kappa_\ell \log^{k/2}(m/\delta)}{m }  \cdot\frac{d \log\kappa_\ell}{m} \right)^{1/3} \hspace{-2mm}t^{2/3}+\frac{d \kappa_\ell \log^{k/2}(m/\delta)}{m} t \,.$$
Choosing $t=\log(2d/\delta)$, we obtain with probability $1-\delta$,
\begin{align*}
    \norm{\bM_m-\bM^*}_{\op} &\lesssim \sqrt{\frac{d\log \kappa_\ell}{m}}+\sqrt{\frac{ \log(\kappa_\ell)\log(d/\delta)}{m}}+\left(\frac{d^2\kappa_\ell \log\kappa_\ell \, \log^{k/2}(m/\delta) \log^2(d/\delta)}{m^2} \right)^{1/3} \\
    & \quad +\frac{d \kappa_\ell  \log^{k/2}(m/\delta)\log(d/\delta)}{m} \,.
\end{align*}
Therefore, there exists a constant $C>0$ such that, for  
$$m_0= \frac{C\,\kappa_\ell \,d }{\beta_{d,2}^2 }\,\left(1+ \beta_{d,2}\log^{k/2+2}\left(\frac{d}{\delta \,\beta_{d,2}^2}\right)\right), $$  any $m \geq m_0$, with probability $1-\delta$, 
$$\norm{\bM_m-\bM^*}_{\op} \leq \frac{\beta_{d,2}}{16} \leq \frac{\lambda_{1}^*}{8} ,$$
which concludes the proof.
\end{proof}

\paragraph{Proof of \Cref{thm:upper-bound-spectral}: Spectral algorithm, case $\ell=2$:}  For any transformation $\cT_\ell$ satisfying requirement \Cref{req:requirement-on-T}, we have that choosing $\sm$ sufficiently large that is 
$$\sm \leq \frac{C\,\kappa_\ell \,d }{\beta_{d,2}^2 }\,\left(1+ \beta_{d,2}\log^{k/2+2}\left(\frac{d}{\delta \,\beta_{d,2}^2}\right)\right)\,$$
by Davis and Kahn's theorem, we have
$$\min_{s\in \{\pm 1\}} \norm{s{\bv_1(\bM_\sm)}-\bv_1(\bM^*)}_2 \leq \frac{\norm{\bM_\sm-\bM^*}_{\op}}{|\lambda_1^*-\lambda_2^*|} \,.$$
According to \Cref{lem:E[M*]=lambdaw*w*T}, this corresponds to
$$\min_{s\in \{\pm 1\}} \norm{s\hat{\bw}-\bw_*}_2 \leq \frac{\lambda_1^*/8}{(1+o(1))\, \lambda_1^*} \leq \frac{1}{4} \,.$$
Rearranging terms, we obtain $|\<\hat{\bw},\bw_*\>|\geq 1/4\,.$ Finally, the above sample complexity bound of $\sm$ simplifies to the one provided in \Cref{thm:upper-bound-spectral} under the stronger \Cref{ass:non-linearity}.
\subsection{Analysis for \texorpdfstring{$\ell=1$}{}}
We now analyze the case $\ell=1$ case, where the analysis is even simpler and follows by concentration of vector to its expected value. For simplicity, we will denote $\cT=\cT_1$. Let us first evaluate the expectation of the vector statistic $\bv_m$ computed by the algorithm. 
\begin{lemma}\label{lem:expectation-of-vector-ell=1} We have that $\E[\hat{\bv}_m]=\beta_{d,1} \cdot \bw_*$\,\,.
\end{lemma}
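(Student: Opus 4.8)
The plan is to reduce the claim to a one-dimensional Gegenbauer identity via symmetry. First, by linearity of expectation and the i.i.d.\ assumption, $\E[\hat\bv_m]=\sqrt{d}\,\E_{(y,r,\bz)\sim\P_{\bw_*}}[\cT(y,r)\,\bz]$, so it suffices to evaluate $\E[\cT(y,r)\,\bz]$. I would condition on the direction $\bz$: since $y\mid(r,\bz)$ depends on $\bz$ only through $\<\bw_*,\bz\>$, the conditional mean $\bz\mapsto\E[\cT(y,r)\mid\bz]$ is a function of $\<\bw_*,\bz\>$ alone, hence by completeness of the Gegenbauer system in $L^2([-1,1],\tau_{d,1})$ it expands as $\E[\cT(y,r)\mid\bz]=\sum_{\ell\geq0}\beta_{d,\ell}\,Q_\ell(\<\bw_*,\bz\>)$ with exactly the coefficients $\beta_{d,\ell}$ of Requirement~\ref{req:requirement-on-T} (indeed $\beta_{d,\ell}=\E[\cT(y,r)Q_\ell(\<\bw_*,\bz\>)]$). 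Because $\|\cT\|_2=1$ this series converges in $L^2(\nu_d)$, which legitimizes exchanging the sum with the (finite-dimensional) expectation over $\bz$ in the next step.

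Then the remaining task is to compute $\E_{\bz\sim\tau_d}[\bz\,Q_\ell(\<\bw_*,\bz\>)]$ for each $\ell$. By equivariance---the map $\bz\mapsto\bz\,Q_\ell(\<\bw_*,\bz\>)$ is preserved under every rotation fixing $\bw_*$---this vector must be a scalar multiple of $\bw_*$, say $c_\ell\bw_*$, and taking the inner product with $\bw_*$ gives $c_\ell=\E_{z\sim\tau_{d,1}}[z\,Q_\ell(z)]$. Using the normalization $\E_{z\sim\tau_{d,1}}[z^2]=1/d$, the (unit-$L^2$-norm) degree-one Gegenbauer polynomial is $Q_1(z)=\sqrt{d}\,z$, so $z=Q_1(z)/\sqrt{d}$ and orthonormality of $\{Q_\ell\}$ in $L^2(\tau_{d,1})$ yields $c_\ell=\tfrac{1}{\sqrt{d}}\<Q_1,Q_\ell\>_{L^2(\tau_{d,1})}=\delta_{1\ell}/\sqrt{d}$. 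Substituting back, only the $\ell=1$ term of the series survives, and $\E[\hat\bv_m]=\sqrt{d}\cdot\beta_{d,1}\cdot\tfrac{1}{\sqrt{d}}\,\bw_*=\beta_{d,1}\,\bw_*$.

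I do not expect a serious obstacle: the computation is a few lines of bookkeeping. The only points needing minor care are (i) the normalization of $Q_1$ (equivalently the constant relating $z$ and $Q_1(z)$), and (ii) checking that the coefficients produced by the conditional-expectation expansion coincide with the $\beta_{d,\ell}$ of Requirement~\ref{req:requirement-on-T}, not some rescaled variant---both follow immediately from the unit-$L^2$-norm convention for Gegenbauer polynomials fixed in the paper. If one prefers to sidestep the symmetry argument, the identity $\E_{\bz}[\bz\,Q_\ell(\<\bw_*,\bz\>)]=\delta_{1\ell}\,\bw_*/\sqrt{d}$ can instead be read off from the addition formula $Q_\ell(\<\bw_*,\bz\>)=n_{d,\ell}^{-1/2}\sum_s Y_{\ell s}(\bw_*)Y_{\ell s}(\bz)$ together with the fact that the coordinate functions $\bz\mapsto z_j$ form a basis of $V_{d,1}$ and are orthogonal to all $V_{d,\ell}$ with $\ell\neq1$.
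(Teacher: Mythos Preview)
Your proposal is correct and follows essentially the same approach as the paper: both expand $\E[\cT(y,r)\mid\bz]=\sum_{\ell\ge 0}\beta_{d,\ell}Q_\ell(\<\bw_*,\bz\>)$ and then reduce to the orthogonality of Gegenbauer polynomials. The only cosmetic difference is that the paper works coordinate-by-coordinate, writing $\sqrt{d}\,z_i=Q_1(\<\bz,\be_i\>)$ and invoking the reproducing identity $\E_{\bz}[Q_\ell(\<\bw_*,\bz\>)Q_1(\<\bz,\be_i\>)]=\delta_{\ell 1}\,Q_1(\<\bw_*,\be_i\>)/\sqrt{n_{d,1}}$, whereas you use rotational symmetry to reduce to the one-dimensional inner product $\E_{z\sim\tau_{d,1}}[zQ_\ell(z)]$; these are two packagings of the same fact, and your alternative via the addition formula is precisely the paper's route.
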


\begin{proof}
For any $i \in [d]$,
    \begin{align*}
    \E[\bv_m]_i & = \E_{(y,r,\bz)\sim \P_{\bw_*}}[\cT(y,r)Q_1^{(d)}(z_i)]=  \E_{(y,r,\bz)\sim \P_{\bw_*}}[\cT(y,r)Q_1^{(d)}(\<\bz,\be_i\>)]\\
    &=\sum_{\ell=0}^{\infty} \beta_{d,\ell}\E_{(y,r,\bz)\sim \P_{\bw_*}}[Q_{\ell}^{(d)}(\<\bw_*,\bz\>)Q_1^{(d)}(\<\bz,\be_i\>)]= \beta_{d,1} Q_{1}(\<\bw_*,\be_i\>)/\sqrt{n_{d,1}} \\
     &=\beta_{d,1} \cdot (\bw_*)_i \,.
    \end{align*}
We conclude that $\E[\bv_m]=\beta_{d,1}\bw_*$.
\end{proof}
We now show that for sufficiently large sample size, our final estimator $\hat{\bw}$ has the desired overlap with the ground-truth $\bw_*$ via concentration arguments.
\begin{lemma}\label{lem:spectral-ell=1-overlap-lemma} There exists a universal constant $C>0$ such that for any $\delta>0$ and any
$$  m\geq \frac{C\kappa_\ell \sqrt{d}}{\beta_{d,1}^2}\left(1+\frac{1}{\sqrt{d}}\log^{(k+1)/2}\left(\frac{d\kappa_\ell}{\delta\beta_{d,1}}\right)\right)\, \quad \text{ and } \quad   m\geq \frac{C\kappa_\ell d}{\beta_{d,1}^2}\left(1+\frac{1}{d}\log^{(k+1)/2}\left(\frac{d\kappa_\ell}{\delta\beta_{d,1}}\right)\right)\,,$$
respectively, with probability $1-\delta$, we have
$$\frac{\<\bv_m, \bw_* \>}{\norm{\bv_m}_2} \geq \frac{d^{-1/4}}{4} \quad \text{ and } \quad \frac{\<\bv_m, \bw_* \>}{\norm{\bv_m}_2} \geq \frac{1}{4}\,. $$

\end{lemma}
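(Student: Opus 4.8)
The plan is to prove Lemma~\ref{lem:spectral-ell=1-overlap-lemma} by a straightforward concentration-of-vectors argument. By Lemma~\ref{lem:expectation-of-vector-ell=1} we know $\E[\hat\bv_m] = \beta_{d,1}\bw_*$, so it suffices to show that $\|\hat\bv_m - \beta_{d,1}\bw_*\|_2$ is small compared to $\beta_{d,1}$ (for the constant-overlap claim) or compared to $\beta_{d,1}/d^{1/4}$ (for the weak-overlap claim that then gets boosted). Indeed, writing $\hat\bv_m = \beta_{d,1}\bw_* + \bdelta$ with $\|\bdelta\|_2 \leq \eta\beta_{d,1}$, one has $\<\hat\bv_m,\bw_*\> \geq \beta_{d,1}(1-\eta)$ and $\|\hat\bv_m\|_2 \leq \beta_{d,1}(1+\eta)$, so $\<\hat\bv_m,\bw_*\>/\|\hat\bv_m\|_2 \geq (1-\eta)/(1+\eta)$, which is $\geq 1/4$ for $\eta$ a small enough constant, and similarly $\geq d^{-1/4}/4$ once $\eta \lesssim d^{-1/4}$.

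The core step is therefore a high-probability bound on $\|\hat\bv_m - \E\hat\bv_m\|_2$. I would write $\hat\bv_m - \E\hat\bv_m = \frac1m\sum_{i\in[m]}\bY_i$ where $\bY_i = \cT(y_i,r_i)\sqrt d\,\bz_i - \beta_{d,1}\bw_*$ are i.i.d., mean-zero, and apply a vector Bernstein inequality (the vector analogue of the matrix Bernstein lemma cited in the $\ell=2$ analysis, e.g. \Cref{lem:super-matrix-bernstein} applied to $1\times d$ ``matrices'', or a standard truncated-Bernstein bound for sums of random vectors). This requires two ingredients: (i) a variance proxy $\sigma^2 = \frac1m\E\|\bY_1\|_2^2 \leq \frac dm\E[\cT(y,r)^2\|\bz\|_2^2] = \frac dm\E[\cT(y,r)^2] = \frac dm$, using $\|\bz\|_2 = 1$ and $\|\cT\|_{L^2}=1$; and (ii) a tail/moment bound on $\|\bY_1\|_2 \lesssim \sqrt d\,|\cT(y,r)| + \beta_{d,1}$, for which Requirement~\ref{req:requirement-on-T} gives $\E[\|\bY_1\|_2^p]^{1/p} \lesssim \sqrt d\,\kappa_\ell\,p^{k/2}$, and then \Cref{lem:max-of-n-iid-rv} controls $\max_{i\in[m]}\|\bY_i\|_2 \lesssim \sqrt d\,\kappa_\ell\log^{k/2}(m/\delta)$ with probability $1-\delta/2$. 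Plugging into the truncated Bernstein bound yields, with probability $1-\delta$,
\[
\|\hat\bv_m - \E\hat\bv_m\|_2 \;\lesssim\; \sqrt{\frac dm}\,\sqrt{\log(1/\delta)} \;+\; \frac{\sqrt d\,\kappa_\ell\log^{k/2}(m/\delta)\log(1/\delta)}{m}.
\]
Setting the right-hand side $\leq \eta\beta_{d,1}$ with $\eta$ a small constant forces $m \gtrsim \kappa_\ell d/\beta_{d,1}^2$ (with log corrections from the second term), giving the second, constant-overlap claim; setting $\eta = d^{-1/4}/C$ forces $m \gtrsim \kappa_\ell\sqrt d/\beta_{d,1}^2$ (again up to log corrections), giving the first claim. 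The $(1 + d^{-1/2}\log^{(k+1)/2}(\cdots))$ and $(1 + d^{-1}\log^{(k+1)/2}(\cdots))$ factors in the statement come from carefully balancing the sub-Gaussian term against the lower-order ``max'' term in each regime.

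The main obstacle — though it is a mild one — is getting the dependence on $\delta$ and the log powers exactly right, since the heavy tails of $\cT$ (only $\|\cT\|_p \lesssim \kappa_\ell p^{k/2}$, not boundedness) force the use of a \emph{truncated} Bernstein inequality: one truncates $\bY_i$ at level $R \asymp \sqrt d\,\kappa_\ell\log^{k/2}(m/\delta)$, bounds the sum of truncated variables by Bernstein, and bounds the (rare) truncation event by \Cref{lem:max-of-n-iid-rv}. This is exactly the bookkeeping already carried out in the $\ell=2$ case, and I would mirror that argument, replacing the operator-norm matrix Bernstein step with its simpler vector counterpart. Under the stronger \Cref{ass:non-linearity} (i.e.\ $k=0$, $\|\cT\|_\infty\leq\kappa_\ell$, $\beta_{d,1}\geq\|\xi_{d,1}\|_{L^2}/\kappa_\ell$) the log powers collapse and one recovers the clean bounds $m \gtrsim \kappa_\ell^2 d/\|\xi_{d,1}\|_{L^2}^2$ and $m \gtrsim \kappa_\ell^2\sqrt d/\|\xi_{d,1}\|_{L^2}^2$ stated in \Cref{thm:upper-bound-spectral}.
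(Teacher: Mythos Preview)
Your approach works for the second (constant-overlap) claim but has a genuine gap for the first ($d^{-1/4}$-overlap) claim. The issue is that a global bound on $\|\hat\bv_m-\E\hat\bv_m\|_2$ is too coarse to extract the $\sqrt d/\beta_{d,1}^2$ sample complexity. With $m\asymp\sqrt d/\beta_{d,1}^2$, your own variance estimate gives $\|\hat\bv_m-\bv^*\|_2\asymp\sqrt{d/m}=\beta_{d,1}d^{1/4}\gg\beta_{d,1}$, so the inequality $\<\hat\bv_m,\bw_*\>\geq\beta_{d,1}-\|\bdelta\|_2$ yields a negative number and nothing can be concluded about the overlap. (Relatedly, the sentence ``setting $\eta=d^{-1/4}/C$ forces $m\gtrsim\kappa_\ell\sqrt d/\beta_{d,1}^2$'' is arithmetically backwards: making $\eta$ smaller in $\sqrt{d/m}\leq\eta\beta_{d,1}$ requires \emph{more} samples, namely $m\gtrsim d^{3/2}/\beta_{d,1}^2$.)

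The missing idea, which the paper exploits, is to decompose $\hat\bv_m-\bv^*$ into its component along $\bw_*$ and its component $\bv_m^\perp$ orthogonal to $\bw_*$, and to control them \emph{separately}. The parallel component is a one-dimensional sum with variance $O(1/m)$ (not $d/m$), so already at $m\asymp\sqrt d/\beta_{d,1}^2$ one has $|\<\hat\bv_m-\bv^*,\bw_*\>|\lesssim 1/\sqrt m\asymp\beta_{d,1}/d^{1/4}\ll\beta_{d,1}$, hence $\<\hat\bv_m,\bw_*\>\approx\beta_{d,1}$. Only $\|\bv_m^\perp\|_2$ is large ($\asymp\beta_{d,1}d^{1/4}$), and this enters only the denominator $\|\hat\bv_m\|_2$, giving $\<\hat\bv_m,\bw_*\>/\|\hat\bv_m\|_2\gtrsim\beta_{d,1}/(\beta_{d,1}d^{1/4})=d^{-1/4}$. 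The paper further handles $\|\bv_m^\perp\|_2$ by a decoupling trick (rewriting $(\bz)_{-1}$ as $\sqrt{1-z_1^2}\,\tbz$ with $\tbz\sim\tau_{d-1}$ independent of $(y,r,z_1)$) so that, conditionally on the coefficients, $\bv_m^\perp$ is sub-Gaussian, and then concentrates the random variance parameter $\sigma_*^2=\tfrac{1}{m^2}\sum_i\cT(y_i,r_i)^2/(1-z_{i,1}^2)$ separately. Your vector-Bernstein proposal would recover the same $\|\bv_m^\perp\|_2$ bound but does not by itself supply the crucial $O(1/\sqrt m)$ control on the signal direction.
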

\begin{proof} Denote $\bv^*=\E[\bv_m]$ and define $X_i:=\frac{1}{m}(\cT(y,r)\sqrt{d}\<\bz_i,\bw_*\>-\<\bv^*,\bw_*\>)$. Calculating the variance
\begin{align*}
   \sigma^2=\sum_{i=1}^m \E[X_i^2] \,\,&\lesssim\,\,  \frac{1}{m} \E_{(y,r,\bz)\sim \P_{\bw_*}}[\cT(y,r)^2d\<\bz,\bw_*\>^2]\,\,
    \lesssim \frac{\log \kappa_\ell}{m},
\end{align*}
where in the last inequality we used \Cref{lem:super-Cauchy-Schwarz} and the fact that $\norm{\cT}_2=1$ and $g(\bz)=d\<\bw_*,\bz\>^2$ is a polynomial of degree two with $\norm{g}_2\lesssim 1$. Moreover, for any $p \geq 2$
$$\norm{X_i}_p \leq \frac{1}{m}\left(\E[|\cT(y_i,r_i)|^p |Q_{1}(\<\bw_*,\bz_i\>)|^p]^{1/p}+\beta_{d,1} \right) \lesssim \frac{(\norm{\cT}_{2p} \norm{Q_1}_{2p} + \beta_{d,1})}{m} \lesssim \frac{\kappa_\ell \, p^{(k+1)/2}}{m} \,,$$
where in the last inequality, we used $\<\bv^*,\bw^*\> = \beta_{d,1} \leq 1$ (cf. \Cref{lem:expectation-of-vector-ell=1}) and $\norm{\cT}_{2p} \leq \kappa_\ell \, (2p)^{k/2}$ and $\norm{Q_1}_{2p}\leq\sqrt{2p}$ by hypercontractivity (\Cref{lem:Hypercontractivity}). Applying \Cref{lem:sum-of-mean-zero}, with probability $1-\delta$,
$$|\<\bv_m-\bv^*, \bw_*\>| \lesssim \sqrt{\frac{ \log(\kappa_\ell) \log(1/\delta) }{m}}+ \frac{\kappa_\ell\log(1/\delta)\log^{(k+1)/2}(m/\delta)}{m}\,.$$
Therefore, there is a constant $C>0$ such that with any $m\geq C \kappa_\ell \sqrt{d}/\beta_{d,1}^2$, with probability $1-e^{-d^c}$ for small enough $c>0$
\begin{equation}\label{eq:step1-spectral}
    |\<\bv_m-\bv^*, \bw_*\>| \leq \frac{\beta_{d,1}}{4}
\end{equation}
Now let $\bv_m^{\perp}=\bv_m-\<\bv_m, \bw_*\>\bw_*$ be the component of $\bv_m$ orthogonal to $\bw_*$. Our goal is to find a high probability bound on $\norm{\bv_m^\perp}_2$.
Due to spherical symmetry, w.l.o.g., fix $\bw_*=\be_1$ and so $S\sim \P_{\be_1}^m$, and the norm of the desired vector is given by 
$$\norm{\bv_m^\perp}_2=\sqrt{\sum_{j=2}^d (\bv_m)_j^2} \text{ where }\,\, \bv_m^\perp=\frac{1}{m}\sum_{i=1}^{m} \cT(y_i,r_i)\sqrt{d} (\bz_i)_{-1}\,.$$
Observe that $\bv_m^\perp$ is a linear combination of $m$ i.i.d. vectors, however, the coefficients of linear combinations $\cT(y_i,r_i)$ are not independent from the vectors $(\bz_i)_{-1}$ themselves, since it is coupled with $z_{i,1}$. We will decouple the laws and make coefficients independent of the vectors. To this end, consider $(y,r,\bz)\sim \P_{\be_1}$ and $\tbz\sim \tau_{d-1}$ independent of $(y,r,\bz)$. Then the following two random variables have identical laws:
$$\cT(y,r)\sqrt{d}(\bz)_{-1} \equiv \frac{\cT(y,r)}{\sqrt{1-z_1^2}} \sqrt{d}\tbz$$
Using such argument for each of $m$ samples, and $\bv_m^\perp$ viewed as a random vector of variables $(\sqrt{d}\tbz_i)_{i\in [m]}$ is sub-gaussian with variance parameter 
$\sigma_*^2 \leq \frac{1}{m^2}\sum_{i=1}^m \frac{\cT(y_i,r_i)^2}{(1-z_{i,1}^2)}$. Thus, with probability $1-2e^{-d}$, we have $\norm{\bv_m^\perp}_2 \lesssim \sigma_* \sqrt{d}$. Therefore, it suffices to bound $\sigma_*$. By \Cref{lem:sum-of-mean-zero}, for any $\delta>0$ such that $\log(1/\delta)<cd$ for some $c>0$, we have with probability $1-\delta/2$, 
$$\sigma_*^2 \lesssim \E[\sigma_*^2]+\frac{\sqrt{\kappa_\ell \log \kappa_\ell}}{m^{1.5}}\sqrt{\log(1/\delta)}+\frac{\kappa_\ell^2 \log(1/\delta) \log(m/\delta)^k}{m^2} \,.$$

Here, the condition $\log(1/\delta)<cd$ arises from the fact the function $g(z)=1/(1-z_1^2)$ has $\norm{g}_p \lesssim 1$ only for some $p < cd$ for some universal constant $c>0$. 
$$\sigma_*^2 \lesssim  \frac{\log \kappa_\ell}{m}+\frac{\sqrt{\kappa_\ell \log \kappa_\ell}}{m^{1.5}}\sqrt{\log(1/\delta)}+\frac{\kappa_\ell^2 \log(1/\delta) \log(m/\delta)^k}{m^2} \,.$$
Finally, for any $\delta<e^{-d^c}$, choosing sample size
\begin{equation}\label{eq:sample-for-ell1-spectral}
    m\geq \frac{C\kappa_\ell \sqrt{d}}{\beta_{d,1}^2}\left(1+\frac{1}{\sqrt{d}}\log^{(k+1)/2}\left(\frac{d\kappa_\ell}{\delta\beta_{d,1}}\right)\right)\,,
\end{equation}
with probability $1-\delta/2-e^{-2d}$
\begin{align*}
    \sigma_*^2 \lesssim \frac{\beta_{d,1}^2}{C\sqrt{d}},\quad \text{ and thus, } \quad \norm{\bv_m^\perp} \lesssim \sigma_* \sqrt{d}\lesssim \frac{\beta_{d,1} \sqrt{d}}{\sqrt{C \sqrt{d}}} \leq\frac{\beta_{d,1} d^{1/4}}{\sqrt{C}} \,.
\end{align*}
For $C>1$ sufficiently large, we obtain with probability $1-\delta/2-e^{-2d}$, 
$$ \norm{\bv_m^\perp}_2 \leq \beta_{d,1} d^{1/4} $$
Combining this with \eqref{eq:step1-spectral}, with probability $1-\delta/2-e^{-2d}-e^{-d^c}$
$$\norm{\bv_m-\bv^*}_2 \leq 2 \beta_{d,1} d^{1/4}\,.$$
Therefore, we finally analyze our overlap combining with \eqref{eq:step1-spectral}. For $C>0$ sufficiently large, for any $\delta>0$, for any $ m\geq \frac{C\kappa_\ell \sqrt{d}}{\beta_{d,1}^2}\left(1+\frac{1}{\sqrt{d}}\log^{(k+1)/2}\left(\frac{\kappa_\ell\, d}{\beta_{d,1} \delta}\right)\right)\,$, with probability $1-\delta$, 
$$ \frac{\<\bv_m,\bw_*\>}{\norm{\bv_m}_2} \geq \frac{\<\bv^*,\bw_*\>+\<\bv_m-\bv^*,\bw_*\>}{\norm{\bv^*}_2+\norm{\bv_m-\bv^*}_2} \geq \frac{\beta_{d,1}-\beta_{d,1}/4}{\beta_{d,1}+ 2\beta_{d,1} d^{1/4}} \geq \frac{d^{-1/4}}{4}\,.$$ 
Similarly, if in Eq.\eqref{eq:sample-for-ell1-spectral}, we instead chose a sample of size $m \geq \frac{C\kappa_\ell\, d}{\beta_{d,1}^2}\left(1+ \frac{1}{d}\log^{(k+1)/2}\left(\frac{\kappa_\ell \, d}{\delta \,\beta_{d,1}} \right) \right)$ for sufficiently large $C>1$, then we obtain a tighter control on $\norm{\bv_m-\bv^*}_2$, and directly achieve weak recovery. With probability $1-\delta/2-e^{-2d}-e^{-d^c}$
$$\norm{\bv_m^\perp} \lesssim \sigma_* \sqrt{d}\lesssim \frac{\beta_{d,1} \sqrt{d}}{\sqrt{C d}} \leq\frac{\beta_{d,1}}{\sqrt{C}} \quad \text{ and } \quad \norm{\bv_m-\bv^*}_2 \leq 2 \beta_{d,1} \,. $$
Combining this with \eqref{eq:step1-spectral}, with probability $1-\delta$
$$ \frac{\<\bv_m,\bw_*\>}{\norm{\bv_m}_2} \geq \frac{\<\bv^*,\bw_*\>+\<\bv_m-\bv^*,\bw_*\>}{\norm{\bv^*}_2+\norm{\bv_m-\bv^*}_2} \geq \frac{\beta_{d,1}-\beta_{d,1}/4}{\beta_{d,1}+ 2\beta_{d,1} } =\frac{1}{4}\,.$$
\end{proof}
Note that the regime of interest where we can hope to succeed in polynomial sample and runtime is when $\norm{\xi_{d,1}}_{L^2} \gg \poly(d)^{-1}$ i.e. which corresponds to $\beta_{d,1} \gg \poly(d)^{-1}$ for $\cT$ from \Cref{req:requirement-on-T}. In this regime, \Cref{lem:spectral-ell=1-overlap-lemma} establishes that with sample complexity 
$$\sm \leq \frac{C \kappa_\ell \sqrt{d}}{\beta_{d,1}^2}\sqrt{\log(1/\delta)} \quad \text{ and } \quad \sm \leq \frac{C \kappa_\ell d}{\beta_{d,1}^2}\sqrt{\log(1/\delta)} \,$$
one can achieve the overlaps
$$|\<\hat\bw, \bw_*\>|\geq d^{-1/4}/4 \quad \text{ and } \quad |\<\hat{\bw},\bw_*\>|\geq 1/4\,.$$
This nearly finishes the proof of \Cref{thm:upper-bound-spectral} for the case $\ell=1$ (under stronger \Cref{ass:non-linearity}). Depending on the problem, the sample complexity bound either matches the one provided in \Cref{thm:upper-bound-spectral}, or it is suboptimal by a factor of $O(\sqrt{d})$. In the latter case, we can first get to $\Omega_d(d^{-1/4})$ overlap, followed by another boosting phase, as long as the following assumption holds.
\begin{assumption} There exists $\ell\geq 3$ and $c>0$ such that
$\frac{\sqrt{d^\ell}}{\norm{\xi_{d,\ell}}_{L^2}^2} \leq c \left(\frac{\sqrt{d}}{\norm{\xi_{d,1}}_{L^2}^2} \vee d \right)\,.$
\end{assumption}
Note that this assumption holds for Gaussian SIMs according to the discussion in \Cref{sec:specializing_Gaussian}, i.e. all $\ell\equiv \sk_\star \text{ mod 2}$ are all optimal for samples. The next section is dedicated to showing the guarantee for boosting procedure.
\subsection{Boosting the overlap to achieve weak recovery}
We now show how to boost the overlap from $\Omega_d(d^{-1/4})$ to $\Omega_d(1)$.
We follow the boosting procedure introduced in \cite{damian2024computational}. The proof follows similarly, and we provide details for completeness.
 \\

\begin{algorithm}[H]\label{alg:boosting}
\SetAlgoLined
\SetKwInOut{Input}{Input}
\SetKwInOut{Output}{Output}

\Input{An example set $S=\{(\bx_i,y_i) : i \in [m]\} \simiid \P_{\bw_*}$, the frequency $\ell\geq 3$, a transformation $\cT_\ell$, and a vector $\bw_0$ such that $\<\bw_0,\bw_*\> \geq d^{-1/4}/4$\,.}
\Output{The vector $\hat{\bw}$.}
Let $\Upsilon=\lceil \log d \rceil $ be the total number of steps to be taken. \\
Divide the training set $S=\{S^{(t)}\}_{i \in [\Upsilon]}$ into disjoint collections of $\Upsilon$ steps, where $|S^{(t)}|=|S|/2^{t+2}$.\\
\For{$t=1,\dots,\Upsilon$}{ 
$\bw_t=\mathsf{boost}\text{-}\mathsf{step}(\bw_{t-1},S^{(t)})$.}
Rerurn $\hat{\bw}=\bw_{\Upsilon}$
\caption{A single step of the boosting algorithm on $\ell\geq 3$. }
\end{algorithm}

\begin{algorithm}[H]\label{alg:step-boosting}
\SetAlgoLined
\SetKwInOut{Input}{Input}
\SetKwInOut{Output}{Output}

\Input{An example set $S$ of size $m$ and a vector $\bv$ such that $\<\bv,\bw_*\> =\alpha\in[d^{-1/4}/4,1/4]$.}
\Output{The new vector $\bv_{\next}$ with $\<\bv_{\next},\bw_*\>=\alpha_{\next}$ \,.}
Compute $\hat{\bv}=\frac{1}{m}\sum_{i=1}^{m} \cT_{\ell}(y_i,r_i) Q_{\ell}'(\<\bv,\bz_i\>) \bz_i$.\\
Return $\bv_{\next}=\frac{\hat{\bv}}{\norm{\hat{\bv}}_2}\,.$
\caption{$\mathsf{boost}$-$\mathsf{step}$}
\end{algorithm}
We have the following guarantee for one step of boosting algorithm.
\begin{lemma}\label{lem:boosting-step}
There exists a constant $C=C(k, \ell)$ and $c=c(k,\ell)$ such that the following holds. For the input $\bv$ such that $\<\bv,\bw_*\>=\alpha \in [d^{-0.25}/4,1/4]$ of the $\mathsf{boost}\text{-}\mathsf{step}$ procedure (Algorithm \ref{alg:step-boosting}), we have that  for any 
$$m \geq C\kappa_\ell\frac{d}{\beta_{d,\ell}^2\alpha^{2\ell-4}}\,,$$
with probability $1-e^{-d^c}$, we have $\alpha_\next=\<\bv_\next,\bw_*\> \geq 2 \alpha\,$.
\end{lemma}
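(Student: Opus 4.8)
\textbf{Proof plan for Lemma~\ref{lem:boosting-step}.} The plan is to track the overlap $\alpha_\next = \<\bv_\next,\bw_*\>$ by comparing the empirical vector $\hat\bv$ to its population counterpart $\bv^* := \E[\hat\bv]$ and showing that (i) $\bv^*$ already has overlap with $\bw_*$ bounded below by a large multiple of $\alpha$ (after normalization), and (ii) the empirical fluctuation $\hat\bv - \bv^*$ is negligible once $m \gtrsim \kappa_\ell d / (\beta_{d,\ell}^2 \alpha^{2\ell-4})$. First I would compute $\bv^* = \E_{(y,r,\bz)\sim\P_{\bw_*}}[\cT_\ell(y,r) Q_\ell'(\<\bv,\bz\>)\bz]$. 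Using $\E[\cT_\ell(y,r)\mid\bz] = \sum_{k\geq 0}\beta_{d,k} Q_k(\<\bw_*,\bz\>)$ and the derivative identity \eqref{eq:gegenbauer-derivative-identity}, $Q_\ell'(z) = C(d,\ell)\,Q_{\ell-1}^{(d+2)}(z)$ with $C(d,\ell) = \Theta_d(\sqrt d)$, the vector $\bv^*$ is a linear combination of $\bw_*$ and $\bv$ (by symmetry, any such expectation lies in $\spn\{\bv,\bw_*\}$); decomposing into these two directions and evaluating the relevant Gegenbauer integrals via \eqref{eq:E[Q(uz)Q(vz)]}, the $\bw_*$-component is of order $\beta_{d,\ell}\,\alpha^{\ell-1}\cdot(\text{const})$ while the component along $\bv$ (orthogonalized against $\bw_*$) is smaller by a factor $\alpha$ (one fewer power coming from differentiating and projecting), so after normalization $\<\bv^*/\|\bv^*\|_2,\bw_*\> \gtrsim \alpha^{\ell-1}$ up to constants depending on $\ell$; actually, in the regime $\alpha \le 1/4$ the leading behavior comes from expanding $Q_{\ell-1}^{(d+2)}(\<\bv,\bz\>)$, whose dominant monomial is $\Theta_d(d^{(\ell-1)/2})\<\bv,\bz\>^{\ell-1}$, which pins $\|\bv^*\|_2 \asymp \beta_{d,\ell}$ and the $\bw_*$-overlap $\asymp \beta_{d,\ell}\,\alpha^{\ell-1}$, so $\<\bv^*,\bw_*\>/\|\bv^*\|_2 \gtrsim \alpha^{\ell-1}$. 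Since $\ell\geq 3$ gives $\ell-1 \geq 2$, this would seem to \emph{lose} rather than gain a factor; the resolution is that the $\mathsf{boost}$-$\mathsf{step}$ gain is multiplicative in the sense $\alpha_\next \gtrsim \alpha^{\ell-1}/\|\bv^*\|$-normalization, and one must be careful: the correct statement (as in \cite{damian2024computational}) is that after normalization one picks up $\alpha_\next \geq c\,\alpha^{\ell-1}/\alpha^{\ell-2} \cdot (\text{stuff})$, i.e. the parallel component scales as $\alpha^{\ell-1}$ while $\|\bv^*\|_2$ itself scales as $\alpha^{\ell-2}$ because the bulk of the norm of $Q_\ell'(\<\bv,\bz\>)$ along generic directions is $\Theta(\alpha^{\ell-2})$ in the relevant regime — carrying this bookkeeping carefully yields $\alpha_\next \geq 2\alpha$ for a suitable constant.

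\textbf{Population step, in detail.} Concretely, I would write $\bz = \<\bv,\bz\>\bv + \bz^\perp$ and use rotational invariance of $\tau_d$ restricted to $\bv^\perp$, together with $\E[\cT_\ell\mid\bz]$ expansion, to get $\bv^* = a\,\bw_* + b\,\bv$ with
\[
a = \E\big[\textstyle\sum_{k}\beta_{d,k}Q_k(\<\bw_*,\bz\>)\,Q_\ell'(\<\bv,\bz\>)\,\<\bz,\bw_*^\perp\rangle_{\mathrm{dir}}\big],
\]
and then extract the scaling of $a$ and $b$ using the Hermite/Gegenbauer machinery of Appendix~\ref{app:harmonic_background}, in particular $\|\beta_{d,k}\|$ bounds and the fact that $Q_\ell(\<\bw_*,\bz\>)Q_\ell'(\<\bv,\bz\>)$ integrates against the $\bw_*$-direction to something of order $\beta_{d,\ell}\,C(d,\ell)\,\alpha^{\ell-1}/\sqrt{n_{d,\ell}}$, giving (after all $\sqrt d$ and $n_{d,\ell}$ factors cancel against each other) $a \asymp \beta_{d,\ell}\,\alpha^{\ell-1}$ and $\|\bv^*\|_2 \asymp \beta_{d,\ell}\,\alpha^{\ell-2}$, hence $\<\bv^*,\bw_*\>/\|\bv^*\|_2 \gtrsim \alpha$ with a constant that can be made $\geq 4$ (or iterate a bounded number of sub-steps) — the exact constant tracking is routine but tedious.

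\textbf{Concentration step.} For the fluctuation, set $X_i := \tfrac1m(\cT_\ell(y_i,r_i)Q_\ell'(\<\bv,\bz_i\>)\bz_i - \bv^*)$, so $\hat\bv - \bv^* = \sum_i X_i$, and bound $\|\hat\bv - \bv^*\|_2$ by a vector Bernstein / $\sum$-of-mean-zero argument (the analogues of \Cref{lem:sum-of-mean-zero}, \Cref{lem:super-Cauchy-Schwarz}, \Cref{lem:max-of-n-iid-rv} already invoked in the $\ell=2$ analysis). The variance proxy is $\sigma^2 \asymp \tfrac1m \E[\cT_\ell(y,r)^2 Q_\ell'(\<\bv,\bz\>)^2\|\bz\|_2^2]$; using that $g(\bz) := Q_\ell'(\<\bv,\bz\>)^2$ is a degree-$(2\ell-2)$ polynomial on the sphere with $\|g\|_{L^2(\tau_d)} \asymp \alpha^{\ell-2}\cdot(\text{const})$ in the relevant regime (its $L^2$ norm over all of $\S^{d-1}$ is $\Theta(1)$ but the mass concentrates so that the effective scale near typical $\bz$ is $\alpha^{2\ell-4}$ — here I would instead just bound $\E[g] \le \|g\|_{L^2}\asymp 1$ crudely and then show the win comes purely from comparing $\|\hat\bv-\bv^*\|$ to $a\asymp\beta_{d,\ell}\alpha^{\ell-1}$), combined with spherical hypercontractivity (\Cref{lem:Hypercontractivity}) and the moment bound $\|\cT_\ell\|_p \le \kappa_\ell p^{k/2}$ from \Cref{req:requirement-on-T} to control higher moments and the max term. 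The upshot: with $m \geq C\kappa_\ell d/(\beta_{d,\ell}^2\alpha^{2\ell-4})$, one gets $\|\hat\bv - \bv^*\|_2 \le \tfrac14 \|\bv^*\|_2 \wedge \tfrac14 a$ with probability $1 - e^{-d^c}$; then $\alpha_\next = \<\hat\bv,\bw_*\>/\|\hat\bv\|_2 \geq (a - \|\hat\bv-\bv^*\|)/(\|\bv^*\|+\|\hat\bv-\bv^*\|) \geq 2\alpha$ after plugging in the population estimates.

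\textbf{Main obstacle.} The delicate part is the \emph{power-counting in the population step}: getting the sharp exponents of $\alpha$ in both $\<\bv^*,\bw_*\>$ and $\|\bv^*\|_2$ (and consequently the factor $\alpha^{2\ell-4}$ in the sample complexity), which requires carefully expanding $Q_\ell'(\<\bv,\bz\>)$ around the regime $\alpha \in [d^{-1/4}/4, 1/4]$ and identifying which monomial dominates — this is exactly the subtlety handled in \cite{damian2024computational} and is where the $d^{-1/4}$ initialization threshold enters (one needs $\alpha^{2\ell-4} \gg d^{-1}$-type conditions implicitly for the concentration to kick in before the overlap saturates). Everything else — the Gegenbauer derivative identity, the decoupling into $\spn\{\bv,\bw_*\}$, and the concentration inequalities — is available from the stated lemmas and Appendix~\ref{app:harmonic_background}.
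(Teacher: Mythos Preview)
Your population analysis has a genuine gap. The key identity you miss is that $\bv^*$ is \emph{exactly} along $\bw_*$: writing $\hat\bv$ as a gradient,
\[
\bv^* \;=\; \nabla_{\bv}\,\E_{\P_{\bw_*}}[\cT_\ell(y,r)Q_\ell(\<\bv,\bz\>)] \;=\; \beta_{d,\ell}\,\nabla_{\bv}\,P_\ell(\<\bv,\bw_*\>) \;=\; \beta_{d,\ell}\,P'_\ell(\alpha)\,\bw_*,
\]
using the reproducing property \eqref{eq:E[Q(uz)Q(vz)]}. There is no $\bv$-component ($b=0$ in your decomposition), and hence $\|\bv^*\|_2 = \beta_{d,\ell}P'_\ell(\alpha) \asymp \beta_{d,\ell}\alpha^{\ell-1}$, \emph{not} $\beta_{d,\ell}\alpha^{\ell-2}$ as you claim. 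In particular $\<\bv^*,\bw_*\>/\|\bv^*\|_2 = 1$, not $\asymp\alpha$. Your heuristic that ``the bulk of the norm of $Q_\ell'(\<\bv,\bz\>)$ along generic directions is $\Theta(\alpha^{\ell-2})$'' is not meaningful: $Q_\ell'(\<\bv,\bz\>)$ does not depend on $\alpha$; all $\alpha$-dependence enters through the correlation with $\cT_\ell$, which involves $\<\bw_*,\bz\>$.

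This matters because your concentration target ``$\|\hat\bv-\bv^*\|_2 \leq \tfrac14\|\bv^*\|_2$'' is \emph{not} achievable with the stated sample size. With $m \asymp \kappa_\ell d/(\beta_{d,\ell}^2\alpha^{2\ell-4})$ the fluctuation is of order $\sqrt{d/m}\asymp\beta_{d,\ell}\alpha^{\ell-2}$, which is $\|\bv^*\|/\alpha \gg \|\bv^*\|$. The paper's proof works precisely because $\bv^*$ is perfectly aligned with $\bw_*$: one only needs $\|\hat\bv-\bv^*\|_2 \le c\,\beta_{d,\ell}\alpha^{\ell-2}$ (not $\le c\|\bv^*\|$), and then
\[
\alpha_\next \;\ge\; \frac{\beta_{d,\ell}P'_\ell(\alpha) - c'\beta_{d,\ell}\alpha^{\ell-1}}{\beta_{d,\ell}P'_\ell(\alpha) + c\,\beta_{d,\ell}\alpha^{\ell-2}} \;\asymp\; \frac{\alpha^{\ell-1}}{\alpha^{\ell-2}} \;=\; \alpha,
\]
with the constant chosen so the ratio exceeds $2\alpha$. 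The paper establishes the fluctuation bound by separately controlling $|\<\hat\bv-\bv^*,\bw\>|$ for $\bw\in\{\bw_*,\bv\}$ via \Cref{lem:sum-of-mean-zero}, and $\|\hat\bv^\perp\|_2$ (the part orthogonal to $\spn\{\bw_*,\bv\}$) via the decoupling-to-independent-sphere trick used in \Cref{lem:spectral-ell=1-overlap-lemma}. Your concentration sketch is in the right spirit, but without the correct population picture the final inequality chain does not close.
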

Using this lemma we can obtain the following theorem on the performance of the boosting algorithm.
\begin{theorem}\label{thm:main-boosting}
There exists a constant $C=C(k, \ell)>1$ and $c=c(k,\ell)>0$ such that the following holds. On the initialization $|\<\bw_0,\bw_*\>| \geq d^{-1/4}/4$, the boosting algorithm (Algorithm \ref{alg:boosting}) on the training set $S$ whose size is 
$$\sm \leq C\kappa_\ell\frac{\sqrt{d^\ell}}{\beta_{d,\ell}^2}\,,$$
with probability $1-e^{-d^c}$, we have $\<\hat{\bw},\bw_*\> \geq 1/4$.
\end{theorem}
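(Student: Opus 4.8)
The plan is to derive Theorem~\ref{thm:main-boosting} from the per-step guarantee of Lemma~\ref{lem:boosting-step} by a geometric-doubling argument together with a union bound over the $\Upsilon=\lceil\log d\rceil$ rounds of Algorithm~\ref{alg:boosting}. First I would observe that, up to replacing $\bw_0$ by $-\bw_0$, we may assume $\alpha_0:=\langle\bw_0,\bw_*\rangle\ge d^{-1/4}/4$ (and if already $\alpha_0\ge 1/4$ there is nothing to do). Let $\cE$ be the event that each call to $\mathsf{boost}$-$\mathsf{step}$ behaves as asserted by Lemma~\ref{lem:boosting-step} \emph{whenever} its input overlap lies in $[d^{-1/4}/4,1/4]$. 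Since the subsamples $S^{(t)}$ are disjoint and each such call fails with probability at most $e^{-d^{c_0}}$, where $c_0=c_0(k,\ell)$ is the exponent supplied by that lemma, a union bound gives $\P(\cE^c)\le\lceil\log d\rceil\,e^{-d^{c_0}}\le e^{-d^{c}}$ for any fixed $c<c_0$ and all $d$ large. All remaining arguments are deterministic given $\cE$.

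Working on $\cE$, I would show by induction on $t$ that the overlaps $\alpha_t:=\langle\bw_t,\bw_*\rangle$ are nondecreasing and satisfy $\alpha_t\ge 2\alpha_{t-1}$ for every round $t\le T$, where $T:=\min\{t\ge 0:\alpha_t\ge 1/4\}$; in particular the input at each such round lies in $[d^{-1/4}/4,1/4]$ and obeys $\alpha_{t-1}\ge 2^{t-1}\alpha_0\ge 2^{t-3}d^{-1/4}$. The inductive step is precisely Lemma~\ref{lem:boosting-step}, which applies as soon as the round-$t$ budget $|S^{(t)}|=\sm/2^{t+2}$ meets the requirement $C\kappa_\ell\,d/(\beta_{d,\ell}^2\,\alpha_{t-1}^{2\ell-4})$; plugging in $\alpha_{t-1}\ge 2^{t-3}d^{-1/4}$, this is implied by
\[
\sm\ \ge\ C\kappa_\ell\,\frac{d^{\ell/2}}{\beta_{d,\ell}^2}\cdot 2^{\,(t+2)-(t-3)(2\ell-4)}\,.
\]
For $\ell\ge 3$ the exponent $(t+2)-(t-3)(2\ell-4)=(5-2\ell)t+6\ell-10$ is nonincreasing in $t$ — intuitively, the overlap grows by a factor $2$ per round but enters the requirement with power $2\ell-4\ge 2>1$, so the requirement shrinks strictly faster than the budget — and is therefore maximized at the binding round $t=1$, where it equals $4\ell-5$. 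Hence the single condition $\sm\ge C(k,\ell)\,\kappa_\ell\,\sqrt{d^{\ell}}/\beta_{d,\ell}^2$ of the theorem (the new constant absorbing $2^{4\ell-5}$) makes Lemma~\ref{lem:boosting-step} applicable at \emph{every} round $t\le T$. Finally, since the overlap of a unit vector is at most $1$ while uninterrupted doubling would force $\alpha_\Upsilon\ge 2^{\Upsilon}\alpha_0\ge d^{c''}$ for some $c''>0$ and $d$ large, we must have $T\le\Upsilon$, so $\alpha_T\ge 1/4$.

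The remaining point — which I expect to be the only delicate part of deducing the theorem, the genuinely hard work being inside Lemma~\ref{lem:boosting-step} (which we assume) — is to handle the residual rounds $T+1,\dots,\Upsilon$, since their budgets $\sm/2^{t+2}$ are comparatively small. The cleanest route is to output $\bw_T$ at the first crossing time $T\le\Upsilon$; alternatively, one argues that when $\langle\bv,\bw_*\rangle\ge 1/4$ the expectation of the $\mathsf{boost}$-$\mathsf{step}$ statistic has $\Omega(1)$ correlation with $\bw_*$, so re-running the concentration estimates behind Lemma~\ref{lem:boosting-step} in this easier $\Omega(1)$-overlap regime shows that $\mathsf{boost}$-$\mathsf{step}$ sends any $\bv$ with $\langle\bv,\bw_*\rangle\ge 1/4$ to a $\bv_{\next}$ with $\langle\bv_{\next},\bw_*\rangle\ge 1/4$. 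In either case, on $\cE$ we conclude $\langle\hat\bw,\bw_*\rangle\ge 1/4$, which proves the theorem; the probability bound $1-e^{-d^c}$ is that of $\cE$.
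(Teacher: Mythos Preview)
Your proposal is correct and follows essentially the same approach as the paper's proof: apply Lemma~\ref{lem:boosting-step} at each round to show the overlap doubles, verify that for $\ell\ge 3$ the halved per-round budget suffices because the sample requirement shrinks faster than the budget, and union-bound over the $\lceil\log d\rceil$ rounds. Your treatment of the residual rounds after the overlap first crosses $1/4$ is in fact more careful than the paper's, which simply asserts $\langle\bw_\Upsilon,\bw_*\rangle\ge 1/4$ without addressing what the per-step lemma says once $\alpha>1/4$.
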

\begin{proof} 
According to \Cref{lem:boosting-step}, choosing 
$|S^{(1)}| \gtrsim \frac{\kappa_\ell d}{\beta_{d,\ell}^2 \alpha_0^{2\ell-4}} \asymp \frac{\kappa_\ell \sqrt{d^\ell}}{\beta_{{d,\ell}^2}}$ (hiding constants in $k,\ell$), with probability $1-e^{-d^c}$ we have $|\<\bw_1,\bw_*\>| \geq 2 \alpha_0$. The sample size threshold for subsequent iteration is strictly less than 1/2 of the previous one since $\alpha \in [d^{-1/4},1/4]$ and $\ell\geq 3$. Therefore, the overlap increases geometrically, and we have $\<\bw_{\Upsilon},\bw_*\> \geq 1/4 $ in $\Upsilon=\log (d^{1/4}) \asymp \log d$ iterations. The probability of success is still $1-e^{-d^c}$ by union bound over $\log d$ for smaller $c>0$. For the total sample size it suffices to choose, 
$$\sm = |S| \leq \sum_{t=1}^{\Upsilon} |S^{(t)}|=|S^{(1)}|\sum_{t=1}^{\Upsilon} \frac{1}{2^{t-1}} \leq 2 |S^{(1)}| \lesssim \kappa_\ell \frac{\sqrt{d^\ell}}{\beta_{d,\ell}^2}\,.$$
\end{proof}
We now return to the deferred proof that shows the overlap increases geometrically in the $\mathsf{boost}\text{-}\mathsf{step}$ procedure. 
\begin{proof}[Proof of \Cref{lem:boosting-step}]
    Recall from \Cref{app:harmonic_background} that we use $P_\ell(\cdot)=Q_\ell(\cdot)/\sqrt{n_{d,\ell}}$ to denote Gegenbauer polynomial that is normalized to have $P_\ell(1)=1$. We first note that
    \begin{align*}        \bv^*&=\E[\hat{\bv}]=\E_{\P_{\bw_*}}[\cT_{\ell}(y,r)Q_\ell'(\<\bv,\bz\>)\bz]=\nabla_{\bv} \E_{\P_{\bw_*}}[\cT_{\ell}(y,r)Q_\ell(\<\bv,\bz\>)]\\
&=\beta_{d,\ell}\nabla_{\bv}\E[Q_\ell(\<\bw_*,\bz\>)Q_{\ell}(\<\bv,\bz\>)]\\
    &= \beta_{d,\ell} \nabla_\bv P_\ell(\<\bw_*,\bv\>) = \beta_{d,\ell} P'_{\ell}(\<\bv,\bw_*\>) \bw_*\\
    &=\beta_{d,\ell} P'_{\ell}(\alpha) \bw_*\,.
    \end{align*}
Consider any fixed $\bw\in \S^{d-1}$ and consider the following analysis. Define $X_i=\frac{1}{m}(\cT_{\ell}(y_i,r_i) Q_{\ell}'(\<\bv,\bz_i\>) \<\bz_i,\bw\>-\<\bv_*,\bw\>)$. Using \Cref{lem:super-Cauchy-Schwarz}, we can say that
$$\E[X_i^2]\leq\frac{2}{m^2}\E_{\P_{\bw}}[\cT_{\ell}(y,r)^2 \,Q_{\ell}'(\<\bv,\bz\>)^2 \<\bz,\bw\>^2] \lesssim \frac{1}{m^2} \log^\ell(\kappa_\ell)\,,$$
where we used the fact that $g(\bz)=Q_{\ell}'(\<\bv,\bz\>)^2\<\bz,\bw\>^2$ is a polynomial of degree $2\ell$ with $\norm{g}_{2}\lesssim 1$. Thus, by \Cref{lem:Hypercontractivity}, we have $\norm{g}_{p}\lesssim p^{\ell}\,,$ which allows us to use \Cref{lem:super-Cauchy-Schwarz}. Similarly, computing
$$\norm{X_i}_p \lesssim \frac{1}{m}\E[|\cT_{\ell}(y,r) \,\underbrace{Q_{\ell}'(\<\bv,\bz\>) \<\bz,\bw\>}_{:=g(\bz)}|^p]^{1/p} \leq \frac{1}{m}\norm{\cT_{\ell}}_{2p}\norm{g}_{2p} \lesssim \frac{\kappa_\ell\, p^{(k+\ell)/2}}{m}\,,$$
where we used the facts that $\norm{\cT_\ell}_{2p}\lesssim \kappa_\ell p^{k/2}$ and $g(\bz)$ is a polynomial of degree $\ell$ with $\norm{g}_2\lesssim 1$, and thus, by hypercontractivity, we have $\norm{g}_{2p}\lesssim p^{\ell/2}$. Using \Cref{lem:sum-of-mean-zero}, with probability $1-\delta$
$$|\<\hat{\bv}-\bv^*,\bw\>|\lesssim \sqrt{\frac{\log^\ell \kappa_\ell \log(1/\delta)}{m}}+\frac{\kappa_\ell\,\log(1/\delta)\,\log^{(k+\ell)/2} (m/\delta)}{m}\,.$$
Therefore, invoking this guarantee for $\bw\in \{\bw_*,\bv\}$, we obtain that with probability $1-e^{-d^c}$,
$$|\<\hat{\bv}-\bv^*,\bw_*\>|+|\<\hat{\bv}-\bv^*,\bv\>|\lesssim \sqrt{\frac{\log^\ell \kappa_\ell \log(1/\delta)}{m}}+\frac{\kappa_\ell\,\log(1/\delta)\,\log^{(k+\ell)/2} (m/\delta)}{m}\,.$$
Our next goal is to bound $\norm{\hat{\bv}^\perp}_2$, where $\hat{\bv}^\perp$ is the component of $\hat{\bv}$ orthogonal to $\Span\{\bw_*,\bv\}$. Due to rotational symmetry, w.l.o.g., let us say $\bw_*=\be_1$ and $\bv=\alpha \be_1+ \sqrt{1-\alpha^2} \be_2$. Then $\hat{\bv}^\perp=\hat{\bv}-(\hat{\bv})_1 \be_1-(\hat{\bv})_2 \be_2$. So our goal is to bound
$$\norm{\hat{\bv}^\perp}_2 \quad \text{ where } \quad \hat{\bv}=\frac{1}{m}\sum_{i=1}^m\cT_\ell(y_i,r_i) Q_\ell'(\alpha z_{i,1}+\sqrt{1-\alpha^2} z_{i,2}) (\bz_i)_{3:d}\,.$$
Consider the following analysis similar to the proof of \Cref{lem:spectral-ell=1-overlap-lemma}. For a single sample $(y,r,\bz)\sim \P_{\be_1}$, one can define $\tbz \sim S^{d-3}$ independent of $(y,r,\bz)$. The following two random variables have identical distribution.
$$\cT_\ell(y,r)Q_\ell'(\alpha z_1+\sqrt{1-\alpha^2} z_2) (\bz_i)_{3:d} \equiv \cT_\ell(y,r)\frac{Q_\ell'(\alpha z_1+\sqrt{1-\alpha^2} z_2)}{\sqrt{1-z_1^2-z_2^2}} \tbz\,. $$
Now let us define $\sqrt{d}\tbz \sim $ consider $\bz_i^\perp$, the component of $\bz_i$ that is orthogonal to . Using the identical argument from \Cref{lem:expectation-of-vector-ell=1} that $\hat{\bv}^\perp$ is sub-Gaussian in random variables $(\tbz_i)_{i\in[m]}$, with probability $1-e^{-2d}$, we have $\norm{\hat{\bv}^\perp} \lesssim \sigma_* \sqrt{d}$, where the parameter 
$$\sigma_*^2=\frac{1}{m^2} \sum_{i=1}^m \cT_\ell(y_i,r_i)^2 \frac{Q_\ell'(\alpha z_{i,1}+\sqrt{1-\alpha^2} \,z_{i,2})^2}{d\,(1-z_{i,1}^2-z_{i,2}^2)}\,.$$
Using exactly the same bounding strategy used in \Cref{lem:spectral-ell=1-overlap-lemma}, for any $\delta\geq d^{-d^c}$, we have with probability $1-e^{-d^c}$, 
$$\sigma_*^2 \lesssim  \frac{\log \kappa_\ell}{m}+\frac{\sqrt{\kappa_\ell \log \kappa_\ell}}{m^{1.5}}\sqrt{\log(1/\delta)}+\frac{\kappa_\ell^2 \log(1/\delta) \log^{k+\ell}(m/\delta)}{m^2} \,.$$
Overall, we can conclude that there exists some constant $C(k,\ell)>1$ such that choosing any 
$$m\geq C\kappa_\ell\, \frac{d}{\beta_{d,\ell}^2 \alpha^{2\ell-4}},$$
with probability $1-e^{-d^c}$,
$$\norm{\hat{\bv}^\perp} \lesssim \sigma_* \sqrt{d}\lesssim \frac{\beta_{d,\ell}\alpha^{\ell-2} \sqrt{d}}{\sqrt{C d}} \leq \frac{\alpha^{\ell-2} \beta_{d,\ell}}{16} \quad \text{ and }\quad \<\hat{\bv}-\bv^*,\bw_*\> \leq \frac{\alpha^{\ell-1} \beta_{d,\ell}}{4}\,\,.$$
Combining we have 
$$\norm{\hat{\bv}-\bv^*}_2 \leq (1+o(1))\frac{\alpha^{\ell-2}\,\beta_{d,\ell} }{8}\,.$$
Finally, analyzing the quantity of desired interest under this high probability event that happens with probability $1-e^{-d^c}$:
\begin{align*}
   \alpha_\next&= \<\bv_\next,\bw_*\>=\frac{\<\hat{\bv},\bw_*\>}{\norm{\hat{\bv}}_2} \geq \frac{\<\bv^*,\bw_*\>+\<\hat\bv-\bv^*,\bw_*\>}{\norm{\bv^*}_2+\norm{\hat\bv-\bv^*}_2} \geq \frac{\beta_{d,\ell} P'_{\ell}(\alpha)-\nicefrac{\beta_{d,\ell} \alpha^{\ell-1}}{100}}{\beta_{d,\ell} P'_{\ell}(\alpha)+ \nicefrac{\beta_{d,\ell} \alpha^{\ell-2}}{50}}\\
   &\geq (1+o(1)) \left( \frac{\alpha^{\ell-1}-\frac{\alpha^{\ell-1}}{100}}{\alpha^{\ell-1}+ \frac{ \alpha^{\ell-2}}{50}}\right) \geq \frac{98\alpha}{50\alpha+1} \geq 2\alpha\,.
\end{align*}
       
    
\end{proof}

\clearpage

\section{Online SGD estimator}
\label{app:online-SGD}

In this section, we present the analysis of the online SGD on the harmonic loss which stated in \Cref{thm:upper-bound-online-SGD}.
As in \Cref{sec:spherical_SIM}, we will implement the algorithm on $\cT_{\ell}$ for $\ell>2.$
We work under the following assumption 
\begin{assumption}\label{ass:non-linearity_moment_2_4}
    For each $\ell \geq 1$, we assume that there exists $\cT_\ell$ such that $\| \cT_\ell \|_{L^2} = 1$, $\| \cT_\ell \|_\infty \leq \kappa_\ell$, consider $\cT_\ell := \xi_{d,\ell}/\| \xi_{d,\ell}\|_{L^2}$, and we have the following inequality
     \begin{equation}
         \|\xi_{d,\ell} \|_{L^8} \leq C \|\xi_{d,\ell} \|_{L^4}^{2}, 
     \end{equation}
     and we denote  $\E[\cT_{\ell}(y,r)Q_{\ell}(\<\bw,\bz\>)]:=\beta_{d,\ell}>0.$\end{assumption}
We perform online stochastic gradient descent on the squared loss
\begin{equation}\label{eq:loss}
L(\bw;\bz_{i},y_i,r_i)=\left(\cT_{\ell}(y_i,r_i)-Q_{\ell}(\<\bw,\bz_i\>)\right)^{2}.
\end{equation}
We consider spherical gradients and project at each step to keep $\bw_t \in \S^{d-1}$.

\begin{algorithm}[H]\label{alg:Online_SGD}
\SetAlgoLined
\SetKwInOut{Input}{Input}
\SetKwInOut{Output}{Output}

\Input{An example set $S=\{(\bx_i,y_i) : i \in [m]\} \simiid \P_{\bw_*}$, the frequency $\ell>2$, and a transformation $\cT_\ell$, a step size $\eta$ and a number of step-size.}
\Output{An estimator $\hat{\bw}\in \R^d$\,. }
 Decompose $\bx_i=(r_i,\bz_i)$.\\
 Sample $\bw_0 \in \S^{d-1}$.\\
\For{$i=1,\ldots,N$}{
    Compute $L_i:=L(\bw_{i-1};\bz_{i},y_i,r_i)$\\
     Let $ \bw_i :=\frac{\bw_{i-1}-\eta \nabla_{\bw_{i-1}}^{\S^{d-1}}L_i}{\|\bw_{i-1}-\eta\nabla_{\bw_{i-1}}^{\S^{d-1}}L_i}\|$\\
}
Return $\hat{\bw_{N}}$\,.
\caption{Online SGD algorithm on the frequency $\ell$.}
\end{algorithm}

We state the formal statement for weak recovery of online SGD. We focus on weak recovery, and refer to the section for explanations on how to boost the weak to strong recovery. We also only focus on $\ell\geq 3$, however notice that the proof can be adapted to the cases $\ell\in \{1,2\}.$

\begin{theorem}[Online SGD for learning $\nu_d$ ]\label{thm:online_SGD}
    Let $(\bw_t)_{t\geq 0}$ the iterates of the SGD dynamics with the loss given by Eq.\eqref{eq:loss}, with $b>s_\ell$ (where $s_\ell$ only depends on $\ell$) then conditionally on $m_0\geq \frac{b}{\sqrt{d}},$ we then have 
   \[
   \tau_{1/2}^{+}\leq 
\frac{\cC_{\ell}d^{\ell -1}}{\beta_{d,\ell}^2},
\]
with probability at least $c>0$ for some constant $c>0.$
\end{theorem}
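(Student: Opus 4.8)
\textbf{Proof strategy for Theorem~\ref{thm:online_SGD}.}
The plan is to analyze the one-dimensional process $\alpha_t := \langle \bw_t, \bw_*\rangle$ induced by the projected online SGD dynamics, following the recipe of Ben Arous et al.~\cite{arous2021online} and its refinement in \cite{zweig2023single}, but now for the Gegenbauer loss $L(\bw;\bz,y,r) = (\cT_\ell(y,r) - Q_\ell(\langle\bw,\bz\rangle))^2$. First I would compute the spherical gradient and extract the drift term of $\alpha_t$ under one SGD step: taking conditional expectation over a fresh sample $(y,r,\bz)$, the population gradient of $\bw \mapsto \E[(\cT_\ell - Q_\ell(\langle\bw,\bz\rangle))^2]$ is, up to lower-order terms, $-2\beta_{d,\ell} \nabla_\bw P_\ell(\langle\bw,\bw_*\rangle)$ (exactly as in the computation of $\bv^*$ in the proof of Lemma~\ref{lem:boosting-step}), so the population drift of $\alpha_t$ is $+2\eta\beta_{d,\ell}(1-\alpha_t^2)P_\ell'(\alpha_t) + O(\eta^2 \cdot \text{noise})$. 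Since $P_\ell'(\alpha) \asymp_\ell \alpha^{\ell-1}$ for small $\alpha$, the drift near the equator behaves like $\eta\beta_{d,\ell}\alpha^{\ell-1}$: this is the signature of an order-$\ell$ saddle at $\alpha=0$, which is precisely the setting of \cite{arous2021online}.

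Second, I would quantify the martingale/noise increment: the stochastic part of the $\alpha_t$ increment has conditional variance bounded by $\eta^2 \E[(\cT_\ell(y,r) - Q_\ell(\langle\bw,\bz\rangle))^2 \|\nabla^{\S}_\bw Q_\ell(\langle\bw,\bz\rangle)\|^2]$; using spherical hypercontractivity (Lemma~\ref{lem:Hypercontractivity}) on the polynomial $\|\nabla Q_\ell\|^2$ together with the moment comparison $\|\xi_{d,\ell}\|_{L^8}\le C\|\xi_{d,\ell}\|_{L^4}^2$ from Assumption~\ref{ass:non-linearity_moment_2_4} (and $\|\cT_\ell\|_{L^2}=1$, $\|\cT_\ell\|_\infty\le\kappa_\ell$), this variance is $O(\eta^2 d)$, so the per-step noise has standard deviation $\asymp \eta\sqrt{d}$. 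Comparing drift and diffusion at scale $\alpha \asymp d^{-1/2}$ — the natural scale of $\langle\bw_0,\bw_*\rangle$ for a random start, conditioned on $m_0 = \alpha_0 \ge b/\sqrt d$ — the drift is $\eta\beta_{d,\ell}d^{-(\ell-1)/2}$ and we need this to dominate (after aggregating over the relevant number of steps) the accumulated fluctuations; this is where the choice of step size $\eta \asymp 1/(d^{\ell/2-1}\,\beta_{d,\ell}\,\mathrm{poly}\log)$ (as in \cite{arous2021online,zweig2023single}) comes from, and where the exponent $d^{\ell-1}/\beta_{d,\ell}^2$ for the number of iterations $\tau^+_{1/2}$ emerges as the escape time from the saddle.

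Third, I would run the standard ODE-comparison / stochastic-domination argument: define stopping times, show that with constant probability the initialization satisfies $\alpha_0 \gtrsim 1/\sqrt d$ with the \emph{favorable sign}, couple $\alpha_t$ to a supermartingale-corrected version of the deterministic ODE $\dot\alpha = 2\beta_{d,\ell}(1-\alpha^2)P_\ell'(\alpha)$, bound the probability that the noise ever pushes $\alpha_t$ back below $\alpha_0/2$ (a maximal-inequality / optional-stopping estimate using the variance bound above), and integrate the ODE to see that once $\alpha$ leaves an $O(d^{-1/2})$-neighborhood of the saddle it reaches $1/2$ in a further $O(1/(\eta\beta_{d,\ell})) = O(d^{\ell/2-1}\cdot\mathrm{poly}\log)$ steps, which is negligible compared to the saddle-escape time $\asymp d^{\ell-1}/\beta_{d,\ell}^2$. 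Summing the two phases gives $\tau^+_{1/2} \le \cC_\ell d^{\ell-1}/\beta_{d,\ell}^2$ with probability bounded below by a constant. The main obstacle I anticipate is the saddle-escape analysis itself: controlling the SGD trajectory uniformly over the $\Theta(d^{\ell-1}/\beta_{d,\ell}^2)$ steps while $\alpha_t$ is still of order $d^{-1/2}$, where drift and noise are delicately balanced — this requires carefully chosen stopping times and a submartingale comparison (essentially re-deriving the quantitative saddle-escape bound of \cite{arous2021online} in our normalization), and making sure the hypercontractivity-based moment bounds on the gradient are tight enough (only $O(d)$, not $O(d^{1+\epsilon})$) so that the step-size budget is not eroded. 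Everything else — the population drift computation, the boosting from $\alpha = 1/2$ to $1-\epsilon$, and the final-phase runtime — is routine given the tools already assembled in Appendices~\ref{app:harmonic_background} and \ref{app:spectral}.
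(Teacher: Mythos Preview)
Your strategy is essentially the paper's: compute the population drift of $m_t=\langle\bw_t,\bw_*\rangle$ via Lemma~\ref{lem:pop_loss}, control the martingale and projection-error terms using hypercontractivity, and then invoke the discrete saddle-escape machinery of \cite{arous2021online,zweig2023single} to read off the escape time. So the route is right.

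There is, however, one concrete slip that would derail the quantitative conclusion as written. You bound the conditional variance of the $\alpha_t$-increment by $\eta^2\,\E\big[(\cT_\ell-Q_\ell)^2\,\|\nabla^{\S}_\bw Q_\ell(\langle\bw,\bz\rangle)\|^2\big]$, i.e.\ by the \emph{full} spherical-gradient norm squared, and declare this $O(\eta^2 d)$. But the martingale part of $m_{t+1}-m_t$ is $\eta\,\langle\nabla^{\S}\bM_t,\bw_*\rangle$, whose second moment is
\[
\E\big[(\cT_\ell-Q_\ell)^2\,Q_\ell'(\langle\bw,\bz\rangle)^2\,\langle\bz,\bw_*\rangle^2\big]\;=\;O(1),
\]
because the extra factor $\langle\bz,\bw_*\rangle^2$ (second moment $\asymp 1/d$) exactly cancels the $C(d,\ell)^2\asymp d$ coming from $Q_\ell'$ (cf.\ Eq.~\eqref{eq:gegenbauer-derivative-identity}). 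The paper separates these two scales explicitly: $\E\|\nabla^{\S}L\|^2\le d\,\cK_1$ enters only through the normalization correction $\eta^2 m_t\|\nabla L\|^2$ (a lower-order bias), while $\E\langle\nabla^{\S}\bM,\bw_*\rangle^2\le\cK_3=O(1)$ controls the actual martingale fluctuation. If you use $O(\eta^2 d)$ for the latter, your noise budget is $\sqrt d$ too pessimistic, which forces the wrong step size and yields $\tau_{1/2}^+\lesssim d^{\ell}/\beta_{d,\ell}^2$ instead of $d^{\ell-1}/\beta_{d,\ell}^2$.

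Relatedly, the step size in the paper is $\eta=c_\ell\,\beta_{d,\ell}\,d^{-\ell/2}$, not your $\eta\asymp 1/(d^{\ell/2-1}\beta_{d,\ell})$; the former is what the balance (martingale accumulation $\eta\sqrt{T\cK_3}\lesssim b/\sqrt d$ against $T\asymp d^{(\ell-2)/2}/(\eta\beta_{d,\ell})$) actually dictates. Once you make this distinction between the $O(1)$ directional variance and the $O(d)$ full-gradient norm, the rest of your plan goes through as in Proposition~\ref{prop:discretization_bound}.
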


The proof of this theorem will follow by adapting the arguments from \cite{arous2021online,zweig2023single}.

The good initialization probability is at least constant (see \cite[Appendix A]{zweig2023single}). Thus, the above online SGD algorithm succeeds in total with probability at least constant bounded away from zero. We can then boost the confidence to $1-\delta$ by just choosing the best estimator over multiple starts, and $O(\log(1/\delta))$ trails suffice.

Let introduce some notations for the following, denote $m_t=\<\bw_t,\bw_*\>$, we define $\tau_{c}^{-}=\inf\{t\geq 0 \colon m_t\leq c \},$ and $\tau_c=\inf\{t\geq 0 \colon m_t\geq c \}.$
\begin{proof}[Proof of \cref{thm:online_SGD}]
Let $\ell \geq 3.$ Consider a transformation $\cT_{\ell}$ given by \cref{ass:non-linearity_moment_2_4}.
 We first state a \cref{lem:pop_loss} on the population loss defined as 
 \begin{equation}\label{eq:population_loss}
     \cL(\bw)=\E[L(\bw;\bz,y,r)].
 \end{equation}
\begin{lemma}[Population loss]\label{lem:pop_loss}
Let $\ell\geq 1$, consider the normalized tranformation $\cT_{\ell}$ given by Assumption \ref{ass:non-linearity_moment_2_4}, we then have the following inequality
\begin{equation}\label{ineq:LPG}
    \forall m\geq 2\sqrt{\frac{s^{*}}{d}}, \quad \< \nabla^{\S^{d-1}}_{\bw}\cL(\bw),\bw_{*}\>\leq -2(1-m^{2})\beta_{d,\ell}\frac{\ell (\ell+d-2)}{d-1}\<\bw,\bw_{*}\>^{\ell-1},
\end{equation}
where $s^{*}=\sqrt{\frac{(\ell-2)(\ell+d-3)}{(\ell-d/2-3)(\ell+d/2-2)}}\cos(\pi/\ell).$
\end{lemma}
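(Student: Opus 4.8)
\textbf{Proof plan for Lemma \ref{lem:pop_loss}.}

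The plan is to compute the spherical gradient of the population loss $\cL(\bw)$ explicitly using the harmonic structure of the problem, and then isolate its component along $\bw_*$. First I would expand the loss: since $\| \cT_\ell \|_{L^2} = 1$ and $\E_{\bz}[Q_\ell(\<\bw,\bz\>)^2] = 1$, we have
\begin{equation}
\cL(\bw) = \E\big[\cT_\ell(y,r)^2\big] - 2\,\E\big[\cT_\ell(y,r) Q_\ell(\<\bw,\bz\>)\big] + \E_{\bz}\big[Q_\ell(\<\bw,\bz\>)^2\big] = \text{const} - 2\,\E\big[\cT_\ell(y,r) Q_\ell(\<\bw,\bz\>)\big],
\end{equation}
so only the correlation term has a nonconstant gradient. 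Using the decomposition of $\E_{\nu_d}[\cT_\ell(y,r) \mid Z] $ and the reproducing identity \eqref{eq:E[Q(uz)Q(vz)]}, the correlation term equals $\beta_{d,\ell} Q_\ell(\<\bw,\bw_*\>)/\sqrt{n_{d,\ell}} = \beta_{d,\ell} P_\ell(\<\bw,\bw_*\>)$ where $P_\ell = Q_\ell/\sqrt{n_{d,\ell}}$ is the normalized Gegenbauer polynomial with $P_\ell(1)=1$. Hence $\cL(\bw) = \text{const} - 2\beta_{d,\ell} P_\ell(\<\bw,\bw_*\>)$.

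Next I would take the spherical gradient. Writing $m = \<\bw,\bw_*\>$, the ambient gradient of $P_\ell(\<\bw,\bw_*\>)$ is $P_\ell'(m)\,\bw_*$, and projecting onto the tangent space at $\bw$ gives $\nabla^{\S^{d-1}}_{\bw} P_\ell(\<\bw,\bw_*\>) = P_\ell'(m)(\bw_* - m\bw)$. Therefore
\begin{equation}
\<\nabla^{\S^{d-1}}_{\bw}\cL(\bw),\bw_*\> = -2\beta_{d,\ell} P_\ell'(m)(1 - m^2).
\end{equation}
Using the derivative identity \eqref{eq:gegenbauer-derivative-identity} rewritten for the $P_\ell$ normalization — namely $Q_\ell'(z) = \tfrac{\ell(\ell+d-2)\sqrt{n_{d,\ell}}}{(d-1)\sqrt{B(d+2,\ell-1)}} Q_{\ell-1}^{(d+2)}(z)$, which after dividing by $\sqrt{n_{d,\ell}}$ and accounting for the normalization of $Q_{\ell-1}^{(d+2)}$ yields $P_\ell'(z) = \tfrac{\ell(\ell+d-2)}{d-1} P_{\ell-1}^{(d+2)}(z)$ — I can substitute and obtain
\begin{equation}
\<\nabla^{\S^{d-1}}_{\bw}\cL(\bw),\bw_*\> = -2\beta_{d,\ell}(1-m^2)\frac{\ell(\ell+d-2)}{d-1} P_{\ell-1}^{(d+2)}(m).
\end{equation}
It then remains to show $P_{\ell-1}^{(d+2)}(m) \geq m^{\ell-1}$ for $m \geq 2\sqrt{s^*/d}$, which is where the threshold $s^*$ enters.

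The main obstacle is the last step: establishing the lower bound $P_{\ell-1}^{(d+2)}(m) \geq m^{\ell-1}$ (equivalently, that the normalized Gegenbauer polynomial dominates its leading monomial) for $m$ above the stated threshold. The leading term of $P_{\ell-1}^{(d+2)}(z)$ is indeed $\Theta_d(1)\cdot z^{\ell-1}$ up to the correct constant, and the lower-order terms are of smaller order in $d$ once $z \gtrsim d^{-1/2}$; the precise constant $s^*=\sqrt{\tfrac{(\ell-2)(\ell+d-3)}{(\ell-d/2-3)(\ell+d/2-2)}}\cos(\pi/\ell)$ should come from bounding the largest root of $P_{\ell-1}^{(d+2)}$ (the smallest positive value of $m$ past which $P_{\ell-1}^{(d+2)}$ is monotonically increasing and exceeds $m^{\ell-1}$), using the standard bound that the zeros of the degree-$(\ell-1)$ Gegenbauer polynomial in dimension $d+2$ lie within $O(\sqrt{1/d})$ of the origin with the $\cos(\pi/\ell)$ factor capturing the extremal zero. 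I would handle this by writing $P_{\ell-1}^{(d+2)}$ explicitly via its hypergeometric/monomial expansion (as in the power-to-Gegenbauer formula \eqref{eq:power-to-gegenbauer}) and bounding the ratio of successive terms for $m$ in the stated range; this is the computational heart of the lemma, but it is elementary once the normalization bookkeeping is done carefully.
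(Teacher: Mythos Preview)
Your proposal is correct and follows essentially the same route as the paper: expand the loss to isolate the correlation term $-2\beta_{d,\ell}P_\ell(\<\bw,\bw_*\>)$, take the spherical gradient, apply the Gegenbauer derivative identity to pass to $P_{\ell-1}^{(d+2)}(m)$, and then establish $P_{\ell-1}^{(d+2)}(m)\geq m^{\ell-1}$ for $m$ past the stated threshold. The only difference is that the paper outsources this last inequality to Facts~C.4 and~C.5 of \cite{zweig2023single} (which locate the largest root of $P_{\ell-1}^{(d+2)}$ and give the monomial lower bound beyond it), whereas you propose to derive it directly from the monomial expansion --- both routes lead to the same threshold $s^*$ and the same conclusion.
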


\textbf{Discretization bounds.}
In this part, we give bounds on the discretization error from the online SGD and the 
population loss gradient flow. Consider the SGD iterations
\[
\bw_{t+1}=\frac{\bw_t-\eta \nabla_{\bw}^{\S^{d-1}}L(\bw_t;\bz_t,r_t,y_t)}{\|\bw_t-\eta \nabla_{\bw}^{\S^{d-1}}L(\bw_t;\bz_t,r_t,y_t)\|},
\]
initialized at $\bw_0$ uniformly on the sphere $\S^{d-1}.$
\begin{proposition}[Discretization bound]\label{prop:discretization_bound}
     Consider 
     \begin{equation*}
     \begin{aligned}
p_{\eta,\cK_1,\cK_2,\cK_3}=&~\frac{\cK_2 dT\eta^{2}}{b}+\exp\left(-\frac{b^2}{2(\beta_{d,\ell}^{2}+\cK_2d^{2}\eta^{2})d\eta^2T+2bd^{1/2}\eta(\beta_{d,\ell}+\eta d^{3/2})}\right)\\
&~+\frac{\cK_3Td^{1/2}\eta^{3}}{b}+\frac{\sqrt{\cK_1 \cK_2}T\eta^{2}d^{-1}}{b},
\end{aligned}
     \end{equation*} 
     where we define 
     \begin{align*}
         \cK_1&:=K\ell\left(\frac{4e}{\ell}\right)^{\ell/2}\log(\|\cT_{\ell}\|_{4}^{2})^{\ell/2}, \\
         \cK_2&:=4K\ell\left(\frac{4e}{4\ell}\right)^{4\ell/2}\|\cT_{\ell}\|_{4}^{4}\log\left(\frac{\|\cT_{\ell}\|_{8}^{2}}{\|\cT_{\ell}\|_{4}^{4}}\right)^{4\ell/2}, \\ 
         \cK_3&:=2K\left(\frac{4e}{\ell}\right)^{\ell/2}\log(\|\cT_{\ell}\|_{4}^{2})^{\ell/2} .
     \end{align*}
     With probability at least $1-p_{\eta,\cK_1,\cK_2,\cK_3}$, we have 
\begin{equation*}
  m_{T}\geq \frac{m_{0}}{2}+\eta\beta_{\ell,d}\frac{\ell \cdot(\ell+d-2)}{d-1} \sum_{t=0}^{T-1}(1-m_{t}^{2})m_{t}^{\ell-1}.    
\end{equation*}
Conditioned on the event $\{ T\leq \tau_{1/2}^{+}\wedge \tau^{-}_{2s^{*}/\sqrt{d}}\},$ we have the following inequality
\begin{equation*}
    m_{T}\geq \frac{s^{*}}{\sqrt{d}}+\eta\beta_{\ell,d}\frac{ \ell\cdot(\ell+d-2)}{(d-1)2^{\ell+1}}\sum_{t=0}^{T-1}m_{t}^{\ell-1}.
\end{equation*}
\end{proposition}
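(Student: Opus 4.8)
The plan is to run the martingale decomposition of \cite{arous2021online,zweig2023single}, adapted to Gegenbauer nonlinearities. I track the one–step evolution of the overlap $m_t=\langle\bw_t,\bw_*\rangle$, split the increment into a deterministic drift governed by \Cref{lem:pop_loss}, a conditionally centered martingale term, and higher–order discretization errors produced by the spherical normalization, and then control the latter two via Freedman's inequality and via Markov's inequality (on the error sums and on their second moments).

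First I would expand a single SGD step. Writing $g_t=\nabla^{\S^{d-1}}_{\bw}L(\bw_t;\bz_t,y_t,r_t)$ and using that the spherical gradient is tangent to $\S^{d-1}$ at $\bw_t$, so $\langle\bw_t,g_t\rangle=0$ and $\|\bw_t-\eta g_t\|^2=1+\eta^2\|g_t\|^2$, one gets
\[
m_{t+1}=\frac{m_t-\eta\langle g_t,\bw_*\rangle}{\sqrt{1+\eta^2\|g_t\|^2}}=m_t-\eta\langle g_t,\bw_*\rangle-\tfrac{\eta^2}{2}\|g_t\|^2 m_t+R_t,
\]
with $|R_t|\lesssim\eta^3\|g_t\|^3$ in the regime $\eta\|g_t\|\lesssim1$ (from $(1+x)^{-1/2}=1-\tfrac x2+O(x^2)$). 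Decomposing $g_t=\nabla\cL(\bw_t)+\xi_t$ with $\E[\xi_t\mid\cF_t]=0$ and invoking \Cref{lem:pop_loss}, on $\{m_t\ge2\sqrt{s^{*}/d}\}$ the drift satisfies $-\eta\langle\nabla\cL(\bw_t),\bw_*\rangle\ge2\eta\beta_{d,\ell}\tfrac{\ell(\ell+d-2)}{d-1}(1-m_t^2)m_t^{\ell-1}$. Summing over $t=0,\dots,T-1$,
\[
m_T\ge m_0+2\eta\beta_{d,\ell}\tfrac{\ell(\ell+d-2)}{d-1}\sum_{t<T}(1-m_t^2)m_t^{\ell-1}-\Big|\eta\sum_{t<T}\langle\xi_t,\bw_*\rangle\Big|-\tfrac{\eta^2}{2}\sum_{t<T}\|g_t\|^2-\sum_{t<T}|R_t|,
\]
so it is enough to bound the last three sums by $m_0/2$ on an event of probability at least $1-p_{\eta,\cK_1,\cK_2,\cK_3}$ (under the conditioning $m_0\ge b/\sqrt d$ of \Cref{thm:online_SGD}, the threshold $b/(2\sqrt d)$ used below is $\le m_0/2$).

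The three sums are handled separately. The martingale $N_T=\eta\sum_{t<T}\langle\xi_t,\bw_*\rangle$ has predictable quadratic variation $\sum_{t<T}\E[\eta^2\langle\xi_t,\bw_*\rangle^2\mid\cF_t]$; I would bound this per step by Cauchy–Schwarz together with hypercontractivity of the degree–$(\ell-1)$ Gegenbauer polynomial appearing in $Q_\ell'$ (\Cref{lem:Hypercontractivity}; recall $(Q_\ell^{(d)})'=C(d,\ell)Q_{\ell-1}^{(d+2)}$ with $C(d,\ell)=\Theta(\sqrt d)$ from \eqref{eq:gegenbauer-derivative-identity}) and the $L^4$–moment bound on $\cT_\ell$ of \Cref{ass:non-linearity_moment_2_4}; together with a suitable a.s.\ increment bound this gives, via Freedman's inequality, the $\exp(-b^2/(\cdots))$ term. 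The normalization correction $\tfrac{\eta^2}{2}\sum_{t<T}\|g_t\|^2$ is nonnegative with conditional mean $\E[\|g_t\|^2\mid\cF_t]\lesssim\cK_2 d$ by the same estimate, so $\E[\tfrac{\eta^2}{2}\sum\|g_t\|^2]\lesssim\cK_2 dT\eta^2$ and Markov gives the $\cK_2 dT\eta^2/b$ term, while concentrating this sum through its variance — where the hypothesis $\|\cT_\ell\|_{L^8}\lesssim\|\cT_\ell\|_{L^4}^2$ is used to keep the fourth moment of $\|g_t\|^2$ under control — gives the $\sqrt{\cK_1\cK_2}\,T\eta^2 d^{-1}/b$ term. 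Finally $\sum_{t<T}|R_t|\lesssim\eta^3\sum_{t<T}\|g_t\|^3$ has conditional mean $\lesssim\cK_3 d^{3/2}$ per step, so Markov gives the $\cK_3 Td^{1/2}\eta^3/b$ term. A union bound produces $p_{\eta,\cK_1,\cK_2,\cK_3}$, and on its complement the first inequality follows. For the second inequality, restrict to $\{T\le\tau_{1/2}^{+}\wedge\tau^{-}_{2s^{*}/\sqrt d}\}$: then $2s^{*}/\sqrt d<m_t<1/2$ for $t<T$, so $1-m_t^2\ge3/4$ (and \Cref{lem:pop_loss} applies along the whole trajectory) and $m_0/2\ge b/(2\sqrt d)\ge s^{*}/\sqrt d$; substituting $1-m_t^2\ge3/4$ into the first bound and absorbing constants into $2^{\ell+1}$ yields $m_T\ge\tfrac{s^{*}}{\sqrt d}+\tfrac{\eta\beta_{d,\ell}\ell(\ell+d-2)}{(d-1)2^{\ell+1}}\sum_{t<T}m_t^{\ell-1}$.

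\textbf{Main obstacle.} The delicate point is obtaining the correct power of $d$ and the correct dependence on the moments of $\cT_\ell$ in $\E[\|g_t\|^2\mid\cF_t]$ and in the predictable variation of $N_T$: since $g_t\propto\cT_\ell(y_t,r_t)\,Q_\ell'(\langle\bw_t,\bz_t\rangle)\,\mathrm{proj}_{\bw_t^\perp}\bz_t$ and $Q_\ell'$ carries the factor $C(d,\ell)=\Theta(\sqrt d)$, a crude bound loses powers of $d$; one must exploit that $\langle\bw_t,\bz_t\rangle$ concentrates at scale $d^{-1/2}$ and integrate $Q_{\ell-1}^{(d+2)}(\langle\bw_t,\bz_t\rangle)^2$ against $\tau_{d,1}$ via Gegenbauer hypercontractivity — which is exactly where the constants $\cK_1,\cK_2,\cK_3$ and the extra moment hypothesis of \Cref{ass:non-linearity_moment_2_4} enter. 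Everything else is the routine Freedman/Markov bookkeeping sketched above.
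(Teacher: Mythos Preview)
Your overall strategy is correct and matches the paper: decompose the one–step recursion into drift (via \Cref{lem:pop_loss}), martingale, projection/normalization correction, and a higher–order remainder, then control each piece with moment bounds on the gradient obtained via Gegenbauer hypercontractivity. The paper likewise verifies $\E[\|\nabla L\|^2]\le d\cK_1$, $\E[\|\nabla L\|^4]\le d^2\cK_2$, and $\E[\langle\nabla^{\S^{d-1}}\bM_t,\bw_*\rangle^2]\le\cK_3$, and then invokes the lemmas from \cite{zweig2023single}.

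The place where your attribution diverges from the paper is in \emph{which} concentration step yields \emph{which} piece of $p_{\eta,\cK_1,\cK_2,\cK_3}$. In the paper, the martingale sum $\eta\sum_t\langle\nabla^{\S^{d-1}}\bM_t,\bw_*\rangle$ is bounded by a Chebyshev/Markov argument (Lemma B.3 of \cite{zweig2023single}), using the variance bound $\cK_3$; the $\sqrt{\cK_1\cK_2}$ term comes from Markov on the cubic remainder $\eta^3\sum_t\xi_t$ with $\xi_t=\|\nabla L\|^2\,|\langle\nabla L,\bw_*\rangle|$ (Lemma B.6); and the pair consisting of the exponential term and the $\cK_2 dT\eta^2/b$ term comes from Lemmas B.4--B.5, which bound the projection correction $\eta^2\sum_t|m_t|\|\nabla L\|^2$ \emph{together with a copy of the drift} via a Freedman–type inequality on a coupled process $D_t$. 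This coupling of drift and projection is why the exponential in the statement depends on $\beta_{d,\ell}$ and $\cK_2$ rather than on $\cK_3$: a direct Freedman bound on your $N_T$ would produce an exponent involving $\cK_3$ instead and would not match the stated $p_{\eta,\cK_1,\cK_2,\cK_3}$. Likewise your claim ``$\E[\|g_t\|^2\mid\cF_t]\lesssim\cK_2 d$'' swaps $\cK_1$ and $\cK_2$, and your $\cK_3$ is not the third–moment constant for $\|g_t\|^3$ but the variance bound for $\langle\nabla\bM_t,\bw_*\rangle$. None of this breaks the scheme, but to land on the exact expression in the proposition you need the drift–projection coupling trick of Lemma B.4 rather than Freedman on the bare martingale.
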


\noindent\textbf{Sample complexity for weak recovery}
We are interested in the weak recovery setting i.e we want to bound $\tau_{1/2}$. We work under the event $\{ T\leq \tau_{1/2}^{+}\wedge \tau^{-}_{2s^{*}/\sqrt{d}}\},$ we then can apply the same analysis as in \cite{zweig2023single}. We choose  $\eta=c_{\ell}\beta_{\ell,d}d^{-\ell/2}$, we then have 
    \[ 
    \eta \tau_{1/2}^{+} \leq \frac{2d^{\ell/2 -1}}{\beta_{\ell,d}\frac{ \ell\cdot(\ell+d-2)}{(d-1)2^{\ell+1}}},
    \]
    with probability at least $1-\frac{\cK_2+\cK_3/bd+\sqrt{\cK_1 \cK_2}}{b}+\exp{\left(-\frac{b^{2}}{2\beta_{d,\ell}^{2}+Cbd^{-1/2} } \right)}.$
Rearranging this, it gives us 
\[ 
    \tau_{1/2}^{+} \leq \frac{2d^{\ell -1}}{\beta_{\ell,d}^{2}c_\ell\frac{ \ell\cdot(\ell+d-2)}{(d-1)2^{\ell+1}}}\leq  \frac{2d^{\ell -1}}{\beta_{\ell,d}^{2}}\cdot \left(c_\ell\frac{ \ell\cdot(\ell+d-2)}{(d-1)2^{\ell+1}}\right)^{-1}.
    \]
\end{proof}

\paragraph{Proof of Lemma \ref{lem:pop_loss}} 
\begin{proof}
We have the expansion 
\begin{equation}\label{eq:T-gegen}
\E[\cT_{\ell}(y,r)|\bz]=\sum_{i=0}^{+\infty}\beta_{d,i}Q_{i}^{(d)}(\<\bw_*,\bz\>).    
\end{equation}
Consider the population mean-squared loss (we can also directly use the correlation loss)
\[
\cL(\bw)=\E_{(y,r,\bz)}\left[\left(\cT_{\ell}(y,r)-Q_{\ell}^{(d)}(\<\bw,\bz\>)\right)^{2}\right]=2-2\beta_{d,\ell}\E\left[\cT_{\ell}(y,r)Q_{\ell}^{(d)}(\<\bw,\bz\>)\right].
\]
Using the above decomposition \eqref{eq:T-gegen}, and orthogonality of spherical harmonics
\begin{equation}\label{eq:loss_interm}
    \cL(\bw)=2-2\beta_{d,\ell}\E[Q_{\ell}^{(d)}(\< \bw_{*},\bz\>)Q_{\ell}^{(d)}(\<\bw,\bz\>)].
\end{equation}
Using the identity \eqref{eq:E[Q(uz)Q(vz)]} and plugging it into \eqref{eq:loss_interm}, we then have 
\begin{equation*}
      \cL(\bw)=2-2\frac{\beta_{d,\ell}}{\sqrt{n_{\ell,d}}}Q_{\ell}^{(d)}(\< \bw_{*},\bw\>).  
\end{equation*}
Let denote $m:=\<\bw_{*},\bw\>$, we can rewrite the loss in term of the overlap parameter. The spherical gradient of the population loss is given by 
\begin{equation*}
    \<\nabla^{\S^{d-1}}_{\bw}\cL(\bw),\bw^{*}\>=-(1-m^{2})\ell'(m)=-2(1-m^{2})\frac{\beta_{d,\ell}}{\sqrt{n_{d,\ell}}}Q_{\ell}^{(d)}(\<\bw_{*},\bw\>)'.
\end{equation*}
We can use the representation of the derivative of Gegenbauer i.e $Q_{\ell}^{(d)}(\bz)'=\frac{\ell(\ell+d-2)\sqrt{n_{d,\ell}}}{(d-1)\sqrt{n_{d+2,\ell-1}}}Q_{\ell-1}^{(d+2)}(\bz)$ (\cite{zweig2023single} Fact C.3). So, the loss can be written as 
\begin{equation*}
    \<\nabla^{\S^{d-1}}_{\bw}\cL(\bw),\bw^{*}\>=-(1-m^{2})\ell'(m)=-2(1-m^{2})\frac{\beta_{d,\ell}\ell(\ell+d-2)}{(d-1)\sqrt{n_{d+2,\ell-1}}}Q_{\ell-1}^{(d+2)}(\<\bw_{*},\bw\>).
\end{equation*}
We use the facts C.4 and C.5 from \cite{zweig2023single} (note that the Gegenbauer polynomials in \cite{zweig2023single} is normalized such that 
$P_{\ell}^{(d)}(1)=1$, meanwhile we consider $Q_{\ell}^{(d)}(1)=\sqrt{B(d,\ell)}$) to state that 
\begin{equation}\label{ineq:LPG-2}
    \forall m\geq 2\sqrt{\frac{s^{*}}{d}}, \quad \< \nabla^{\S^{d-1}}_{\bw}\cL(\bw),\bw_{*}\>\leq -2(1-m^{2})\beta_{d,\ell} \frac{\ell (\ell+d-2)}{d-1}\<\bw,\bw_{*}\>^{\ell-1},
\end{equation}

where $s^{*}=\sqrt{\frac{(\ell-2)(\ell+d-3)}{(\ell-d/2-3)(\ell+d/2-2)}}\cos(\pi/\ell).$
\end{proof}

\paragraph{Proof of Proposition \ref{prop:discretization_bound}}
\begin{proof}
In the following, we denote $r_{t}=\|\bw_t-\eta \nabla_{\bw}^{\S^{d-1}}L(\bw_{t};\bz_t,y_t)\|$ and the martingale part $\bM_t=L(\bw_t;\bz_t,y_t)-\E[L(\bw_t;\bz_t,y_t)].$ 
We have the recursion 
\begin{equation}\label{eq:recursion}
    m_{t+1}=\frac{1}{r_t}\left(m_t-\eta \<\nabla^{\S^{d-1}}_{\bw}\cL(\bw_t),\bw_*\>-\eta \<\nabla^{\S^{d-1}}_{\bw}\bM_t,\bw^{*}\> \right).
\end{equation}
The strategy of the proof is to use the results from \cite{zweig2023single}. The proofs of these lemmas are classical bounds of martingale relying on some assumptions on the moments of the gradients. Notice here that $C$ is no longer a constant and extra care of the analysis is necessary for the proof of Lemma \cite[Lemma B.4]{zweig2023single}. To use the lemmas \cite[Lemmas B.2,B.3,B.4]{zweig2023single}, we need to prove the bounds on the growth of gradients norms of \cite[Lemma B.8]{zweig2023single}. We check this 
\begin{align*}
    \E_{(y,r,\bz)}[\|\nabla_\bw^{\S^{d-1}} L(\bw_t;\bz_t,r_t,y_t)\|^2]&=\E_{(y,r,\bz)}[\|\proj_{\bw_t}(\bz_t) (Q_{\ell}^{(d)})'(\<\bw,\bz_t\>)\cT_{\ell}(y,r)\|^2]\\
    &\leq \E_{(y,r,\bz)}\left[\left|(Q_{\ell}^{(d)})'(\<\bw,\bz_t\>)\cT_{\ell}(y,r)\right|^2\right]\\
  &\leq C(d)^2\E\left[\left|Q_{\ell-1}^{(d-2)}(\< \bw,\bz\>) \right|^2\cT_{\ell}(y,r)^{2}\right]\\
    &\leq d \ell\left(\frac{4e}{\ell}\right)^{\ell/2}\E\left[\cT_{\ell}(y,r)^{2}\right]\log(1/\E\left[\cT_{\ell}(y,r)\right])^{\ell/2}\\
        &\leq Kd\ell\left(\frac{4e}{\ell}\right)^{\ell/2}\log(\|\cT_{\ell}\|_{4}^{2})^{\ell/2}\\
            &\leq d \cK_1.
\end{align*}
where we have used \Cref{lem:super-Cauchy-Schwarz}, the identity and the hypercontractivity and Jensen inequality in the last line.
\begin{align*}
    \E_{(y,r,z)}[\|\nabla_\bw^{\S^{d-1}} L(\bw_t;\bz_t,r_t,y_t)\|^4]&=\E_{(y,r,z)}[\|\proj_{\bw_t}(\bz_t) (Q_{\ell}^{(d)})'(\<\bw,\bz_t\>)\cT_{\ell}(y,r)\|^4]\\
    &\leq \E_{(y,r,\bz)}\left[\left|(Q_{\ell}^{(d)})'(\<\bw,\bz_t\>)\cT_{\ell}(y,r)\right|^4\right]\\
    & \leq C(d)^4\E\left[\left|Q_{\ell-1}^{(d-2)}(\< \bw,\bz\>) \right|^4\left|\cT_{\ell}(y,r)\right|^4\right]\\
    &\leq 4C(d)^4 \ell\left(\frac{4e}{4\ell}\right)^{4\ell/2}\|\cT_{\ell}\|_{4}^{4}\log\left(\frac{\|\cT_{\ell}\|_{8}^{2}}{\|\cT\|_{4}^{4}}\right)^{4\ell/2}.\\
    &\leq 4d^2 K\ell\left(\frac{4e}{4\ell}\right)^{4\ell/2}\|\cT_{\ell}\|_{4}^{4}\log\left(\frac{\|\cT_{\ell}\|_{8}^{2}}{\|\cT_{\ell}\|_{4}^{4}}\right)^{4\ell/2}\\
    &\leq d^2 \cK_2.
\end{align*}
We have $\bM(\bw_t;\bz_t,r_t,y_t)=L(\bw_t;\bz_t,r_t,y_t)-\E[L(\bw_t;\bz_t,r_t,y_t)]$, hence 
\begin{align*}
\<\nabla^{\S^{d-1}}_{\bw}\bM(\bw_t;\bz_t,r_t,y_t),\bw_*\>&=\<\nabla_\bw^{\S^{d-1}} (L(\bw_t;\bz_t,r_t,y_t),\bw_*\>-\E[\<\nabla_\bw^{\S^{d-1}} (L(\bw_t;\bz_t,r_t,y_t),\bw^*\>]\\
&-\<\bw,\nabla_\bw^{\S^{d-1}} (L(\bw_t;\bz_t,r_t,y_t)\>m +\E[m \<\bw,\nabla_\bw^{\S^{d-1}} (L(\bw_t;\bz_t,r_t,y_t)\>].
\end{align*}
We then have 
\begin{align*}
    &\E_{(y,r,\bz)}[\<\nabla_\bw^{\S^{d-1}} (\bM(\bw_t;\bz_t,r_t,y_t),\bw_*\>^2]\\
    &\leq 2\Var_{(y,r,\bz)}(\nabla_\bw^{\S^{d-1}} (L(\bw_t;\bz_t,r_t,y_t),\bw_*\>)+2\Var_{(y,r,\bz)}(\<\bw,\nabla_\bw^{\S^{d-1}} (L(\bw_t;\bz_t,r_t,y_t)\>m)\\
    &\leq 2\E[\<\bz_t,\bw_*\>^2(Q_{\ell}^{(d)})'(\<\bw,\bz_t\>)^2\cT(y,r)^2]+2\E[\<\bz_t,\bw_*\>^2(Q_{\ell}^{(d)})'(\<\bw,\bz_t\>)^2\cT_{\ell}(y,r)^2]\\
    &\leq  2K\left(\frac{4e}{\ell}\right)^{\ell/2}\log(\|\cT_{\ell}\|_{4}^{2})^{\ell/2}:=\cK_3,
\end{align*}
where we have used the inequality $(a+b)^2\leq 2(a^2+b^2)$, and hypercontractivity of Gegenbauer polynomials (\cref{lem:Hypercontractivity}). 

 Conditioned on the event $\{ T\leq \tau^{-}_{2s^{*}/\sqrt{d}}\}$, and using the inequality \eqref{ineq:LPG-2}, we have
\[
m_{t+1}\geq \frac{1}{r_t}\left(m_{t}+2\eta (1-m_{t}^{2})\beta_{\ell,d}\frac{\ell\cdot(\ell+d-2)}{d-1}m_{t}^{\ell-1}-\eta \<\nabla^{\S^{d-1}}\bM_t,\bw^{*}\>\right).
\]
Using the following bound on $r_t$, which is for all $t\in \naturals,$ we have 
\[ 
1/r_t\geq 1-\eta^{2}\|\nabla_{\bw_t}L(\bw_t;y_t,r_t,\bz_t)\|^{2},\]
and plugging this into previous inequality, we have 
\begin{equation}\label{eq:iter_SGD}
 m_{t+1}\geq m_{t}+2\eta (1-m_{t}^{2})\beta_{\ell,d}\frac{\ell\cdot(\ell+d-2)}{d-1}m_{t}^{\ell-1}-\eta \<\nabla^{\S^{d-1}}\bM_t,\bw^{*}\>-\eta^{2}\left|m_t\right| \| \nabla_{\bw_t}L(\bw_t;y_t,r_t,\bz_t)\|^{2}-\eta^{3} \xi_{T},
\end{equation}
where $\xi_{T}=\|\nabla_{\bw}L(\bw_T;y_T,r_T,\bz_T)\|^{2}\cdot  |\<\nabla_{\bw}L(\bw_T;y_T,r_T,\bz_T), \bw^{*}\>|^{2}.$

We use \cite[Lemma B.3]{zweig2023single} to state that with probability at least $1-\frac{\cK_3 t\eta^2}{\lambda^2},$ we have for all $\lambda>0$, 
\begin{equation}\label{ineq:martingale_part_1}
    \sup_{t\leq T}\eta \left| \sum_{k=0}^{t-1}\left\<\nabla^{\S^{d-1}}\bM_t,\bw^{*}\right\>\right|\leq \lambda.
\end{equation}
We employ \cite[Lemma B.6]{zweig2023single} to state that for all $\lambda>0$, with probability at least $1-\frac{\sqrt{\cK_1 \cK_2} Td\eta^{3}}{\lambda},$ we have 
\begin{equation}\label{ineq:martingale_part_2}
    \sup_{t\leq T}\eta^{3}\sum_{k=0}^{t-1}\xi_k\leq \lambda.
\end{equation}
We sum the equation \eqref{eq:iter_SGD} to obtain 
\begin{align*}\label{eq:iter_SGD}
 m_{T}&\geq m_{0}+2\eta \frac{\beta_{\ell,d}\ell\cdot(\ell+d-2)}{d-1}\sum_{t=0}^{T-1}(1-m_{t}^{2})m_{t}^{\ell-1}-\eta\sum_{t=0}^{T-1} \<\nabla^{\S^{d-1}}\bM_t,\bw^{*}\>\\
 &-\eta^{2}\sum_{t=0}^{T-1}\left|m_t\right| \| \nabla_{\bw_t}L(\bw_t;y_t,r_t,\bz_t)\|^{2}-\sum_{t=0}^{T-1}\eta^{3} \xi_{T},
\end{align*}
We then use \eqref{ineq:martingale_part_1},\eqref{ineq:martingale_part_2} and plug it into \eqref{eq:iter_SGD}, and use $\lambda=b/\sqrt{d}$, to state that with probability at least $1-\frac{\cK_3Td^{1/2}\eta^{3}}{b^{2}}-\frac{\sqrt{\cK_1 \cK_2}T\eta^{2}d^{-1}}{b}$, we have 
\begin{equation}\label{ineq_SGD_iter}
     m_{T}\geq \frac{7m_{0}}{10}+2\eta\frac{\beta_{\ell,d}\ell\cdot(\ell+d-2)}{d-1}\sum_{t=0}^{T-1}(1-m_{t}^{2})m_{t}^{\ell-1}-\eta^{2}\sum_{t=0}^{T-1}\left|m_t\right| \| \nabla_{\bw_t}L(\bw_t;y_t,r_t,\bz_t)\|^{2}.
\end{equation}
We now bound the term coming from the projection step in inequality \eqref{ineq_SGD_iter}. We adapt the proof to take account the dependency on $\|\cT_{\ell}\|_{2}.$
\begin{lemma}
    For all $\lambda>0,$ if for all $t\leq T, m_t\in[2b/\sqrt{d},1/2]$, and 
 $\eta \leq \frac{c\|\xi_{d,\ell}\|_2}{d},$ with probability at least $1-\frac{\cK_2 Td^{1/2}\eta^{2}}{\lambda}-\exp\left(-\frac{\lambda^2}{2(\beta_{\ell,d}^{2}+\cK_2d^{2}\eta^{2})\eta^2T+2\lambda\eta(\beta_{\ell,d}+\eta d^{3/2})}\right),$ we have 
    \[ 
    \eta^{2}\sum_{t=0}^{T-1}\left|m_t\right| \| \nabla_{\bw_t}L(\bw_t;y_t,r_t,\bz_t)\|^{2}+\eta \sum_{t=0}^{T-1}(1-m_{t}^{2})\beta_{d,\ell}\frac{\ell \cdot(\ell+d-2)}{d-1}m_{t}^{\ell-1}\leq 2\lambda .
    \]
\end{lemma}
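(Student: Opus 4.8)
The plan is to prove this as a martingale-concentration statement of the type established in \cite{arous2021online,zweig2023single}, adapting their argument while carrying the moments $\|\cT_\ell\|_{L^4},\|\cT_\ell\|_{L^8}$ through the constants $\cK_1,\cK_2,\cK_3$ explicitly, since \Cref{ass:non-linearity_moment_2_4} replaces the boundedness hypothesis used there. Write
\[
V_t := \eta^{2}|m_t|\,\|\nabla^{\S^{d-1}}_{\bw_t}L(\bw_t;\bz_t,r_t,y_t)\|^{2}\;+\;\eta\,(1-m_t^2)\,\beta_{d,\ell}\,\frac{\ell(\ell+d-2)}{d-1}\,m_t^{\ell-1},
\]
so that the quantity to be bounded is $\cE_T:=\sum_{t=0}^{T-1}V_t$, working throughout on the event $\cG_T:=\{\,m_t\in[2b/\sqrt d,1/2]\ \forall\,t\le T\,\}$. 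Decompose $\cE_T=\cE_T^{\mathrm{pred}}+\cE_T^{\mathrm{mart}}$ into the $\cF_{t-1}$-compensator $\cE_T^{\mathrm{pred}}=\sum_t\E[V_t\mid\cF_{t-1}]$ and the martingale-difference sum $\cE_T^{\mathrm{mart}}=\sum_t\big(V_t-\E[V_t\mid\cF_{t-1}]\big)$.

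\textbf{Compensator.} First bound $\E[\|\nabla^{\S^{d-1}}_{\bw}L\|^2\mid\cF_{t-1}]\lesssim d\,\cK_1$ exactly as in the proof of \cref{prop:discretization_bound}: use $\|\nabla^{\S^{d-1}}_{\bw}L\|\le|(Q^{(d)}_\ell)'(\<\bw,\bz\>)|\,|\cT_\ell(y,r)-Q^{(d)}_\ell(\<\bw,\bz\>)|$, the derivative identity $(Q^{(d)}_\ell)'=C(d,\ell)\,Q^{(d+2)}_{\ell-1}$ with $C(d,\ell)=\Theta_d(\sqrt d)$, the generalized Cauchy--Schwarz inequality (\Cref{lem:super-Cauchy-Schwarz}) and Gegenbauer hypercontractivity (\Cref{lem:Hypercontractivity}). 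Combined with $|m_t|\le1/2$ and the step-size hypothesis $\eta\le c\,\|\xi_{d,\ell}\|_{L^2}/d=c\,\beta_{d,\ell}/d$, the compensator of the first (correction) summand is of order $\lambda$ once $c$ is chosen small enough (relative to $b$ and $\ell$); the drift summand is $\cF_{t-1}$-measurable and of size $\Theta(m_T-m_0)$, so it is absorbed using the recursion \eqref{ineq_SGD_iter} rather than bounded term by term, which is precisely why the lemma bounds the \emph{sum} of the correction and drift terms.

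\textbf{Martingale part.} The increments $V_t-\E[V_t\mid\cF_{t-1}]$ are not almost surely bounded, so first discard the event $\{\max_{t<T}\|\nabla^{\S^{d-1}}_{\bw_t}L\|^2>R\}$, whose probability is at most $T\,\E[\|\nabla^{\S^{d-1}}L\|^4]/R^2\le T\,d^2\cK_2/R^2$ by Markov and the fourth-moment bound of \cref{prop:discretization_bound}; choosing $R$ of the appropriate order turns this into $\lesssim\cK_2\,T\,d^{1/2}\eta^2/\lambda$. On the truncated event the increments are bounded by $\tfrac12\eta^2R$, and computing the predictable quadratic variation $\sum_t\E[V_t^2\mid\cF_{t-1}]$ and the maximum increment (again via \Cref{lem:super-Cauchy-Schwarz}, \Cref{lem:Hypercontractivity} and the moment bounds) and applying Freedman's inequality gives
\[
\P\big(|\cE_T^{\mathrm{mart}}|>\lambda\big)\;\le\;\exp\!\left(-\frac{\lambda^2}{2(\beta_{d,\ell}^{2}+\cK_2d^{2}\eta^{2})\eta^2T+2\lambda\eta(\beta_{d,\ell}+\eta d^{3/2})}\right).
\]
A union bound over the truncation-failure event, the compensator control, and the Freedman deviation then yields $\cE_T\le2\lambda$ with the stated probability.

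\textbf{Main obstacle.} The delicate part is the constant bookkeeping: in \cite{zweig2023single} the transformation is effectively $\|\cdot\|_\infty$-bounded by $\kappa_\ell$, whereas here one only controls $\|\cT_\ell\|_{L^4}$ and $\|\cT_\ell\|_{L^8}$ via \Cref{ass:non-linearity_moment_2_4}, so every hypercontractivity step must be paired with exactly the right $L^p$-moment of $\cT_\ell$ (yielding the specific forms of $\cK_1,\cK_2,\cK_3$), and the truncation level $R$ must be chosen so that the Markov bound on the discarded event and the denominator of the Freedman bound simultaneously collapse to the two expressions in the claimed probability. The remaining pieces --- the compensator estimate and the telescoping identity for $m_t$ --- are routine once the moment bounds already recorded in \cref{prop:discretization_bound} are in hand.
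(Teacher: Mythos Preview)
Your proposal is correct and follows essentially the same route as the paper. The paper's own proof is extremely terse—it simply records that one adapts Lemmas~B.4 and~B.5 of \cite{zweig2023single}: B.4 is the Freedman-type concentration on the truncated sum (producing the exponential term in the probability), and B.5 is the Markov/fourth-moment bound on the event $\{\|\nabla_{\bw}L\|>d^{3/2}\}$ (producing the polynomial term $\cK_2 T d^{1/2}\eta^2/\lambda$). Your decomposition into compensator plus martingale, followed by truncation and Freedman, is exactly the content of those two lemmas, and you correctly identify that the only new bookkeeping is carrying the $L^4/L^8$ moments of $\cT_\ell$ (in place of an $L^\infty$ bound) through the constants $\cK_1,\cK_2,\cK_3$ already computed in the proof of \cref{prop:discretization_bound}.

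One small remark: in the paper (via \cite{zweig2023single}), the role of the drift summand is not to be bounded by $\lambda$ on its own but to serve as the predictable compensation that makes the truncated increments $D_t$ a submartingale, so that Freedman applies directly to $\eta\sum D_t$ without a separate compensator estimate. Your phrasing ``the compensator of the first summand is of order $\lambda$'' is slightly loose in this regard, but the substance of the argument is the same.
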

\begin{proof}
The proof is a slight adaptation of \cite[Lemma B.4,B.5]{zweig2023single}. An adaptation of the proof of Lemma B.4 gives us the following. For all $\lambda>0,$ if for all $t\leq T, m_t\in[2b/\sqrt{d},1/2]$, and $\eta>0$, we have 
\begin{equation*}
    \P(\eta \sum_{t=0}^{T-1}D_t\leq -\lambda)\leq \exp\left(-\frac{\lambda^2}{2(\beta_{\ell,d}^{2}+\cK_2d^{2}\eta^{2})\eta^2T+2\lambda\eta(\beta_{\ell,d}+\eta d^{3/2})}\right).
\end{equation*}
Besides, the adaptation of Lemma B.5 gives us 
\begin{equation*}
    \P\left(\sup_{t\leq T}\eta^2\sum_{t=0}^{T-1}|m_t|\| \nabla_\bw L \|^2 1_{\| \nabla_\bw L\|>d^{3/2}} \geq \lambda \right)\leq \frac{\cK_2 Td^{1/2}\eta^{2}}{\lambda} .
\end{equation*}
Combining the two inequalities, we end up the desired claim.
\end{proof}
We then use $\lambda=b/\sqrt{d},$ and we obtain that with probability at least $1-p_{\eta,\cK_1,\cK_2,\cK_3}$ where \begin{align*}&p_{\eta,\cK_1,\cK_2,\cK_3}\\
&=\frac{\cK_2 dT\eta^{2}}{b}+\exp\left(-\frac{b^2}{2(\beta_{k,d}^{2}+\cK_2d^{2}\eta^{2})\eta^2dT+2bd^{1/2}\eta(\beta_{k,d}+\eta d^{3/2})}\right)+\frac{\cK_3Td^{1/2}\eta^{3}}{b^{2}}-\frac{\sqrt{\cK_1 \cK_2}T\eta^{2}d^{-1}}{b},\\
\end{align*}
we have 
\begin{equation*}
  m_{T}\geq \frac{m_{0}}{2}+\beta_{\ell,d}\frac{\ell \cdot(\ell +d-2)}{d-1}\eta \sum_{t=0}^{T-1}(1-m_{t}^{2})m_{t}^{\ell-1}.    
\end{equation*}
Conditioned on the event $\{ T\leq \tau_{1/2}^{+}\wedge \tau^{-}_{2s^{*}/\sqrt{d}}\},$ we have the following inequality
\begin{equation*}
    m_{T}\geq \frac{s^{*}}{\sqrt{d}}+\eta\beta_{\ell,d}\frac{ \ell \cdot(\ell+d-2)}{(d-1)2^{\ell+1}}\sum_{t=0}^{T-1}m_{t}^{\ell-1}.
\end{equation*}
\end{proof}

\clearpage

\section{Harmonic tensor unfolding}
\label{app:tensor-unfolding}

In this appendix, we analyze the harmonic tensor unfolding estimators \eqref{estimator_harmonic_tensor_b} and \eqref{estimator_harmonic_tensor}, with integer $\ell \geq 3$, and prove the guarantees in Theorem \ref{thm:upper-bound-tensor-unfolding}. For simplicity, we assume throughout that the transformation $\cT_\ell : \cY \times\R_{\geq 0} \to \R$ is bounded, with 
\begin{equation}\label{eq:transformation_unfolding_appendix}
\| \cT_\ell \|_{L^2} = 1, \quad \| \cT_\ell \|_\infty \leq \kappa_\ell, \qquad \E_{\nu_d}[\cT_\ell (Y,R) Q_\ell (Z)] = \beta_{d,\ell} . 
\end{equation}
Without loss of generality, we take $\beta_{d,\ell}  >0$. We describe in Remark \ref{rmk:relaxing_assumption_tensor_unfolding} how to relax this condition.

\subsection{Algorithms and guarantees}

Consider first the naive tensor unfolding algorithm. Compute the empirical tensor
\[
\hat{\bT} := \frac{1}{\sm} \sum_{ i \in [\sm]} \cT_{\ell} (y_i,r_i) \cH_\ell (\bz_i) \in (\R^d)^{\otimes \ell},
\]
where $\cH_\ell (\bz)$ is the degree-$\ell$ harmonic tensor (see Section \ref{app:harmonic_tensor_properties}). 
Consider the `unfolded' matrix 
\[
\Mat_{I,J}(\hat \bT)\in \reals^{d^{I}\times {d^{J}}}, \quad \text{with $I=J=\ell /2$ if $\ell$ even, and $I = \left\lfloor \ell/2 \right\rfloor$, $J = \left\lfloor \ell/2 \right\rfloor + 1$ o.w.},
\]
and compute 
\[
\bs_1(\Mat_{I,J}(\hat \bT))\in \reals^{d^{\lfloor \ell/2 \rfloor}},
\]
the top left singular vector of $\Mat_{I,J}(\hat \bT)$. We then estimate $\hat \bw$ via
\begin{equation}\tag{$\mathsf{TU}$-$\mathsf{Alg}$-$\mathsf{b}$}\label{eq:app_tensor_estimator_0}
    \hat \bw  := \textbf{Vec}\left(\bs_1\left(\textbf{Mat}_{I,J}(\hat \bT)\right)\right),
\end{equation}
where the mapping $\textbf{Vec}:\reals^{d^{k}} \rightarrow \S^{d-1}$ applied to $\bu \in \reals^{d^{k}}$ returns the top left eigenvector of the folded matrix $\textbf{Mat}_{1,k-1} (\bu) \in \R^{d \times d^{k-1}}$, that is,
\[
\textbf{Vec} (\bu) = \argmax_{\bw \in \R^d} \bw^\sT [ \textbf{Mat}_{1,k-1} (\bu)\textbf{Mat}_{1,k-1} (\bu)^\sT ]\bw.
\]
We first show the following guarantee.

\begin{theorem}[Balanced Harmonic tensor unfolding, $\ell$ even]\label{thm:appendix_harmonic_unfolding_0}
Let $\nu_d \in \frL_d$ be a spherical SIM and $\ell$ be an even integer. Consider $\cT_\ell$ a transformation satisfying \eqref{eq:transformation_unfolding_appendix}. Their exists a universal constant $C_\ell >0$ such that the following holds. For $\delta >0$, given $m$ samples $(y_i,\bx_i) \sim_{iid} \P_{\nu_d,\bw_*}$ with
\begin{equation}
m \geq C_\ell \kappa_\ell \frac{d^{\ell /2}}{\beta_{d,\ell}^2} \left[ 1  + \beta_{d,\ell} \log (d /\delta) \right],
\end{equation}
the estimator $\hat \bw$ in \eqref{eq:app_tensor_estimator_0} satisfies $| \< \hat \bw, \bw_* \>| \geq 1/4$ with probability at least $1 - \delta$. Furthermore, $\hat \bw$ can be computed with  power iteration in $O_d( m d^{\ell/2} \log (d))$ runtime.
    \end{theorem}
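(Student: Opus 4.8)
The strategy is a standard matrix-perturbation argument applied to the unfolding $\bM := \Mat_{\ell/2,\ell/2}(\hat\bT) \in \R^{d^{\ell/2}\times d^{\ell/2}}$. First I would compute the population version $\bM^* := \E[\bM]$. Using the reproducing property \eqref{eq:reproducing_property_tensor} of the harmonic tensor, $\E_{\nu_d,\bw_*}[\cT_\ell(y,r)\cH_\ell(\bz)] = \frac{\beta_{d,\ell}}{\sqrt{n_{d,\ell}}}\cH_\ell(\bw_*)$, so $\bM^* = \frac{\beta_{d,\ell}}{\sqrt{n_{d,\ell}}}\Mat_{\ell/2,\ell/2}(\cH_\ell(\bw_*))$. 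I then record two facts about $\Mat_{\ell/2,\ell/2}(\cH_\ell(\bw_*))$: (i) its leading singular vector is $\bw_*^{\otimes \ell/2}$ (up to sign), with leading singular value $\Theta_\ell(1)$ after the $n_{d,\ell}^{-1/2}$ normalization, since $\cH_\ell(\bw_*)$ agrees with $\bw_*^{\otimes\ell}$ up to lower-order trace corrections contracted against $\bI_d$, each of which contributes an $O(1/d)$ relative correction; (ii) consequently the spectral gap between the top singular value and the rest is $\Theta_\ell(\beta_{d,\ell}/\sqrt{n_{d,\ell}})$. These are the harmonic-tensor analogues of Lemma~\ref{lem:E[M*]=lambdaw*w*T}, and I would prove (i)--(ii) via the explicit expression of $\cH_\ell$ deferred to Section~\ref{app:harmonic_tensor_properties} together with the identity $n_{d,\ell}=\Theta_d(d^\ell)$.

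Next I would establish the concentration $\|\bM - \bM^*\|_{\op} \leq \frac{1}{8}\sigma_1(\bM^*)$ under the stated sample bound. Write $\bM - \bM^* = \frac{1}{m}\sum_{i\in[m]} \bY_i$ with $\bY_i := \cT_\ell(y_i,r_i)\Mat_{\ell/2,\ell/2}(\cH_\ell(\bz_i)) - \bM^*$, a sum of i.i.d.\ mean-zero matrices. I would apply the truncated matrix Bernstein inequality (Lemma~\ref{lem:super-matrix-bernstein}) exactly as in the $\ell=2$ spectral proof: bound the two variance proxies $\sigma^2 = \|\E[(\bM-\bM^*)^2]\|_{\op}$ and $\sigma_*^2 = \sup_{\bu,\bv}\E[(\bu^\sT(\bM-\bM^*)\bv)^2]$, and the tail parameter $\bar R$. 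The key estimates are moment bounds on $\cH_\ell(\bz)$ and on bilinear forms $\bu^\sT\Mat_{\ell/2,\ell/2}(\cH_\ell(\bz))\bv$; since each entry of $\cH_\ell(\bz)$ is a degree-$\ell$ spherical harmonic, spherical hypercontractivity (Lemma~\ref{lem:Hypercontractivity}) gives $L^p$ control, and Lemma~\ref{lem:super-Cauchy-Schwarz} handles the product with $\cT_\ell$. The critical scaling is $\E_\bz[\|\cH_\ell(\bz)\|_F^2] = \frac{1}{n_{d,\ell}}\sum_s Y_{\ell s}(\bz)$-type identity $= 1$ and, more refined, that the relevant operator-norm variance of $\Mat_{\ell/2,\ell/2}(\cH_\ell(\bz))$ scales like $\Theta_d(d^{\ell/2}/n_{d,\ell}) = \Theta_d(d^{-\ell/2})$; this is what produces the threshold $m \gtrsim d^{\ell/2}/\beta_{d,\ell}^2$ after matching $\sigma/\sqrt m$ against the gap $\beta_{d,\ell}/\sqrt{n_{d,\ell}}$, with the additive $\beta_{d,\ell}\log(d/\delta)$ term coming from the subexponential tail contribution.

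Given the concentration, Davis--Kahan (applied to the symmetric PSD matrix $\bM\bM^\sT$ versus $\bM^*\bM^{*\sT}$, or the singular-subspace version) yields $\min_{s\in\{\pm1\}}\|s\,\bs_1(\bM) - \bw_*^{\otimes\ell/2}\|_2 \leq 1/8$, say. Finally I would show the vectorization step is stable: if $\bu$ is a unit vector with $\|\pm\bu - \bw_*^{\otimes\ell/2}\|_2$ small, then the folded matrix $\Mat_{1,\ell/2-1}(\bu)$ is a small perturbation of $\bw_*(\bw_*^{\otimes\ell/2-1})^\sT$, which has leading singular vector $\bw_*$ and unit gap, so another application of Davis--Kahan gives $|\<\textbf{Vec}(\bu),\bw_*\>| \geq 1/4$. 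The runtime claim follows since power iteration on $\bM$ (a $d^{\ell/2}\times d^{\ell/2}$ matrix given implicitly as a sum of $m$ rank-one-ish terms) costs $O(md^{\ell/2}\log d)$, and the folding step is lower order. The main obstacle I anticipate is the precise operator-norm moment control of $\Mat_{\ell/2,\ell/2}(\cH_\ell(\bz))$: unlike the $\ell=2$ case where the relevant matrix is $d\bz\bz^\sT-\bI_d$ with transparent spectrum, here one must carefully exploit the trace-free structure of $\cH_\ell$ to see that the ``balanced'' unfolding has the right $d^{\ell/2}$ (not $d^{\ell}$) signal-to-noise scaling; getting the constants and the $\ell$-dependence right in the hypercontractive moment bounds is the technical heart of the argument.
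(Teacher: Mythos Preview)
Your plan is essentially the same as the paper's proof: compute $\bE=\E[\bM]$ via the reproducing property and Proposition~\ref{prop:property_harmonic_tensor}.(iv), bound $\|\bM-\bE\|_\op$ with the truncated matrix Bernstein (Lemma~\ref{lem:super-matrix-bernstein}), then apply Davis--Kahan twice (once for $\bs_1$, once for the folding step). The one place you diverge is in the ``technical heart'': you propose to control $\sigma_*^2=\sup_{\bu,\bv}\E[\langle\bu,\bM\bv\rangle^2]$ via hypercontractivity plus Lemma~\ref{lem:super-Cauchy-Schwarz}, but hypercontractivity only upgrades $L^2\to L^p$ and does not by itself give the needed $L^2$ bound $\E_\bz[\langle\cH_\ell(\bz),\bu\otimes\bv\rangle^2]\le C_\ell$. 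The paper instead proves this directly as Lemma~\ref{lem:sigma_star_meta} by expanding the second-moment tensor $\Sigma_\ell^{(2)}=\sqrt{n_{d,\ell}}\,\E[\cH_\ell(\bz)\otimes\cH_\ell(\bz)]$ via the explicit formula for $\cH_\ell$ and bounding each contraction term; this is precisely the ``trace-free structure'' computation you anticipate, and it is where the $d^{\ell/2}$ (not $d^\ell$) scaling comes from. Also, since $\|\cT_\ell\|_\infty\le\kappa_\ell$ is assumed in \eqref{eq:transformation_unfolding_appendix}, the paper simply pulls out $\kappa_\ell^2$ and uses the deterministic bound $\|\bH(\bz_i)\|_F\le C_\ell d^{\ell/2}$ for the truncation parameter, so Lemma~\ref{lem:super-Cauchy-Schwarz} is not needed here (it reappears only in Remark~\ref{rmk:relaxing_assumption_tensor_unfolding} when relaxing boundedness).
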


We prove the sample guarantee of Theorem \ref{thm:appendix_harmonic_unfolding_0} in Section \ref{app:proof_harmonic_unfolding_0} and the runtime guarantee in Section \ref{app:runtime_tensor_unfolding}. Thus, when $\ell$ is even and choosing $\cT_\ell$ such that $\beta_{d,\ell} = \Theta(\| \xi_{d,\ell} \|_{L^2})$, the algorithm \eqref{eq:app_tensor_estimator_0} achieves almost optimal sample and runtime on $V_{d,\ell}$:
\[
\sm \asymp\frac{d^{\ell/2}}{\| \xi_{d,\ell} \|_{L^2}^2} \log(d) , \qquad \sT \asymp\frac{d^{\ell}}{\| \xi_{d,\ell} \|_{L^2}^2} \log^2(d).
\]
When $\| \xi_{d,\ell} \|_{L^2} =O(1/\log(d))$, then this algorithm achieves optimal sample complexity $\sm \asymp d^{\ell/2}/ \| \xi_{d,\ell} \|_{L^2}^2$. When $\ell$ is odd, however, the algorithm \eqref{eq:app_tensor_estimator_0} requires $\sm \asymp d^{\lceil\ell/2 \rceil}/ \| \xi_{d,\ell} \|_{L^2}^2$ and is suboptimal by a factor $d^{1/2}$. This is due to the covariance structure of the Harmonic tensor $\cH_\ell (\bz)$: a similar problem, with same suboptimality, arises for tensor PCA with symmetric noise \cite{montanari2014statistical} (if the noise is not symmetric and all entries are independent, then optimal complexity is achieved by the naive tensor unfolding algorithm \cite{pmlr-v40-Hopkins15}).

Here, we modify \eqref{eq:app_tensor_estimator_0} by removing the diagonal elements. Consider integers $a,b \geq 1$ such that $a < b $ and $a + b =\ell$. Introduce the matrices
\[
\begin{aligned}
\hat \bM_{1} =&~ \frac{1}{m } \sum_{i \in [m] } \cT_\ell (y_i,r_i) \Mat_{a,b} (\cH_\ell (\bz_i)) \in \R^{d^a \times d^b},\\
\hat \bM_2 = &~ \frac{1}{m^2 } \sum_{i \in [m] } \cT_\ell (y_i,r_i)^2 \Mat_{a,b} (\cH_\ell (\bz_i))\Mat_{b,a} (\cH_\ell (\bz_i)) \in \R^{d^a \times d^a},
\end{aligned}
\]
and
\[
\hat \bM =  \hat \bM_{1}\hat \bM_{1}^\sT - \hat \bM_2 = \frac{1}{m^2} \sum_{i\neq j}  \cT_\ell (y_i,r_i) \cT_\ell (y_j,r_j)  \Mat_{a,b} (\cH_\ell (\bz_i)) \Mat_{b,a} (\cH_\ell (\bz_j)) \in \R^{d^a \times d^a}.
\]
Note that 
\[
\E [ \bM] = (1 - m^{-1}) \E[\cT_\ell (y,r) \Mat_{a,b} (\cH_\ell (\bz))] \E[\cT_\ell (y,r) \Mat_{a,b} (\cH_\ell (\bz)) ]^\sT \approx [\bw_*^{\otimes a}][\bw_*^{\otimes a}]^\sT.
\]
Thus, we define our tensor unfolding estimator to be 
\begin{equation}\tag{$\mathsf{TU}$-$\mathsf{Alg}$}\label{eq:app_tensor_estimator}
    \hat \bw  := \textbf{Vec}\left(\bs_1\left(\hat \bM\right)\right),
\end{equation}
where $\bs_1(\hat \bM)$ is the top left eigenvector of $\hat \bM$.
\begin{theorem}[Harmonic tensor unfolding]\label{thm:appendix_harmonic_unfolding}
Let $\nu_d \in \frL_d$ be a spherical SIM and let $a,b\geq 1$ be two integers such that $a<b$ and $a + b =\ell$. Consider $\cT_\ell$ a transformation satisfying \eqref{eq:transformation_unfolding_appendix}. There exist universal constants $c_\ell,C_\ell >0$ that only depend on $\ell$ such that the following holds. Given $m$ samples $(y_i,\bx_i) \sim_{iid} \P_{\nu_d,\bw_*}$ with
\begin{equation}
m \geq C_\ell \kappa_\ell^2 \frac{d^{\ell /2}}{\beta_{d,\ell}^2},
\end{equation}
the estimator $\hat \bw$ in \eqref{eq:app_tensor_estimator} satisfies $| \< \hat \bw, \bw_* \>| \geq 1/4$ with probability at least $1 - e^{-d^{c_\ell}}$. Furthermore, $\hat \bw$ can be computed with  power iteration in $O_d( m d^{b} \log (d))$ runtime.
    \end{theorem}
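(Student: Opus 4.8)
\textbf{Proof plan for Theorem~\ref{thm:appendix_harmonic_unfolding}.} The plan is to analyze the matrix $\hat\bM = \hat\bM_1\hat\bM_1^\sT - \hat\bM_2 \in \R^{d^a\times d^a}$ by decomposing it into its mean plus a fluctuation, then controlling the fluctuation in operator norm and applying a Davis--Kahan / sin-$\Theta$ argument to conclude that $\bs_1(\hat\bM)$ is close to $\bw_*^{\otimes a}$, and finally arguing that $\textbf{Vec}$ recovers $\bw_*$ from $\bw_*^{\otimes a}$ up to a perturbation. Concretely, I would first compute the population object: using the reproducing property~\eqref{eq:reproducing_property_tensor}, $\E[\cT_\ell(y,r)\Mat_{a,b}(\cH_\ell(\bz))] = \tfrac{\beta_{d,\ell}}{\sqrt{n_{d,\ell}}}\Mat_{a,b}(\cH_\ell(\bw_*))$, so that $\E[\hat\bM] = (1-m^{-1})\tfrac{\beta_{d,\ell}^2}{n_{d,\ell}}\Mat_{a,b}(\cH_\ell(\bw_*))\Mat_{b,a}(\cH_\ell(\bw_*))$. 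Since $\cH_\ell(\bw_*)$ for $\bw_*\in\S^{d-1}$ has principal component $\bw_*^{\otimes\ell}$ with unit coefficient in the leading monomial and lower-order trace corrections of size $O(d^{-1})$ relative terms (by the explicit expression of $\cH_\ell$ in Appendix~\ref{app:tensor-unfolding}), the unfolding $\Mat_{a,b}(\cH_\ell(\bw_*))$ is, to leading order, $\tfrac{1}{\sqrt{n_{d,\ell}}}$ times a rank-one matrix $\bw_*^{\otimes a}(\bw_*^{\otimes b})^\sT$ plus $O(d^{-1})$-relative corrections; hence $\E[\hat\bM]$ has a spectral gap, with top eigenvalue $\Theta(\beta_{d,\ell}^2 / n_{d,\ell}\cdot \|\cH_\ell(\bw_*)\|^2)$, and top eigenvector $\bw_*^{\otimes a}$ up to an $O(d^{-1})$ error.

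The main work is the fluctuation bound. Write $\hat\bM - \E[\hat\bM] = (\hat\bM_1\hat\bM_1^\sT - \E[\hat\bM_1]\E[\hat\bM_1]^\sT) - (\hat\bM_2 - \E[\hat\bM_2])$. For $\hat\bM_1 = \tfrac1m\sum_i \cT_\ell(y_i,r_i)\Mat_{a,b}(\cH_\ell(\bz_i))$ I would use a matrix Bernstein / truncation argument (as in the spectral estimator analysis in Appendix~\ref{app:spectral}, via Lemmas~\ref{lem:super-matrix-bernstein}, \ref{lem:super-Cauchy-Schwarz}, and spherical hypercontractivity Lemma~\ref{lem:Hypercontractivity}): the key quantities are $\|\E[\cT_\ell^2 \Mat_{a,b}(\cH_\ell(\bz))\Mat_{b,a}(\cH_\ell(\bz))]\|_\op$ and the analogous column variance, together with a tail bound on $\max_i \|\cT_\ell(y_i,r_i)\Mat_{a,b}(\cH_\ell(\bz_i))\|_\op$. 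The crucial covariance computation is $\E[\cH_\ell(\bz)\otimes\cH_\ell(\bz)]$: by the reproducing property this projects onto harmonics, and the operator norm of its unfolding along the $(a,b)$-split with $a<b$ is $\Theta(d^{-a})$ (this is exactly where the asymmetric split $a<b$ buys the $d^{1/2}$ improvement over the balanced split in~\eqref{eq:app_tensor_estimator_0} — the off-balanced unfolding makes the ``matrix dimension mismatch'' work in our favor, analogous to tensor PCA unfolding \cite{montanari2014statistical}). This yields $\|\hat\bM_1 - \E[\hat\bM_1]\|_\op \lesssim \kappa_\ell\sqrt{d^a/(n_{d,\ell} m)}$ up to logs, which after squaring and cross terms gives $\|\hat\bM_1\hat\bM_1^\sT - \E[\hat\bM_1]\E[\hat\bM_1]^\sT\|_\op \lesssim \beta_{d,\ell}/\sqrt{n_{d,\ell}}\cdot \kappa_\ell\sqrt{d^a/(n_{d,\ell}m)} + \kappa_\ell^2 d^a/(n_{d,\ell}m)$. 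For the diagonal-removal term $\hat\bM_2 - \E[\hat\bM_2]$, a similar Bernstein bound applies, and one checks it is of the same or smaller order; the whole point of subtracting $\hat\bM_2$ is that $\E[\hat\bM_2] = \tfrac1m \E[\cT_\ell^2 \Mat_{a,b}(\cH_\ell(\bz))\Mat_{b,a}(\cH_\ell(\bz))]$ has operator norm $\asymp \kappa_\ell^2 d^a/(n_{d,\ell} m)$, which would otherwise dominate the signal $\beta_{d,\ell}^2/n_{d,\ell}$ unless $m \gtrsim d^{\max(a,\ell-a)}/\beta_{d,\ell}^2$ — and subtracting it removes exactly this bias so that $m \gtrsim d^{\ell/2}/\beta_{d,\ell}^2$ suffices. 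Combining, the fluctuation is at most half the spectral gap once $m \geq C_\ell\kappa_\ell^2 d^{\ell/2}/\beta_{d,\ell}^2$, using $n_{d,\ell} = \Theta(d^\ell)$ and $a + b = \ell$.

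With the operator-norm bound in hand, Davis--Kahan gives $\|\bs_1(\hat\bM)\bs_1(\hat\bM)^\sT - \bw_*^{\otimes a}(\bw_*^{\otimes a})^\sT\|_F = o(1)$, hence $|\<\bs_1(\hat\bM), \bw_*^{\otimes a}\>| \geq 1 - o(1)$ up to sign. The final step is to show $\textbf{Vec}$ is robust: $\textbf{Vec}(\bu)$ is the top left singular vector of $\Mat_{1,a-1}(\bu)$, and when $\bu = \bw_*^{\otimes a}$ exactly we have $\Mat_{1,a-1}(\bu) = \bw_*(\bw_*^{\otimes(a-1)})^\sT$, a rank-one matrix with top left singular vector $\bw_*$; a perturbation $\|\bu - \bw_*^{\otimes a}\|_2 \leq \eps$ perturbs $\Mat_{1,a-1}(\bu)$ by $\eps$ in Frobenius norm, and since the unperturbed matrix has spectral gap $1$, Davis--Kahan again gives $|\<\textbf{Vec}(\bu), \bw_*\>| \geq 1 - C\eps$, so with $\eps = o(1)$ we get $|\<\hat\bw, \bw_*\>| \geq 1/4$. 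The probability bound $1 - e^{-d^{c_\ell}}$ comes from the sub-exponential tails in the Bernstein concentration (the boundedness $\|\cT_\ell\|_\infty \leq \kappa_\ell$ and hypercontractivity of Gegenbauer/harmonic polynomials give stretched-exponential tails). The main obstacle I anticipate is the careful bookkeeping of the covariance operator $\E[\cH_\ell(\bz)^{\otimes 2}]$ and its unfoldings — establishing that the $(a,b)$-unfolding with $a<b$ has operator norm $\Theta(d^{-a})$ rather than the naive $\Theta(d^{-\ell/2})$ that the balanced split gives — together with verifying that all the lower-order trace corrections in $\cH_\ell$ (which appear in both the signal $\cH_\ell(\bw_*)$ and the noise covariance) are genuinely lower order and do not contaminate the spectral gap.
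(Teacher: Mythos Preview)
Your high-level plan (signal/noise decomposition, Davis--Kahan, robustness of $\textbf{Vec}$) matches the paper, but the fluctuation analysis has a genuine gap. Your bound $\|\hat\bM_1 - \E[\hat\bM_1]\|_\op \lesssim \kappa_\ell\sqrt{d^a/(n_{d,\ell}m)}$ is off: the correct matrix-Bernstein variance is $\sigma^2 \asymp \|\E[\bH(\bz)\bH(\bz)^\sT]\|_\op/m$, and since $\bH(\bz)\bH(\bz)^\sT \in \R^{d^a\times d^a}$ has trace $\|\cH_\ell(\bz)\|_F^2 = \Theta(d^\ell)$ and is essentially isotropic, this gives $\sigma^2 \asymp d^b/m$ (not $d^{-b}/m$), hence $\|\hat\bM_1 - \bE\|_\op \asymp \kappa_\ell\sqrt{d^b/m}$. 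More importantly, even with the correct bound your decomposition cannot succeed: the ``square term'' $(\hat\bM_1-\bE)(\hat\bM_1-\bE)^\sT$ has operator norm \emph{equal} to $\|\hat\bM_1-\bE\|_\op^2 \asymp \kappa_\ell^2 d^b/m$, which at $m \asymp d^{\ell/2}/\beta_{d,\ell}^2$ is of order $\beta_{d,\ell}^2 d^{(b-a)/2} \gg \beta_{d,\ell}^2$. Bounding $\|\hat\bM_1\hat\bM_1^\sT - \bE\bE^\sT\|_\op$ and $\|\hat\bM_2 - \E[\hat\bM_2]\|_\op$ separately throws away the cancellation between the diagonal of $\hat\bM_1\hat\bM_1^\sT$ and $\hat\bM_2$ \emph{before} taking norms, and would only yield $m \gtrsim d^b/\beta_{d,\ell}^2$.

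The paper's key step, which you are missing, is a \emph{decoupling} argument applied directly to the U-statistic $\hat\bM = \frac{1}{m(m-1)}\sum_{i\neq j}\bZ_i\bZ_j^\sT$: one replaces $\hat\bM$ by $\tilde\bM = \frac{1}{m(m-1)}\sum_{i\neq j}\bZ_i\tilde\bZ_j^\sT$ with an independent copy $(\tilde\bZ_j)$, and writes $\tilde\bM - \bE\bE^\sT = \bDelta_1 + \bDelta_2$ where $\bDelta_1 = \frac{1}{m}\sum_i \obZ_i\bB_i^\sT$ with $\bB_i = \frac{1}{m-1}\sum_{j\neq i}\tilde\bZ_j$. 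Conditioning on the second copy, $\bDelta_1$ is a sum of independent $d^a\times d^a$ matrices, so matrix Bernstein picks up only a $\sqrt{d^a/m}$ factor; the conditioning cost is $\sup_i\|\bB_i\|_\op \lesssim \kappa_\ell\sqrt{d^b/m}$. The product gives $\|\bDelta_1\|_\op \lesssim \kappa_\ell^2\sqrt{d^a d^b}/m = \kappa_\ell^2 d^{\ell/2}/m$, which is the sharp rate. The improvement you attribute to the asymmetric split $a<b$ is really a consequence of this two-stage conditioning: the smaller matrix dimension $d^a$ enters in the \emph{inner} Bernstein step, not in the bound on $\hat\bM_1$ itself.
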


We prove the sample guarantee of Theorem \ref{thm:appendix_harmonic_unfolding} in Section \ref{app:proof_harmonic_unfolding} and the runtime guarantee in Section \ref{app:runtime_tensor_unfolding}. Thus choosing $\cT_\ell$ such that $\beta_{d,\ell} = \Theta(\| \xi_{d,\ell} \|_{L^2})$, the algorithm \eqref{eq:app_tensor_estimator} achieves optimal sample complexity
\[
\sm \asymp \frac{d^{\ell/2}}{\| \xi_{d,\ell} \|_{L^2}^2},
\]
for all $1 \leq a < b$. Choosing $a<b$ with $a+b = \ell$ with smallest runtime, we obtain the following learning guarantees:
\[
\begin{aligned}
    \text{$\ell$ even:}&& &\sm \asymp \frac{d^{\ell/2}}{\| \xi_{d,\ell} \|_{L^2}^2}\;\; \text{and} \;\; \sT \asymp \frac{d^{\ell +1}}{\| \xi_{d,\ell} \|_{L^2}^2}\log(d) \;\;\;\text{by taking $a = \ell/2-1$ and $b=\ell/2+1$},\\
     \text{$\ell$ odd:}&& &\sm \asymp \frac{d^{\ell/2}}{\| \xi_{d,\ell} \|_{L^2}^2}\;\; \text{and} \;\; \sT \asymp \frac{d^{\ell +1/2}}{\| \xi_{d,\ell} \|_{L^2}^2}\log(d) \;\;\;\text{by taking $a = \lfloor\ell/2\rfloor$ and $b=\lceil \ell / 2 \rceil$}.
\end{aligned}
\]

\subsection{Notations}

Below, we introduce some notations from tensor calculus that will be useful throughout our proofs. Let $\bA, \bB \in (\reals^{d})^{\otimes \ell}$ be two $\ell$-tensors. We define the inner-product
\begin{equation}
\<\bA,\bB\>=\sum_{i_1,\ldots,i_\ell \in [d]}\bA_{i_1,\ldots,i_\ell}\bB_{i_1,\ldots,i_\ell}.
\end{equation}
In particular, the Frobenius norm of the tensor is $\| \bA \|_F = \< \bA, \bA\>^{1/2}$.
For $\bA \in (\reals^{d})^{\otimes \ell}$ and $ \bB \in (\reals^{d})^{\otimes k}$ with $k \leq \ell$, we define the contraction $\bA [ \bB] \in (\reals^{d})^{\otimes (\ell - k)}$ to be the $(\ell - k)$-tensor with entries given by
\begin{equation}
\bA [ \bB]_{i_1, \ldots, i_{\ell - k}} := \sum_{i_{\ell - k +1}, \ldots , i_\ell \in [d]} \bA_{i_1,\ldots,i_\ell}\bB_{i_{\ell - k+1},\ldots,i_\ell}.
\end{equation}
In particular, if $k = \ell$, we have $\bA[\bB] = \bB[\bA] = \< \bA,\bB\>$. We further introduce partial contraction $\bA \otimes_r \bB$ of $\bA \in (\reals^{d})^{\otimes \ell}$ and $ \bB \in (\reals^{d})^{\otimes k}$, with $r \leq \min (\ell, k)$, given by
\begin{equation}
    (\bA \otimes_r \bB)_{i_1,\ldots,i_{\ell - r}, j_{r+1}, \ldots , j_k} = \sum_{s_1, \ldots , s_r \in [d]}  \bA_{i_1,\ldots,i_{\ell - r},s_1, \ldots , s_r} \bB_{s_1, \ldots , s_r,j_{r+1}, \ldots , j_k}.
\end{equation}

Given a permutation $\pi \in \Sym_\ell$ and $\bA \in (\reals^{d})^{\otimes \ell}$, we define $\pi (\bA)$ to be the tensor obtained by permuting the coordinates of $\bA$ with permutation $\pi$, that is
\begin{equation}
    \pi (\bA)_{i_1, \ldots, i_\ell} = \bA_{i_{\pi(1)}, \ldots , i_{\pi(\ell)}}.
\end{equation}
We define the symmetrization operator $\rSym :  (\reals^{d})^{\otimes \ell} \to  (\reals^{d})^{\otimes \ell}$ such that for each tensor $\bA \in (\reals^{d})^{\otimes \ell}$, it outputs its symmetrized version
\begin{equation}
    \rSym (\bA) = \frac{1}{\ell!}\sum_{\pi \in \Sym_\ell} \pi (\bA).
\end{equation}
We denote $\rSym ( (\R^d)^{\otimes \ell})$ the image of this operator, that is, the space of symmetric $\ell$-tensors.

We denote the unfolded matrix of the tensor $\bT \in (\reals^{d})^{\otimes \ell}$ as $\Mat_{q,\ell-q}(\bT)\in \reals^{d^{q}}\times\reals^{d^{\ell-q}}$ with entries given by 
\[ 
 (\Mat_{q,\ell-q}(\bT))_{(i_1,\ldots,i_{q}),(j_1,\ldots,j_{\ell-q})}=\bT_{i_1,\ldots,i_{\ell},j_1,\ldots,j_{\ell}},
\]
where we identify $(i_1,\ldots,i_{q})$ with $1+\sum_{k=1}^{q}(i_k-1)d^{k-1},$ and
$(j_1,\ldots,j_{\ell-q})$ with $1+\sum_{k=1}^{\ell-q}(j_k-1)d^{k-1}.$ For clarity, we will drop the dependency on $q$ in the proofs, since the unfolding parameter will be made clear. Similarly, with a slight overloading, we denote $\Mat_{q,\ell-q} : \R^{d^\ell} \to \R^{d^q \times d^{\ell - q}}$ a folding operation that takes a vector $\bu \in \R^{d^\ell}$ and associate the matrix 
\[
 (\Mat_{q,\ell-q}(\bu))_{(i_1,\ldots,i_{q}),(j_1,\ldots,j_{\ell-q})}= u_{i_1,\ldots,i_{q},j_1,\ldots,j_{\ell-q}},
\]
where we identify the index $(i_1,\ldots,i_{\ell})$ with $1 + \sum_{k=1}^{\ell}(i_k-1)d^{k-1}$.

Throughout the proofs, we denote $C$ an absolute constant and $C_\ell$ a constant that only depends on $\ell$. In particular, the value of these constants are allowed to change from line to line.

\subsection{Harmonic tensors and their properties}
\label{app:harmonic_tensor_properties}

We start by defining harmonic tensors and present some basic properties about them.

\begin{definition}[Harmonic Tensors]\label{def:harmonic_tensor}
   For every $\ell>0,$ we define $\cH_\ell: \S^{d-1} \to \mathrm{Sym}((\reals^d)^{\otimes \ell})$ the unique symmetric tensor such that for all $\bz,\bw \in \S^{d-1}$, we have
    \begin{equation}\label{def:tensorH_k}
        Q_{\ell}^{(d)}(\<\bz,\bw\>)=\<\cH_{\ell}(\bz),\bw^{\otimes \ell}\>,
    \end{equation}
    where $Q_{\ell}^{(d)}$ is the degree-$\ell$ (normalized) Gegenbauer polynomial as defined in Appendix \ref{app:harmonic_background}.
\end{definition}

In words, harmonic tensors can be seen as the projection of $\bz^{\otimes \ell}$ into the space of traceless symmetric tensors. Note that the uniqueness of \eqref{def:tensorH_k} follows simply by stating that $\<\cH_{\ell}(\bz) - \cH'_{\ell}(\bz) , \bw^{\otimes \ell} \>$ is a degree-$\ell$ polynomial in $\bw$ that is identically zero, and therefore $\cH_{\ell}(\bz) = \cH'_{\ell} (\bz)$, where we use that $\cH_\ell (\bz)$ is assumed to be symmetric.

Further note that these tensors are equivariant with respect to rotations: let $\bO \in \cO_d$, then $\cH_\ell ( \bO \bz) = \bO^{\otimes \ell}[\cH_{\ell} (\bz)]$, where the contraction is along one coordinate for each $\bO$, that is
\[
\cH_\ell ( \bO \bz)_{i_1,\ldots,i_\ell} = \sum_{j_1, \ldots, j_\ell \in [d]} \left( \prod_{s \in [\ell]} O_{i_s j_s} \right) \cH_\ell (\bz)_{j_1, \ldots, j_\ell}.
\]
(This follows simply by noting that $Q_{k} (\<\bw,\bO \bz\>) = Q_{k} (\< \bO^\sT \bw,\bz\>)$.)

Recall the zonal property of Gegenbauer polynomials:
\begin{lemma}[Zonal property of Gegenbauer polynomials \cite{dai2013approximation}]
Let $f \in L^{2}(\S^{d-1})$. Consider the projection $\proj_{V_{d,\ell}}$ of $f$ onto the subspace $V_{d,\ell}$ of degree-$\ell$ spherical harmonics. This projection can be written as 
\begin{equation}\label{eq:Zonal_property}
    \proj_{V_{d,\ell}}f(\bx)=\sqrt{n_{d,\ell}} \cdot \E_{\bz \sim \tau_d}[f(\bz)Q_{\ell}^{(d)}(\< \bz,\bx\>)].
\end{equation}
\end{lemma}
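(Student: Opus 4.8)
The plan is to read off this identity directly from the addition (Gegenbauer) formula stated in Eq.~\eqref{eq:Gegenbauer-Sharmoincs-raltion}, namely $Q_\ell^{(d)}(\<\bz,\bx\>) = n_{d,\ell}^{-1/2}\sum_{s\in[n_{d,\ell}]} Y_{\ell s}^{(d)}(\bz)Y_{\ell s}^{(d)}(\bx)$ for a fixed orthonormal basis $\{Y_{\ell s}^{(d)}\}$ of $V_{d,\ell}$. Substituting this expansion into the right-hand side of \eqref{eq:Zonal_property} and using linearity of expectation (the sum is finite, $n_{d,\ell}<\infty$, and $f\in L^2(\S^{d-1})$ so each term is well-defined) gives
\[
\sqrt{n_{d,\ell}}\,\E_{\bz\sim\tau_d}\big[f(\bz)Q_\ell^{(d)}(\<\bz,\bx\>)\big]
= \sum_{s\in[n_{d,\ell}]}\E_{\bz\sim\tau_d}\big[f(\bz)Y_{\ell s}^{(d)}(\bz)\big]\,Y_{\ell s}^{(d)}(\bx)
= \sum_{s\in[n_{d,\ell}]}\<f,Y_{\ell s}^{(d)}\>_{L^2}\,Y_{\ell s}^{(d)}(\bx),
\]
and the last expression is by definition the orthogonal projection of $f$ onto $V_{d,\ell}$ in the decomposition \eqref{eq:L2-decomposition-sph-harmonics}. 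This establishes the lemma.

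If one prefers not to take \eqref{eq:Gegenbauer-Sharmoincs-raltion} as a black box, I would instead argue via Schur orthogonality. The reproducing kernel $K_\ell(\bz,\bx):=\sum_{s}Y_{\ell s}^{(d)}(\bz)Y_{\ell s}^{(d)}(\bx)$ of $V_{d,\ell}$ is independent of the chosen orthonormal basis, and since the action representation $\rho(\bR)g=g(\bR^\sT\cdot)$ of $\cO_d$ on $V_{d,\ell}$ is irreducible (Remark~\ref{rem:why-sph-harmonics}), $K_\ell$ is $\cO_d$-invariant: $K_\ell(\bR\bz,\bR\bx)=K_\ell(\bz,\bx)$. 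Hence $K_\ell(\bz,\bx)$ depends only on $\<\bz,\bx\>$ and is a degree-$\ell$ polynomial in that variable, so it is proportional to $Q_\ell^{(d)}(\<\bz,\bx\>)$; the constant is pinned down by integrating the diagonal, $\E_{\bz}[K_\ell(\bz,\bz)]=n_{d,\ell}$ versus $Q_\ell^{(d)}(1)=\sqrt{n_{d,\ell}}$ from Eq.~\eqref{eq:Qvalue-at-1}, giving $K_\ell(\bz,\bx)=\sqrt{n_{d,\ell}}\,Q_\ell^{(d)}(\<\bz,\bx\>)$, after which the computation above goes through verbatim.

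There is no substantive obstacle: the statement is a repackaging of the addition formula together with the spectral decomposition of $L^2(\S^{d-1})$. The only point requiring mild care is normalization bookkeeping — carrying the factor $\sqrt{n_{d,\ell}}$ consistently between the conventions $\|Q_\ell^{(d)}\|_{L^2(\tau_{d,1})}=1$ and $Q_\ell^{(d)}(1)=\sqrt{n_{d,\ell}}$ fixed in Appendix~\ref{app:harmonic_background}. I would sanity-check the constant by testing $f=Y_{\ell s}^{(d)}$ (where the left side is $Y_{\ell s}^{(d)}(\bx)$ and the right side collapses to the same via orthonormality) and by testing a zonal $f(\bz)=g(\<\bz,\bw\>)$, for which \eqref{eq:E[Q(uz)Q(vz)]} directly yields $\sqrt{n_{d,\ell}}\,\E_\bz[f(\bz)Q_\ell^{(d)}(\<\bz,\bx\>)]=\alpha_\ell\,Q_\ell^{(d)}(\<\bw,\bx\>)=\proj_{V_{d,\ell}}f(\bx)$, where $\alpha_\ell$ is the $\ell$-th Gegenbauer coefficient of $g$.
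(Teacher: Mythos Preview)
The paper does not supply its own proof of this lemma; it is stated with a citation to \cite{dai2013approximation} and used as a known background fact. Your argument is correct: substituting the addition formula \eqref{eq:Gegenbauer-Sharmoincs-raltion} into the right-hand side and collapsing the finite sum via orthonormality of $\{Y_{\ell s}^{(d)}\}$ immediately yields the orthogonal projection onto $V_{d,\ell}$, with the $\sqrt{n_{d,\ell}}$ factor handled consistently with the paper's normalization. Your optional Schur-orthogonality derivation of the addition formula itself is also correct and matches the spirit of Remark~\ref{rem:why-sph-harmonics}.
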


The following lemma follows directly from this property:

\begin{lemma}[Reproducing property of harmonic tensors]\label{lem:reproduing_tensor}
    Let $\ell, k \in \naturals$. We have the identity
    \begin{equation}
        \E_{\bz \sim \tau_d} \Big[ Q^{(d)}_\ell (\<\bw,\bz\>) \cH_\ell (\bz) \Big] = \frac{\delta_{\ell k}}{\sqrt{n_{d,\ell}}} \cH_{\ell} (\bw).
    \end{equation}
\end{lemma}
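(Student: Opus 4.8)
\textbf{Proof plan for Lemma \ref{lem:reproduing_tensor}.}

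The plan is to reduce the tensor identity to the scalar zonal property by testing against arbitrary rank-one symmetric tensors $\bw'^{\otimes \ell}$. Concretely, fix $\bw' \in \S^{d-1}$ and contract both sides of the claimed identity with $\bw'^{\otimes \ell}$. Using the defining property \eqref{def:tensorH_k} of harmonic tensors, $\< \cH_\ell(\bz), \bw'^{\otimes \ell} \> = Q_\ell^{(d)}(\< \bz, \bw' \>)$, so the left-hand side becomes $\E_{\bz \sim \tau_d}[ Q_k^{(d)}(\< \bw, \bz\>) Q_\ell^{(d)}(\< \bz, \bw'\>)]$. By the relation \eqref{eq:E[Q(uz)Q(vz)]} between Gegenbauer polynomials evaluated at rotated arguments (equivalently, applying the zonal property \eqref{eq:Zonal_property} to the function $\bz \mapsto Q_k^{(d)}(\<\bw,\bz\>)$, which lies in $V_{d,k}$, so that its projection onto $V_{d,\ell}$ vanishes unless $\ell = k$), this equals $\delta_{\ell k} Q_\ell^{(d)}(\< \bw, \bw'\>) / Q_\ell^{(d)}(1) = \delta_{\ell k} Q_\ell^{(d)}(\< \bw, \bw'\>)/\sqrt{n_{d,\ell}}$, using $Q_\ell^{(d)}(1) = \sqrt{n_{d,\ell}}$ from \eqref{eq:Qvalue-at-1}. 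On the other hand, contracting the right-hand side with $\bw'^{\otimes \ell}$ gives $\tfrac{\delta_{\ell k}}{\sqrt{n_{d,\ell}}}\< \cH_\ell(\bw), \bw'^{\otimes \ell}\> = \tfrac{\delta_{\ell k}}{\sqrt{n_{d,\ell}}} Q_\ell^{(d)}(\<\bw, \bw'\>)$, again by \eqref{def:tensorH_k}. So the two sides agree after contraction with $\bw'^{\otimes \ell}$ for every $\bw' \in \S^{d-1}$.

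The remaining step is to upgrade this equality-after-contraction to an equality of tensors. The expectation $\bA := \E_{\bz \sim \tau_d}[Q_k^{(d)}(\<\bw,\bz\>)\cH_\ell(\bz)]$ is a symmetric $\ell$-tensor, since $\cH_\ell(\bz)$ is symmetric for each $\bz$ and symmetry is preserved under the linear expectation operation; likewise $\tfrac{\delta_{\ell k}}{\sqrt{n_{d,\ell}}}\cH_\ell(\bw)$ is symmetric. For symmetric $\ell$-tensors $\bA, \bB$, the map $\bw' \mapsto \< \bA - \bB, \bw'^{\otimes \ell}\>$ is a homogeneous degree-$\ell$ polynomial in $\bw' \in \R^d$ (the constraint $\bw' \in \S^{d-1}$ can be dropped by homogeneity up to a scalar), and a symmetric tensor is uniquely determined by this polynomial — this is exactly the uniqueness argument already invoked right after Definition \ref{def:harmonic_tensor}. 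Since the polynomial vanishes identically, $\bA = \bB$, which is the claim.

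I do not anticipate a genuine obstacle here: the lemma is essentially a repackaging of the zonal/orthogonality property \eqref{eq:E[Q(uz)Q(vz)]} for Gegenbauer polynomials together with the polarization-type uniqueness of symmetric tensors. The only mild care needed is to justify that the expectation of a symmetric-tensor-valued function is symmetric and that one may test against rank-one tensors $\bw'^{\otimes \ell}$ — both are standard, and the latter is precisely the device used in Definition \ref{def:harmonic_tensor} itself. One could alternatively prove it by expanding $\cH_\ell(\bz)$ in the orthonormal spherical harmonic basis via \eqref{eq:Gegenbauer-Sharmoincs-raltion} and using orthonormality, but the contraction argument above is cleaner and avoids choosing a basis.
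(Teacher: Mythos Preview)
Your proposal is correct and is precisely the approach the paper has in mind: the paper does not spell out a proof but simply states that the lemma ``follows directly from'' the zonal property \eqref{eq:Zonal_property}, and your contraction-against-$\bw'^{\otimes \ell}$ argument together with the uniqueness of symmetric tensors (already invoked after Definition~\ref{def:harmonic_tensor}) is exactly how one makes that remark rigorous.
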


This property will be key to our analysis of our tensor unfolding algorithm. Indeed, recalling the definition $\beta_{d,\ell} = \E_{\nu_d} [ \cT_\ell (Y,R) Q_\ell (Z)]$, we have
\begin{equation}\label{eq:expectation_tensor}
    \E[\cT_\ell (y,r) \cH_\ell ( \bz) ]= \frac{\beta_{d,\ell}}{\sqrt{n_{d,\ell}}} \cH_{\ell} (\bw_*).
\end{equation}
We further list some useful properties of harmonic tensors below. In particular, the last property states that the principal component of $\cH_{\ell} (\bw_*)$ is $\Theta_d(d^{\ell / 2} ) \cdot \bw_*^{\otimes \ell}$, that is the principal component of the expectation of our empirical tensor is $\Theta_d (1) \cdot \bw_*^{\otimes \ell}$.

\begin{proposition}[Properties of harmonic tensors]\label{prop:property_harmonic_tensor} Let $\ell \in \naturals$ and $\cH_\ell (\bz)$ the harmonic tensor from Definition \ref{def:harmonic_tensor}.
\begin{itemize}
    \item[(i)] We have the following explicit formula:
    \begin{equation}\label{eq:decompo_harmonic_tensor}
        \cH_\ell (\bz) =  \sum_{j = 0}^{\lfloor \ell / 2 \rfloor}  c_{\ell,j} \; \rSym ( \bz^{\otimes (\ell - 2j)} \otimes \bI_d^{\otimes j} ), 
    \end{equation}
    where
    \begin{equation*}
        c_{\ell,j} = (-1)^j 2^{\ell -2j} \frac{\ell ! }{j ! (\ell - 2j)!}   \frac{(d/2 - 1)_{\ell - j}}{(d-2)_\ell}\sqrt{n_{d,\ell}},
    \end{equation*}
    with $(a)_p = a (a+1) \cdots (a+p-1)$ the (rising) Pochhammer symbol. In particular, we have $c_{\ell,j} = \Theta_d (d^{\ell/2 - j})$.

    \item[(ii)] Conversely, we have the identity
    \begin{equation}
        \bz^{\otimes \ell} = \sum_{j = 0}^{\lfloor \ell / 2 \rfloor} b_{\ell, j} \; \rSym ( \cH_{\ell - 2j} (\bz) \otimes \bI_d^{\otimes j} ), 
    \end{equation}
    where
    \begin{equation*}
        b_{\ell, j} = 2^{-\ell} \frac{\ell ! }{j ! (\ell - 2j)!}  \frac{(d/2 + \ell - 2j - 1) (d - 2)_{\ell - 2j}}{(d/2 - 1) (d/2)_{\ell - j}} \frac{1}{\sqrt{n_{d,\ell- 2j}}}.
    \end{equation*}
    In particular, we have $b_{\ell,j} = \Theta_d (d^{-\ell/2})$.

    \item[(iii)] The harmonic tensors satisfy the following recurrence relation:
    \begin{equation}
        \widetilde\cH_{\ell +1} (\bz) = a^{(1)}_{d,\ell} \; \rSym (  \widetilde\cH_{\ell} (\bz) \otimes \bz) - a^{(2)}_{d,\ell} \; \rSym ( \widetilde\cH_{\ell-1} (\bz) \otimes \bI_d ),
    \end{equation}
    where we denoted $\widetilde \cH_\ell (\bz) := \cH_\ell (\bz) / \sqrt{n_{d,\ell}}$ and
    \[
    a^{(1)}_{d,\ell} = \frac{2\ell +d - 2}{d -2 + \ell}  , \qquad a^{(2)}_{d,\ell} = \frac{\ell}{d + \ell - 2} .
    \]
    
    \item[(iv)] The leading principal component of $\cH_{\ell} (\bz)$ satisfies
    \begin{equation}
        \| \cH_\ell (\bz) - c_{\ell,0} \bz^{\otimes \ell} \|_F = O_d (d^{\ell/2 - 1/2}),
    \end{equation}
    where we recall that $\| c_{\ell,0} \bz^{\otimes \ell} \|_F = c_{\ell,0}  = \Theta_d (d^{\ell /2})$.
\end{itemize}
    
\end{proposition}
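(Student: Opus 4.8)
\textbf{Proof proposal for Proposition~\ref{prop:property_harmonic_tensor}.} The plan is to prove parts (i)--(iv) in sequence, each one essentially reducing to a known identity for Gegenbauer polynomials combined with the defining relation $Q_\ell^{(d)}(\<\bz,\bw\>) = \<\cH_\ell(\bz),\bw^{\otimes\ell}\>$ and the uniqueness of $\cH_\ell$ in the space of symmetric tensors. For part (i), I would start from the explicit monomial expansion of the Gegenbauer polynomial $Q_\ell^{(d)}(t) = \sqrt{n_{d,\ell}}\, P_\ell^{(d)}(t)$ in powers of $t$ (available from the classical formula, e.g. \cite[18.5]{NIST:DLMF} in the $C^{(\alpha)}$ normalization with $\alpha = (d-2)/2$, then rescaled by $1/\sqrt{K(d,\ell)}$ as in Appendix~\ref{app:harmonic_background}), writing $Q_\ell^{(d)}(t) = \sum_{j=0}^{\lfloor\ell/2\rfloor} \gamma_{\ell,j} t^{\ell-2j}$ with $\gamma_{\ell,j}$ given by Pochhammer ratios. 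Substituting $t = \<\bz,\bw\>$ and using $\<\bz,\bw\>^{\ell-2j} = \<\bz^{\otimes(\ell-2j)},\bw^{\otimes(\ell-2j)}\>$ together with $\<\bI_d,\bw^{\otimes 2}\> = \|\bw\|^2 = 1$, I would check that the tensor $\sum_{j} c_{\ell,j}\,\rSym(\bz^{\otimes(\ell-2j)}\otimes \bI_d^{\otimes j})$, when contracted against $\bw^{\otimes\ell}$, reproduces $Q_\ell^{(d)}(\<\bz,\bw\>)$; since this candidate is symmetric, uniqueness forces it to equal $\cH_\ell(\bz)$. The constants $c_{\ell,j}$ are then read off by matching coefficients (accounting for the combinatorial factor counting how many ways $\rSym(\bz^{\otimes(\ell-2j)}\otimes\bI_d^{\otimes j})$ contributes a given monomial in $\bw$ when contracted and the trace terms collapsed). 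The asymptotic $c_{\ell,j} = \Theta_d(d^{\ell/2-j})$ follows from $\sqrt{n_{d,\ell}} = \Theta_d(d^{\ell/2})$ and $(d/2-1)_{\ell-j}/(d-2)_\ell = \Theta_d(d^{-j})$.

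For part (ii) I would invert the relation in (i): this is a triangular linear system (in the index $j$) expressing $\{\bz^{\otimes(\ell-2j)}\}$ in terms of $\{\rSym(\cH_{\ell-2i}(\bz)\otimes \bI_d^{\otimes(i)})\}$, and the cleanest route is again to invoke the known Gegenbauer connection formula expressing $t^\ell$ as a linear combination of $Q_m^{(d)}(t)$ for $m \equiv \ell \pmod 2$, $m\le\ell$ (this is Eq.~\eqref{eq:power-to-gegenbauer} in the $C^{(\alpha)}$ normalization, rescaled), then tensorizing and symmetrizing. Matching against the already-established form of each $\cH_{\ell-2j}(\bz)$ from (i) and using uniqueness of the symmetric-tensor decomposition gives the coefficients $b_{\ell,j}$; the asymptotics $b_{\ell,j} = \Theta_d(d^{-\ell/2})$ come from tracking the Pochhammer ratios and $1/\sqrt{n_{d,\ell-2j}} = \Theta_d(d^{-\ell/2+j})$. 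Part (iii) is the tensorial lift of the three-term recurrence for Gegenbauer polynomials: the $P_\ell^{(d)}$ satisfy $t\,P_\ell^{(d)}(t) = \lambda_\ell^{(1)} P_{\ell+1}^{(d)}(t) + \lambda_\ell^{(2)} P_{\ell-1}^{(d)}(t)$ with explicit $\lambda$'s (see \cite{dai2013approximation} or the normalization-matched version in \cite{zweig2023single}); writing $\widetilde\cH_\ell(\bz) = \cH_\ell(\bz)/\sqrt{n_{d,\ell}}$ and contracting both sides against $\bw^{\otimes(\ell+1)}$ (noting $\<\rSym(\widetilde\cH_\ell(\bz)\otimes\bz),\bw^{\otimes(\ell+1)}\> = P_\ell^{(d)}(\<\bz,\bw\>)\<\bz,\bw\>$ and $\<\rSym(\widetilde\cH_{\ell-1}(\bz)\otimes\bI_d),\bw^{\otimes(\ell+1)}\> = P_{\ell-1}^{(d)}(\<\bz,\bw\>)$), uniqueness of symmetric tensors again yields the stated identity, and I would solve for $a_{d,\ell}^{(1)},a_{d,\ell}^{(2)}$ by matching the recurrence coefficients; checking the stated values $a^{(1)}_{d,\ell} = (2\ell+d-2)/(d-2+\ell)$, $a^{(2)}_{d,\ell} = \ell/(d+\ell-2)$ is a direct substitution.

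Part (iv) is then a corollary of (i): the $j=0$ term is $c_{\ell,0}\bz^{\otimes\ell}$, and every remaining term $c_{\ell,j}\rSym(\bz^{\otimes(\ell-2j)}\otimes\bI_d^{\otimes j})$ for $j\ge 1$ has Frobenius norm $O_d(|c_{\ell,j}|\cdot\|\bI_d^{\otimes j}\|_F\cdot\|\bz\|^{\ell-2j})$; since $\|\bI_d^{\otimes j}\|_F = d^{j/2}$ and $|c_{\ell,j}| = \Theta_d(d^{\ell/2-j})$, each such term has norm $O_d(d^{\ell/2-j/2}) \le O_d(d^{\ell/2-1/2})$, and summing the finitely many ($\ell$-dependent) terms preserves this bound via the triangle inequality; the claim $\|c_{\ell,0}\bz^{\otimes\ell}\|_F = c_{\ell,0} = \Theta_d(d^{\ell/2})$ is immediate from (i). I expect the main obstacle to be purely bookkeeping: correctly propagating the several competing normalization conventions for Gegenbauer polynomials ($C^{(\alpha)}$ vs.\ $Q_\ell^{(d)}$ vs.\ $P_\ell^{(d)}$, and the factors $K(d,\ell)$, $n_{d,\ell}$, $B(d,\ell)$) so that the Pochhammer-symbol expressions for $c_{\ell,j}$, $b_{\ell,j}$, $a^{(i)}_{d,\ell}$ come out exactly as stated, and getting the combinatorial multiplicities in the symmetrization $\rSym(\bz^{\otimes(\ell-2j)}\otimes\bI_d^{\otimes j})$ right when matching monomial coefficients. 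The conceptual content --- that $\cH_\ell$ is the traceless symmetric projection of $\bz^{\otimes\ell}$ and inherits, by uniqueness, every polynomial identity satisfied by the scalar Gegenbauer polynomials --- is straightforward; the risk is entirely in the constants.
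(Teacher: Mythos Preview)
Your proposal is correct and follows essentially the same approach as the paper: parts (i)--(iii) are obtained by lifting standard Gegenbauer identities (the explicit monomial expansion, the connection formula \eqref{eq:power-to-gegenbauer}, and the three-term recurrence) to symmetric tensors via the defining relation and uniqueness, and part (iv) is the triangle-inequality bound on the $j\ge 1$ terms of the decomposition from (i). The paper's proof is in fact terser than yours---it simply asserts that (i)--(iii) ``follow from standard identities on Gegenbauer polynomials'' and then gives exactly your Frobenius-norm calculation for (iv)---so your concern about the bookkeeping of normalizations is well placed but is the only real work.
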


\begin{proof}
    The identities in parts (i), (ii) and (iii) simply follows from standard identities on Gegenbauer polynomials. To prove part (iv), using identity \eqref{eq:decompo_harmonic_tensor}, we have
    \[
    \begin{aligned}
        \| \cH_\ell (\bz) - c_{\ell,0} \bz^{\otimes \ell} \|_F \leq &~  \sum_{j = 1}^{\lfloor \ell / 2 \rfloor}  |c_{\ell,j}|  \| \rSym ( \bz^{\otimes (\ell - 2j)} \otimes \bI_d^{\otimes j} ) \|_F \\
        \leq&~ \sum_{j = 1}^{\lfloor \ell / 2 \rfloor}  |c_{\ell,j}| \| \bz \|_2^{\ell - 2j} \| \bI_d \|_F^j =  \sum_{j = 1}^{\lfloor \ell / 2 \rfloor}  |c_{\ell,j}| d^{j/2}=  \Theta_d (d^{\ell/2- 1/2}),
    \end{aligned}
    \]
    where we used that $|c_{\ell,j}| = \Theta_d (d^{\ell /2 - j})$.
\end{proof}

Additionally, it is interesting to introduce the following tensor:
\begin{equation}
    \Sigma_\ell^{(2)} = \sqrt{n_{d,\ell}} \;  \E_{\bz \sim \tau_d} \Big[ \cH_{\ell} (\bz) \otimes \cH_{\ell} (\bz)\Big] \in (\R^d)^{\otimes 2 \ell}.
\end{equation}
In particular, by the reproducing property, we have for all $\bu,\bw \in \S^{d-1}$,
\[
\begin{aligned}
        \E[Q_{\ell} (\<\bu, \bz\>) Q_{\ell} (\<\bw, \bz\>)] =&~ \Big\< \E[\cH_\ell (\bz) \otimes \cH_{\ell} (\bz)] , \bw^{\otimes \ell} \otimes \bu^{\otimes \ell} \Big\> \\
        =&~ \frac{1}{\sqrt{n_{d,\ell}}} \< \Sigma^{(2)}_\ell, \bw^{\otimes \ell} \otimes \bu^{\otimes \ell} \> = \frac{1}{\sqrt{n_{d,\ell}}} Q_{\ell } ( \< \bu,\bw \>),
\end{aligned}
\]
and we can write
\[
 \<\cH_{\ell } (\bz), \bw^{\otimes \ell}  \> =  \< \Sigma^{(2)}_\ell, \bz^{\otimes \ell} \otimes \bw^{\otimes \ell} \>.
\]
Note that $\Sigma^{(2)}_\ell$ is only partially symmetric. Using Proposition \ref{prop:property_harmonic_tensor}.(i), we can decompose this tensor explicitly into
\begin{equation}\label{eq:def_Sigma_2_ell}
    \Sigma^{(2)}_\ell =  \sum_{j = 0}^{\lfloor \ell / 2 \rfloor} c_{\ell,j}\cdot  \rSym_A \left(\bI_d^{\otimes (\ell - 2j)} \otimes (\bI_d \otimes \bI_d)^{\otimes j} \right),
\end{equation}
where we introduced an alternate symmetrizer $\rSym_A$ such that
\[
\begin{aligned}
&~\rSym_A \left(\bI_d^{\otimes (\ell - 2j)} \otimes (\bI_d \otimes \bI_d)^{\otimes j} \right) \\
=&~ \sum_{r_1, \ldots, r_{\ell -2j} \in [d]} \rSym ( \be_{r_1} \otimes \ldots \otimes \be_{r_{\ell - 2j}} \otimes \bI_d^{\otimes j} ) \otimes \rSym ( \be_{r_1} \otimes \ldots \otimes \be_{r_{\ell - 2j}} \otimes \bI_d^{\otimes j} ),
\end{aligned}
\]
with $(\be_s)_{s \in [d]}$ the canonical basis in $\R^d$.

\subsection{Proof of Theorem \ref{thm:appendix_harmonic_unfolding_0}}
\label{app:proof_harmonic_unfolding_0}

\begin{proof}[Proof of Theorem \ref{thm:appendix_harmonic_unfolding_0}]
Denote $\ell = 2p$ and introduce the matrices 
\[
\bZ_i = \cT_\ell (y_i,r_i) \bH (\bz_i) \in \R^{d^p \times d^p}, \qquad \text{where}\;\;\;\bH (\bz_i) =  \Mat_{p,p} (\cH_\ell (\bz_i)) \in \R^{d^p \times d^p}, \qquad 
\]
and the centered matrices
\[
\obZ_i = \bZ_i - \bE , \qquad \text{where}\;\;\bE = \E[\bZ_i] = \E [ \cT_\ell (y,r) \bH(\bz)].
\]
By the reproducing property \eqref{eq:expectation_tensor} and Proposition \ref{prop:property_harmonic_tensor}.(iv), we have
\begin{equation}\label{eq:E_form}
    \bE = \beta_{d,\ell} \frac{c_{\ell,0}}{\sqrt{n_{d,\ell}}} [\bw_*^{\otimes p}][\bw_*^{\otimes p}]^\sT +\beta_{d,\ell} \bDelta_E,
\end{equation}
where $\| \bDelta_E \|_\op \leq C_\ell d^{-1/2}$ and $c_{\ell,0}/\sqrt{n_{d,\ell}} = \Theta_d (1)$. 

We consider the symmetric matrix
\[
\hat \bM = \frac{1}{m} \sum_{i \in [m]} \bZ_i,
\]
and bound $\| \hat \bM - \bE \|_\op $ using Lemma \ref{lem:super-matrix-bernstein}. First, note that by applying Lemma \ref{lem:sigma_star_meta} with $\bA = \bu \otimes \bv$, we have
\begin{equation}
\begin{aligned}
\sigma_* ( \hat \bM - \bE )^2 = &~ \sup_{\bu,\bv \in \S^{d^p-1}} \E[\< \bu,(\hat \bM - \bE) \bv\>^2 ]\\
\leq&~ \frac{\kappa_\ell^2}{m}  \sup_{\bu,\bv \in \S^{d^p-1}} \E[\< \bu, \bH(\bz_i ) \bv\>^2] \leq C_\ell \frac{\kappa_\ell^2}{m}.
\end{aligned}
\end{equation}
Further, note that
\begin{equation}
\begin{aligned}
\sigma(  \hat \bM - \bE )^2 = \| \E[( \hat \bM - \bE )( \hat \bM - \bE )^\sT] \|_\op \leq d^p \sigma_* ( \hat \bM - \bE )^2 \leq C_\ell\kappa_\ell^2 \frac{d^p}{m}.
\end{aligned}
\end{equation}
Using that $\| \bH (\bz_i) \|_F \leq C_\ell d^{\ell/2}$ deterministically and combining the above displays into Lemma \ref{lem:super-matrix-bernstein}, we obtain with probability at least $1 - \delta$ that
\[
\| \hat \bM - \bE \|_\op \leq C_\ell \kappa_\ell \sqrt{\frac{d^p}{m}} \left[ 1 +\left( \frac{d^p}{m} \log^2 (d/\delta) \right)^{1/2} \vee 1  \right].
\]
where we assumed without loss of generality that $\delta \geq e^{-d}$ to avoid carrying additional terms.

From Eq.~\eqref{eq:E_form} and by Davis-Kahan theorem, the leading eigenvector $\bs$ of $\hat \bM$ satisfy
\[
| \< \bs, \bw_*^{\otimes p} \> | \geq 1 - \eta, 
\]
with probability at least $1 - \delta$ when
\[
m \geq C_\ell \frac{\kappa_\ell}{\eta^2} \frac{d^{\ell /2}}{\beta_{d,\ell}^2} \left[ 1  + \beta_{d,\ell} \log (d /\delta) \right].
\]
The estimator $\hat \bw = \textbf{Vec} (\bs)$ is obtained by taking the top eigenvector of
\[
\begin{aligned}
&~\Mat_{1,p-1} ( \bs) \Mat_{1,p-1} ( \bs)^\sT \\
=&~ \bw_* \bw_*^\sT + \Mat_{1,p-1} ( \bs -\bw_*^{\otimes p}) \Mat_{1,p-1} ( \bs)^\sT + \Mat_{1,p-1} ( \bw_*^{\otimes p})\Mat_{1,p-1} ( \bs -\bw_*^{\otimes p})^\sT,
\end{aligned}
\]
so that 
\[
\| \Mat_{1,p-1} ( \bs) \Mat_{1,p-1} ( \bs)^\sT - \bw_* \bw_*^\sT \|_\op \leq 2 \| \bs - \bw_*^{\otimes p} \|_F  \leq 2 \sqrt{\eta}.
\]
 Thus, taking $\eta$ constant small enough, we obtain $|\< \hat \bw, \bw_*\> |\geq 1/4$ by Davis-Kahan theorem.
\end{proof}

\begin{lemma}\label{lem:sigma_star_meta}
There exists a constant $C_\ell >0$ such that for all $\bA \in (\R^d)^{\otimes \ell}$, we have
\begin{align}
 \E_{\bz} \left[\<\cH_\ell (\bz), \bA\>^2 \right] \leq C_\ell \| \bA \|_F^2.
\end{align}
\end{lemma}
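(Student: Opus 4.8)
\textbf{Proof plan for Lemma~\ref{lem:sigma_star_meta}.} The statement asserts that the linear functional $\bA \mapsto \E_{\bz}[\<\cH_\ell(\bz),\bA\>^2]$ on $(\R^d)^{\otimes \ell}$ is bounded by $C_\ell \|\bA\|_F^2$; equivalently, the positive semidefinite operator $\E_{\bz}[\cH_\ell(\bz) \otimes \cH_\ell(\bz)] = \Sigma^{(2)}_\ell / \sqrt{n_{d,\ell}}$ has operator norm $O_d(1)$ as a map on $(\R^d)^{\otimes\ell}$. The plan is to reduce to the symmetric part of $\bA$, diagonalize the quadratic form using the harmonic expansion, and then bound the resulting sum of traces of symmetrized identity-type tensors.

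First I would note that $\cH_\ell(\bz)$ is a symmetric tensor, so $\<\cH_\ell(\bz),\bA\> = \<\cH_\ell(\bz),\rSym(\bA)\>$ and $\|\rSym(\bA)\|_F \le \|\bA\|_F$; hence it suffices to prove the bound for symmetric $\bA$. Next, for symmetric $\bA$ I would use the spectral-type decomposition of symmetric tensors: either expand $\bA$ in an orthonormal basis of $\rSym((\R^d)^{\otimes\ell})$ adapted to the $\cO_d$-action, or — more directly — write the quadratic form as
\[
\E_{\bz}\big[\<\cH_\ell(\bz),\bA\>^2\big] = \big\< \Sigma^{(2)}_\ell , \bA \otimes \bA \big\> \Big/ \sqrt{n_{d,\ell}},
\]
and then plug in the explicit formula \eqref{eq:def_Sigma_2_ell} for $\Sigma^{(2)}_\ell$. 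This turns the left-hand side into
\[
\frac{1}{\sqrt{n_{d,\ell}}}\sum_{j=0}^{\lfloor \ell/2\rfloor} c_{\ell,j}\, \big\< \rSym_A\!\big(\bI_d^{\otimes(\ell-2j)}\otimes(\bI_d\otimes\bI_d)^{\otimes j}\big),\, \bA\otimes\bA\big\>.
\]
For each $j$, the inner product unfolds (after symmetrizing against $\bA$, which is already symmetric) into a contraction of the form $\sum_{\text{indices}} \bA_{\cdots}\bA_{\cdots}$ where $j$ pairs of indices within each copy of $\bA$ are traced against $\bI_d$ and $\ell-2j$ cross-indices are matched between the two copies. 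By Cauchy--Schwarz applied to the matched cross-indices, together with the elementary bound that contracting $j$ index-pairs of a symmetric tensor against $\bI_d$ can only shrink the Frobenius norm up to a factor $d^{j/2}$ (since $\|\bI_d\|_F = d^{1/2}$, via $\|\bA[\bI_d]\|_F \le d^{1/2}\|\bA\|_F$ iterated), each term is bounded by $C_\ell\, d^{j}\,\|\bA\|_F^2$. Since $|c_{\ell,j}| = \Theta_d(d^{\ell/2 - j})$ (Proposition~\ref{prop:property_harmonic_tensor}.(i)) and $\sqrt{n_{d,\ell}} = \Theta_d(d^{\ell/2})$, the prefactor $c_{\ell,j}/\sqrt{n_{d,\ell}} = \Theta_d(d^{-j})$ exactly cancels the $d^{j}$ growth, leaving each of the $\lfloor\ell/2\rfloor + 1$ terms bounded by $C_\ell \|\bA\|_F^2$; summing gives the claim.

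The main obstacle I anticipate is the bookkeeping in the middle step: making the "tracing $j$ pairs against $\bI_d$ costs at most $d^{j/2}$ in Frobenius norm" estimate fully rigorous for a general symmetric $\bA$, since the symmetrizer $\rSym_A$ distributes the $j$ contracted pairs over many index positions and one must check that no single term in the expansion of $\rSym_A$ produces a worse-than-$d^{j}$ contribution. A clean way around this is to first establish the scalar identity by testing on rank-one symmetric tensors $\bA = \bu^{\otimes\ell}$ — where $\E_{\bz}[\<\cH_\ell(\bz),\bu^{\otimes\ell}\>^2] = \E_{\bz}[Q_\ell(\<\bz,\bu\>)^2] = 1/\sqrt{n_{d,\ell}} \cdot Q_\ell(1) = 1$ by the reproducing property and $Q_\ell(1)=\sqrt{n_{d,\ell}}$ — and then use that the symmetric rank-one tensors $\{\bu^{\otimes\ell}\}$ linearly span $\rSym((\R^d)^{\otimes\ell})$ together with a compactness/continuity argument to control the general case; alternatively one invokes that the operator $\E_{\bz}[\cH_\ell(\bz)\otimes\cH_\ell(\bz)]$ is $\cO_d$-equivariant, hence by Schur its restriction to each isotypic component of $\rSym((\R^d)^{\otimes\ell})$ is a scalar, and one bounds each such scalar by evaluating on a convenient test tensor. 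I would likely present the explicit-formula computation as the main line and flag the equivariance argument as the conceptual shortcut.
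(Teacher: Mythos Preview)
Your proposal is correct and follows essentially the same route as the paper: write $\E_{\bz}[\<\cH_\ell(\bz),\bA\>^2] = \frac{1}{\sqrt{n_{d,\ell}}}\<\Sigma^{(2)}_\ell,\bA\otimes\bA\>$, plug in the explicit expansion \eqref{eq:def_Sigma_2_ell}, and bound each $j$-term by $d^j\|\bA\|_F^2$ using $\|\bA[\bI_d^{\otimes j}]\|_F^2 \le d^j\|\bA\|_F^2$, which cancels against $|c_{\ell,j}|/\sqrt{n_{d,\ell}} = O(d^{-j})$. The bookkeeping obstacle you anticipate is not a real issue---the paper simply expands $\rSym_A$ as a double sum $\frac{1}{(\ell!)^2}\sum_{\pi,\pi'\in\Sym_\ell}\<\pi(\bA)[\bI_d^{\otimes j}],\pi'(\bA)[\bI_d^{\otimes j}]\>$ and applies Cauchy--Schwarz termwise (each permutation is a Frobenius isometry), so your detours through rank-one tensors or Schur's lemma are unnecessary.
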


\begin{proof}
    Using the identity for the quadratic tensor \eqref{eq:def_Sigma_2_ell}, we decompose
    \[
    \begin{aligned}
    \E_{\bz} \left[\<\cH_\ell (\bz), \bA\>^2 \right] = &~\Big\< \E_{\bz} [\cH_\ell (\bz) \otimes \cH_\ell (\bz) ], \bA \otimes \bA \Big\> \\
    =&~ \frac{1}{\sqrt{n_{d,\ell}}} \big\< \Sigma^{(2)}_\ell ,  \bA \otimes \bA \big\> \\
    =&~ \frac{1}{\sqrt{n_{d,\ell}}} \sum_{j = 0}^{\lfloor \ell / 2 \rfloor} c_{\ell,j} \frac{1}{(\ell!)^2}\sum_{\pi , \pi'\in \Sym_\ell} \big\<  \pi(\bA) [\bI_d^{\otimes j}], \pi'(\bA) [\bI_d^{\otimes j}] \big\> \\
    \leq&~ \sum_{j = 0}^{\lfloor \ell / 2 \rfloor}  \frac{|c_{\ell,j}|}{\sqrt{n_{d,\ell}}} d^j \|\bA \|_F^2 \\
    \leq&~ C_\ell \|\bA \|_F^2,
    \end{aligned}
    \]
    where we used that $\| \bA [ \bI_d^{\otimes j} ] \|_F^2 \leq d^j \| \bA\|_F^2$ and $|c_{\ell,j}| \leq C_\ell d^{\ell/2 - j}$. 
\end{proof}

\begin{remark}
    \label{rmk:relaxing_assumption_tensor_unfolding} To relax the boundedness assumption in the above proof, the only changes are in the bound on $\sigma_* ( \hat \bM - \bE)$ and $\| \bZ_i\|_\op$. First, note that for all $\eta>1$, we have by H\"older's inequality
    \[
    \begin{aligned}
    \sigma_* ( \hat \bM - \bE) \leq&~ \frac{1}{m} \sup_{\bu \in \S^{d^p - 1}}\E [ \cT_\ell (y,r)^2 \< \bu, \bH (\bz) \bv \>^2] \\
    \leq&~ \frac{1}{m} \E[\cT_\ell (y,r)^{2 +\eta}]^{1/(1 + \eta/2)}\sup_{\bu \in \S^{d^p - 1}}\E [\< \bu, \bH (\bz) \bv \>^{2 + 4/\eta}]^{1/(1 + 2/\eps)} \\
    \leq&~ \frac{1}{m} \| \cT_\ell \|_{L^{2 + \eta}}^{2} (1 + 2/\eta)^\ell \sup_{\bu \in \S^{d^p - 1}}\E [\< \bu, \bH (\bz) \bv \>^{2}]\\
    \leq&~ C_\ell (1 + 2/\eta)^\ell \frac{\| \cT_\ell \|_{L^{2 + \eta}}^{2}}{m}  .
    \end{aligned}
    \]
    where we used hypercontractivity of degree-$\ell$ spherical harmonics. Thus as long as $\| \cT_\ell \|_{L^{2 + \eta}}^{2} = \Theta_d(1)$ for some $\eta = \Theta_d (1)$, the bound does not change. Furthermore, for all integer $q$
    \[
    \P ( \max_{i \in [m]} \| \bZ_i \|_\op \geq R ) \leq \P ( \max_{i \in [m]} | \cT_\ell (y_i,r_i)| \geq c_\ell R m / d^p )  \leq \left( C_\ell \frac{d^{p}}{R m} m^{1/q} \E[ \cT_\ell^q]^{1/q} \right)^q.
    \]
    If we assume that $\| \cT_\ell \|_{L^q} \leq q^{C}$ for all $q$, we can set $q  = \log (m)$ and obtain essentially the same guarantees as above. For the second algorithm \eqref{eq:app_tensor_estimator}, we can be less careful and simply set $q =C_\ell$ and only assume $\| \cT_\ell \|_{L^q} \leq C_\ell$.
\end{remark}

\subsection{Proof of Theorem \ref{thm:appendix_harmonic_unfolding}}
\label{app:proof_harmonic_unfolding}

\begin{proof}[Proof of Theorem \ref{thm:appendix_harmonic_unfolding}]
\noindent
\textbf{Step 1: Decomposing $\| \hat \bM - \E[\hat \bM ] \|_\op$.}
Recall that we fix $\ell = a+b$ with positive integers $a\leq b$. Introduce the matrices
\[
\bZ_i = \cT_\ell (y_i,r_i) \bH (\bz_i) \in \R^{d^a \times d^b}, \qquad \text{where}\;\;\;\bH (\bz_i) =  \Mat_{a,b} (\cH_\ell (\bz_i)) \in \R^{d^a \times d^b}, \qquad 
\]
and the centered matrices
\[
\obZ_i = \bZ_i - \bE , \qquad \text{where}\;\;\bE = \E[\bZ_i] = \E [ \cT_\ell (y,r) \bH(\bz)].
\]
Recall that by the reproducing property \eqref{eq:expectation_tensor} and Proposition \ref{prop:property_harmonic_tensor}.(iv), we have
\begin{equation}
    \bE = \beta_{d,\ell} \frac{c_{\ell,0}}{\sqrt{n_{d,\ell}}} [\bw_*^{\otimes a}][\bw_*^{\otimes b}]^\sT +\beta_{d,\ell} \bDelta_E,
\end{equation}
where $\| \bDelta_E \|_\op \leq C_\ell d^{-1/2}$ and $c_{\ell,0}/\sqrt{n_{d,\ell}} = \Theta_d (1)$.
For convenience, we consider a slight change of normalization and define
\[
\hat \bM = \frac{1}{m(m-1)} \sum_{i\neq j} \bZ_i \bZ_j^\sT,
\]
such that 
\begin{equation}\label{eq:bound-Delta-0}
\E[\hat \bM] = \bE \bE^\sT = \beta_{d,\ell}^2 \frac{c_{\ell,0}^2}{n_{d,\ell}} [\bw_*^{\otimes a}][\bw_*^{\otimes a}]^\sT + \beta_{d,\ell}^2 \bDelta_0, \qquad \| \bDelta_0\|_\op \leq C_\ell d^{-1/2}.
\end{equation}

By a standard decoupling argument (see \cite{vershynin2018high,delapeña1999decouplinginequalitiestailprobabilities} Chapter 6.1), there exists an absolute constant $C>0$ such that
\[
\P \left(\left\|\hat  \bM - \bE \bE^\sT  \right\|_\op \geq t \right)  \leq C \P \left(\left\| \Tilde \bM - \bE \bE^\sT  \right\|_\op \geq t \right),
\]
where we defined
\[
\Tilde \bM = \frac{1}{m(m-1)} \sum_{i\neq j} \bZ_i \Tilde \bZ_j^\sT,
\]
with $\Tilde \bZ_j = \cT_\ell (\Tilde y_j, \Tilde r_j) \bH (\Tilde \bz_j)$ and $(\Tilde y_j, \Tilde r_j,\Tilde \bz_j)_{j \in [m]}$ are iid independent of $(y_i,  r_i, \bz_i)_{i \in [m]}$. Thus, it is enough to study the concentration of $\Tilde \bM$:
\[
\Tilde \bM = \frac{1}{m} \sum_{i\in [m]} \bZ_i \bB_i^\sT, \qquad \text{where}\;\; \bB_i = \frac{1}{m-1} \sum_{j \neq i}  \Tilde \bZ_j.
\]
Thus we decompose $\Tilde \bM - \bE \bE^\sT = \bDelta_1 + \bDelta_2$ where
\begin{equation}
\begin{aligned}
    \bDelta_1 =&~ \frac{1}{m} \sum_{i\in [m]} \obZ_i \bB_i^\sT, \\
    \bDelta_2 = &~ \frac{1}{m} \sum_{i \in [m]} \bE \{ \bB_i - \bE  \}^\sT = \frac{1}{m} \sum_{i \in [m]} \bE (\Tilde \bZ_i - \bE)^\sT.
\end{aligned}
\end{equation}
We bound the operator norm of these two matrices below.

\noindent
\textbf{Step 2: Bound on $\| \bB_i \|_\op$.} For all $i \in [m]$, we have 
\[
\bB_i = \Tilde \bS + \frac{m}{m-1}\bE - \frac{1}{m-1} \Tilde \bZ_i, \qquad \text{where}\;\; \Tilde \bS = \frac{1}{m-1} \sum_{i \in [m]} \Tilde \bZ_j - \bE.
\]
Note that  $\| \bZ_i \|_\op \leq \| \cT_\ell \|_\infty \| \bH (\Tilde \bz_i) \|_F \leq C_\ell \kappa_\ell d^{\ell / 2}$, so that
\begin{align}
\| \bB_i \|_\op \leq \| \Tilde \bS  \|_\op + C_\ell + C_\ell \kappa_\ell \frac{d^{\ell/2}}{m}.
\end{align}
We use Lemma \ref{lem:super-matrix-bernstein} to bound $\| \Tilde \bS \|_\op$. Applying Lemma \ref{lem:sigma_star_meta} with $\bA = \bu \otimes \bv$,
\[
\begin{aligned}
\sigma_* (\Tilde \bS )^2 \leq&~ \frac{C}{m} \sup_{\bu \in \S^{d^a - 1}, \bv \in \S^{d^b - 1}} \E [ \< \bu, \Tilde \bZ_j \bv \>^2 ]\\
\leq&~ \frac{C}{m} \kappa_\ell^2 \sup_{\bu \in \S^{d^a - 1}, \bv \in \S^{d^b - 1}} \E[ \< \cH_\ell (\bz), \bu \otimes \bv \>^2] \\
\leq&~ C_\ell \frac{\kappa_\ell^2}{m}.
\end{aligned}
\]
To bound $\sigma (\Tilde \bS  )$, we simply use
\begin{equation}\label{eq:sigma_to_sigma_star}
    \begin{aligned}
       \| \E [\Tilde \bS \Tilde \bS^\sT] \|_\op = &~ \sup_{u \in \S^{d^a - 1}}  \E [\bu^\sT \Tilde \bS \Tilde \bS^\sT \bu] \leq d^b \sup_{\bu \in \S^{d^a - 1}, \bv \in \S^{d^b - 1}} \E [ \< \bu, \Tilde \bS \bv \>^2] = d^b \sigma_* (\Tilde \bS )^2,
    \end{aligned}
\end{equation}
so that 
\[
\sigma (\Tilde \bS  )^2 \leq \max (d^a, d^b) \sigma_* (\Tilde \bS )^2 \leq C_\ell \kappa_\ell^2 \frac{d^b}{m}.
\]
Combining the above displays into Lemma \ref{lem:super-matrix-bernstein}, we get with probability at least $1 - d e^{-t}$,
\[
\| \Tilde \bS \|_\op \leq C_\ell \kappa_\ell  \left[  \frac{d^{b/2}}{m^{1/2}} + \frac{t^{1/2}}{m^{1/2}} + \frac{d^{b/3 + \ell /6}t^{2/3}}{m^{2/3}} + \frac{d^{\ell/2}}{m}t  \right].
\]
We deduce that with probability at least $1 - \delta/3$,
\begin{align}\label{eq:bound-B-i}
    \sup_{i \in [m]} \| \bB_i \|_\op \leq C_\ell \kappa_\ell \sqrt{\frac{d^b}{m}} \left[ 1 +    \left( \frac{d^{a}\log^4(d/\delta)}{m}\right)^{1/2} \vee 1 \right],
\end{align}
where we assumed without loss of generality that $\delta >e^{-d}$ to avoid carrying extra terms.

\noindent
\textbf{Step 3: Bound on $\| \bDelta_1\|_\op$.} Let's bound $\bDelta_1$ conditioned on $(\tilde y_i, \tilde r_i, \tilde \bz_i)_{i \in [m]}$. Using Lemma \ref{lem:sigma_star_meta} with $\bA = \bB_i^\sT \bv \otimes \bu$, 
\begin{equation}\label{eq:sigma_star_Delta_1}
\begin{aligned}
    \sigma_* ( \bDelta_1)^2 \leq&~  \frac{1}{m^2} \sum_{i \in [m]} \sup_{\bu,\bv \in \S^{d^a-1}}  \E [ \< \bu,  \obZ_i \bB_i^\sT \bv\>^2]\\
    \leq &~ \frac{C}{m^2} \sum_{i \in [m]} \kappa_\ell^2 \sup_{\bu,\bv \in \S^{d^a-1}} \E[ \< \cH_\ell (\bz_i), (\bB_i^\sT \bv) \otimes \bu \>^2] \leq C_\ell  \frac{\kappa_\ell^2}{m} \sup_{i \in [m]} \| \bB_i \|_\op^2.
\end{aligned}
\end{equation}
Furthermore, similarly to Eq.~\eqref{eq:sigma_to_sigma_star}, 
\begin{equation}\label{eq:sigma_Delta_1}
\sigma(\bDelta_1)^2 \leq d^a \sigma_*(\bDelta_1)^2 \leq C_\ell \kappa_\ell^2 \frac{d^a}{m} \sup_{i \in [m]} \| \bB_i \|_\op^2.
\end{equation}
Next, for all $p \geq 1$, we have 
\[
\begin{aligned}
\E[ \| \obZ_i \bB_i^\sT \|_\op^{2p} ]^{1/p} \leq&~ d^{2a} \sup_{\bu,\bv \in \S^{d^a - 1}} \E [ \< \bu, \obZ_i \bB_i^\sT \bv \>^{2p}]^{1/p} \\
\leq&~ d^{2a} p^\ell  \sup_{\bu,\bv \in \S^{d^a - 1}} \E [ \< \bu, \obZ_i \bB_i^\sT \bv \>^{2}] \leq C_\ell d^{2a} p^\ell \kappa_\ell^2 \sup_{i \in [m]} \| \bB_i \|_\op^2,
\end{aligned}
\]
where we used the triangular inequality in the first line and hypercontractivity of degree-$\ell$ spherical harmonics on the second line. In particular,
\[
\begin{aligned}
     \frac{1}{m} \E[ \sup_{i \in [m]}\| \obZ_i \bB_i^\sT \|_\op^2]^{1/2} \leq \frac{1}{m} \left( \sum_{i \in [m]} \E \left[ \| \obZ_i \bB_i^\sT \|_\op^{2p} \right] \right)^{1/2p} \leq C_\ell \kappa_\ell \frac{d^a}{m} m^{1/2p} p^{\ell/2} \sup_{i \in [m]} \| \bB_i \|_\op.
\end{aligned}
\]
Taking $p = \log (m)$, we can set
\[
\bar R = C_\ell \kappa_\ell \frac{d^a}{m} \log^{\ell/2} (m)  \sup_{i \in [m]} \| \bB_i \|_\op.
\]
Similarly,
\[
\begin{aligned}
    \P \left( \max_{i \in [m]} \frac{1}{m}  \| \obZ_i \bB_i^\sT \|_\op \geq R \right) \leq \left( C_\ell \kappa_\ell \frac{d^a}{ R m} m^{1/2p} p^{\ell/2} \sup_{i \in [m]} \| \bB_i \|_\op \right)^{2p} = \delta.
\end{aligned}
\]
Taking $p = C \log (m/\delta)$, we can choose
\begin{equation}\label{eq:R_bDelta_1}
R = C_\ell \kappa_\ell \frac{d^a}{m} \log^{\ell/2} (m/\delta)  \sup_{i \in [m]} \| \bB_i \|_\op.
\end{equation}

Combining Eqs.~\eqref{eq:sigma_star_Delta_1}, \eqref{eq:sigma_Delta_1} and \eqref{eq:R_bDelta_1} into Lemma \ref{lem:super-matrix-bernstein}, we obtain with probability at least $1 - \delta/3$
\begin{equation}\label{eq:bound-Delta-1}
    \| \bDelta_1 \|_\op \leq C_\ell \kappa_\ell \left[ \sup_{i \in [m]} \| \bB_i \|_\op \right] \sqrt{\frac{d^a}{m}} \left[ 1 + \left( \frac{d^a \log^{\ell +4} (d/\delta)}{m} \right)^{1/2} \vee 1\right],
\end{equation}
where we assumed without loss of generality that $\delta >e^{-d}$ to avoid carrying extra terms.

\noindent
\textbf{Step 4: Bound on $\| \bDelta_2\|_\op$.} Following the exact same argument as for $\bDelta_1$ and recalling that $\| \bE \|_\op \leq C_\ell \beta_{d,\ell}$, we directly get with probability at least $1 - \delta/3$,
\begin{equation}\label{eq:bound-Delta-2}
    \| \bDelta_2 \|_\op \leq C_\ell \beta_{d,\ell} \kappa_\ell \sqrt{\frac{d^a}{m}} \left[ 1 + \left( \frac{d^a \log^{\ell +4} (d/\delta)}{m} \right)^{1/2} \vee 1\right].
\end{equation}

\noindent
\textbf{Step 4: Concluding.} Combining the bounds \eqref{eq:bound-Delta-0}, \eqref{eq:bound-B-i}, \eqref{eq:bound-Delta-1} and \eqref{eq:bound-Delta-2}, we obtain with probability at least $1 - \delta$,
\[
\begin{aligned}
    \Big\| \hat \bM - \beta_{d,\ell}^2\frac{c_{\ell,0}^2}{n_{d,\ell}} [\bw_*^{\otimes a}][\bw_*^{\otimes a}]^\sT \Big\|_\op \leq&~ \beta_{d,\ell}^2\| \bDelta_0 \|_\op + \| \bDelta_1 \|_\op + \| \bDelta_2 \|_\op \\
    \leq&~ C_\ell \beta_{d,\ell}^2 d^{-1/2} + C_\ell \kappa_\ell^2 \frac{d^{\ell/2}}{m} \left[ 1 + \left( \frac{d^a \log^{\ell +4} (d/\delta)}{m} \right) \vee 1\right],
\end{aligned}
\]
where we used that $a+b = \ell$. Thus by Davis-Kahan theorem, the leading eigenvector $\bs$ of $\hat \bM$ satisfy
\[
| \< \bs, \bw_*^{\otimes a} \> | \geq 1 - \eta, 
\]
with probability at least $1 - \delta$ when
\[
m \geq C_\ell \frac{\kappa_\ell^2}{\eta^2} \frac{d^{\ell /2}}{\beta_{d,\ell}^2} \left[ 1  + \frac{\beta_{d,\ell}}{d^{\ell/4 - a/2}} \log^{\ell/2 + 2} (d /\delta) \right].
\]
The theorem follows by the same argument as in Section \ref{app:proof_harmonic_unfolding_0}.
\end{proof}


\subsection{Runtime of the tensor unfolding algorithm}
\label{app:runtime_tensor_unfolding}

The overall runtime of the algorithm depends on the runtime for matrix-vector multiplication of the matrices $\bH (\bz) = \Mat_{a,b} (\cH_\ell (\bz))$. We show that the total runtime if $\Theta(\max(d^a, d^b))$, that is, one does not need to compute the $d^a \times d^b = d^\ell$ entries of $\bH (\bz)$ to do matrix-vector multiplication with the (unfolded) harmonic tensor. The total runtime of algorithms \eqref{eq:app_tensor_estimator_0} and \eqref{eq:app_tensor_estimator} in Theorems \ref{thm:appendix_harmonic_unfolding_0} and \ref{thm:appendix_harmonic_unfolding} follows by recalling that the leading eigenvector can be obtained with $\Theta_d (\log(d))$ iterations of the power method.

\begin{lemma}
    For integers $a,b \geq 1$ with $\ell = a+b$, there exist $C_\ell$ that only depends on $\ell$ such that matrix-vector multiplication with matrix $\bH (\bz) = \Mat_{a,b} (\cH_\ell (\bz))$ requires at most $C_\ell (d^a + d^b)$ elementary operations.
\end{lemma}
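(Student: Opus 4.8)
The plan is to use the explicit expansion of the harmonic tensor from Proposition \ref{prop:property_harmonic_tensor}.(i),
\[
\cH_\ell (\bz) = \sum_{j=0}^{\lfloor \ell/2 \rfloor} c_{\ell,j}\, \rSym\!\left( \bz^{\otimes(\ell-2j)} \otimes \bI_d^{\otimes j} \right),
\]
and to bound, term by term, the cost of contracting against a vector. Since matrix-vector multiplication is linear, and since the number of summands, $\lfloor \ell/2\rfloor+1$, and the symmetrizer $\rSym = \frac{1}{\ell!}\sum_{\pi \in \Sym_\ell}\pi(\cdot)$ each contribute only $O_\ell(1)$ elementary terms (with coefficients $c_{\ell,j}$ computable once in $O_\ell(1)$ time), it suffices to show that for any $\pi\in\Sym_\ell$ and any $0\le j\le\lfloor\ell/2\rfloor$ the map $\bv\mapsto \Mat_{a,b}\big(\pi(\bz^{\otimes(\ell-2j)}\otimes\bI_d^{\otimes j})\big)\bv$ can be evaluated in $O_\ell(d^a+d^b)$ operations; applying the same bound to the transpose $\bH(\bz)^\sT=\Mat_{b,a}(\cH_\ell(\bz))$ is identical up to swapping $a$ and $b$.

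Fix such a term. Identify the input $\bv\in\R^{d^b}$ with a $b$-tensor whose axes are the ``column'' slots $C\subseteq[\ell]$, $|C|=b$, and the output with an $a$-tensor whose axes are the ``row'' slots $R=[\ell]\setminus C$. The term factorizes as a product of coordinates $z_i$ over the $\ell-2j$ vector slots and Kronecker deltas over the $j$ paired slots; each $\bI_d$-pair is of type $CC$, $RR$, or $RC$ according to where its two slots lie. I would carry out the contraction in three dimension-reducing stages. (i) For every $CC$-pair, replace the current tensor by its partial trace over those two axes; each partial trace of a tensor of size at most $d^b$ costs $O(d^b)$, and there are at most $j$ of them. (ii) For every vector slot lying in $C$, contract the current tensor with $\bz$ along that axis; starting from size $d^{b'}$ with $b'\le b$, these successive contractions cost $O(d^{b'})+O(d^{b'-1})+\cdots=O(d^b)$. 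The remaining tensor $\bv''$ has size $d^{\#RC}\le d^b$, its axes being exactly the column slots of the $RC$-pairs. (iii) The output $\bu$ is then obtained by broadcasting: on the subset of indices where the two indices of every $RR$-pair agree, $\bu$ equals the product of the coordinates $z_i$ over the vector slots in $R$ times the entry of $\bv''$ read off at the indices matched through the $RC$-pairs, and $\bu$ vanishes elsewhere. The number of free indices here is $|Z\cap R|+\#RR+\#RC\le |Z\cap R|+2\,\#RR+\#RC=|R|=a$, so enumerating these and filling in all $d^a$ entries of $\bu$ (including the structural zeros) costs $O_\ell(d^a)$. Adding the three stages gives $O_\ell(d^a+d^b)$ for the term, and summing over the $O_\ell(1)$ terms yields the claimed $C_\ell(d^a+d^b)$.

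The runtime statements in Theorems \ref{thm:appendix_harmonic_unfolding_0} and \ref{thm:appendix_harmonic_unfolding} then follow: the relevant top singular/eigenvector is computed by $\Theta_d(\log d)$ power-method iterations, and each iteration requires $O_\ell(m)$ matrix-vector products with the matrices $\bH(\bz_i)$ or $\bH(\bz_i)^\sT$ (for \eqref{eq:app_tensor_estimator}, via $\hat\bM\bx=\hat\bM_1(\hat\bM_1^\sT\bx)-\hat\bM_2\bx$), each costing $O_\ell(d^a+d^b)$ by the lemma; multiplying gives $O_\ell\big(md^b\log d\big)$ since $a\le b$. The only part needing care is stage (iii): one must organize the order of the contractions and the index bookkeeping across the $R/C$ split so that no intermediate object ever exceeds size $d^{\max(a,b)}$. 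This is automatic here because every operation in stages (i)--(iii) either deletes axes (partial trace, contraction with $\bz$) or, in the final broadcast, performs a single precomputed multiplication and table lookup per output coordinate; I expect this mild combinatorial bookkeeping to be the main thing to get right, the rest being routine.
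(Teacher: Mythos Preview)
Your proposal is correct and takes essentially the same approach as the paper: both start from the expansion of $\cH_\ell(\bz)$ in Proposition~\ref{prop:property_harmonic_tensor}(i), expand the symmetrizer into $O_\ell(1)$ permuted terms, and classify each $\bI_d$-pair according to whether its two slots fall in the row block, the column block, or one in each. Your three-stage algorithm (partial traces for $CC$-pairs, contractions with $\bz$ for vector slots in $C$, then a diagonal broadcast over $R$) is the operational version of the paper's structural description, which writes each term as $\sum_{k_1,\ldots,k_u}\bu_k\bv_k^\sT$ with $u=\#RC$, $s_1=\#RR$, $s_2=\#CC$, $p_1=|Z\cap R|$, $p_2=|Z\cap C|$; the two cost analyses coincide, yielding $O_\ell(d^{u+p_1+s_1}+d^{u+p_2+s_2})\le O_\ell(d^a+d^b)$.
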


\begin{proof}
    Using the identity \eqref{eq:decompo_harmonic_tensor} in Proposition \ref{prop:property_harmonic_tensor}, we can decompose
    \[
    \bH (\bz) = \sum_{j = 0}^{\lfloor \ell / 2 \rfloor}  c_{\ell,j} \; \Mat_{a,b} \left(\rSym ( \bz^{\otimes (\ell - 2j)} \otimes \bI_d^{\otimes j} )\right).
    \]
    Thus, we can decompose $  \bH (\bz)$ into $C_\ell$ matrices of the following form (without loss of generality):
    \[
    \begin{aligned}
    &~\sum_{\substack{i_{1}, \ldots, i_{s_1} \in [d], \\
    j_{1}, \ldots, j_{s_2} \in [d],\\
    k_1, \ldots, k_u \in [d]} } \left[\bz^{\otimes p_1} \otimes \bigotimes_{r \in [s_1]} \be_{i_r}^{\otimes 2} \otimes \bigotimes_{l \in [u]} \be_{e_{k_l}} \right] \left[\bz^{\otimes p_2} \otimes \bigotimes_{r \in [s_2]} \be_{j_r}^{\otimes 2} \otimes \bigotimes_{l \in [u]} \be_{e_{k_l}} \right]^\sT\\
    =&~ \sum_{
    k_1, \ldots, k_u \in [d] } \left[\sum_{i_{1}, \ldots, i_{s_1} \in [d]}\bz^{\otimes p_1} \otimes \bigotimes_{r \in [s_1]} \be_{i_r}^{\otimes 2} \otimes \bigotimes_{l \in [u]} \be_{e_{k_l}} \right] \left[\sum_{j_{1}, \ldots, j_{s_2} \in [d]} \bz^{\otimes p_2} \otimes \bigotimes_{r \in [s_2]} \be_{j_r}^{\otimes 2} \otimes \bigotimes_{l \in [u]} \be_{e_{k_l}} \right]^\sT
    \end{aligned}
    \]
    where $p_1 + p_2 = \ell - 2j$, $s_1 + s_2 + u = j$, $p_1 + 2s_1 + u = a$ and $p_2 + 2s_2 + u = b$. The total number of operations to multiply this matrix by a vector is then given by
    \[
    O_d \left( d^u ( d^{p_1 +s_1} + d^{p_2 +s_2}) \right) =  O_d\left(  d^{u+p_1 +s_1} + d^{u+p_2 +s_2} \right) = O_d (d^a + d^b),
    \]
    which concludes the proof of this lemma.
\end{proof}

\subsection{Additional discussions}

In this section, we provide an additional discussion on the use of the Harmonic tensor. First, note that all the properties discussed in Section \ref{app:harmonic_tensor_properties} can be derived using the following Wick's formula for spherical measure and tedious calculations:

\begin{lemma}[Wick's formula]\label{lem:Wick-formula}
Let $\bz$ be a uniform vector on $\S^{d-1}$. We have 
    \begin{equation}\label{eq:wick_formula}
\E_{\bz}\left[\prod_{j=1}^{2p}\bz_{k_j}\right]=\frac{1}{d(d+2)\cdots (d+2p-2)}\sum_{\text{pairings $\pi$}}\; \prod_{(a,b) \in \pi}\delta_{k_a k_b},
\end{equation}
where the sum runs over all $(2p - 1)!!$ perfect pairings $\pi$ of $\{1,2,\ldots, 2k\}$.
\end{lemma}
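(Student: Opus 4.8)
The plan is to reduce the spherical identity to the classical Wick (Isserlis) formula for Gaussians. Let $\bg \sim \normal(\bzero, \bI_d)$ and consider its polar decomposition $\bg = r \bz$, where $r = \|\bg\|_2$ and $\bz = \bg / \|\bg\|_2$. By rotational invariance of the standard Gaussian, $\bz \sim \Unif(\S^{d-1})$, the radius $r$ is independent of $\bz$, and $r \sim \chi_d$. Hence, for any indices $k_1, \ldots, k_{2p} \in [d]$,
\[
\E_{\bg}\Big[\prod_{j=1}^{2p} g_{k_j}\Big] \;=\; \E_{r\sim\chi_d}\big[r^{2p}\big] \cdot \E_{\bz}\Big[\prod_{j=1}^{2p} z_{k_j}\Big].
\]

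Next I would invoke Isserlis' theorem: since the coordinates $g_i$ are jointly Gaussian with $\E[g_a g_b] = \delta_{ab}$, one has
\[
\E_{\bg}\Big[\prod_{j=1}^{2p} g_{k_j}\Big] \;=\; \sum_{\text{pairings } \pi} \prod_{(a,b)\in \pi} \E[g_{k_a} g_{k_b}] \;=\; \sum_{\text{pairings } \pi} \prod_{(a,b)\in \pi} \delta_{k_a k_b},
\]
where the sum runs over the $(2p-1)!!$ perfect matchings $\pi$ of $\{1,\ldots,2p\}$. Combining the two displays and using the even-moment formula $\E_{r\sim\chi_d}[r^{2p}] = \prod_{j=0}^{p-1}(d+2j) = d(d+2)\cdots(d+2p-2)$ from Fact~\ref{fct:momoent-of-chi}, then dividing by this (nonzero) quantity, yields exactly the claimed identity \eqref{eq:wick_formula}.

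I do not expect a genuine obstacle here: the only two points requiring a word of care are (i) the independence of the radius and the direction in the Gaussian polar decomposition, which follows from rotational invariance together with the factorization of the Gaussian density into radial and angular parts, and (ii) the invocation of Isserlis' theorem, a classical fact. For completeness one may also note that the odd moments of $\bz$ vanish by the symmetry $\bz \mapsto -\bz$, which is why the statement need only address the even case.
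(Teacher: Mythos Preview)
Your argument is correct and is the standard derivation of the spherical Wick formula. The paper itself does not give a proof of this lemma: it is stated as a known identity (introduced with ``all the properties discussed \ldots\ can be derived using the following Wick's formula for spherical measure and tedious calculations'') and used as a tool, so there is no paper proof to compare against beyond noting that your route via the Gaussian polar decomposition, Isserlis' theorem, and the $\chi_d$ moment from Fact~\ref{fct:momoent-of-chi} is exactly how one would justify it.
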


We note that for our tensor unfolding algorithm, it is enough to consider a simplified tensor $\cK_\ell$ that only keeps the off-diagonal entries of the harmonic tensor.

\begin{definition}[Elementary symmetric tensors]
    For every $\ell$, we define $\cK_\ell : \S^{d-1} \to \rSym ( (\R^d)^{\otimes \ell})$ the tensor obtained from $\cH_\ell$ by putting $0$ in the entries with repeated indices: for all $\bz \in \S^{d-1}$, the tensor $\cK_\ell (\bz)$ has entries
    \begin{equation}
        \cK_\ell (\bz)_{i_1,\ldots, i_\ell}  = \begin{cases}
            \cH_\ell (\bz)_{i_1,\ldots, i_\ell} = c_{\ell,0} z_{i_1} z_{i_2} \ldots z_{i_\ell} & \text{ if $i_1 \neq i_2 \neq \ldots \neq i_\ell$},\\
            0 & \text{otherwise}.
        \end{cases}
    \end{equation}
\end{definition}

For convenience, we will denote $\cI_{d,j}$ the set of all subset of $j$ indices in $[d]$ with no repetitions. 
From the reproducing property of $\cH_\ell (\bz)$, we have similarly
\begin{equation}\label{eq:reproducing_K_ell}
    \E [ \cT_{\ell} (y,r) \cK_\ell (\bz) ] = \frac{\beta_{d,\ell}}{\sqrt{n_{d,\ell}}} \cK_\ell (\bw_*).
\end{equation}
Apply a random rotation to all the input vectors $\bz_i$.  Equivalently, this amounts to having $\bw_* \sim \tau_d$. This guarantees that $\bw_*$ is not aligned with any coordinate vector with high probability.

\begin{lemma}\label{lem:spike_K_ell}
    Assume $\bw \sim \tau_d$. Define $\Delta (\bw) =  \cK_\ell (\bw) / c_{\ell,0} - \bw^{\otimes \ell}$. Then there exist universal constants $c,C >0$ such that for all $t \geq 0$,
    \begin{equation}\label{eq:K_ell_op_tail}
    \P ( \| \Delta (\bw) \|_F^2 \geq \ell^{2} (C + t)/d ) \leq 2 \exp ( - c d t^{1/2} ) .
    \end{equation}
\end{lemma}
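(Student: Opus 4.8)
The plan is to prove Lemma \ref{lem:spike_K_ell} by a direct computation of $\|\Delta(\bw)\|_F^2$ followed by concentration of this quantity as a function of $\bw\sim\tau_d$. First I would expand
\[
\|\Delta(\bw)\|_F^2 = \Big\| \tfrac{1}{c_{\ell,0}}\cK_\ell(\bw) \Big\|_F^2 - 2\,\tfrac{1}{c_{\ell,0}}\big\langle \cK_\ell(\bw), \bw^{\otimes\ell}\big\rangle + \|\bw^{\otimes\ell}\|_F^2,
\]
where $\|\bw^{\otimes\ell}\|_F^2 = 1$. Since $\cK_\ell(\bw)$ only keeps off-diagonal entries, $\|\cK_\ell(\bw)/c_{\ell,0}\|_F^2 = \sum_{\{i_1,\dots,i_\ell\}\in\cI_{d,\ell}} \ell!\,w_{i_1}^2\cdots w_{i_\ell}^2$ counts ordered tuples with distinct indices, and $\langle \cK_\ell(\bw),\bw^{\otimes\ell}\rangle/c_{\ell,0}$ equals the same sum. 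Hence $\|\Delta(\bw)\|_F^2 = 1 - \sum_{\text{distinct } i_1,\dots,i_\ell} w_{i_1}^2\cdots w_{i_\ell}^2$. Expanding the power sum identity $1 = (\sum_i w_i^2)^\ell = \sum_{i_1,\dots,i_\ell} w_{i_1}^2\cdots w_{i_\ell}^2$ and subtracting the distinct-index terms, one is left with $\|\Delta(\bw)\|_F^2 = \sum_{\text{tuples with a repeat}} w_{i_1}^2\cdots w_{i_\ell}^2$, which by inclusion-exclusion is a finite combination (with $\ell$-dependent coefficients) of terms of the form $\big(\sum_i w_i^4\big)^{a}\big(\sum_i w_i^2\big)^{b}\cdots$ — more precisely, a polynomial in the power sums $p_k(\bw) = \sum_i w_i^{2k}$ for $k\ge 2$, all of total degree $\ell$ in the $w_i^2$, each monomial containing at least one factor $p_k$ with $k\ge 2$. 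Since $p_k(\bw)\le p_2(\bw)$ for $k\ge 2$ (as each $|w_i|\le 1$ when $\|\bw\|_2 = 1$), and since there are at most $\ell^{O(\ell)}$ such monomials, I would get the deterministic bound $\|\Delta(\bw)\|_F^2 \le C_\ell' \, p_2(\bw)$ with $p_2(\bw) = \sum_i w_i^4$ and $C_\ell' \le \ell^{O(\ell)}$; in fact, being a bit careful with the combinatorics, $C_\ell' = O(\ell^2)$ suffices, matching the $\ell^2$ in the statement.

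Next I would control $p_2(\bw) = \sum_{i=1}^d w_i^4$ for $\bw\sim\tau_d$. Its expectation is $d\cdot\E[w_1^4] = d\cdot\frac{3}{d(d+2)} = \frac{3}{d+2} = \Theta(1/d)$ by Wick's formula (Lemma \ref{lem:Wick-formula}). For concentration, I would use the standard fact that $\bw\sim\tau_d$ can be realized as $\bg/\|\bg\|_2$ with $\bg\sim\normal(\bzero,\bI_d)$, so $p_2(\bw) = \big(\sum_i g_i^4\big)/\|\bg\|_2^4$. The numerator $\sum_i g_i^4$ has expectation $3d$ and concentrates: it is a sum of i.i.d.\ sub-exponential-type variables (each $g_i^4$ has moments growing like $(4k)!$), and $\|\bg\|_2^2$ concentrates sharply around $d$. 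Combining a lower-tail bound $\|\bg\|_2^2 \ge d/2$ (failing with probability $e^{-cd}$) with an upper-tail bound on $\sum_i g_i^4$, one obtains $\P(p_2(\bw)\ge (C+t)/d) \le e^{-cd} + \P(\sum_i g_i^4 \ge (C'+t')d)$ for appropriate rescaling. The tail of $\sum_i g_i^4 - 3d$: since each centered $g_i^4$ has $\psi_{1/2}$-type Orlicz behavior (fourth power of a Gaussian), Bernstein-type bounds for such heavy-tailed-but-light sums give $\P(\sum_i g_i^4 \ge 3d + s) \le 2\exp(-c\min(s^2/d, s^{1/2}))$, and for $s = \Theta(td)$ with $t$ of constant order the relevant regime is the $s^{1/2} = \Theta((td)^{1/2})$ term, i.e.\ the stretched-exponential tail $\exp(-c\sqrt{dt})$ in the statement. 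I would assemble these into $\P(p_2(\bw) \ge (C+t)/d) \le 2\exp(-c\sqrt{dt})$, then push through the deterministic reduction $\|\Delta(\bw)\|_F^2 \le \ell^2(C+t)/d$ with the constants absorbed.

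The main obstacle, and the step requiring the most care, is the combinatorial reduction showing $\|\Delta(\bw)\|_F^2 = \sum_{\text{repeated-index tuples}} w_{i_1}^2\cdots w_{i_\ell}^2 \le O(\ell^2)\,p_2(\bw)$ with the \emph{correct} polynomial-in-$\ell$ constant (the statement specifically wants $\ell^2$, not $\ell^{O(\ell)}$). The naive inclusion-exclusion over which indices coincide produces superexponentially many terms, so the clean argument is instead: a tuple $(i_1,\dots,i_\ell)$ with at least one repeated pair contributes $w_{i_1}^2\cdots w_{i_\ell}^2$; pick one such colliding pair, say positions $(a,b)$ with $i_a = i_b = j$, contributing a factor $w_j^4\cdot(\text{rest})$; summing the ``rest'' freely over all remaining indices gives $\le w_j^4\cdot(\sum_i w_i^2)^{\ell-2} = w_j^4$; there are $\binom{\ell}{2}$ choices of the colliding pair and $d$ choices of $j$, and each repeated-index tuple is counted at least once this way. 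Hence $\|\Delta(\bw)\|_F^2 \le \binom{\ell}{2}\sum_j w_j^4 = \binom{\ell}{2}\,p_2(\bw) \le \tfrac{\ell^2}{2}\,p_2(\bw)$. This over-counting argument is the delicate part but is elementary once set up correctly; everything else (Wick's formula for the mean, Gaussian realization, Bernstein for $\sum g_i^4$, $\chi^2$ concentration for $\|\bg\|_2^2$) is routine, and the stated tail $2\exp(-cd t^{1/2})$ emerges naturally from the fourth-moment tail.
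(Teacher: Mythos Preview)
Your proposal is correct and follows essentially the same approach as the paper's proof: the paper also observes that the nonzero entries of $\Delta(\bw)$ are exactly those with a repeated index, bounds $\|\Delta(\bw)\|_F^2 \le \ell^2 \|\bw\|_4^4$ by the same collapse-a-repeated-pair argument you give, and then defers to ``standard concentration'' for the tail of $\|\bw\|_4^4$. Your write-up is considerably more detailed (explicitly computing $\|\Delta(\bw)\|_F^2 = \sum_{\text{repeated tuples}} w_{i_1}^2\cdots w_{i_\ell}^2$ and spelling out the Gaussian-realization concentration argument), but the underlying route is the same.
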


\begin{proof}
    Simply use that the non-zero entries in $\Delta (\bw)$ have at least one repeated index:
    \[
    \begin{aligned}
        \| \Delta (\bw) \|_F^2 \leq \ell^2 \sum_{i_1, \ldots ,i_{\ell - 1} \in [d]} w_{i_1}^2 \cdots w_{i_{\ell -2}}^2 \cdot w_{i_{\ell - 1}}^4 \leq \ell^2 \| \bw \|_4^4.
    \end{aligned}
    \]
    Then the tail bound \eqref{eq:K_ell_op_tail} follows from standard concentration argument.
\end{proof}

From this lemma, we deduce that
\begin{equation}
    \Mat_p ( \E [ \cT_{\ell} (y,r) \cK_\ell (\bz) ] ) = \frac{\beta_{d,\ell}}{\sqrt{n_{d,\ell}}} c_{\ell,0} \left( [\bw_*^{\otimes p}] [\bw_*^{\otimes (\ell - p)}]^\sT + \bDelta\right),
\end{equation}
where with probability at least $1 - e^{-cd}$ over the random rotation, we have $\| \bDelta \|_\op \leq C_\ell d^{-1/2}$. Thus, it is enough to consider the tensor $\cK_\ell$ which has slightly simpler properties. For example,
\[
\Mat_\ell \left(\E[ \cK_\ell (\bz) \otimes \cK_\ell (\bz)]\right) = \frac{c_{\ell,0}}{\sqrt{n_{d,\ell}}} \frac{1}{\ell!} \sum_{\sigma  \in \Sym_\ell} \bP_\sigma,
\]
where $\bP_\sigma$ is the permutation matrix with non-zero entry at row $i$ and column $\sigma (i)$.

\begin{lemma}
    For integer $\ell \geq 2$ and $p = \lfloor \ell/2\rfloor$, there exist $C_\ell$ that only depends on $\ell$ such that matrix-vector multiplication with matrix $\Mat_p (\cK_\ell (\bz))$ requires at most $C_\ell d^{\lceil \ell /2 \rceil}$ elementary operations.
\end{lemma}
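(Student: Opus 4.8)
The plan is to exploit the explicit decomposition of the harmonic tensor from Proposition~\ref{prop:property_harmonic_tensor}.(i), restricted to its off-diagonal entries. Recall that $\cK_\ell (\bz)$ agrees with $\cH_\ell (\bz) = c_{\ell,0}\, \rSym(\bz^{\otimes \ell}) + (\text{lower order}) = c_{\ell,0}\, \bz^{\otimes \ell} + \sum_{j\geq 1} c_{\ell,j}\,\rSym(\bz^{\otimes(\ell-2j)}\otimes \bI_d^{\otimes j})$ on entries with no repeated index, and vanishes elsewhere. But any entry of $\rSym(\bz^{\otimes(\ell-2j)}\otimes\bI_d^{\otimes j})$ with $j\geq 1$ necessarily has a repeated index (each factor $\bI_d$ forces a pair of equal coordinates), so those terms contribute zero to the off-diagonal part. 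Hence $\cK_\ell (\bz) = c_{\ell,0}\,\ind_{\text{off-diag}}\odot \bz^{\otimes \ell}$ — a scalar multiple of the rank-one tensor $\bz^{\otimes \ell}$ with its ``collision'' entries zeroed out. First I would record this observation precisely.

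Next I would write $\ind_{\text{off-diag}} = \prod_{1\leq r<s\leq \ell} \ind\{i_r \neq i_s\}$ and expand by inclusion–exclusion over which pairs of indices are forced equal. This writes $\ind_{\text{off-diag}}$ as a signed sum of $O_\ell(1)$ indicator patterns, each of which groups the $\ell$ index slots into blocks of equal indices. For a fixed such pattern, the corresponding piece of $\Mat_p(\cK_\ell(\bz))$ (with $p=\lfloor \ell/2\rfloor$, $q = \ell - p = \lceil \ell/2\rceil$) factors through the block structure: each block of size $t$ contributes a factor $z_k^{t}$ for a shared coordinate $k$, and the matrix becomes a sum over the shared coordinates of outer products of vectors obtained by ``collapsing'' the blocks. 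Concretely, for a pattern with blocks split so that $\alpha$ free-or-collapsed coordinates index the rows and $\beta$ index the columns with $u$ coordinates shared across the row/column split (arising from a block straddling the two unfolded halves), the matrix is $\sum_{k_1,\dots,k_u} \bu_{k_1,\dots,k_u} \bv_{k_1,\dots,k_u}^\sT$ where $\bu \in \R^{d^{\alpha'}}$, $\bv\in\R^{d^{\beta'}}$ depend on $\bz$ and on $(k_1,\dots,k_u)$, with $\alpha' + u \leq p$ and $\beta' + u \leq q$. This is the same bookkeeping as in the proof of the previous lemma (runtime of $\Mat_{a,b}(\cH_\ell(\bz))$), now with $a = p = \lfloor\ell/2\rfloor$ and $b = q = \lceil\ell/2\rceil$.

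Then the runtime count is routine: matrix–vector multiplication with $\sum_{k_1,\dots,k_u}\bu_{\bk}\bv_{\bk}^\sT$ costs $O_d(d^u(d^{\alpha'}+d^{\beta'})) = O_d(d^{\alpha'+u} + d^{\beta'+u}) = O_d(d^{p} + d^{q}) = O_d(d^{\lceil \ell/2\rceil})$, using $\alpha'+u\leq p$ and $\beta'+u\leq q$ and $p\leq q$. Summing over the $O_\ell(1)$ inclusion–exclusion patterns keeps the total at $C_\ell\, d^{\lceil\ell/2\rceil}$. (One also notes that the vectors $\bu_{\bk},\bv_{\bk}$ are themselves built from $\bz$ by $O_\ell(1)$ elementary tensor/copy operations, so assembling them is cheaper than the multiplication; and the ``collapsed'' coordinate powers $z_k^t$ are precomputed in $O(d)$ time.) The main obstacle — really the only subtlety — is organizing the inclusion–exclusion / block-straddling bookkeeping cleanly enough to see that in every pattern the number of shared-across-the-cut coordinates plus the number indexing each side stays bounded by $p$ on the short side and $q$ on the long side; once that is set up, the bound is immediate. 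I would present this by citing the structure already established in the preceding lemma's proof and only spelling out the off-diagonal reduction and the choice $(a,b) = (\lfloor\ell/2\rfloor,\lceil\ell/2\rceil)$.
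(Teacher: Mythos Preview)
Your proposal is correct and follows the same blueprint as the paper's proof: decompose $\cK_\ell(\bz)$ into $O_\ell(1)$ ``block-structured'' pieces, then show each piece admits $O(d^p+d^q)$ matrix--vector multiplication via the shared/left/right coordinate counting. The only difference is the decomposition device. The paper invokes the Newton--Girard identities to write the elementary symmetric tensor $\cK_\ell(\bz)$ as a signed sum, over integer partitions $\lambda\vdash\ell$, of symmetrized products of the power-sum tensors $\sum_k z_k^{\,j}\,\be_k^{\otimes j}$; you instead expand the distinctness indicator $\prod_{r<s}\ind\{i_r\neq i_s\}$ by inclusion--exclusion, obtaining terms indexed by set partitions (or subsets of pairs). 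Both routes produce exactly the same family of block tensors and lead to the identical accounting $|\cE_M|+|\cE_L|\leq p$, $|\cE_M|+|\cE_R|\leq q$, so the runtime argument is verbatim the same. Newton--Girard is marginally cleaner because it delivers the partition-indexed coefficients $c_\lambda$ directly (fewer terms than the raw $2^{\binom{\ell}{2}}$ inclusion--exclusion expansion before collecting), but your approach is more elementary and equally valid; since only the $O_\ell(1)$ term count matters for the bound, nothing is lost.
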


\begin{proof} Let us use Newton–Girard identities to decompose $\cK_\ell (\bz)$. Note that $\cK_\ell (\bz)$ corresponds to the $\ell$-th elementary symmetric polynomial (in tensor form):
\[
\cK_\ell (\bz) = \ell! c_{\ell,0} \sum_{i_1 < \ldots < i_\ell} z_{i_1 } \ldots z_{i_\ell} \rSym (\be_{i_1} \otimes \cdots  \otimes \be_{i_\ell} ).
\]
Denote $\lambda \vdash \ell$ a partition of $\ell$ with $\lambda = 1^{a_1} 2^{a_2} \cdots \ell^{a_\ell} $ so that $\sum_{j \in [\ell]} a_j j = \ell$ and $| \lambda | = a_1 + \ldots + a_\ell$. Then there exist coefficients $c_{\lambda}$ such that (see Lemma \ref{lem:girard-Newton} below)
\begin{equation}\label{eq:girard_decompo_K_ell}
\cK_\ell (\bz) = \ell! c_{\ell,0}\sum_{\lambda \vdash \ell} (-1)^{\ell - |\lambda|} c_\lambda \rSym \left( \prod_{j \in [\ell]} \left( \sum_{k_j \in [d]} z_{k}^{j} \be_{k}^{\otimes j} \right)^{\otimes a_j} \right).
\end{equation}
Let's decompose $\Mat_p (\cK_\ell (\bz))$ as a sum of matrices associated to each $\lambda $ (the partition) and $\sigma \in \Sym_\ell$ (the permutation in the symmetrization operator). The number of such matrices only depends on $\ell$. Let us bound the runtime for doing a matrix-vector multiplication for each of these matrices. For each $j\in [\ell]$ and $t \in [a_j]$, the tensor $\be_{k}^{\otimes j}$ has its indices $e_{j,t} + f_{j,t} = j $ split into $e_{j,t}$ left indices and $f_{j,t}$ right indices. Denote $\cE$ the set of $j \in [\ell], t \in [a_j]$ with $a_j >0$, and the subsets $\cE_M \cup \cE_L \cup \cE_R = \cE$ that contains respectively the indices with $\{e_{j,t}>0,f_{j,t} >0\}$, $\{e_{j,t}>0,f_{j,t} =0\}$ and $\{e_{j,t}=0,f_{j,t} >0\}$. Then we can write the matrix (up to permutation of the indices)
\[
\begin{aligned}
&~ \sum_{\{k_{j,t}\}_{(j,t)\in \cE} } \prod_{(j,t) \in \cE} z_{k_{j,t}}^j  \left( \bigotimes_{(j,t) \in \cE} \be_{k_{t,j}}^{\otimes e_{j,t}} \right) \left( \bigotimes_{(j,t) \in \cE} \be_{k_{t,j}}^{\otimes f_{j,t}} \right)^\sT\\
=&~ \sum_{\{k_{j,t}\}_{(j,t)\in \cE_M} } \prod_{(j,t) \in \cE_M} z_{k_{j,t}}^j  \left( \sum_{\{k_{j,t}\}_{(j,t)\in \cE_L} } \prod_{(j,t) \in \cE_L} z_{k_{j,t}}^j\bigotimes_{(j,t) \in \cE} \be_{k_{t,j}}^{\otimes e_{j,t}} \right) \left( \sum_{\{k_{j,t}\}_{(j,t)\in \cE_R} } \prod_{(j,t) \in \cE_R} z_{k_{j,t}}^j \bigotimes_{(j,t) \in \cE} \be_{k_{t,j}}^{\otimes f_{j,t}} \right)^\sT.
\end{aligned}
\]
For each $\{k_{j,t}\}_{(j,t)\in \cE_M}$, the left and right vectors take $O_d (d^{|\cE_L|})$ and $O_d(d^{|\cE_R|})$ operations to compute. Therefore the total runtime is 
\[
O_d \left( d^{|\cE_M|} \left( d^{|\cE_L|} + d^{|\cE_R|} \right) \right)  = O_d \left( d^{|\cE_M| + | \cE_L|} + d^{|\cE_M| + | \cE_R|}\right) = O_d \left( d^p + d^{\ell -p}\right) = O_d \left(d^{\lceil \ell /2 \rceil} \right),
\]
where we used that $|\cE_M + \cE_L| \leq p$ the number of indices in the left side of the matrix, and $|\cE_M| + | \cE_R| \leq \ell - p$ the number of indices on the right side of the matrix.
\end{proof}

\begin{lemma}[Newton-Girard identities]\label{lem:girard-Newton}
    Denote for each integers $\ell,k \geq 1$,
    \[
    Q_\ell (\bz) = \sum_{1 \leq i_1 <\ldots < i_\ell \leq d} z_{i_1} \cdots z_{i_\ell}, \qquad P_k (\bz) = \sum_{i \in [d]}z_i^k,
    \]
    the elementary symmetric and power-sum symmetric polynomials respectively. We have 
    \begin{equation}\label{eq:Girard}
        Q_\ell (\bz) = \sum_{\lambda \vdash \ell} (-1)^{\ell - |\lambda|} c_\lambda P_{1} (\bz)^{a_1} P_2 (\bz)^{a_2} \cdots P_\ell (\bz)^{a_\ell},
    \end{equation}
    where $\lambda = 1^{a_1} 2^{a_2} \cdots \ell^{a_\ell}$ and 
    \[
    c_\lambda = \frac{1}{a_1 ! a_2 ! \cdots a_\ell! 1^{a_1} 2^{a_2} \cdots \ell^{a_\ell}}.
    \]
\end{lemma}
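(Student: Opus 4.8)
The plan is to prove \Cref{lem:girard-Newton} by the classical generating-function argument for Newton's identities. I would work in the ring of formal power series in a variable $t$ over $\R[z_1,\dots,z_d]$ and introduce the generating polynomial
\[
E(t) \;=\; \prod_{i=1}^{d}(1+z_i t) \;=\; \sum_{\ell=0}^{d} Q_\ell(\bz)\, t^\ell ,
\]
where the second equality is just the definition of the elementary symmetric polynomials (with $Q_0(\bz)=1$). Taking the formal logarithm and using $\log(1+u)=\sum_{k\geq 1}\frac{(-1)^{k-1}}{k}u^k$ gives
\[
\log E(t) \;=\; \sum_{i\in[d]}\sum_{k\geq 1}\frac{(-1)^{k-1}}{k}\,z_i^k\, t^k \;=\; \sum_{k\geq 1}\frac{(-1)^{k-1}}{k}\,P_k(\bz)\, t^k .
\]

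Next I would exponentiate and expand. Writing $E(t)=\exp\!\big(\sum_{k\geq 1}\frac{(-1)^{k-1}}{k}P_k(\bz)t^k\big)=\prod_{k\geq 1}\exp\!\big(\frac{(-1)^{k-1}}{k}P_k(\bz)t^k\big)$ and expanding each factor as $\sum_{a_k\geq 0}\frac{1}{a_k!}\big(\frac{(-1)^{k-1}}{k}P_k(\bz)\big)^{a_k}t^{ka_k}$, the coefficient of $t^\ell$ in the product is obtained by summing over all tuples $(a_1,a_2,\dots)$ of nonnegative integers with $\sum_k k a_k=\ell$ — equivalently, over partitions $\lambda=1^{a_1}2^{a_2}\cdots\ell^{a_\ell}$ of $\ell$ (note $a_k=0$ for $k>\ell$, so only $P_1,\dots,P_\ell$ appear) — of the term $\prod_{k}\frac{1}{a_k!}\,\frac{(-1)^{(k-1)a_k}}{k^{a_k}}\,P_k(\bz)^{a_k}$.

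Finally I would simplify the sign and the numerical constant: the total sign is $(-1)^{\sum_k (k-1)a_k}=(-1)^{\sum_k k a_k-\sum_k a_k}=(-1)^{\ell-|\lambda|}$, and the coefficient is $\prod_k\frac{1}{a_k!\,k^{a_k}}=\frac{1}{a_1!\cdots a_\ell!\,1^{a_1}2^{a_2}\cdots\ell^{a_\ell}}=c_\lambda$. Matching the coefficient of $t^\ell$ in $E(t)=\sum_\ell Q_\ell(\bz)t^\ell$ then yields exactly \eqref{eq:Girard}. There is no real obstacle here — the only points deserving a line of care are that all manipulations are purely formal power series identities (so that $\log$, $\exp$, and the rearrangement of the infinite product are legitimate) and the bookkeeping that turns the tuple sum into a sum over partitions with the stated weights. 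As an alternative one could argue by induction on $\ell$ starting from the recursive form $\ell\,Q_\ell(\bz)=\sum_{k=1}^{\ell}(-1)^{k-1}P_k(\bz)\,Q_{\ell-k}(\bz)$, itself obtained by comparing coefficients of $t^{\ell-1}$ in $E'(t)=E(t)\sum_{k\geq 1}(-1)^{k-1}P_k(\bz)t^{k-1}$; but the generating-function route above is the most direct and I would present that one.
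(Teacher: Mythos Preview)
Your proof is correct and is the standard generating-function derivation of the Newton--Girard identities. The paper does not actually prove \Cref{lem:girard-Newton}; it states the identity as a classical result and immediately uses it, so there is nothing to compare against beyond noting that your argument is a clean and complete justification of what the paper takes for granted.
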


Using Eq.~\eqref{eq:Girard}, we have the following polynomial identity: for all $\bu \in \R^d$,
\[
\< \cK_\ell (\bz), \bu^{\otimes \ell} \> = \ell! c_{\ell,0} Q_\ell (\bz \odot \bu ) = \ell! c_{\ell,0} \sum_{\lambda \vdash \ell} (-1)^{\ell - |\lambda|} c_\lambda P_{1} (\bz \odot \bu)^{a_1} \cdots P_\ell (\bz \odot \bu)^{a_\ell}.
\]
Matching the coefficients in these polynomials in $\bu$ yields the identity \eqref{eq:girard_decompo_K_ell}.

\clearpage

\section{Proofs for Gaussian SIMs}
\label{app:Gaussian-SIMs-proofs}

In this, we shall prove the results from \Cref{sec:specializing_Gaussian}. We first start by showing the rates on $L^2$ norm of coefficients $\xi_{d,\ell}$ for a Gaussian SIM of generative exponent $\sk_\star$ (cf. \Cref{lem:with-norm-xis-gaussian}). 

\paragraph{Proof of \Cref{lem:with-norm-xis-gaussian}:}
We start by recalling that, from \cite{damian2024computational}, the generative exponent $\sk_\star(\rho)$ is only defined for $\rho$ whose $\nu_d$ is such that $\nu_d\ll\bar{\nu}_{d,0}$, where $\Bar{\nu}_{d,0}:=\nu_{d,Y}  \otimes \chi_d\otimes \tau_{d,1}$ is completely decoupled null. In particular, $ \norm{\frac{\de \nu_d}{\de \bar{\nu}_{d,0}}}_{L^2(\bar{\nu}_{d,0})}$ is bounded by a constant independent of $d$. Let $\{\nu_{d}\}_{d \geq 1}$ be the sequence of spherical SIMs associated to the Gaussian SIM $\rho$, i.e. $\nu_R=\chi_d$ and $\nu_d(Y\mid Z, R)=\rho(Y\mid Z\cdot R)=\rho(Y\mid X)$. Let $\Bar{\nu}_{d,0}=\nu_{Y} \otimes \nu_R \otimes \nu_Z$.
Let us consider the likelihood ratio decomposition in $L^2(\Bar{\nu}_{d,0})$ identical to the one in \cite[Lemma D.1]{damian2024computational}
\begin{align*}
    \frac{\de \nu_{d}}{\de\Bar{\nu}_{d,0}}(y,r,z)-1& \overset{L^2(\Bar{\nu}_{d,0})}{=} \sum_{k\geq \sk_\star} \zeta_k(y) \He_k(r\cdot z)\,,\quad  \zeta_k(y)=\E_{(Y,R,Z)\sim \nu_d}[\He_k(R\cdot Z )\mid Y=y].
\end{align*}
Denote $\lambda_k=\norm{\zeta_k}_{\nu_Y}$, which are completely determined the model $\rho$ independent of $d$, and by definition of generative exponent \eqref{eq:intro_generative_exponent} we have $\lambda_{\sk_\star}^2 >0$. We now use the decomposition of Hermite into Gegenbauer polynomials from \Cref{prop:geg-hermite-decomposing} to rewrite the above as
\begin{align}
    \frac{\de \nu_{d}}{\de\Bar{\nu}_{d,0}}(y,r,z)
     -1&\overset{L^2(\Bar{\nu}_{d,0})}{=} \sum_{k\geq \sk_\star} \zeta_k(y) \sum_{\ell=0}^{k} \beta_{k,\ell}(r) Q_\ell^{(d)}(z)\, = \sum_{\ell=0}^{\infty} Q_{\ell}^{(d)}(z) \sum_{k\geq \sk_\star} \beta_{k,\ell}(r)\zeta_k(y)\,. \label{eq:hermite-to-geg-with-norm-proof}
\end{align}
We can now expand the same directly in the Gegenbauer basis
\begin{align}
    \frac{\de \nu_{d}}{\de\Bar{\nu}_{d,0}}(y,r,z)&\overset{L^2(\Bar{\nu}_{d,0})}{=} \bar{\xi}_{d,0}(y,r)+\sum_{\ell\geq 1 } \bar{\xi}_{d,\ell}(y,r) Q_\ell^{(d)}(z)\,, \label{eq:directly-in-geg-with-norm-proof}
\end{align}
where 
$$\bar{\xi}_{d,0}(y,r) =\frac{\de \nu_{d,Y,R}}{\de \nu_{d,Y} \otimes \nu_{d,R}}(y,r)\,\quad  \text{ and } \quad \xi_{d,\ell}(y,r)=\E_{(Y,R,Z)\sim \nu_d}[Q_{\ell}^{(d)}(Z )\mid Y=y, R=r]$$
Equating both \eqref{eq:hermite-to-geg-with-norm-proof} and \eqref{eq:directly-in-geg-with-norm-proof}, we have the following equalities in $L^2(\Bar{\nu}_{d,0})$ 
$$ \bar{\xi}_{d,0}(y,r)=\frac{\de \nu_{d,Y,R}}{\de \nu_{d,Y} \otimes \nu_{d,R}}(y,r) = 1+\sum_{k \geq \sk_\star} \zeta_k(y)\beta_{k,0}(r):=1+\psi(y,r)\quad \text{ for } \quad \psi(y,r)=\sum_{k \geq \sk_\star} \zeta_k(y) \beta_{k,0}(r)\,,$$
and for $\ell \geq 1$
$$\Bar{\xi}_{d,\ell}(y,r)\overset{L^2(\Bar{\nu}_{d,0})}{=} \bar{\xi}_{d,0}(y,r)\xi_{d,\ell}(y,r) = \sum_{k \geq \sk_\star} \zeta_k(y)\beta_{k,\ell}(r)=\sum_{k\in \cI_\ell} \zeta_k(y)\beta_{k,\ell}(r)  \,,$$
where $\cI_{\ell}:=\{k \geq \sk_\star : k \equiv \ell\mod 2\}$. In the last equality, we used the fact that $\beta_{k,\ell}(r)=0$ for $\ell \not\equiv k \mod 2$. Our goal is to bound 
\begin{align*}
   \E_{\nu_d}[\xi_{d,\ell}(y,r)^2]&= \E_{\bar{\nu}_{d,0}}[\frac{\de \nu_{d,0}}{\de \bar{\nu}_{d,0}}(y,r)\xi_{d,\ell}(y,r)^2]=\E_{\bar{\nu}_{d,0}}[\bar{\xi}_{d,0}(y,r)\xi_{d,\ell}(y,r)^2]
   \end{align*}
Denoting $\cK(y,r):=\bar{\xi}_{d,0}(y,r)\xi_{d,\ell}(y,r)^2$, we are interested in calculating $\E_{\bar{\nu}_{d,0}}[\cK(y,r)]$: 
\begin{align}
\E_{\Bar{\nu}_{d,0}}[\cK(y,r)]&=\E_{\bar{\nu}_{d,0}}\left[\cK(y,r)( 1+\psi(y,r)-\psi(y,r))\right]\nonumber \\
   &= \E_{\bar{\nu}_{d,0}}[\cK(y,r)\bar{\xi}_{d,0}(y,r)]-\E_{\bar{\nu}_{d,0}}[\cK(y,r)\psi(y,r)]\,, \label{eq:E[cK]=done+deviation}
\end{align}
We now have the following claim. 
\begin{claim}\label{clm:apprxoimation-of-psi}
    We have that for a constant $d$ sufficiently large (only in terms of $\sk_\star$), 
   $$ |E_{\bar{\nu}_{d,0}}[\cK(y,r)\psi(y,r)]| \leq \E_{\bar{\nu}_{d,0}}[\cK(y,r)]/2\,.$$
\end{claim}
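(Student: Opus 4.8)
The plan is to exploit the nonnegativity of $\cK=\bar{\xi}_{d,0}\xi_{d,\ell}^2$ to reduce the claim to a second-moment estimate, and then to show that the radial deviation $\psi$ contributes at strictly lower order than the ``signal'' $\E_{\bar{\nu}_{d,0}}[\cK]=\norm{\xi_{d,\ell}}_{L^2(\nu_d)}^2$. Since $\cK\ge0$ and $\cK\,\de\bar{\nu}_{d,0}$ is a finite measure of total mass $\E_{\bar{\nu}_{d,0}}[\cK]$ (finite because $\norm{\xi_{d,\ell}}_{L^2(\nu_d)}\le\norm{Q_\ell^{(d)}}_{L^2(\nu_d)}<\infty$ by conditional Jensen), Cauchy--Schwarz against this measure gives $|\E_{\bar{\nu}_{d,0}}[\cK\psi]|\le\E_{\bar{\nu}_{d,0}}[\cK\psi^2]^{1/2}\,\E_{\bar{\nu}_{d,0}}[\cK]^{1/2}$, so it suffices to prove $\E_{\bar{\nu}_{d,0}}[\cK\psi^2]\le\tfrac14\E_{\bar{\nu}_{d,0}}[\cK]$.

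To upper bound $\E_{\bar{\nu}_{d,0}}[\cK\psi^2]$, I would first use that $\bar{\xi}_{d,0}$ is the density of $\nu_{d,Y,R}$ with respect to $\nu_{d,Y}\otimes\nu_{d,R}$ to rewrite it as $\E_{\nu_{d,Y,R}}[\xi_{d,\ell}^2\psi^2]$, then apply conditional Jensen under $\nu_d$ --- $\xi_{d,\ell}(Y,R)^2=\E_{\nu_d}[Q_\ell^{(d)}(Z)\mid Y,R]^2\le\E_{\nu_d}[Q_\ell^{(d)}(Z)^2\mid Y,R]$ --- to bound it by $\E_{\nu_d}[\psi(Y,R)^2Q_\ell^{(d)}(Z)^2]$. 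Changing measure back to $\bar{\nu}_{d,0}$ and using that $\norm{\de\nu_d/\de\bar{\nu}_{d,0}}_{L^2(\bar{\nu}_{d,0})}$ is bounded (the standing hypothesis under which $\sk_\star(\rho)$ is defined), the independence of $(Y,R)$ and $Z$ under $\bar{\nu}_{d,0}$, and hypercontractivity of degree-$\ell$ Gegenbauer polynomials (\Cref{lem:Hypercontractivity}), this yields $\E_{\bar{\nu}_{d,0}}[\cK\psi^2]\lesssim_\ell\norm{\psi}_{L^4(\bar{\nu}_{d,0})}^2$. It then remains to show $\norm{\psi}_{L^4(\bar{\nu}_{d,0})}^2\lesssim_{\sk_\star}d^{-m_\star/2}$, where $m_\star:=\min\{k\ge\sk_\star:\ k\ \text{even}\}\in\{\sk_\star,\sk_\star+1\}$. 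For this I would use the decomposition $\psi(Y,R)=\sum_{k\ge\sk_\star,\ k\ \text{even}}\zeta_k(Y)\beta_{k,0}(R)$, bound $\norm{\zeta_k}_{L^4(\nu_{d,Y})}$ by the ($k$-dependent, $\rho$-independent) constant $\E_{\normal(0,1)}[\He_k(G)^4]^{1/4}$, and --- since $\beta_{k,0}(\norm{\bg}_2)$ is a polynomial of degree $\le k$ in the Gaussian vector $\bg\sim\normal(0,\bI_d)$ --- invoke Gaussian hypercontractivity to get $\norm{\beta_{k,0}}_{L^4(\chi_d)}\le3^{k/2}\norm{\beta_{k,0}}_{L^2(\chi_d)}$, which by \Cref{lem:moments-of-coefficients} is $\asymp3^{k/2}d^{-k/4}$ for each fixed $k$; the dominant ($k=m_\star$) term then gives the claimed order.

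The main obstacle is controlling the tail $\sum_{k>m_\star}$ in this last step uniformly in $d$, since the Hermite expansion of a general fixed link $\rho$ is infinite: here one combines the summability of the Hermite spectrum $\{\norm{\zeta_k}_{L^2}\}$ of $\rho$ with the uniform bound $\norm{\beta_{k,0}}_{L^2(\chi_d)}\le1$ (equivalently, $\E_{\chi_d}[\beta_{k,0}(R)^2]\le\E_{\normal(0,1)}[\He_k(G)^2]=1$), possibly after truncating the expansion at a slowly growing threshold; for the link distributions to which this lemma is applied (bounded or sufficiently integrable links) this is routine, and I would carry it out in full there.

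Finally, I would obtain a matching lower bound on $\E_{\bar{\nu}_{d,0}}[\cK]$ by bootstrapping the identity \eqref{eq:E[cK]=done+deviation}. Its main term equals $\E_{\bar{\nu}_{d,0}}[\bar{\xi}_{d,\ell}^2]=\sum_{k\ge\sk_\star,\ k\equiv\ell}\norm{\zeta_k}_{L^2}^2\,\E_{\chi_d}[\beta_{k,\ell}(R)^2]$, which for $\ell\equiv\sk_\star\bmod2$ is at least $\norm{\zeta_{\sk_\star}}_{L^2}^2\,\E_{\chi_d}[\beta_{\sk_\star,\ell}(R)^2]\gtrsim_\rho d^{-(\sk_\star-\ell)/2}$ (dropping all other nonnegative terms; $\norm{\zeta_{\sk_\star}}_{L^2}>0$ by definition of the generative exponent, and $\E_{\chi_d}[\beta_{\sk_\star,\ell}^2]\asymp d^{-(\sk_\star-\ell)/2}$ by \Cref{lem:moments-of-coefficients}). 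Feeding in the Cauchy--Schwarz bound of the first step, $\E_{\bar{\nu}_{d,0}}[\cK]\ge\E_{\bar{\nu}_{d,0}}[\bar{\xi}_{d,\ell}^2]-C_\ell d^{-m_\star/4}\,\E_{\bar{\nu}_{d,0}}[\cK]^{1/2}$, and solving this quadratic inequality for $\E_{\bar{\nu}_{d,0}}[\cK]^{1/2}$ --- valid because $d^{-m_\star/4}=o_d(d^{-(\sk_\star-\ell)/4})$ for $\ell\ge1$ (as $m_\star\ge\sk_\star>\sk_\star-\ell$) --- gives $\E_{\bar{\nu}_{d,0}}[\cK]\gtrsim_\rho d^{-(\sk_\star-\ell)/2}$ for $d$ large. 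Since $\E_{\bar{\nu}_{d,0}}[\cK\psi^2]\lesssim_{\sk_\star}d^{-m_\star/2}=o_d\!\big(d^{-(\sk_\star-\ell)/2}\big)$, this forces $\E_{\bar{\nu}_{d,0}}[\cK\psi^2]\le\tfrac14\E_{\bar{\nu}_{d,0}}[\cK]$ for all sufficiently large $d$, which is the claim; for $\ell\not\equiv\sk_\star\bmod2$ the same Cauchy--Schwarz estimate already furnishes the one-sided bound that the proof of \Cref{lem:with-norm-xis-gaussian} actually uses.
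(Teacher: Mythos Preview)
Your route is genuinely different from the paper's. The paper first proves a separate claim (\Cref{clm:moments-of-psi}) controlling $\norm{\psi}_{L^p(\bar{\nu}_{d,0})}\lesssim (p/d^{1/4})^{\sk_\star}$ for all $p$ up to $cd^{1/4}$, and then applies the ``super-Cauchy--Schwarz'' \Cref{lem:super-Cauchy-Schwarz} with $X=\cK$, $Y=\psi$. That lemma outputs the bound already in \emph{relative} form, $|\E[\cK\psi]|\lesssim\norm{\cK}_{L^1}\cdot d^{-\sk_\star/4}\log^{\sk_\star}(d)$, so no separate lower bound on $\E[\cK]$ is needed and the argument is automatically uniform in $\ell$.

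Your approach --- ordinary Cauchy--Schwarz against the nonnegative measure $\cK\,\de\bar{\nu}_{d,0}$, then conditional Jensen and hypercontractivity to obtain an \emph{absolute} bound $\E[\cK\psi^2]\lesssim\norm{\psi}_{L^4}^2\lesssim d^{-m_\star/2}$, followed by a bootstrapped lower bound on $\E[\cK]$ --- is more elementary (only $L^4$ control, no external lemma). Two remarks. First, your displayed identity $\E_{\bar{\nu}_{d,0}}[\bar{\xi}_{d,\ell}^2]=\sum_k\norm{\zeta_k}_{L^2}^2\E[\beta_{k,\ell}^2]$ is not correct: the $\zeta_k$ are not orthogonal under $\nu_{d,Y}$, so cross terms appear and ``dropping nonnegative terms'' is not a valid justification. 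You should simply cite the lower bound from \Cref{clm:barxirates}, whose proof shows these cross terms are of strictly smaller order. Second, and more substantively, your method does not establish the claim as stated for all $\ell\not\equiv\sk_\star\bmod 2$. In the borderline case $\sk_\star$ even, $\ell=1$, $\lambda_{\sk_\star+1}\ne 0$, one has $\E[\cK]\asymp d^{-\sk_\star/2}$ while your upper bound on $\E[\cK\psi^2]$ is also only $\lesssim d^{-\sk_\star/2}$, so the comparison $\E[\cK\psi^2]\le\tfrac14\E[\cK]$ is inconclusive. You correctly observe that for the off-parity case \Cref{lem:with-norm-xis-gaussian} only needs the \emph{upper} bound on $\E[\cK]$, and your quadratic inequality does deliver that --- but \Cref{clm:apprxoimation-of-psi} as literally stated is not proved by your argument in that parity. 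The paper's super-Cauchy--Schwarz route sidesteps this entirely by never needing an a priori lower bound on $\E[\cK]$.
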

The proof is deferred; for now we use the claim and proceed with the following simplification. It is straightforward to see that, combining  Eq.~\eqref{eq:E[cK]=done+deviation} with \Cref{clm:apprxoimation-of-psi}, for $d$ sufficiently large (that only depends on $\sk_\star$), we have 
\begin{equation}\label{eq:almost-main-without-norm-proof}
    \frac{1}{2}\E_{\Bar{\nu}_{d,0}}\left[\cK(y,r) \Bar{\xi}_{d,0}(y,r)\right]\leq\E_{\Bar{\nu}_{d,0}}[\cK(y,r)] \leq \frac{3}{2} \E_{\Bar{\nu}_{d,0}}\left[\cK(y,r) \Bar{\xi}_{d,0}(y,r)\right]\,.
\end{equation}
Thus, it suffices to obtain rates on $\cK(y,r) \bar{\xi}_{d,0}(y,r)=\bar{\xi}_{d,0}(y,r)^2\xi_{d,\ell}(y,r)^2=\bar{\xi}_{d,\ell}(y,r)^2$ by definition. We finally have the following claim which we will show separately to finish the proof of the lemma.
\begin{claim}\label{clm:barxirates} For all $\ell \leq \sk_\star$, we have
$$\E_{\bar{\nu}_{d,0}}[\bar{\xi}_{d,\ell}(y,r)^2] \asymp d^{-(\sk_\star-\ell)/2} \text{ for } \ell \equiv \sk_\star \textnormal{ mod } 2\,\, \text{ and } \,\, \E_{\bar{\nu}_{d,0}}[\bar{\xi}_{d,\ell}(y,r)^2] \lesssim d^{-(\sk_\star-\ell+1)/2} \text{ for } \ell \not\equiv \sk_\star \textnormal{ mod } 2\,.$$
\end{claim}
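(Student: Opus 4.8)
\textbf{Proof plan for Claim \ref{clm:barxirates}.}

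The strategy is to compute $\E_{\bar\nu_{d,0}}[\bar\xi_{d,\ell}(y,r)^2]$ directly from the explicit expansion $\bar\xi_{d,\ell}(y,r) = \sum_{k\in\cI_\ell}\zeta_k(y)\beta_{k,\ell}(r)$, where $\cI_\ell = \{k\ge \sk_\star : k\equiv \ell \bmod 2\}$. Since the label marginal $y$ and the radius $r$ are independent under $\bar\nu_{d,0}$, squaring and taking expectations gives
\[
\E_{\bar\nu_{d,0}}[\bar\xi_{d,\ell}(y,r)^2] = \sum_{k,k'\in\cI_\ell} \E_{\nu_{d,Y}}[\zeta_k(y)\zeta_{k'}(y)]\,\E_{r\sim\chi_d}[\beta_{k,\ell}(r)\beta_{k',\ell}(r)].
\]
The coefficients $\E_{\nu_{d,Y}}[\zeta_k\zeta_{k'}]$ are bounded by $\lambda_k\lambda_{k'}$ via Cauchy--Schwarz, and are fixed constants independent of $d$ (determined by $\rho$); crucially $\lambda_{\sk_\star}^2 = \|\zeta_{\sk_\star}\|_{\nu_Y}^2 > 0$ by the definition of the generative exponent, while $\sum_k \lambda_k^2 < \infty$ by the $L^2(\bar\nu_{d,0})$ assumption. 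So the whole sum is controlled, up to $d$-independent constants, by the decay of the mixed moments $\E_{r\sim\chi_d}[\beta_{k,\ell}(r)\beta_{k',\ell}(r)]$.

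The key input is the asymptotics of these moments. For the diagonal term $k=k'$, Lemma \ref{lem:moments-of-coefficients} gives $\E_{r\sim\chi_d}[\beta_{k,\ell}(r)^2]\asymp d^{-(k-\ell)/2}$. For off-diagonal terms one argues by Cauchy--Schwarz that $|\E_r[\beta_{k,\ell}(r)\beta_{k',\ell}(r)]| \le (\E_r[\beta_{k,\ell}^2]\E_r[\beta_{k',\ell}^2])^{1/2} \asymp d^{-(k-\ell)/4}d^{-(k'-\ell)/4}$, so each cross term is dominated by the smaller-exponent diagonal term. Since in the sum the dominant contribution comes from the minimal index $k = \sk_\star$ when $\ell\equiv\sk_\star\bmod 2$ (so $\sk_\star\in\cI_\ell$ for $\ell\le\sk_\star$), we get the leading behavior $d^{-(\sk_\star-\ell)/2}$. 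The lower bound of the same order follows because the diagonal term $k=k'=\sk_\star$ contributes exactly $\lambda_{\sk_\star}^2 \cdot \Theta(d^{-(\sk_\star-\ell)/2})$ with a strictly positive constant $\lambda_{\sk_\star}^2$, and all other terms are of strictly smaller order (hence cannot cancel it for $d$ large). When $\ell\not\equiv\sk_\star\bmod 2$, the minimal element of $\cI_\ell$ is $\sk_\star+1$, giving the upper bound $d^{-(\sk_\star+1-\ell)/2} = d^{-(\sk_\star-\ell+1)/2}$; here we only claim an upper bound since we have no lower bound on $\lambda_{\sk_\star+1}$.

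The main obstacle is making the interchange of the infinite sum over $k\in\cI_\ell$ with the expectation rigorous, and controlling the tail $k\to\infty$ uniformly: one needs that $\sum_{k\in\cI_\ell}\lambda_k \, \E_r[\beta_{k,\ell}(r)^2]^{1/2}$ converges and that the tail contributes a lower-order term than $d^{-(\sk_\star-\ell)/2}$. This requires a bound on $\E_r[\beta_{k,\ell}(r)^2]$ that is not just asymptotic in $d$ for fixed $k$, but has controlled growth in $k$ — one should revisit the explicit formula \eqref{eq:c_ell(d,r)} for $\beta_{k,\ell}$ together with Fact \ref{fct:momoent-of-chi} to extract, say, a bound of the form $\E_r[\beta_{k,\ell}(r)^2] \le C(\ell)^k / \sqrt{(k-\ell)!}\cdot d^{-(k-\ell)/2}$ or similar, which when combined with summability of $\lambda_k^2$ (hence at most polynomial growth of $\lambda_k$, or better, square-summability) makes the tail negligible. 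Given the earlier parts of the proof already invoke Claim \ref{clm:apprxoimation-of-psi} with an analogous tail estimate on $\psi(y,r)=\sum_{k\ge\sk_\star}\zeta_k(y)\beta_{k,0}(r)$, the same machinery applies here and this is the step to import carefully rather than the crux; the remaining arithmetic of adding exponents is routine.
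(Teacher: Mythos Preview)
Your proposal is correct and follows essentially the same approach as the paper: expand $\bar\xi_{d,\ell}^2$ into diagonal and cross terms, use Lemma~\ref{lem:moments-of-coefficients} for the diagonal rates, bound the cross terms by Cauchy--Schwarz, and identify the minimal index in $\cI_\ell$ as $\sk_\star$ or $\sk_\star+1$ depending on parity. The paper's proof handles the tail in $k$ by simply asserting the sum is a geometric series with decaying ratio; your explicit flag that the $\asymp$ constants in Lemma~\ref{lem:moments-of-coefficients} are $k$-dependent and that one needs a uniform-in-$k$ bound is a valid point of rigor the paper glosses over, but the resolution you sketch (revisiting the explicit formula together with square-summability of $\lambda_k$) is the right one.
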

Observe that \Cref{clm:barxirates} along with Eq.~\eqref{eq:almost-main-without-norm-proof} establishes the desired rates on $\E_{\bar{\nu}_{d,0}}[\cK(y,r)]=\E_{\nu_{d}}[\xi_{d,\ell}(y,r)^2]=\norm{\xi_{d,\ell}}_{L^2(\nu_d)}\,$ concluding the proof of the lemma.\qed

\paragraph{}
We now return to the deferred proofs. In order to show \Cref{clm:apprxoimation-of-psi}, the following bounds on the moments of $\psi(y,r)$ will be useful. The idea is to then directly apply \Cref{lem:super-Cauchy-Schwarz} to conclude that $|\E_{\bar{\nu}_{d,0}}[\cK(y,r)\psi(y,r)]| $ is vanishing as compared to $\E_{\bar{\nu}_{d,0}}[\cK(y,r)]$, which is sufficient to establish \Cref{clm:apprxoimation-of-psi}.
\begin{claim}\label{clm:moments-of-psi} Let $\psi(y,r)=\sum_{k \geq \sk_\star} \zeta_k(y) \beta_{k,0}(r)$, then there exists a universal constant $C>0$  such that for $d \geq C p^4$, we have 
$$ \norm{\psi}_{L^p(\Bar{\nu}_{d,0})}\leq C\left(\frac{p}{d^{1/4}}\right)^{\sk_\star}\,.$$
\end{claim}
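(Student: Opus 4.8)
\emph{Setup and strategy.} The plan is to run with the two descriptions of $\psi$ obtained in the proof of \Cref{lem:with-norm-xis-gaussian}: on one hand $\psi(y,r)=\sum_{k\ge \sk_\star}\zeta_k(y)\beta_{k,0}(r)$, where $\beta_{k,0}\equiv 0$ for odd $k$; on the other hand, since $Q_0^{(d)}\equiv 1$ and, for $\bg\sim\normal(\bzero,\bI_d)$, the conditional law of $g_1$ given $\|\bg\|_2=r$ is that of $rZ$ with $Z\sim\tau_{d,1}$, one has $\beta_{k,0}(r)=\E[\He_k(g_1)\mid\|\bg\|_2=r]$ and hence $\psi(y,\|\bg\|_2)=\E[h_y(g_1)\mid\|\bg\|_2]$, where $h_y(x):=\sum_{k\ge\sk_\star}\zeta_k(y)\He_k(x)$ satisfies $\|h_y\|_{L^2(\gamma)}^2=\sum_{k\ge\sk_\star}\zeta_k(y)^2=:M(y)^2$. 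I will use the elementary facts $|\zeta_k(y)|\le M(y)$ for all $k$, and — a dimension-free regularity property of the fixed link $\rho$ that I assume throughout — $\E_{\nu_{d,Y}}[M(Y)^q]<\infty$ and $\|h\|_{L^q(\nu_{d,Y}\otimes\gamma)}<\infty$ for every $q$; also $\|\psi\|_{L^p(\Bar\nu_{d,0})}=\|\psi\|_{L^p(\nu_{d,Y}\otimes\chi_d)}$ since $\psi$ does not depend on the last coordinate. Fix the cutoff $K:=2\sk_\star$ and split $\psi=\psi_{\le K}+\psi_{>K}$ along the $k$-sum.

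\emph{Low-degree part.} For $\sk_\star\le k\le K$, $r\mapsto\beta_{k,0}(r)$ is a polynomial and $\beta_{k,0}(\|\bg\|_2)$ has degree $k$ in $\bg$, so Gaussian hypercontractivity gives $\|\beta_{k,0}\|_{L^p(\chi_d)}\le (p-1)^{k/2}\|\beta_{k,0}\|_{L^2(\chi_d)}$, while the finite-difference computation in the proof of \Cref{lem:moments-of-coefficients} (at $\ell=0$) yields $\|\beta_{k,0}\|_{L^2(\chi_d)}^2=\E_{\chi_d}[\beta_{k,0}(r)^2]\le (C_0 k/d)^{k/2}$ uniformly for $k\le c_0 d$. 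Together with $\|\zeta_k\|_{L^p(\nu_{d,Y})}\le \E_{\nu_{d,Y}}[M(Y)^p]^{1/p}$ this gives $\|\zeta_k\beta_{k,0}\|_{L^p}\le C_{\rho,p}\big((p-1)^2 C_0 k/d\big)^{k/4}$. Once $d\ge Cp^4$ with $C=C(\sk_\star,\rho)$ large, the ratio of successive terms ($k\mapsto k+2$) is $\le\tfrac12$ on $\sk_\star\le k\le K$, so the sum is dominated by $k=\sk_\star$ and $\|\psi_{\le K}\|_{L^p}\lesssim_{\sk_\star,\rho} p^{\sk_\star/2}d^{-\sk_\star/4}\le (p/d^{1/4})^{\sk_\star}$.

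\emph{High-degree part (the crux), and the obstacle.} For $\psi_{>K}$ I split $\chi_d$ using the shell $\cE=\{\,|r^2/d-1|\le A\sqrt{(\log d)/d}\,\}$ with $A=A(\sk_\star,p)$ chosen so that $\P_{\chi_d}(\cE^c)\le d^{-100\sk_\star p}$. On $\cE^c$, conditional Jensen gives the crude dimension-free bound $\|\psi_{>K}\|_{L^{2p}(\Bar\nu_{d,0})}\le \|h\|_{L^{2p}(\nu_{d,Y}\otimes\gamma)}+(2p-1)^{K/2}\|M\|_{L^{2p}(\nu_{d,Y})}$, whence by H\"older $\|\psi_{>K}\mathbf 1_{\cE^c}\|_{L^p}\lesssim_{\rho,p}d^{-50\sk_\star}\le (p/d^{1/4})^{\sk_\star}$ for $d$ large. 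On $\cE$, Cauchy--Schwarz in $k$ gives $|\psi_{>K}(y,r)|\le M(y)\big(\sum_{k>K}\beta_{k,0}(r)^2\big)^{1/2}$, and $\sum_{k>K}\beta_{k,0}(r)^2=\|\proj_{>K}(\de\mu_r/\de\gamma)\|_{L^2(\gamma)}^2$, where $\mu_r$ is the law of $rZ$ and $\proj_{>K}$ is the projection onto Hermite degrees $>K$. I would estimate this by a Laplace/Edgeworth expansion of the density of $\mu_r$ (proportional to $(1-x^2/r^2)^{(d-3)/2}$): writing $\de\mu_r/\de\gamma=\big(\de\normal(0,r^2/d)/\de\gamma\big)\cdot\big(1+\sum_{1\le j\le\sk_\star}a_j(x)\,d^{-j}\big)$ up to an $L^2(\gamma)$-error $O_{\sk_\star}(d^{-\sk_\star})$, and using that the degree-$>K$ Hermite mass of $\de\normal(0,r^2/d)/\de\gamma$ is $\lesssim|r^2/d-1|^{K}$ and that of each $\big(\de\normal(0,r^2/d)/\de\gamma\big)a_j$ is $\lesssim|r^2/d-1|^{K-4j}$ (convolution with the finitely many Hermite coefficients of the polynomial $a_j$), one gets $\sum_{k>K}\beta_{k,0}(r)^2\lesssim_{\sk_\star}(\log d)^{\sk_\star}d^{-\sk_\star}$ on $\cE$ (here $K=2\sk_\star$ makes every term of order $d^{-K/2}=d^{-\sk_\star}$ up to logs). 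Hence $\|\psi_{>K}\mathbf 1_{\cE}\|_{L^p}\le\|M\|_{L^p(\nu_{d,Y})}\cdot O(d^{-\sk_\star/4})\lesssim_\rho d^{-\sk_\star/4}\le (p/d^{1/4})^{\sk_\star}$, and adding the three contributions proves the claim. The main difficulty is precisely this last estimate: controlling $\sum_{k>K}\beta_{k,0}(r)^2$ to accuracy $d^{-\sk_\star/2}$ uniformly over the typical shell $\cE$. Hypercontractivity alone is useless there (its factor $(p-1)^{k/2}$ ruins the large-$k$ tail), and the trivial bound $\sum_{k>K}\beta_{k,0}(r)^2\le\|\de\mu_r/\de\gamma\|_{L^2(\gamma)}^2$ is bounded on $\cE$ but does not decay in $d$; one genuinely needs the order-$\sk_\star$ expansion of the density of $\mu_r$ together with the exact identification of the low-degree coefficients $\beta_{k,0}(r)$ with the corresponding Gaussian-reweighting coefficients up to lower order. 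Everything else reduces to routine applications of hypercontractivity, \Cref{lem:moments-of-coefficients}, and standard $\chi^2$ tail bounds.
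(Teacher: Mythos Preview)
Your approach is considerably more elaborate than the paper's, and the entire high-degree Edgeworth-expansion machinery you identify as the crux is avoidable. The key step you missed: since $\zeta_k(y)=\E_{\rho}[\He_k(G)\mid Y=y]$ is a conditional expectation, conditional Jensen gives $\|\zeta_k\|_{L^p(\nu_{d,Y})}\le\|\He_k\|_{L^p(\gamma)}\le(p-1)^{k/2}$ by Gaussian hypercontractivity. Combined with the $\chi_d$-hypercontractivity bound $\|\beta_{k,0}\|_{L^p(\chi_d)}\le(p-1)^{k/2}\|\beta_{k,0}\|_{L^2(\chi_d)}$ (which you already have) and the independence of $y$ and $r$ under $\bar\nu_{d,0}$, the triangle inequality yields
\[
\|\psi\|_{L^p}\le\sum_{k\in\cI}\|\zeta_k\|_{L^p}\|\beta_{k,0}\|_{L^p}\lesssim\sum_{k\in\cI}(p-1)^{k}\,d^{-k/4},
\]
a geometric series with ratio $(p-1)^2/\sqrt d$, controlled once $d\ge Cp^4$. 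This handles all $k$ at once: no low/high split, no density expansion of $\mu_r$, and no extraneous moment hypotheses on $M(Y)$ or $h$.

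Your low-degree argument is essentially this same computation with $\|\zeta_k\|_{L^p}$ replaced by the weaker, $k$-independent bound $\|M\|_{L^p}$; using the $k$-dependent Jensen bound instead lets the geometric series run over the full sum, and your high-degree section --- which you correctly flag as the obstacle and only sketch --- simply disappears.
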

\begin{proof} For the ease of notation we shall denote $\norm{\cdot}_{p} := \norm{\cdot}_{L^p(\bar{\nu}_{d,0})}$ and $\cI=\{k \in \naturals: k \geq \sk_\star, k \equiv 0 \text{ mod } 2\}$. Recall from \Cref{prop:geg-hermite-decomposing} that $\beta_{k,0}(r)$ is a polynomial of degree $k$ with only even degree terms when $k$ is even and zero otherwise. Using this we have:
    \begin{align}  \norm{\psi}_{p} &=\E_{\bar{\nu}_{d,0}}[|\psi(y,r)|^p]^{1/p} \leq \sum_{k\in \cI} \norm{\zeta_k}_{p} \norm{\beta_{k,0}}_p \leq \sum_{k \in \cI} \norm{\He_k}_p \norm{\beta_{k,0}}_p \tag{Jensen's inequality}  \nonumber \\
    &\leq \sum_{k \in \cI} (p-1)^{k/2} (p-1)^{k/2} \norm{\beta_{k,0}}_2 \tag{Hypercontractivity \Cref{lem:gaussian-Hypercontractivity,lem:chi_d-hypercontractivity}} \nonumber\\
    & \lesssim \sum_{k \in \cI} \frac{(p-1)^k }{d^{k/4}} \tag{using \Cref{lem:moments-of-coefficients}}\nonumber,
    \end{align}
hiding a universal constant. We now note that the summation forms a geometric sequence with ratio $(p-1)^2/\sqrt{d}$. Therefore, when $d \geq C p^4$ for sufficiently large constant $C$, we have that
$$\norm{\psi}_p \leq C \left(\frac{p}{d^{1/4}}\right)^{\min_{k \in \cI} k} \leq \, C\left(\frac{p}{d^{1/4}} \right)^{\sk_\star}\,.$$
\end{proof}
Using the above claim, we are now ready to prove \Cref{clm:apprxoimation-of-psi}.
\begin{proof}[Proof of \Cref{clm:apprxoimation-of-psi}] Again let $\norm{\cdot}_p$ denote $\norm{.}_{L^p(\bar{\nu}_{d,0})}$. We first evaluate
\begin{align*}
\norm{\cK}_{2}&=\E_{\Bar{\nu}_{d,0}}[\cK(y,r)^2]^{1/2}=\E_{\Bar{\nu}_{d,0}}\left[ \frac{\de \nu_d}{\de \bar{\nu}_{d,0}} (y,r)^2 \xi_{d,\ell}(y,r)^4 \right]^{1/2}\\
&\leq \E_{\Bar{\nu}_{d,0}}\left[ \frac{\de \nu_d}{\de \bar{\nu}_{d,0}} (y,r)^2 Q_\ell(y,r)^4 \right]^{1/2} \tag{Jensen's inequality}\\
& \lesssim d^{\ell} \left\lVert \frac{\de \nu_d}{\de \bar{\nu}_{d,0}}\right\rVert_{L^2(\Bar{\nu}_{d,0})} \lesssim \,\,d^{\ell}\,,
\end{align*}
where we used the fact that $\norm{Q_\ell}_{\infty}\leq \sqrt{n_{d,\ell}}=\sqrt{d^\ell}$ and that $\frac{\de \nu_d}{\de \bar{\nu}_{d,0}}$ has $L^2(\bar{\nu}_{d,0})$ norm bounded by a universal constant by definition of Gaussian SIMs $\rho$. Therefore, we have 
$\frac{2}{\sk_\star}\log\left(\frac{\norm{\cK}_2}{\norm{\cK}_1} \right) \lesssim \log(d)\,.$
Thus for $d$ greater than sufficiently large universal constant, we will indeed have $d \geq C p^{1/4}$ for all $p  \leq \frac{2}{\sk_\star} \log(\norm{\cK}_2/\norm{\cK}_1)$. Using \Cref{clm:moments-of-psi}, for $d$ greater than sufficiently large universal constant
$$\norm{\psi}_p \leq C\left( \frac{p^{\sk_\star}}{d^{\sk_\star/4}}\right)\,.$$ 
Invoking \Cref{lem:super-Cauchy-Schwarz} along with \Cref{clm:moments-of-psi}, we obtain that
\begin{align*}
    |\E_{\bar{\nu}_{d,0}}[\cK(y,r)\psi(y,r)]| \lesssim \frac{\norm{\cK}_1}{d^{\sk_\star/4}} (2e)^{\sk_\star} \log^{\sk_\star}(d)\,. 
    \end{align*}
Note that whenever $\sk_\star \geq 1$, for $d$ greater than some sufficiently large constant (completely determined in terms of $\sk_\star$), we have
$$ |\E_{\bar{\nu}_{d,0}}[\cK(y,r)\psi(y,r)]| \leq \norm{\cK}_1/2 =\E_{\bar{\nu}_{d,0}}[\cK(y,r)]/2\,,$$
as desired. The last equality follows from the fact that $\cK(y,r)\geq 0$ a.s. under $\bar{\nu}_{d,0}$. 
\end{proof}
Finally, we prove \Cref{clm:barxirates}.
\begin{proof}[Proof of \Cref{clm:barxirates}]
    Let us expand $\bar{\xi}_{d,\ell}(y,r)^2$ under $\bar{\nu}_{d,0}$
$$ \bar{\xi}_{d,\ell}(y,r)^2= \sum_{k \in \cI_\ell} \beta_{k,\ell}(r)^2 \zeta_{k}^2(y)+2\sum_{(k_2 > k_1)\in \cI_\ell} \zeta_{k_1}(y)\zeta_{k_1}(y)c_{k_1,\ell}(r)c_{k_2,\ell}(r)\,. $$
\begin{equation}\label{eq:GE-to-xi-with-norm-1}
     \E_{\Bar{\nu}_{d,0}}[\bar{\xi}_{d,\ell}(y,r)^2]=\sum_{k \in \cI_\ell} \E[\beta_{k,\ell}(r)^2] \, \lambda_k^2+2\sum_{(k_2 > k_1)\in \cI_\ell} \E[\zeta_{k_1}(y)\zeta_{k_1}(y)]\E[c_{k_1,\ell}(r)c_{k_2,\ell}(r)]\,.
\end{equation}
We now use the bound from \Cref{lem:moments-of-coefficients} and find the rates for each term in the above.\\ 
\textbf{Case (a)} $\ell<\sk_\star$ with $\ell\equiv \sk_\star \mod 2$ : The first term
$$ \sum_{k \in \cI_\ell} \E[\beta_{k,\ell}(r)^2]\lambda_{k}^2 \asymp\lambda_{\sk_\star}^2 \, d^{-(\sk_\star-\ell)/2}\,+\sum_{\sk_\star<k\in \cI_\ell} \lambda_{k}^2 \, \,d^{-(k-\ell)/2}\asymp \lambda_{\sk_\star}^2 d^{-(\sk_\star-\ell)/2}\,, $$
where in the last step we noticed that the latter term forms a geometric series with decaying ratio whose leading term is of smaller order than the former term. We now show that the sum arising from the cross terms from \eqref{eq:GE-to-xi-with-norm-1} is of smaller order in the absolute value.
\begin{align*}
|\sum_{(k_2 > k_1)\in \cI_\ell} \E[\zeta_{k_1}(y)\zeta_{k_1}(y)]\E[c_{k_1,\ell}(r)c_{k_2,\ell}(r)]|& \leq  \sum_{(k_2 > k_1)\in \cI_\ell} |\E[\zeta_{k_1}(y)\zeta_{k_1}(y)]| \cdot |\E[c_{k_1,\ell}(r)c_{k_2,\ell}(r)]|\\
 & \leq \sum_{(k_2 > k_1)\in \cI_\ell} |\lambda_{k_1}\lambda_{k_2}| \cdot \sqrt{\E[c_{k_1,\ell}(r)^2] \cdot \E[c_{k_2,\ell}(r)^2]} \\
 & \lesssim \sum_{(k_2 > k_1)\in \cI_\ell} d^{-\left( \frac{k_1+k_2}{4}-\frac{\ell}{2}\right)} \lesssim \sum_{ k_1\in \cI_\ell} d^{-\left( \frac{k_1-\ell}{2}+1\right)}\\
 &\lesssim d^{-\left(\frac{\sk_\star-\ell}{2}+1\right)}\,.
\end{align*}
Substituting this bound in \eqref{eq:GE-to-xi-with-norm-1}, we conclude that for any $\ell<\sk_\star$ with $\ell\equiv\sk_\star \mod 2$, we have 
$$\E_{\Bar{\nu}_{d,0}}[\bar{\xi}_{d,\ell}(y,r)^2]\asymp d^{-(\sk_\star-\ell)/2}\,.$$ 
\textbf{Case (b)} $\ell<\sk_\star$ with $\ell \not \equiv \sk_\star \mod 2$: We now do similar simplifications.
$$ \sum_{k \in \cI_\ell} \E[\beta_{k,\ell}(r)^2]\lambda_{k}^2 \asymp \min_{\substack{k >\sk_\star \\ k \in \cI_\ell} }\lambda_{\sk}^2 \, d^{-(k-\ell)/2}\lesssim d^{-(\sk_\star+1-\ell)/2} \,.$$
Bounding the contribution of the cross terms 
\begin{align*}
|\sum_{(k_2 > k_1)\in \cI_\ell} \E[\zeta_{k_1}(y)\zeta_{k_1}(y)]\E[c_{k_1,\ell}(r)c_{k_2,\ell}(r)]| & \leq \sum_{(k_2 > k_1)\in \cI_\ell} |\lambda_{k_1}\lambda_{k_2}| \cdot \sqrt{\E[c_{k_1,\ell}(r)^2] \cdot \E[c_{k_2,\ell}(r)^2]} \\
 & \lesssim \sum_{(k_2 > k_1)\in \cI_\ell} d^{-\left( \frac{k_1+k_2}{4}-\frac{\ell}{2}\right)} \lesssim \sum_{ k_1\in \cI_\ell} d^{-\left( \frac{k_1-\ell}{2}+1\right)}\\
 &\lesssim d^{-\left(\frac{\sk_\star+1-\ell}{2}+1\right)}\,.
\end{align*}
Putting the bounds in \eqref{eq:GE-to-xi-with-norm-1}, for any $\ell<\sk_\star $ such that $\ell\not\equiv \sk_\star \mod 2 $, we have 
$$\E_{\Bar{\nu}_{d,0}}[\bar{\xi}_{d,0}(y,r)\xi_{d,\ell}(y,r)^2] \lesssim d^{-(\sk_\star+1-\ell)/2}\,.$$ 
\end{proof}

We next prove \Cref{lem:without-norm-xis-gaussian} which we use to characterize the complexity when one is only allowed to use the directional component $\bz$.
\paragraph{Proof of \Cref{lem:without-norm-xis-gaussian}:} The proof is very similar to and, in fact, simpler than that of \Cref{lem:with-norm-xis-gaussian}. Our goal is to provide rates for the $L^2$ norm $\xi_{d,\ell}$. However, as $r=1$ always as we only observe $(y,\bz)$, both completely decoupled null and ``partially'' decoupled null (where $(Y,R)$ is decoupled from $Z$) are identical. Therefore, the change-of-measure argument required to used in the proof of \Cref{lem:with-norm-xis-gaussian} is no longer needed. The proof then follows from the calculations similar to the one done in the proof of \Cref{clm:barxirates}. 

For clarity, we will continue to denote the original problem $(y,\bx)$ with $\nu_d$, and the new spherical single index model where one only observes $(y,\bz)$ by $\upsilon_d$.

Again, we have $\{\nu_{d}\}_d$ with $\nu_R=\chi_d$ and $\rho(Y\mid X)=\rho(Y\mid Z\cdot R)=\nu_d(Y\mid Z, R)$. Let $\Bar{\nu}_{d,0}=\nu_{Y} \otimes \nu_R \otimes \nu_Z$ be the completely decoupled null. We will also let $\{\upsilon_d\}_{d}$ be the sequence of problem associated with $\{\nu_d\}_d$ where we only observe $(y,\bz)$. We have
\begin{align*}
    \frac{\de \nu_{d}}{\de\Bar{\nu}_{d,0}}(y,r,z)-1& \overset{L^2(\Bar{\nu}_{d,0})}{=} \sum_{k\geq \sk_\star} \zeta_k(y) \He_k(r\cdot z),\, \text{where }\, \zeta_k(y)=\E_{(Y,R,Z)\sim \nu_d}[\He_k(R\cdot Z )\mid Y=y]\\
    &\overset{L^2(\Bar{\nu}_{d,0})}{=} \sum_{k\geq \sk_\star} \zeta_k(y) \sum_{\ell=0}^{k} \beta_{k,\ell}(r) Q_\ell^{(d)}(z)\, = \sum_{\ell=0}^{\infty} Q_{\ell}^{(d)}(z) \sum_{k\geq \sk_\star} \beta_{k,\ell}(r)\zeta_k(y)\,,
\end{align*}
where in the second line, we used the harmonic decomposition of Hermite from \Cref{prop:geg-hermite-decomposing}. We marginalize the radius to explicitly write the likelihood ratio of only $(y,z)$ part under $\nu_d$ and $\Bar{\nu}_{d,0}$, is identical to that of $\upsilon_d$ and $ \upsilon_{d,0}=\nu_Y \otimes \tau_{d,1}$.
\begin{align*}
    \frac{\de \upsilon_{d}}{\de \upsilon_{d,0}}(y,z)-1 
    &\overset{L^2(\upsilon_{d,0})}{=} \sum_{\ell=0}^{\infty} Q_{\ell}^{(d)}(z) \sum_{k\geq \sk_\star} \E[\beta_{k,\ell}(r)]\zeta_k(y)\,.
\end{align*}
We can also expand the log likelihood ratio of $(y,z)$ directly in the Gegenbauer basis
\begin{align*}
    \frac{\de \upsilon_{d}}{\de\upsilon_{d,0}}(y,z)-1&\overset{L^2(\upsilon_{d,0})}{=} \sum_{\ell\geq 1 } \xi_{d,\ell}(y) Q_\ell^{(d)}(z)\,,\, \text{where }\, \xi_{d,\ell}(y)=\E_{(Y,Z)\sim \upsilon_d}[Q_{\ell}^{(d)}(Z)\mid Y=y].
\end{align*}
Equating both, we have for any $\ell \geq 1$
$$\xi_{d,\ell}(y) \overset{L^2(\upsilon_{d,0})}{=} \sum_{k \geq \sk_\star} \zeta_k(y)\E[\beta_{k,\ell}(r)]=\sum_{k\in \cI_\ell} \zeta_k(y)\E[\beta_{k,\ell}(r)]\text{ where } \cI_{\ell}:=\{k \geq \sk_\star : k \equiv \ell\mod 2\} \,.$$
Squaring both sides
$$ \xi_{d,\ell}(y)^2=\sum_{k \geq \cI_\ell} \E[\beta_{k,\ell}(r)]^2 \zeta_{k}^2(y)+2\sum_{(k_2 > k_1)\in \cI_\ell} \zeta_{k_1}(y)\zeta_{k_1}(y)\E[c_{k_1,\ell}(r)]\E[c_{k_2,\ell}(r)]\,. $$
\begin{equation}\label{eq:GE-to-xi-without-norm-1}
    \norm{\xi_{d,\ell}}^2_{L^2(\upsilon_Y)}= \sum_{k \in \cI_\ell} \E[\beta_{k,\ell}(r)]^2 \, \lambda_k^2+2\sum_{(k_2 > k_1)\in \cI_\ell} \E[\zeta_{k_1}(y)\zeta_{k_1}(y)]\E[c_{k_1,\ell}(r)]\E[c_{k_2,\ell}(r)]\,.
\end{equation}
We now use the rates on $\E_{r\sim \chi_d}[\beta_{k,\ell}(r)]^2$ from \Cref{lem:moments-of-coefficients} to carry out the simplification similar to the one done in the proof of \Cref{clm:barxirates}.\\
\textbf{Case (a)} $\ell<\sk_\star$ with $\ell\equiv \sk_\star \mod 2$:
$$ \sum_{k \in \cI_\ell} \E[\beta_{k,\ell}(r)]^2 \, \lambda_{k}^2 \asymp\lambda_{\sk_\star}^2 \, d^{-(\sk_\star-\ell)}\,+\sum_{\sk_\star<k\in \cI_\ell} \lambda_{k}^2 \, \,d^{-(k-\ell)}\asymp \lambda_{\sk_\star} d^{-(\sk_\star-\ell)}\,, $$
where the step followed by observing that it is a sum of geometric series whose rate is dominated by the first term. We now show the bound on the magnitude of the cross terms 
\begin{align*}
 |\sum_{(k_2 > k_1)\in \cI_\ell} \E[\zeta_{k_1}(y)\zeta_{k_1}(y)]\E[c_{k_1,\ell}(r)]\E[c_{k_2,\ell}(r)]|& \leq  \sum_{(k_2 > k_1)\in \cI_\ell} |\E[\zeta_{k_1}(y)\zeta_{k_1}(y)]| \cdot |\E[c_{k_1,\ell}(r)]\E[c_{k_2,\ell}(r)]|\\
 & \leq \sum_{(k_2 > k_1)\in \cI_\ell} |\lambda_{k_1}\lambda_{k_2}| \cdot \sqrt{\E[c_{k_1,\ell}(r)]^2 \cdot \E[c_{k_2,\ell}(r)]^2} \\
 & \lesssim \sum_{(k_2 > k_1)\in \cI_\ell} d^{-\left( \frac{k_1+k_2}{2}-\ell\right)} \lesssim \sum_{ k_1\in \cI_\ell} d^{-\left( k_1-\ell+1\right)}\\
 &\lesssim d^{-\left(\sk_\star-\ell+1\right)}\,.
\end{align*}
Combining these rates with \eqref{eq:GE-to-xi-without-norm-1}, we obtain for any $\ell<\sk_\star$ with $\ell\equiv\sk_\star \mod 2$,
$$\norm{\xi_{d,\ell}}^2_{L^2(\upsilon_Y)}=\E_{\upsilon}[\xi_{d,\ell}(y)^2]\asymp d^{-(\sk_\star-\ell)}\,.$$ 
\textbf{Case (b)} $\ell<\sk_\star$ such that $\ell \not \equiv \sk_\star \mod 2$: We do similar calculation in the other case. 
$$ \sum_{k \geq \cI_\ell} \E[\beta_{k,\ell}(r)]^2\lambda_{k}^2 \asymp \min_{\substack{k >\sk_\star \\ k \in \cI_\ell} }\lambda_{\sk}^2 \, d^{-(k-\ell)}\lesssim d^{-(\sk_\star-\ell+1)} \,.$$
Bounding the cross terms
\begin{align*}
|\sum_{(k_2 > k_1)\in \cI_\ell} &\E[\zeta_{k_1}(y)\zeta_{k_1}(y)]\E[c_{k_1,\ell}(r)]\E[c_{k_2,\ell}(r)]|  \leq \sum_{(k_2 > k_1)\in \cI_\ell} |\lambda_{k_1}\lambda_{k_2}| \cdot |\E[c_{k_1,\ell}(r)]  \cdot \E[c_{k_2,\ell}(r)]| \\
 & \lesssim \sum_{(k_2 > k_1)\in \cI_\ell} d^{-\left( \frac{k_1+k_2}{2}-\ell\right)} \lesssim \sum_{ k_1\in \cI_\ell} d^{-\left( k_1-\ell+1\right)} \lesssim d^{-\left(\sk_\star-\ell+2\right)}\,.
\end{align*}
Substituting these bounds in \eqref{eq:GE-to-xi-without-norm-1}, for any $\ell<\sk_\star $ such that $\ell\not\equiv \sk_\star \mod 2 $, we have 
$$\E_{\Bar{\nu}_{d,0}}[\xi_{d,\ell}(y,r)^2] \lesssim d^{-(\sk_\star-\ell+1)}\,.$$ 



\clearpage

\section{Information-theoretic sample complexity}
\label{app:information_theoretic}

\subsection{Information theoretic lower-bound}
Below we derive an information-theoretic lower bound for recovering $\bw_*$ in single-index models under an illustrative assumption. This information-theoretic result completes the low-degree polynomial and SQ lower bounds for the detection problem. Recall that when $\| \xi_{d,1} \|_{L^2} = \Theta_d (1)$, the $\sLDP$ lower bounds scales as $\sqrt{d}$: indeed, detection can be achieved with this many samples by taking the test statistics obtained by projecting the likelihood ratio onto samplewise-$(t,1)$ polynomials with $t = \omega_d(1)$. However, the information-theoretic lower bound for recovery scales as $\Omega (d)$ (that is, there is a detection-recovery gap in this model). This is well understood in the Gaussian case (e.g., see \cite{mondelli2018fundamental,montanari2014statistical,damian2024computational}) and we provide a short proof for spherical SIMs below for completeness.

We will consider the following illustrative regularity assumptions on the link function.
\begin{assumption}\label{assumption:regularity_SIM}
    We assume that for all $s,t \in \reals_{\geq 0}$, we have
    \[
    \sKL(\nu_{d}(\cdot | r,t)||\nu_{d}(\cdot | r,s)) \leq dLK(r)(t-s)^{2},\]
    where $L>0$ is a constant, and $K\in L^{1}(\mu_{r})$.
\end{assumption}
\begin{remark}
    In the case of a Gaussian noise i.e $y=f(\<\bw,\bx\>)+\sigma \bZ$, where $\bZ \sim \cN(0,\sigma^2)$, we have $\sKL\left(\P_{\bw}||\P_{\bw'}\right)=\frac{1}{2\sigma^{2}}\E[\left(f(\<\bw,\bx\>)-f(\<\bw',\bx\>)\right)^{2}]$ which satisfies the \cref{assumption:regularity_SIM}.
\end{remark}
We now state our minimax lower bound under this assumption.
\begin{theorem}\label{thm:lowerbound}
    Let $\nu_d \in \frL_d$ that satisfies \Cref{assumption:regularity_SIM} and  let $\P_{\nu_d,\bw}$ be the associated family of SIM distributions indexed by $\bw\in \S^{d-1}$. Then, for any estimator $\hat{\bw}$ based on $\sm$ observations from $\P_{\nu_d,\bw}$, we have:
\[
\inf_{\hat{\bw}} \sup_{\bw \in \S^{d-1}} \mathbb{E}_{\P_{\nu_d,\bw}}[\|\hat{\bw}-\bw\|^{2}] \geq \frac{(d-1)\log(C)}{8\sm LC^{2}},\]
where $C$ is a constant.
\end{theorem}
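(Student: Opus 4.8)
The statement is a standard Fano-type minimax lower bound, and the plan is to carry it out in three steps: construct a well-separated packing of the sphere, control the pairwise KL divergences between the corresponding product distributions, and apply Fano's inequality.

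First I would construct a packing. By a volumetric argument (e.g.\ the Gilbert--Varshamov bound on $\S^{d-1}$, or the standard estimate that a maximal $\eps$-packing has cardinality at least $(c/\eps)^{d-1}$), for a suitable small constant $\eps_0$ there exists a set $\{\bw_1,\ldots,\bw_M\} \subset \S^{d-1}$ with $M \geq C^{d-1}$ (for the constant $C$ appearing in the theorem statement, chosen small enough) such that $\|\bw_i - \bw_j\|_2 \geq \eps$ for all $i \neq j$, where $\eps$ is of the same order as $\sqrt{(d-1)/(\sm L C^2)}$ up to the constants we are tracking; I would reverse-engineer the precise value of $\eps$ at the end so that the two competing requirements (separation versus KL bound) balance.

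Second, I would bound the KL divergence. Using tensorization, $\sKL(\P_{\nu_d,\bw_i}^{\otimes \sm} \Vert \P_{\nu_d,\bw_j}^{\otimes \sm}) = \sm \cdot \sKL(\P_{\nu_d,\bw_i} \Vert \P_{\nu_d,\bw_j})$. For a single sample, condition on $r = \|\bx\|_2$ and $\bz = \bx/\|\bx\|_2$: since the marginal of $(r,\bz)$ does not depend on $\bw$, the chain rule gives
\[
\sKL(\P_{\nu_d,\bw_i} \Vert \P_{\nu_d,\bw_j}) = \E_{r \sim \mu_r}\,\E_{\bz \sim \tau_d}\Big[ \sKL\big(\nu_d(\cdot | r, \<\bw_i,\bz\>) \,\big\Vert\, \nu_d(\cdot | r, \<\bw_j,\bz\>)\big) \Big].
\]
Applying Assumption~\ref{assumption:regularity_SIM} with $t = \<\bw_i,\bz\>$, $s = \<\bw_j,\bz\>$ bounds the inner term by $d L K(r) \<\bw_i - \bw_j, \bz\>^2$. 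Taking expectation over $\bz \sim \tau_d$ uses $\E_{\bz}[\<\bv,\bz\>^2] = \|\bv\|_2^2/d$, which cancels the factor $d$, and taking expectation over $r$ uses $K \in L^1(\mu_r)$, yielding $\sKL(\P_{\nu_d,\bw_i}\Vert\P_{\nu_d,\bw_j}) \leq L \|K\|_{L^1(\mu_r)} \|\bw_i - \bw_j\|_2^2 \leq 4 L \|K\|_{L^1}$ since the $\bw$'s lie on the unit sphere. (If the theorem intends $C$ to also absorb $\|K\|_{L^1(\mu_r)}$, I would track it; otherwise it is a harmless constant.)

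Third, I would invoke the Fano method: for any estimator, $\sup_{\bw} \E_{\P_{\nu_d,\bw}}[\|\hat\bw - \bw\|_2^2] \geq \frac{\eps^2}{4}\big(1 - \frac{\sm \max_{i,j}\sKL(\P_{\nu_d,\bw_i}\Vert\P_{\nu_d,\bw_j}) + \log 2}{\log M}\big)$. With $\log M \geq (d-1)\log C$ and the KL bound above, choosing $\eps^2 \asymp (d-1)/(\sm L C^2)$ makes the parenthetical term bounded below by a positive constant (say $1/2$), and chasing the constants gives exactly the claimed bound $\frac{(d-1)\log C}{8 \sm L C^2}$. The main obstacle — really the only non-bookkeeping point — is arranging the packing so that the separation $\eps$ and the sphere's metric entropy $\log M \gtrsim (d-1)$ are simultaneously achievable with the same constant $C$; this is why the bound degrades as $C \to 1$, and it forces $\eps$ to be a small constant multiple of $\sqrt{(d-1)/(\sm L)}$, valid in the regime $\sm \gtrsim d$ where this constant is genuinely small. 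The reduction from squared-norm risk to the testing problem (Assouad/Fano packing with the $\ell_2$ loss) is routine, as is the tensorization; the only modeling input is Assumption~\ref{assumption:regularity_SIM}, which is precisely what makes the factor $d$ in the per-sample KL cancel against the $1/d$ from the spherical second-moment identity.
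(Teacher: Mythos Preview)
Your overall approach matches the paper's --- Fano's inequality, with the per-sample KL controlled via Assumption~\ref{assumption:regularity_SIM} and the identity $\E_{\bz \sim \tau_d}[\<\bv,\bz\>^2] = \|\bv\|^2/d$ --- and the KL computation in Step~2 is correct. But there is a genuine gap in the packing step that breaks the argument for $\sm \gtrsim d$.

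In Step~1 you build a global $\eps$-packing of $\S^{d-1}$ with only a \emph{lower} bound $\|\bw_i-\bw_j\|\geq \eps$ on pairwise distances. In Step~2 you then invoke the trivial diameter bound $\|\bw_i-\bw_j\|_2^2 \leq 4$. With that, the Fano numerator becomes $4\sm L\|K\|_{L^1}+\log 2$, which is independent of $\eps$; so ``choosing $\eps^2 \asymp (d-1)/(\sm L C^2)$'' does \emph{not} make the parenthetical bounded below by a constant --- it only is when $\sm \lesssim d$, and is vacuous otherwise. (Using the sharper bound $\sKL \leq L\|K\|_{L^1}\|\bw_i-\bw_j\|^2$ does not help either: for points spread over the whole sphere, the maximum or average of $\|\bw_i-\bw_j\|^2$ is still of constant order.)

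The fix, and what the paper does, is to use a \emph{local} packing: place all $\bw_i$ inside a single spherical cap of radius $\cC\delta$ around a fixed point, so that simultaneously $\|\bw_i-\bw_j\|\geq 2\delta$ and $\|\bw_i-\bw_j\|\leq \cC\delta$. A volume-ratio argument (cap of radius $\cC\delta$ versus caps of radius $\delta$) gives $M \gtrsim \cC^{d-1}$. Now the pairwise KL is at most $L\cC^2\delta^2$, so Fano reads
\[
\inf_{\hat\bw}\sup_{\bw}\E\|\hat\bw-\bw\|^2 \;\geq\; \tfrac{\delta^2}{2}\Big(1-\tfrac{\sm L\cC^2\delta^2+\log 2}{(d-1)\log \cC}\Big),
\]
and taking $\delta^2 \asymp (d-1)/(\sm L\cC^2)$ balances the terms for \emph{all} $\sm$, yielding the claimed $\Omega(d/\sm)$ rate. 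Your final paragraph hints at this tension, but the packing you actually describe never enforces the upper bound on pairwise distances that makes the balancing work.
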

\begin{proof}[Proof of \cref{thm:lowerbound} ]
 We define the planted distribution denoted by 
$\P_{\nu_d,\bw}^{n}$: given $\bw\in \S^{d-1}$, we sample $n$ points $(y_i,\bx_i)\sim_{idd}\P_{\nu_d,\bw}$.
We construct the set of hypotheses $\mathcal{H} = \{\bw_1, \ldots, \bw_M\}$ as a $2\delta$-packing of $\S^{d-1}$ ($\delta$ is chosen  small enough such that the $2\delta$-packing is not a singleton), and such that $\forall i\neq j, \|\bw_i-\bw_j\|\leq \cC\delta$. Using Fano's lower bound \cite[Chapter 15]{wainwright2019high} to the problem of estimating $\bw$ from the observations $(y_i, \bx_i)_{i=1}^n$ yields 
\begin{equation}\label{eq:Fano}
    \inf_{\hat{\bw}} \sup_{\bw \in \S^{d-1}} \mathbb{E}_{\P_{\nu_d,\bw}}[\|\hat{\bw}-\bw\|^2] \geq \frac{1}{2} \delta^2 \left(1 - \frac{I(\bZ;J)+\log(2)}{\log(M)}\right),
\end{equation}
where $J$ is unformly distributed over $\{1,\ldots,M\}$, and $\bZ$ is a random variable distributed according to $\P_{\nu_d,\bw_{J}}$, where $J$ is independent of $\bZ$.
By the convexity of the Kullback-Leibler divergence and additivity of the KL divergence, we have
\begin{equation}\label{eq:Fano1}
    I(\bZ;J) \leq \frac{1}{M^2} \sum_{i,j=1}^M \sKL(\P_{\nu_d,\bw_i}^{n}, \P_{\nu_d,\bw_j}^{n})\leq\frac{n}{M^2} \sum_{i,j=1}^M \sKL(\P_{\nu_d,\bw_i}||\P_{\nu_d,\bw_j}).
\end{equation}
Using \Cref{assumption:regularity_SIM}, we bound the KL divergence between two distributions $\P_{\nu_d,\bw_i}$ and $\P_{\nu_d,\bw_j}$
\begin{align*}\label{eq:KL}
\sKL(\P_{\nu_d,\bw_i}, \P_{\nu_d,\bw_j})&=\E_{\bz,r,\by \sim \P_{\nu_d,\bw_i}}\left[\log\left(\frac{\de \P_{\nu_d,\bw_i}}{\de \P_{\nu_d,\bw_j}}(y,r,\bz)\right)\right]\\
&\leq\E_{\bz} \left[\E_{\by,r \sim \P_{\nu_d,\bw_i}} \left[\log\left(\frac{\de \P_{\nu_d,\bw_i}}{\de \P_{\nu_d,\bw_j}}(y,r,\bz)\right)|\bz,r\right] \right]\\
&\leq 
L\|\bw_i-\bw_j\|^{2}\E_{r}[K(r)]\leq L\cC\delta^{2}
,
\end{align*}
where $L$ is a constant from \cref{assumption:regularity_SIM}, and we used that for all $j\neq i, \|\bw_i - \bw_j\| \leq \cC \delta $ for some constant $\cC > 0$. We then have
\begin{equation}\label{eq:KL1}
      I(\bZ;J) \leq nL\cC \delta^2.
\end{equation}
We bound the cardinality of $M$, using a classical volume argument: let define $\cS_{\delta}(\bx)=\{ \by \in \S^{d-1} \colon \< \bx,\by\> \geq 1-\delta^{2}\}=\{ \by \in \S^{d-1}\colon \|\bx-\by\|\leq 2\delta \}$, we then have 
\[
\cP_{\delta}(\S^{d-1})\geq \cN_{\delta}(\S^{d-1}) \geq \frac{\text{Vol}(\S^{d-1}\bigcap \cS_{\cC\delta}(\theta_1))}{\text{Vol}(\cS_{\delta})}\geq C\cC^{d-1},
\]
where $C$ is an universal constant, and we denote that $\cN_{\delta}(\S^{d-1})$ is the covering number and $\cP_{\delta}(\S^{d-1})$ is the packing number, and we used  $\cN_{\delta}(\S^{d-1}) \leq \cP_{\delta}(\S^{d-1})$ \cite[Prop 4.2.1]{vershynin2018high}, and that the homogenity of the volume on the sphere $\S^{d-1}$.
With the choice of $\cH$ described above, and plugging this into \cref{eq:Fano} and \cref{eq:Fano1}, we obtain
\begin{equation}\label{eq:Fano2}
    \inf_{\hat{\bw}} \sup_{\bw \in \S^{d-1}} \mathbb{E}_{\P_{\nu_d,\bw}}[\|\hat{\bw}-\bw\|^2] \geq \frac{\delta^2}{2}  \left(1 - \frac{ nL\cC\delta^{2} + \log(2)}{(d-1)\log(\cC)+\log(c)}\right).
\end{equation}
We choose $\delta^{2} = \frac{(d-1)}{2nL\cC}\leq 1$, and plugging this into the previous inequality \cref{eq:Fano2}, we obtain the following lower bound for $d$ sufficiently large
\begin{equation}\label{eq:Fano3}
    \inf_{\hat{\bw}} \sup_{\bw \in \S^{d-1}} \mathbb{E}_{\P_{\nu_d,\bw}}[\|\hat{\bw}-\bw\|^2] \geq \frac{d-1}{8nL\cC}.
\end{equation}
\end{proof}
\subsection{Information theoretic upper-bound}
We complement our information-theoretic  lower bound with a sample complexity upper bound. We exhibit an estimator (which is not computable in polynomial time) that achieves strong recovery with $O(d)$ samples for general spherically symmetric measure under a mild assumption on the sequence $\{\nu_d\}_{d \geq 1}$ that there exists $\ell\geq 1$ such that $\norm{\xi_{d,\ell}}_{L^2}=\Omega_d(1)$ (cf. \Cref{app:additional_details}). The proof is directly adapted from \cite[Theorem 6.1]{damian2024computational}. We focus on proving weak recovery of the ground truth since we can boost it to obtain strong recovery: there exists a (non-polynomial time) algorithm such that for all $\eps >0$, it outputs $\hat \bw$ with $| \<\hat \bw , \bw_* \>| \geq 1 - \eps$ with probability $1 - o(1)$ and sample complexity
\[
\sm = O (d/\eps).
\]
\begin{proposition}
Under the assumption that there exists $\ell\geq 1$ such that $\norm{\xi_{d,\ell}}_{L^2}=\Theta_d(1)$, there exists an estimator (non polynomially computable) that returns $\hat{\bw}\in \S^{d-1}$ that satisfies $|\<\hat{\bw},\bw_{*}\>|\geq 1/2$, with probability at least $1-2e^{-d}$ with information theoretic sample complexity $\sm =O\left(d\right),$
hiding constants in $\ell$ and $\norm{\xi_{d,\ell}}_{L^2}$. 
\end{proposition}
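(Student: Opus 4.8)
The plan is to exhibit an exhaustive-search (maximum-likelihood-type) estimator over an $\eps$-net of the sphere and show that the empirical Gegenbauer correlation statistic at frequency $\ell$ separates the true direction from all other net points. Concretely, fix $\ell \geq 1$ with $\| \xi_{d,\ell} \|_{L^2} = \Theta_d(1)$, let $\cT_\ell := \xi_{d,\ell}/\| \xi_{d,\ell} \|_{L^2}$ (or a bounded surrogate satisfying Assumption \ref{ass:non-linearity}), and define for each $\bw \in \S^{d-1}$ the statistic
\[
\hat \Phi_m (\bw) := \frac{1}{m} \sum_{i \in [m]} \cT_\ell (y_i,r_i) Q_\ell (\<\bw,\bz_i\>).
\]
By the reproducing property of Gegenbauer polynomials \eqref{eq:E[Q(uz)Q(vz)]} and the likelihood decomposition \eqref{eq:setting_likelihood_decompo}, its population version is
\[
\Phi(\bw) := \E_{\P_{\nu_d,\bw_*}}[\hat \Phi_m(\bw)] = \E_{\nu_d}[\cT_\ell(Y,R) Q_\ell(Z)] \cdot P_\ell(\<\bw,\bw_*\>),
\]
where $P_\ell = Q_\ell/\sqrt{n_{d,\ell}}$ is normalized so $P_\ell(1) = 1$; the prefactor is $\Theta_d(1)$ by Assumption \ref{ass:non-linearity}. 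Since $P_\ell$ is strictly increasing in magnitude near $\pm 1$, if I can show $|\hat\Phi_m(\bw) - \Phi(\bw)|$ is uniformly small over a net, the maximizer of $|\hat\Phi_m|$ over the net will be close to $\pm\bw_*$.

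The estimator then is: take a $\gamma$-net $\cN_\gamma$ of $\S^{d-1}$ with $\gamma$ a small constant, which by the standard volumetric bound has $|\cN_\gamma| \leq (3/\gamma)^d = e^{O(d)}$; output $\hat\bw \in \argmax_{\bw \in \cN_\gamma} |\hat\Phi_m(\bw)|$. First I would establish a pointwise deviation bound: $\hat\Phi_m(\bw) - \Phi(\bw)$ is an average of $m$ i.i.d.\ mean-zero terms $\cT_\ell(y_i,r_i) Q_\ell(\<\bw,\bz_i\>)$, each bounded in $L^p$ by $\kappa_\ell \cdot \|Q_\ell\|_{L^p} \lesssim \kappa_\ell p^{\ell/2}$ via hypercontractivity of spherical harmonics (Lemma \ref{lem:Hypercontractivity}) and Cauchy--Schwarz (Lemma \ref{lem:super-Cauchy-Schwarz}), with variance $O(\kappa_\ell^2)$. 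Bernstein's inequality (Lemma \ref{lem:sum-of-mean-zero}) then gives $|\hat\Phi_m(\bw) - \Phi(\bw)| \leq t$ with probability $1 - \exp(-c m t^2/\kappa_\ell^2)$ for $t$ not too large. Union-bounding over $\cN_\gamma$ costs a factor $e^{O(d)}$, so choosing $m = C d / (c'\eps^2)$ with appropriate constants (depending on $\ell$, $\kappa_\ell$) makes $\sup_{\bw \in \cN_\gamma} |\hat\Phi_m(\bw) - \Phi(\bw)| \leq \eps'$ with probability $1 - 2e^{-d}$, where $\eps'$ is a small constant.

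Next I would combine this with the net approximation: for any $\bw \in \S^{d-1}$ there is $\bw' \in \cN_\gamma$ with $\|\bw - \bw'\|_2 \leq \gamma$, and since $Q_\ell$ is Lipschitz on $[-1,1]$ with constant $O(\sqrt{n_{d,\ell}})$... actually it is cleaner to note $|P_\ell(\<\bw,\bw_*\>) - P_\ell(\<\bw',\bw_*\>)| \leq C_\ell \gamma$ using $|P_\ell'| \leq C_\ell$ on $[-1,1]$ (from \eqref{eq:gegenbauer-derivative-identity} and the $P$-normalization), so $\Phi$ is $O_\ell(\gamma)$-Lipschitz after rescaling; this lets me transfer control from the net to the whole sphere. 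Putting it together: at $\bw_* \in \cN_\gamma$ (WLOG, or a net point within $\gamma$ of it), $|\hat\Phi_m(\bw_*)| \geq |\Phi(\bw_*)| - \eps' = \Theta_d(1) - \eps'$, while at any net point $\bw$ with $|\<\bw,\bw_*\>| < 1/2$ we have $|\hat\Phi_m(\bw)| \leq |\Phi(\bw)| + \eps' \leq \Theta_d(1)\cdot |P_\ell(1/2)| + \eps'$. Since $|P_\ell(1/2)| < 1 = P_\ell(1)$ strictly (with a gap depending only on $\ell$), for $\eps'$ a sufficiently small constant (depending on $\ell$) these two ranges are separated, so the maximizer $\hat\bw$ must satisfy $|\<\hat\bw,\bw_*\>| \geq 1/2$. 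The boosting remark then upgrades this to strong recovery at cost $O(d/\eps)$ extra samples.

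\textbf{Main obstacle.} The only real subtlety is getting the deviation bound to hold \emph{uniformly} over a net of size $e^{\Theta(d)}$ while keeping $m = O(d)$: this forces the tolerance $t$ in Bernstein to be a constant (not vanishing), which is exactly why this gives only weak recovery with $\Theta(d)$ samples rather than strong recovery, and why the constant gap $|P_\ell(1) - \sup_{|t|\le 1/2}|P_\ell(t)|| = \Omega_\ell(1)$ is essential. One must also check that hypercontractivity controls the heavy tail of $\cT_\ell(y,r) Q_\ell(\<\bw,\bz\>)$ well enough for Bernstein's sub-exponential regime to kick in at scale $t = \Theta(1)$ with $m = \Theta(d)$ — this is where the constants depending on $\ell$ and $\kappa_\ell$ (equivalently on $\|\xi_{d,\ell}\|_{L^2}$) enter. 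None of these steps is hard; they mirror \cite[Theorem 6.1]{damian2024computational} adapted to the Gegenbauer basis via \eqref{eq:E[Q(uz)Q(vz)]} and Lemma \ref{lem:Hypercontractivity}.
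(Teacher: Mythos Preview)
Your proposal is correct and follows essentially the same template as the paper's proof: define the empirical Gegenbauer correlation statistic at frequency $\ell$, control its deviation uniformly over an $\eps$-net of $\S^{d-1}$ of cardinality $e^{O(d)}$ via Bernstein plus spherical hypercontractivity (Lemma~\ref{lem:Hypercontractivity}), and then invoke a property of $P_\ell$ to localize the estimator near $\pm\bw_*$.

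The two proofs differ only in packaging. First, the paper uses a minimum-distance (min--max) estimator
\[
\hat\bw \in \argmin_{\hat\bw \in \S^{d-1}} \max_{\bw \in \cN_\delta}\Big| L_n(\bw) - \tfrac{\beta_{d,\ell}^2}{\sqrt{n_{d,\ell}}} Q_\ell(\<\bw,\hat\bw\>)\Big|,
\]
matching the whole empirical profile $\bw\mapsto L_n(\bw)$ to the theoretical one, and then appeals to \cite[Lemma 25]{dudeja2024statistical} to convert $\sup_\bw |P_\ell(\<\bw,\bw_*\>)-P_\ell(\<\bw,\hat\bw\>)|$ small into $\min_{s}\|s\hat\bw-\bw_*\|$ small. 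Your direct $\argmax |\hat\Phi_m|$ instead relies on the pointwise gap $P_\ell(1) - \sup_{|t|\leq 1/2}|P_\ell(t)| = \Omega_\ell(1)$ (uniformly in $d$, since $P_\ell^{(d)}(t)\to t^\ell$); this is more elementary and avoids the external lemma. Second, the paper handles the heavy tails by explicit truncation at a level $R$ and then controls the truncation bias separately, whereas you invoke the moment-based Bernstein (Lemma~\ref{lem:sum-of-mean-zero}) directly; both routes land on the same $m=\Theta(d)$ threshold. Your version is slightly cleaner; the paper's min--max form is a bit more robust to not knowing the normalization of $\cT_\ell$ in advance.
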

\begin{proof}
For any $\delta>0$, let $\cN_{\delta}$ be a $\delta$-net of $\S^{d-1}$, and we can choose $\cN_{\delta}$ such that $|\cN_{\delta}|\leq \left(\frac{3}{\delta}\right)^{d}.$ Consider the following $g(y,r,z)=\xi_{d,\ell}(y,r)Q_{\ell}(z)$. For simplicity, let us denote $\beta_{d,\ell}=\norm{\xi_{d,\ell}}_{L^2}$ Fix a truncation $R>0$, and denote $L_n(\bw)$ defined as 
\begin{equation*}
    L_n(\bw):=\frac{1}{n}\sum_{i=1}^{n}g(\<\bw,\bz_i\>,y_i,r_i)\mathbf{1}_{|g(\<\bw,\bz_i\>,y_i,r_i)|\leq R}.
\end{equation*}
We consider the min-max estimator 
\begin{equation*}
    \hat{\bw}\in \argmin_{\hat{\bw}\in \S^{d-1}}\max_{\bw \in \cN_{\delta}}\left|L_n(\bw)-\frac{\beta_{d,\ell}^{2}Q_{\ell}(\<\bw,\hat{\bw}\>)}{\sqrt{n_{d,\ell}}}\right|.
\end{equation*}
Using \Cref{lem:super-Cauchy-Schwarz}, we have
\[ 
\E[g(\<\bw,\bz\>,y,r)^{2}]=\E[\xi_{d,\ell}(\<\bw,\bz\>)^{2}Q_{\ell}(\<\bw,\bz\>^{2})]\leq \beta_{d,\ell}^{2}\log(3/\beta_{d,\ell}^{2})^{k/2}.
\]
Using Bernstein's lemma, we have for any $\bw\in \S^{d-1}$, with probability at least $1-2e^{-t}$
\begin{equation*}
    \left|L_n(\bw)-\E[L_n(\bw)]\right|\leq \sqrt{\frac{\beta_{d,\ell}^{2}\log\left(\frac{3}{\beta_{d,\ell}^{2}}\right)^{\ell/2}t}{n}} +\frac{Rt}{n}.
\end{equation*}
By union bound and setting $t=d\left(\log\left(\frac{3}{\delta}\right)+1\right)$, we then have with probability at least $1-2e^{-d}, $

\begin{equation}\label{eq:concentration_L_n}
    \sup_{\bw \in \cN_{\delta}}\left|L_n(\bw)-\E[L_n(\bw)]\right|\lesssim \sqrt{\frac{\beta_{d,\ell}^{2}\log\left(\frac{3}{\beta_{d,\ell}^{2}}\right)^{k/2}d\log\left(\frac{3}{\delta}\right)}{n}} +\frac{Rd\left(\log\left(\frac{1}{\delta}\right)\right)}{n}.
\end{equation}
We bound the effect of the truncation
\begin{align*}
    |\E[L_n(\bw)-\E[g(\<\bw,\bz\>,y,r)]]|&=|\E[g(\<\bw,\bz\>,y,r)1_{|g(\<\bw,\bz\>,y,r)|\geq R}]|\\
    &\leq \sqrt{\E[g(\<\bw,\bz\>,y,r)^{2}]\P(|g(\<\bw,\bz\>,y,r)|>R) }\\
    & \leq \beta_{d,\ell} \sqrt{\P(|g(\<\bw,\bz\>,y,r)|>R)}\,.
\end{align*}
We then have the following control on the moments of $|g(\<\bw,\bz\>,y,r))|$ 
\begin{equation*}
    \E[|g(\<\bw,\bz\>,y,r)|^{p}]^{1/p}\leq \E[|Q_\ell(\<\bw,\bz\>)|^{p}|\xi_{d,\ell}(y,r)|^{p}]^{1/p}\leq (2p)^{\ell},
\end{equation*}
by using Jensen inequality and spherical hypercontractivity.
By taking $R\geq (2e)^{\ell}$, and $\delta=R^{1/\ell}/2e$
\[ 
\P(|g(\<\bw,\bz\>,y,r)|>R) \leq \frac{2p^{\ell p}}{R^p}\leq \exp\left(-\frac{\ell}{2e}R^{1/\ell}\right)
\]
Combining the two inequalities gives 
\[ 
\left|\E[L_n(\bw)]-\E[g(\<\bw,\bz\>,y,r)]\right|\leq \beta_{d,\ell}\exp\left(-\frac{\ell}{4e}R^{1/\ell}\right) .
\]
Combining the above inequalities, we then have 
\begin{align*}
    &\max_{\bw\in \cN_{\delta}}\left|\frac{\beta_{d,\ell}^{2}Q_{\ell}(\<\bw,\bw_{*}\>)}{\sqrt{n_{d,\ell}}}-\frac{\beta_{d,\ell}^{2}Q_{\ell}(\<\bw,\hat{\bw}\>)}{\sqrt{n_{d,\ell}}} \right|\\
    &\leq \max_{\bw\in \cN_{\delta}}\left|L_n(\bw)-\frac{\beta_{d,\ell}^{2}Q_{\ell}(\<\bw,\bw_{*}\>)}{\sqrt{n_{d,\ell}}}\right|+ \max_{\bw\in \cN_{\delta}}\left|L_n(\bw)-\frac{\beta_{d,\ell}^{2}Q_{\ell}(\<\bw,\hat{\bw}\>)}{\sqrt{n_{d,\ell}}} \right|\\
    &\leq 2 \max_{\bw\in \cN_{\delta}}\left|L_n(\bw)-\frac{\beta_{d,\ell}^{2}Q_{\ell}(\<\bw,\bw_{*}\>)}{\sqrt{n_{d,\ell}}}\right|\\
    &\leq 2 \max_{\bw\in \cN_{\delta}}\left|L_n(\bw)-\E\left[g(\<\bw,\bz\>,y,r)\right]\right|\\
    &\leq \max_{\bw\in \cN_{\delta}}\left|L_n(\bw)-\E\left[L_n(\bw)\right]\right|+3^{\ell}\exp\left(-\frac{\ell}{4e}R^{1/\ell}\right)\\
&\leq\sqrt{\frac{\beta_{d,\ell}^2\log\left(\frac{3}{\beta_{d,\ell}^{2}}\right)^{k/2}t}{n}} +\frac{Rt}{n}+3^{\ell}\exp\left(-\frac{\ell}{4e}R^{1/\ell}\right).
\end{align*}
We have the following 
\begin{align*}
            \left|\frac{Q_{\ell}(\<\bw,\bw_{*}\>)-Q_{\ell}(\<\bw,\hat{\bw}\>)}{\sqrt{n_{d,\ell}}}\right|&=\frac{1}{\sqrt{n_{d,\ell}}}\left|\sum_{q=0}^{\lfloor \ell/2 \rfloor }c_{\ell,q}\left(\<\bw,\bw_{*}\>^{\ell-2q}-\<\bw,\hat{\bw}\>^{\ell-2q}\right)\right|\\
            &=\frac{c_{\ell,0}}{\sqrt{n_{d,\ell}}}\left|\<\bw,\bw_{*}\>^{\ell}-\<\bw,\hat{\bw}\>^{\ell}\right|+O(d^{-1}).
\end{align*}
We then deduce that 
\[ 
\max_{\bw\in \cN_{\delta}}\left|\frac{\beta_{d,\ell}^{2}Q_{\ell}(\<\bw,\bw_{*}\>)}{\sqrt{n_{d,\ell}}}-\frac{\beta_{d,\ell}^{2}Q_{\ell}(\<\bw,\hat{\bw}\>)}{\sqrt{n_{d,\ell}}} \right|=\frac{\beta_{d,\ell}^{2}c_{\ell,0}}{\sqrt{n_{d,\ell}}}\max_{\bw\in \cN_{\delta}}\left|\<\bw,\bw_{*}\>^{\ell}-\<\bw,\hat{\bw}\>^{\ell} \right|+O(d^{-1}).
\]
Using \cite[Lemma 25]{dudeja2024statistical}, we then have 
\[ 
\frac{\beta_{d,\ell}^{2}c_{\ell,0}}{\sqrt{n_{d,\ell}}}\min_{s\in \{\pm 1\}}\|s\hat{\bw}-\bw_*\|\lesssim \beta_{d,\ell}^2\left(\max_{\bw\in \cN_{\delta}}\left|\frac{Q_{\ell}(\<\bw,\bw_{*}\>)}{\sqrt{n_{d,\ell}}}-\frac{Q_{\ell}(\<\bw,\hat{\bw}\>)}{\sqrt{n_{d,\ell}}} \right|+\delta +O(d^{-1})\right)
\]
Using this inequality above, and plugging the inequality, we have 
\begin{equation*}
    \min_{s\in \{\pm 1\}}\|s\hat{\bw}-\bw_*\|_2 \lesssim \sqrt{\frac{\beta_{d,\ell}^{2}\log(\frac{3}{\beta_{d,\ell}^{2}})^{\ell/2}d\log\left(\frac{3}{\delta}\right)}{n}} 
    +\delta +\frac{Rd\log\left(\frac{3}{\delta}\right)}{n}+3^{\ell}\exp\left(-\frac{\ell}{4e}R^{1/\ell}\right)
\end{equation*}
Choosing $R=(4e\log(3/\delta))^{\ell}$, it yields 
\begin{equation*}
      \frac{\beta_{d,\ell}^{2}c_{\ell,0}}{\sqrt{n_{d,\ell}}}\min_{s\in \{\pm 1\}}\|s\hat{\bw}-\bw_*\|\lesssim \delta +\sqrt{\frac{\beta_{d,\ell}^{2}\log(\frac{3}{\beta_{d,\ell}^{2}})^{\ell/2}d\log\left(\frac{3}{\delta}\right)}{n}} +\frac{Rd\log\left(\frac{3}{\delta}\right)}{n}+3^{\ell}\exp\left(-\frac{\ell}{4e}R^{1/\ell}\right).
\end{equation*}
Taking $\delta=O( \beta_{d,\ell}^2)$ concludes the proof.
\end{proof}

\clearpage

\section{Additional technical lemmas}
\label{app:additional_technical}

The uniform distribution $\tau_d=\Unif(\S^{d-1})$ and the isotropic Gaussian distribution $\normal(0,\bI_d)$ satisfy the following hypercontractivity properties: 
\begin{lemma}[Spherical Hypercontractivity \cite{beckner1992sobolev}]\label{lem:Hypercontractivity} For any $\ell \in \naturals$ and $f \in L^2(\tau_d)$ which is a degree $\ell$ polynomial, for any $p \geq 2$, we have 
$$\norm{f}_{L^p(\tau_d)} \leq (p-1)^{\ell/2} \, \norm{f}_{L^2(\tau_d)}\,.$$
\end{lemma}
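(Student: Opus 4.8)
The plan is to deduce this the classical way: transfer it from the sharp hypercontractivity of a rotation-invariant semigroup on $L^2(\S^{d-1})$, by ``inverting'' that semigroup on the finite-dimensional space of low-degree polynomials — exactly the argument that derives hypercontractivity of low-degree Gaussian polynomials from Nelson's theorem for the Ornstein--Uhlenbeck semigroup. First I would invoke the harmonic decomposition \eqref{eq:L2-decomposition-sph-harmonics}: a degree-$\ell$ polynomial $f$ on $\S^{d-1}$ can be written $f=\sum_{k=0}^{\ell} f_k$ with $f_k\in V_{d,k}$, so that by orthogonality $\|f\|_{L^2(\tau_d)}^2=\sum_{k\le\ell}\|f_k\|_{L^2(\tau_d)}^2$. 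Then I would introduce the Poisson semigroup $\{P_r\}_{0\le r<1}$ on $L^2(\S^{d-1})$, given by convolution against the (positive, mass-one) spherical Poisson kernel, so that $P_r$ acts on $V_{d,k}$ as multiplication by $r^{k}$ and is a contraction on every $L^p(\tau_d)$. The crucial external input is Beckner's theorem \cite{beckner1992sobolev}: $\|P_r g\|_{L^q(\tau_d)}\le\|g\|_{L^2(\tau_d)}$ whenever $r\le(q-1)^{-1/2}$, matching the Gaussian threshold, and itself proved there by reduction to the sharp two-point inequality.

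Granting this, the statement is one line. Fix $p\ge 2$, take $r=(p-1)^{-1/2}$, and set $g:=\sum_{k=0}^{\ell} r^{-k} f_k$, an element of the finite-dimensional span $\bigoplus_{k\le\ell}V_{d,k}$. Then $P_r g=f$, and since $r^{-k}$ is increasing in $k\le\ell$,
$$\|f\|_{L^p(\tau_d)}=\|P_r g\|_{L^p(\tau_d)}\le\|g\|_{L^2(\tau_d)}=\Big(\sum_{k\le\ell} r^{-2k}\|f_k\|_{L^2(\tau_d)}^2\Big)^{1/2}\le r^{-\ell}\|f\|_{L^2(\tau_d)}=(p-1)^{\ell/2}\|f\|_{L^2(\tau_d)}.$$

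The step I expect to be the real obstacle — and the reason the paper simply cites \cite{beckner1992sobolev} — is establishing the sharp semigroup hypercontractivity with the exact threshold $r=(q-1)^{-1/2}$; everything else is bookkeeping. For the present applications only the scaling $\|f\|_{L^p(\tau_d)}\lesssim_\ell (p-1)^{\ell/2}\|f\|_{L^2(\tau_d)}$ is ever used, and for that I would instead run the same ``invert the semigroup'' argument with the spherical heat semigroup $e^{t\Delta}$, which acts on $V_{d,k}$ by the multiplier $e^{-t\,k(k+d-2)}$ (monotone decreasing in $k$): the log-Sobolev inequality on $\S^{d-1}$ (constant $O(1/d)$) yields $\|e^{t\Delta}\|_{L^2(\tau_d)\to L^q(\tau_d)}\le 1$ for $t\asymp\log(q-1)/d$, and inverting as above gives $\|f\|_{L^q(\tau_d)}\le (q-1)^{\,\ell(\ell+d-2)/(d-1)}\|f\|_{L^2(\tau_d)}\lesssim_\ell (q-1)^{2\ell}\|f\|_{L^2(\tau_d)}$ for $d\ge\ell$ — a polynomial-in-$q$ bound of order $O_\ell(1)$, which is exactly what is needed downstream.

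Finally, as an alternative that avoids semigroup theory entirely, at the cost of a constant $\sqrt{2}$, I would transfer directly from Gaussian hypercontractivity. Split $f=f_{\mathrm{even}}+f_{\mathrm{odd}}$ along the parity of the harmonic degree and homogenize, e.g., $f_{\mathrm{even}}=\sum_{k\ \mathrm{even},\,k\le\ell} f_k$ to the genuine polynomial $F(x)=\sum_{k\ \mathrm{even},\,k\le\ell}\|x\|^{\,\ell_e-k}\,\widetilde f_k(x)$, where $\widetilde f_k$ is the harmonic homogeneous extension of $f_k$ and $\ell_e$ is the largest even integer $\le\ell$ (so each exponent $\ell_e-k$ is even and $F$ is homogeneous of degree $\ell_e$). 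For $g\sim\normal(0,\bI_d)$ one has $F(g)=\|g\|^{\ell_e}\,f_{\mathrm{even}}(g/\|g\|)$ with $\|g\|\perp g/\|g\|\sim\tau_d$; applying Gaussian hypercontractivity to the degree-$\ell_e$ polynomial $F(g)$, dividing out the common radial moment, and using that $r\mapsto(\E[\|g\|^{\,r\ell_e}])^{1/r}$ is increasing gives $\|f_{\mathrm{even}}\|_{L^q(\tau_d)}\le(q-1)^{\ell_e/2}\|f_{\mathrm{even}}\|_{L^2(\tau_d)}\le (q-1)^{\ell/2}\|f_{\mathrm{even}}\|_{L^2(\tau_d)}$, and symmetrically for $f_{\mathrm{odd}}$; recombining by the triangle inequality and Cauchy--Schwarz costs only the factor $\sqrt 2$.
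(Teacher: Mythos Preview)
The paper does not prove this lemma; it is stated in the appendix of technical lemmas purely as a citation to \cite{beckner1992sobolev}, with no argument given. Your first derivation---invert the Poisson semigroup on the finite-dimensional span $\bigoplus_{k\le\ell}V_{d,k}$, using Beckner's sharp $L^2\to L^q$ bound at $r=(q-1)^{-1/2}$---is exactly the standard route and is correct; that semigroup inequality is precisely the content of the cited reference. The two alternatives you sketch (heat semigroup via log-Sobolev, and transfer from Gaussian hypercontractivity via parity-splitting and homogenization) are also correct and would suffice for every downstream use in the paper, since only the scaling $\|f\|_{L^p}\lesssim_\ell p^{\,\ell/2}\|f\|_{L^2}$ is ever invoked, never the sharp constant.
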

\begin{lemma}[Gaussian Hypercontractivity]\label{lem:gaussian-Hypercontractivity} For any $\ell \in \naturals$ and $f \in L^2(\normal(\bzero,\bI_d))$ which is a degree $\ell$ polynomial, for any $p \geq 2$, we have 
$$\norm{f}_{L^p}:=\E_{\bx\sim \normal(\bzero,\bI_d)}[|f(\bx)|^p]^{1/p} \leq (p-1)^{\ell/2} \, \norm{f}_{L^2}\,.$$
\end{lemma}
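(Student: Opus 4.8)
The final statement in the excerpt is the Gaussian hypercontractivity lemma, Lemma~\ref{lem:gaussian-Hypercontractivity}, which I must prove assuming the spherical hypercontractivity Lemma~\ref{lem:Hypercontractivity} is available.

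\medskip

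\noindent\textbf{Approach.} The plan is to deduce the Gaussian bound from the spherical one via the polar decomposition $\bx = r\bz$ with $r=\|\bx\|_2 \sim \chi_d$ and $\bz = \bx/\|\bx\|_2 \sim \tau_d$ independent. Since $f$ is a degree-$\ell$ polynomial on $\R^d$, writing $f = \sum_{k=0}^\ell f_k$ with $f_k$ the degree-$k$ homogeneous part, each $f_k(\bx) = r^k f_k(\bz)$, so $f(r\bz) = \sum_{k=0}^\ell r^k f_k(\bz)$. I would first treat the purely directional behaviour with Lemma~\ref{lem:Hypercontractivity} and then reinsert the radial scalars, using independence of $r$ and $\bz$ together with moment bounds on $r^k$ coming from Fact~\ref{fct:momoent-of-chi}. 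Actually the cleanest route avoids splitting into homogeneous parts: one shows directly that for a fixed radius $r$ the function $\bz \mapsto f(r\bz)$ is a polynomial of degree $\leq \ell$ in $\bz$, hence $\|f(r\cdot)\|_{L^p(\tau_d)} \le (p-1)^{\ell/2}\|f(r\cdot)\|_{L^2(\tau_d)}$ for each fixed $r$, and then one integrates over $r \sim \chi_d$. The subtlety is that the spherical $L^2$ norm depends on $r$, so a naive integration does not immediately give a clean $L^2(\gamma)$ bound.

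\medskip

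\noindent\textbf{Key steps.} The cleanest self-contained argument I would actually present: first reduce to $p$ an even integer (for general $p\ge 2$ pick the smallest even integer $p'\ge p$, use monotonicity $\|f\|_{L^p}\le\|f\|_{L^{p'}}$ and $(p'-1)\le 2(p-1)$... but this loses constants, so instead I'd just prove it directly). Step one: note both sides are finite since polynomials have all moments under $\gamma$. Step two: invoke the known Gaussian hypercontractivity of the Ornstein--Uhlenbeck semigroup, i.e. Nelson's theorem: the OU operator $T_\rho$ maps $L^2(\gamma)\to L^p(\gamma)$ with norm $1$ when $\rho^2 \le 1/(p-1)$, and a degree-$\ell$ polynomial $f$ satisfies $T_\rho f = \sum_{k\le\ell}\rho^k f^{=k}$ where $f^{=k}$ is the projection onto the $k$-th Wiener chaos; choosing $\rho = (p-1)^{-1/2}$ gives $\|T_\rho f\|_{L^p}\le\|f\|_{L^2}$, and since $f = \rho^{-\ell}\cdot T_\rho(\sum_k \rho^{\ell-k} f^{=k})$... the standard trick is: apply $T_\rho$ to get $\|\sum_k\rho^k f^{=k}\|_{L^p} \le \|\sum_k \rho^k f^{=k}\|_{L^2} \cdot$(contraction) — wait, the clean statement is that for $f$ of degree $\le \ell$, $\|f\|_{L^p(\gamma)} \le (p-1)^{\ell/2}\|f\|_{L^2(\gamma)}$ follows because $g := \sum_k (p-1)^{\ell/2 - k/2} f^{=k} \cdot$... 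Let me just say: Step two is to apply Nelson's hypercontractive inequality to the function $h := \sum_{k=0}^\ell (p-1)^{(\ell-k)/2} f^{=k}$, noting $T_{(p-1)^{-1/2}} h = (p-1)^{-\ell/2}\sum_k (p-1)^{(\ell-k)/2}\cdot(p-1)^{-k/2}(p-1)^{k/2}... $ — the bookkeeping is routine and I would carry it through so that $T_{(p-1)^{-1/2}}$ applied to a suitable rescaling of $f$ returns $f$, yielding the claim. Step three: conclude, remarking that the case $p<2$ is trivial by Jensen / $\|\cdot\|_{L^p}\le\|\cdot\|_{L^2}$ — though here $p\ge 2$ is assumed so this is unnecessary.

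\medskip

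\noindent\textbf{Main obstacle.} The real work is just organizing the chaos-decomposition bookkeeping so that the contraction factors $\rho^k$ combine correctly; there is no conceptual difficulty since Nelson's theorem (or equivalently the two-point inequality tensorized and pushed to the Gaussian limit) is classical. If one instead wants to genuinely derive it from the \emph{spherical} Lemma~\ref{lem:Hypercontractivity} as stated (rather than citing Nelson directly), the obstacle is handling the $r$-dependence of $\|f(r\cdot)\|_{L^2(\tau_d)}$: one must show $\E_{r}[\|f(r\cdot)\|_{L^2(\tau_d)}^p]^{1/p} \lesssim \|f\|_{L^2(\gamma)}$, which requires a moment computation on $r^k$ for $r\sim\chi_d$ (Fact~\ref{fct:momoent-of-chi}) and care that the implied constants do not depend on $d$. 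Given the excerpt already cites Beckner for the spherical case, I expect the authors simply cite the analogous Gaussian result, so the proof is essentially a one-line reference to Nelson's hypercontractivity theorem applied to degree-$\ell$ polynomials; I would present it at that level of brevity.
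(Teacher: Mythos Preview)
The paper states this lemma without proof; it is a classical result (Nelson's hypercontractivity for the Ornstein--Uhlenbeck semigroup), treated exactly as the spherical version is treated via the citation to Beckner. Your concluding instinct---that the authors simply invoke the known Gaussian result---is precisely what happens, so at the level of what the paper actually does, you are correct.

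That said, your ``Key steps'' paragraph is garbled: you start and abandon two different rescalings and never actually close the argument. The clean version is two lines. Write $f = \sum_{k=0}^\ell f^{=k}$ in Wiener chaos, set $\rho = (p-1)^{-1/2}$, and define $g := \sum_{k=0}^\ell \rho^{-k} f^{=k}$. Then $T_\rho g = f$, so Nelson's contraction $\|T_\rho\|_{L^2\to L^p}\le 1$ gives $\|f\|_{L^p}\le \|g\|_{L^2}$, and since $\rho^{-2k}\le \rho^{-2\ell}=(p-1)^\ell$ for $k\le\ell$, orthogonality of the chaoses yields $\|g\|_{L^2}^2 = \sum_k \rho^{-2k}\|f^{=k}\|_{L^2}^2 \le (p-1)^\ell \|f\|_{L^2}^2$. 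Your alternative route via the polar decomposition and Lemma~\ref{lem:Hypercontractivity} runs into exactly the obstacle you identified (the $r$-dependence of $\|f(r\cdot)\|_{L^2(\tau_d)}$) and is not how the paper proceeds; the paper in fact goes the \emph{other} direction, deducing the $\chi_d$ hypercontractivity of Lemma~\ref{lem:chi_d-hypercontractivity} from the Gaussian one.
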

As a corollary of this, we also have the following property for the $\chi_d$ distribution. 
\begin{lemma}\label{lem:chi_d-hypercontractivity} For any even $\ell \in \naturals$ and $f \in  L^2(\chi_d)$ which is a polynomial of degree $\ell$ with only even degree terms (i.e. $f(r)=\sum_{i=0}^{\ell/2} a_i \, r^{2i}$), for any $p\geq 2$ we have
$$\norm{f}_{L^p(\chi_d)}\leq (p-1)^{\ell/2} \norm{f}_{L^2(\chi_d)} \,.$$  
\end{lemma}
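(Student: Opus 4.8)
\textbf{Proof plan for Lemma \ref{lem:chi_d-hypercontractivity}.} The plan is to reduce the claim about $\chi_d$ to Gaussian hypercontractivity (Lemma \ref{lem:gaussian-Hypercontractivity}) via the polar decomposition of the standard Gaussian on $\R^d$. Recall that if $\bx \sim \normal(\bzero,\bI_d)$ then $\|\bx\|_2 \sim \chi_d$. So given an even-degree polynomial $f(r) = \sum_{i=0}^{\ell/2} a_i r^{2i}$ with only even powers, define the polynomial $g:\R^d \to \R$ by $g(\bx) := f(\|\bx\|_2) = \sum_{i=0}^{\ell/2} a_i \|\bx\|_2^{2i} = \sum_{i=0}^{\ell/2} a_i (x_1^2 + \cdots + x_d^2)^i$. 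Because $f$ only involves even powers of $r$, the composition $g$ is an honest polynomial in $\bx$ (no square roots appear), and it has degree exactly $\ell$ in $\bx$.

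First I would record that, by construction and the distributional identity $\|\bx\|_2 \sim \chi_d$, for every $p \geq 1$ we have $\|g\|_{L^p(\normal(\bzero,\bI_d))} = \E_{\bx}[|g(\bx)|^p]^{1/p} = \E_{r\sim\chi_d}[|f(r)|^p]^{1/p} = \|f\|_{L^p(\chi_d)}$. Next, apply Lemma \ref{lem:gaussian-Hypercontractivity} to $g$, which is a degree-$\ell$ polynomial on $\R^d$: this gives $\|g\|_{L^p(\normal(\bzero,\bI_d))} \leq (p-1)^{\ell/2}\|g\|_{L^2(\normal(\bzero,\bI_d))}$ for all $p \geq 2$. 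Combining the two displays yields $\|f\|_{L^p(\chi_d)} = \|g\|_{L^p} \leq (p-1)^{\ell/2}\|g\|_{L^2} = (p-1)^{\ell/2}\|f\|_{L^2(\chi_d)}$, which is exactly the claimed inequality.

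The only point requiring a little care — and the step I would flag as the main (minor) obstacle — is verifying that $g(\bx) = f(\|\bx\|_2)$ is genuinely a polynomial of degree $\ell$ in the $d$ variables $x_1,\dots,x_d$, rather than just a function of the radius. This is precisely where the hypothesis that $f$ contains only even-degree terms is used: $\|\bx\|_2^{2i} = (x_1^2+\cdots+x_d^2)^i$ is a polynomial of degree $2i$, so $g$ is a polynomial of degree $\max_i 2i = \ell$ (assuming $a_{\ell/2}\neq 0$; if not, the degree is smaller and the bound only improves since $(p-1)^{\cdot}$ is increasing in the exponent for $p\geq 2$). Had $f$ contained an odd power $r^{2i+1} = \|\bx\|_2^{2i+1}$, this would not be polynomial in $\bx$ and Gaussian hypercontractivity would not apply directly. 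With that observation in hand, the rest is a two-line computation, so no further technical work is needed.
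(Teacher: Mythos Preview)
Your proposal is correct and is essentially identical to the paper's own argument: the paper also deduces the lemma from Gaussian hypercontractivity (Lemma~\ref{lem:gaussian-Hypercontractivity}) by observing that $f(r)=f(\|\bx\|_2)$ is a degree-$\ell$ polynomial in $\bx\sim\normal(\bzero,\bI_d)$, which is precisely the reduction you carry out.
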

This lemma follows from \Cref{lem:gaussian-Hypercontractivity} by noting that $f(r)=f(\norm{\bx}_2)=f(\sqrt{x_1^2+\dots+x_d^2})$ is a polynomial of $\bx$ of degree $\ell$, where $\bx \sim \normal(0,\bI_d)$. To obtain high probability tail bounds from bounds on the moments, we will often use the above hypercontractivity properties with the following standard tail-bounds: 
\begin{lemma}[Lemma 24 in \cite{damian2024smoothing}]\label{lem:standard-tail-from-moments} Let $\delta \geq 0$ and $X$ be a mean zero random variable satisfying 
$$\E[|X|^p]^{1/p} \leq B \, p^{k/2} \; \text{ for } p=\frac{2 \log(1/\delta)}{k}\,,$$
for some $k$. Then with probability $1-\delta$, we have $|X| \leq B\, p^{k/2}\,.$ 
\end{lemma}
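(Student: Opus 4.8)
The plan is to derive the tail bound directly from a single application of Markov's inequality to the nonnegative random variable $|X|^p$, evaluated at the prescribed exponent $p = \tfrac{2\log(1/\delta)}{k}$. Concretely, I would first note that for any threshold $t > 0$, since $x \mapsto x^p$ is increasing on $[0,\infty)$,
$$
\P\big(|X| \ge t\big) \;=\; \P\big(|X|^p \ge t^p\big) \;\le\; \frac{\E[|X|^p]}{t^p} \;\le\; \left(\frac{B\, p^{k/2}}{t}\right)^{\!p},
$$
where the last step uses the hypothesis $\E[|X|^p]^{1/p} \le B\, p^{k/2}$. Note that only this single moment estimate is needed, not a bound for all $p$.

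Next I would choose $t$ to be a constant multiple of $B\, p^{k/2}$, namely $t = e^{k/2} B\, p^{k/2}$, which turns the right-hand side above into $e^{-kp/2}$. By the defining choice $p = \tfrac{2\log(1/\delta)}{k}$ we have exactly $e^{-kp/2} = e^{-\log(1/\delta)} = \delta$, so that $|X| \le e^{k/2} B\, p^{k/2}$ holds with probability at least $1-\delta$. Absorbing the dimension-independent factor $e^{k/2}$ into $B$ — equivalently restating the conclusion as $|X| \le B\,(ep)^{k/2}$, which is the form used in the applications — gives the stated inequality. (The mean-zero hypothesis on $X$ is not used in the argument and is kept only for consistency with how the lemma is invoked.)

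I do not expect a genuine obstacle: this is a one-line Markov argument. The only point deserving a remark is that a bare Markov bound at the threshold $B\,p^{k/2}$ itself is vacuous, so the extra universal factor $e^{k/2}$ (or any constant $C > e^{k/2}$, which would even give probability $\le \delta^{C/e^{k/2}}$, strictly smaller than $\delta$) is genuinely necessary; I would flag that the statement is to be read with this constant absorbed into $B$. If one instead wanted the literal bound $|X| \le B\,p^{k/2}$, it would require strengthening the hypothesis to $\E[|X|^p]^{1/p} \le B\, p^{k/2}/e^{1/2}$ (or redefining $p$ with an extra slack factor), which I would note as an alternative bookkeeping choice but not pursue.
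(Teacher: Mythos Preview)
Your Markov-inequality argument is exactly the standard proof of this type of moment-to-tail bound, and your diagnosis of the constant is correct: a bare application at threshold $B\,p^{k/2}$ gives a vacuous bound, and one needs the extra factor $e^{k/2}$ (or equivalently to write the conclusion as $|X|\le B(ep)^{k/2}$). The paper does not supply its own proof of this lemma---it is simply quoted from \cite{damian2024smoothing}---so there is nothing to compare against; your argument is precisely what one expects and your remark about the absorbed constant is the right caveat to flag.
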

Similar to \cite{damian2024smoothing}, we will use the following lemma to bound $\E[XY]$ instead of standard Cauchy-Schwarz, when we have a tight bound $\norm{X}_1$ and all moments $\norm{Y}_p$ but a very loose bound on $\norm{X}_2$.
\begin{lemma}[Lemma 23 in \cite{damian2024smoothing}]\label{lem:super-Cauchy-Schwarz} Let $X,Y$ be random variables with $\norm{Y}_p \leq B \, p^{k/2}$. Then
$$\E[XY] \leq \norm{X}_1 \cdot B \cdot (2e)^{k/2} \cdot \max\left( 1,\frac{2}{k} \log \left(\frac{\norm{X}_2}{\norm{X}_1} \right)\right)^{k/2}\,.$$ 
\end{lemma}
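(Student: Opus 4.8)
The plan is to prove the bound by a truncation argument: split $XY$ according to whether $|X|$ is small or large, control the small part with the trivial bound $|X| \le t$ and the large part by H\"older's inequality against the moment hypothesis on $Y$, then optimize over the threshold $t$. Concretely, fix a threshold $t>0$ (to be chosen at the end in terms of $\norm{X}_1$, $\norm{X}_2$, $B$ and $k$) and write
\[
\E[XY] \;=\; \E\!\left[XY\,\mathbf{1}\{|X|\le t\}\right] \;+\; \E\!\left[XY\,\mathbf{1}\{|X|> t\}\right].
\]
For the first term, bound $|X|\mathbf{1}\{|X|\le t\}\le t$ pointwise, so $\E[XY\mathbf{1}\{|X|\le t\}] \le t\,\E[|Y|] \le t\,\norm{Y}_1 \le t B$ (using $\norm{Y}_1\le \norm{Y}_p$ with any $p\ge 1$, e.g.\ $p$ chosen below, or directly $\norm{Y}_1 \le B$ from the hypothesis with $p$ near $1$). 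For the second term, I would apply H\"older with exponents $(r, r')$ where $r' = p$ is the exponent appearing in the moment bound on $Y$ and $r = p/(p-1)$ its conjugate:
\[
\E\!\left[XY\,\mathbf{1}\{|X|> t\}\right] \;\le\; \norm{X\,\mathbf{1}\{|X|>t\}}_{r}\,\norm{Y}_{p} \;\le\; \norm{X\,\mathbf{1}\{|X|>t\}}_r \, B p^{k/2}.
\]

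The key step is to estimate $\norm{X\mathbf{1}\{|X|>t\}}_r$ in terms of $\norm{X}_1$ and $\norm{X}_2$. Here I would interpolate: $\norm{X\mathbf{1}\{|X|>t\}}_r^r = \E[|X|^r \mathbf{1}\{|X|>t\}]$, and since $|X|^r \mathbf{1}\{|X|>t\} \le |X|^2 t^{r-2}$ pointwise when $1\le r\le 2$ (which holds once $p\ge 2$, since then $r = p/(p-1)\le 2$), we get $\norm{X\mathbf{1}\{|X|>t\}}_r^r \le t^{r-2}\norm{X}_2^2$, i.e.\ $\norm{X\mathbf{1}\{|X|>t\}}_r \le t^{1-2/r}\norm{X}_2^{2/r}$. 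With $r = p/(p-1)$ this is $t^{(p-2)/p}\,\norm{X}_2^{2(p-1)/p}$. Combining, for any $p\ge 2$ and $t>0$,
\[
\E[XY] \;\le\; tB \;+\; B p^{k/2}\, t^{(p-2)/p}\,\norm{X}_2^{2(p-1)/p}.
\]
Now optimize: the natural choice makes the two terms comparable. Setting $t = \norm{X}_1$ one can hope the first term is $\norm{X}_1 B$, which is already the leading factor in the claimed bound; one then needs to check that the second term is bounded by $\norm{X}_1 B \cdot (2e)^{k/2}\max(1, \tfrac{2}{k}\log(\norm{X}_2/\norm{X}_1))^{k/2}$. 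Plugging $t=\norm{X}_1$ into the second term gives $B p^{k/2}\norm{X}_1^{(p-2)/p}\norm{X}_2^{2(p-1)/p} = B p^{k/2}\norm{X}_1 (\norm{X}_2/\norm{X}_1)^{2(p-1)/p}$. Writing $\rho := \norm{X}_2/\norm{X}_1 \ge 1$, this is $\norm{X}_1 B\, p^{k/2}\rho^{2(p-1)/p} \le \norm{X}_1 B\, p^{k/2} \rho^{2}$ — but that power of $\rho$ is too large; instead I should choose $t$ larger than $\norm{X}_1$, namely $t \asymp \norm{X}_1 \rho^{2(p-1)/(p-2)}\cdot(\text{minor factors})$ so that the $\rho$-dependence moves into an exponential. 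The correct route is: choose $p$ first as a function of $\rho$, specifically $p = 2 + \tfrac{k}{\log \rho}$ (assuming $\rho > 1$; when $\rho \le 1$, i.e.\ $\norm{X}_2 \le \norm{X}_1$, the max equals $1$ and a cruder bound with $p=2$ suffices), and then choose $t$ to balance. With this choice $t^{(p-2)/p}\norm{X}_2^{2(p-1)/p}$ and the factor $\rho^{2(p-1)/p}=\rho^{2-2/p}$ contribute $\rho^{2}\cdot\rho^{-2/p}$; and $\rho^{-2/p}$ with $p\approx 2$ cancels one power of $\rho^{2}$... This bookkeeping is exactly the delicate part.

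\textbf{Main obstacle.} The genuine difficulty is the optimization of the free parameters $(p,t)$: one must pick them jointly so that (i) $p\ge 2$ (needed for $r\le 2$ and for the hypercontractivity-style moment bound to be usable), (ii) the final bound has $\norm{X}_1$ as a clean prefactor, (iii) the dependence on $\rho = \norm{X}_2/\norm{X}_1$ is only through $\max(1,\tfrac{2}{k}\log\rho)^{k/2}$ rather than a power of $\rho$, and (iv) the constant $(2e)^{k/2}$ emerges from terms like $p^{k/2}$ evaluated at $p = \Theta(k/\log\rho)$ or $p=\Theta(\log(1/\delta))$ together with factors $e^{k/2}$ coming from $\rho^{2(p-1)/p}$-type expressions. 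The cleanest implementation is likely: set $q = \tfrac{2}{k}\log\rho$ when $\rho>1$, take $p = \max(2, k q/2 \cdot c)$ for a suitable constant, choose $t = \norm{X}_1 \rho^{2/p}\, e^{1/p}$ or similar, and verify term by term; alternatively, one can mirror the proof of Lemma~23 of \cite{damian2024smoothing} (from which this statement is quoted verbatim as "Lemma 23") and simply cite or reproduce that argument. I would follow that reference's bookkeeping, since the statement is identical; the only content is the truncation-plus-H\"older-plus-interpolation skeleton above, with the parameter choices tuned to produce the stated constants. All remaining steps — the pointwise inequalities, H\"older, and the monotone convergence/interpolation estimate on $\norm{X\mathbf{1}\{|X|>t\}}_r$ — are routine.
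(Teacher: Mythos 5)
First, a point of reference: the paper does not prove this statement at all — it is imported verbatim as Lemma 23 of \cite{damian2024smoothing} — so the comparison is with that reference's standard argument, which needs no truncation: for any $p\geq 2$ and conjugate $q=p/(p-1)\in(1,2]$, H\"older plus the log-convexity (interpolation) bound $\norm{X}_q\leq \norm{X}_1^{1-2/p}\norm{X}_2^{2/p}$ give $\E[XY]\leq \norm{X}_q\norm{Y}_p\leq \norm{X}_1\,B\,\rho^{2/p}p^{k/2}$ with $\rho:=\norm{X}_2/\norm{X}_1\geq 1$; choosing $p=\max\bigl(2,\tfrac{4}{k}\log\rho\bigr)$ yields exactly the stated bound (if $\tfrac{2}{k}\log\rho\leq 1$ take $p=2$ and use $\rho\leq e^{k/2}$; otherwise $\rho^{2/p}=e^{k/2}$ and $p^{k/2}=2^{k/2}\bigl(\tfrac{2}{k}\log\rho\bigr)^{k/2}$).

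Your proposal has a genuine gap, not just unfinished bookkeeping. Besides the sign slip ($1-2/r=(2-p)/p$, not $(p-2)/p$, so the tail term decreases in $t$), the decomposition itself cannot produce the claimed bound: by estimating the small-$|X|$ part as $\E[XY\,\mathbf{1}\{|X|\leq t\}]\leq t\,\E|Y|\lesssim tB$, you discard $\norm{X}_1$ from that term, so matching the target forces $t\lesssim \norm{X}_1\cdot(\text{polylog})$; writing $t=\norm{X}_1 s$, the tail term after your H\"older and $L^2$-interpolation step equals $Bp^{k/2}\,t^{-(p-2)/p}\norm{X}_2^{2(p-1)/p}=B\,p^{k/2}\norm{X}_1\,\rho^{2-2/p}\,s^{-(1-2/p)}$, and balancing over $s$ and $p$ leaves a bound of order $\norm{X}_1 B\,\rho\,p^{k/4}$ — linear in $\rho$ rather than polylogarithmic — no matter how $(p,t)$ are tuned. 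The repair inside a truncation framework is to bound the small part via $\norm{|X|\mathbf{1}\{|X|\leq t\}}_q^q\leq t^{q-1}\norm{X}_1$ instead of $t\,\E|Y|$, but that is precisely a re-derivation of the interpolation inequality, at which point the truncation is superfluous. Finally, you never complete the optimization (you explicitly defer "the delicate part" to citing the reference), so even apart from the failing step the argument is not a proof as written.
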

\begin{lemma}[Lemma I.5 in \cite{damian2024computational}]\label{lem:super-matrix-bernstein}
    Let $\bY=\sum_{i=1}^{n}\bZ_i$, where $\bZ_i \in \R^{p \times q}$ are mean zero independent matrices. Define 
    \[
    \begin{aligned}
    \sigma:= \sigma (\bY)=&~ \max \left( \|\E[\bY\bY^\sT]\|_{\op}^{1/2},\|\E[\bY^\sT\bY]\|_{\op}^{1/2}\right) , \\
    \sigma_* := \sigma_{*} (\bY)=&~\sup_{\bv\in \S^{p-1}, \bu \in \S^{q-1}}\E[\left( \<\bv,\bY\bw\>\right)^2]^{1/2},\\
    \Bar{R}=&~\E[\max_{i\in [n]}\|\bZ_i\|_\op^{2}]^{1/2}.
    \end{aligned}
    \]
    Then for 
    \[ 
    R\geq \Bar{R}^{1/2}\sigma^{1/2}+\sqrt{2}\Bar{R},
    \]
    and $t\geq 0$, denoting $\delta=\P(\max_{i \in [n]}\|\bZ_i\|\geq R)$, we have with probability at least $1-\delta-de^{-t}$,
    \begin{equation*}
        \|\bY-\E[\bY]\|_\op\leq 2 \sigma +\sigma_{*}t^{1/2}+R^{1/3}\sigma^{2/3}t^{2/3}+Rt.
    \end{equation*}
\end{lemma}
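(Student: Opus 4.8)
The plan is to reduce to the case of uniformly bounded summands by truncating at level $R$, apply a sharp (dimension-free) matrix Bernstein inequality to the truncated and recentered matrices, and separately bound the bias created by the truncation; it is for this last step that the hypothesis on $R$ is used. A fully self-contained version is essentially the standard truncation argument, and one could instead simply invoke the existing dimension-free matrix concentration literature (Tropp; Bandeira--Boedihardjo--van Handel), but I describe the argument below.

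First I would set $\bZ_i' := \bZ_i\,\ind\{\|\bZ_i\|_\op < R\}$ and $\bY' := \sum_{i=1}^n \bZ_i'$. On the event $E := \{\max_{i\in[n]}\|\bZ_i\|_\op < R\}$, which by definition of $\delta$ has probability at least $1-\delta$, one has $\bY' = \bY$, so it suffices to control $\|\bY'\|_\op$ on $E$. Write $\bY' = \sum_i \bW_i + \E[\bY']$ with $\bW_i := \bZ_i' - \E[\bZ_i']$: these are independent, mean zero, and satisfy $\|\bW_i\|_\op \le 2R$ deterministically. Crucially, the variance parameters are \emph{not} inflated by the truncation: since $\bZ_i'(\bZ_i')^\sT \preceq \bZ_i\bZ_i^\sT$ (the indicator lies in $\{0,1\}$ and $\bZ_i\bZ_i^\sT\succeq 0$) and recentering only subtracts a PSD matrix, one has $\E[\bW_i\bW_i^\sT] \preceq \E[\bZ_i\bZ_i^\sT]$; summing over $i$, and using that cross terms vanish by independence and mean-zero (so $\sum_i\E[\bZ_i\bZ_i^\sT] = \E[\bY\bY^\sT]$), gives $\|\sum_i\E[\bW_i\bW_i^\sT]\|_\op \le \sigma^2$, and symmetrically $\|\sum_i\E[\bW_i^\sT\bW_i]\|_\op \le \sigma^2$ and $\sup_{\bu,\bv}\sum_i\E[(\bu^\sT\bW_i\bv)^2] \le \sup_{\bu,\bv}\E[(\bu^\sT\bY\bv)^2] = \sigma_*^2$.

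Next I would bound the truncation bias $\|\E[\bY']\|_\op = \big\|\sum_i\E[\bZ_i\,\ind\{\|\bZ_i\|_\op \ge R\}]\big\|_\op$ (using $\E[\bZ_i]=0$). Estimating each term by $\E[\|\bZ_i\|_\op\,\ind\{\|\bZ_i\|_\op\ge R\}]$ and combining a truncated-second-moment estimate with Cauchy--Schwarz (in the sample index, and between $\|\bZ_i\|_\op$ and the indicator), one obtains a bound in terms of $\bar R$, $\sigma$ and $R$; the threshold $R \ge \bar R^{1/2}\sigma^{1/2} + \sqrt 2\,\bar R$ is exactly calibrated so that this bias is at most $\sigma$ (the $\sqrt 2\,\bar R$ part also guarantees $\delta\le \tfrac12$ by Markov applied to $\max_i\|\bZ_i\|_\op^2$, so that the statement is not vacuous). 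This step is routine but requires care in tracking the correct powers of $R$, $\bar R$ and $\sigma$.

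Finally I would apply a sharp matrix Bernstein inequality to the bounded, mean-zero, independent matrices $\bW_i$ — the dimension-free / second-order form obtained from uniform smoothness of the Schatten norms rather than the elementary trace--MGF argument — which yields, with probability at least $1 - d\,e^{-t}$,
\[
\Big\|\sum_i \bW_i\Big\|_\op \;\le\; \sigma \;+\; \sigma_* t^{1/2} \;+\; R^{1/3}\sigma^{2/3}t^{2/3} \;+\; R\,t,
\]
the leading term being the dimension-free bound on $\E\|\sum_i\bW_i\|_\op$ in terms of the matrix variance $\sigma$, and the last three terms coming from optimizing the Laplace transform of $\|\sum_i\bW_i\|_\op$ over its parameter, interpolating between the sub-Gaussian regime governed by the weak variance $\sigma_*$ and the sub-exponential regime governed by the uniform bound $2R$, with $R^{1/3}\sigma^{2/3}t^{2/3}$ the cross term of that interpolation. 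Combining this with the bias bound $\|\E[\bY']\|_\op \le \sigma$ and intersecting with $E$, a union bound gives, with probability at least $1-\delta - d\,e^{-t}$, that $\|\bY - \E[\bY]\|_\op = \|\bY\|_\op \le 2\sigma + \sigma_* t^{1/2} + R^{1/3}\sigma^{2/3}t^{2/3} + R\,t$, as claimed (recall $\E[\bY]=0$). The main obstacle is precisely the sharp matrix Bernstein step — in particular getting the fluctuation term governed by the \emph{weak} variance $\sigma_*$ and a leading term of order $\sigma$ rather than the classical $\sigma\sqrt{\log d}$; together with the bias bookkeeping of the previous paragraph, this is what carries all the work, while everything else is bounded-matrix boilerplate.
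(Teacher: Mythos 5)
The paper itself contains no proof of this lemma: it is imported verbatim (as Lemma I.5 of \cite{damian2024computational}), so there is nothing internal to compare against. Your strategy --- truncate each $\bZ_i$ at level $R$, recenter, observe that truncation does not inflate the variance proxies $\sigma,\sigma_*$, apply a sharp dimension-free matrix Bernstein inequality (Brailovskaya--van Handel type, not the classical $\sigma\sqrt{\log d}$ bound) to the bounded recentered matrices, and pay a bias term of at most $\sigma$ --- is exactly the route behind the cited result, and your bookkeeping of the variance domination, the event $E$, and the final union bound is correct.

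The one step that does not close as you sketched it is the truncation bias. Bounding each term by $\E[\|\bZ_i\|_\op\,\ind\{\|\bZ_i\|_\op\ge R\}]$ and then applying Cauchy--Schwarz leads to quantities like $\sum_i\E\|\bZ_i\|_\op^2$ or $\bar R^2/R$, and neither is dominated by $\sigma$ under the stated threshold: if $\bar R\gg\sigma$, then $R=\bar R^{1/2}\sigma^{1/2}+\sqrt2\,\bar R$ gives $\bar R^2/R\gg\sigma$, so ``the threshold is exactly calibrated'' is not something your route delivers. The fix is to keep the bilinear (weak-variance) structure rather than passing to operator norms termwise: for fixed unit vectors $\bu,\bv$, Cauchy--Schwarz in each term and then over $i$ gives $\bigl|\sum_i\E[\bu^\sT\bZ_i\bv\,\ind\{\|\bZ_i\|_\op\ge R\}]\bigr|\le\bigl(\sum_i\E[(\bu^\sT\bZ_i\bv)^2]\bigr)^{1/2}\bigl(\sum_i p_i\bigr)^{1/2}\le\sigma_*\bigl(\sum_i p_i\bigr)^{1/2}$ with $p_i=\P(\|\bZ_i\|_\op\ge R)$; independence gives $1-\delta=\prod_i(1-p_i)\le e^{-\sum_i p_i}$, hence $\sum_i p_i\le-\log(1-\delta)\le 2\delta\le 2\bar R^2/R^2\le 1$ once $R\ge\sqrt2\,\bar R$ (Markov), so the bias is at most $\sigma_*\le\sigma$ uniformly in $\bu,\bv$. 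With that replacement (and citing the bounded-matrix tail inequality with its actual constants, which is where the remaining part of the threshold on $R$ is consumed), your argument goes through.
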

\begin{lemma}\label{lem:max-of-n-iid-rv}Let $\{Z_i\}_{i \in [n]}$ be a sequence of independent random variables with polynomial tails, i.e. there exists $B,k$ such that $\E[|Z_i|^p]^{1/p} \leq B p^{k/2}$. Define $R= \max_{i \in [n]} Z_i$. Then for any $p\leq \log n/k$, we have
$\E[|R|^p]^{1/p} \leq B \log^{k/2}(n)$ and for any $ \delta \geq 0$, with probability at least $1-\delta$, $R \leq B \log^{k/2}(n/\delta)$.
\end{lemma}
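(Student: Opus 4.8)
\textbf{Proof plan for Lemma \ref{lem:max-of-n-iid-rv}.} The statement follows from a standard moment/union-bound argument, so the plan is to carry out both halves directly. First I would establish the moment bound. Fix $p \leq \log n / k$. Since $R = \max_{i \in [n]} Z_i \leq \max_{i \in [n]} |Z_i|$, I would bound $\E[|R|^p]$ by a crude union over the $n$ variables: for any exponent $q \geq p$,
\[
\E\big[\max_{i \in [n]} |Z_i|^p\big] \leq \E\Big[\Big(\sum_{i \in [n]} |Z_i|^q\Big)^{p/q}\Big] \leq \Big(\sum_{i \in [n]} \E[|Z_i|^q]\Big)^{p/q} \leq \big(n B^q q^{kq/2}\big)^{p/q} = n^{p/q} B^p q^{kp/2},
\]
where the second inequality uses Jensen (since $p/q \leq 1$) and the third uses the hypothesis $\E[|Z_i|^q]^{1/q} \leq B q^{k/2}$. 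Taking $q = \log n$ makes $n^{p/q} = e^p$ a constant factor, giving $\E[|R|^p]^{1/p} \leq e B (\log n)^{k/2}$; absorbing the $e$ (or choosing $q$ slightly larger / noting the statement allows universal constants) yields $\E[|R|^p]^{1/p} \lesssim B \log^{k/2}(n)$. One should double-check the constant is acceptable — if the lemma is meant to hold with the literal constant $B$, one picks $q = p \log n / (p k) = \log n / k \cdot (p/p)$... more carefully, choosing $q$ so that $n^{1/q}$ is exactly bounded, e.g. $q$ large enough that $n^{p/q} \leq 2^p$, then the bound reads $2B\log^{k/2}(n)$, and since the paper uses $\lesssim$-style statements freely this is fine.

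For the high-probability statement, I would apply Markov's inequality to $|R|^p$ at the right exponent. Using the moment bound just derived (now with the freedom to take $p$ up to $\log(n/\delta)/k$ rather than $\log(n)/k$, redoing the computation with $q = \log(n/\delta)$ or simply $q = \log n + \log(1/\delta)$), we get
\[
\P\big(R \geq t\big) \leq \P\big(\max_i |Z_i| \geq t\big) \leq \frac{n B^q q^{kq/2}}{t^q}.
\]
Setting $t = e B q^{k/2}$ with $q = \log(n/\delta)$ makes the right-hand side at most $n \cdot e^{-q} = n \cdot (\delta/n) = \delta$, and $t = e B \log^{k/2}(n/\delta) \lesssim B \log^{k/2}(n/\delta)$, which is the claimed tail bound. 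Alternatively one can invoke \Cref{lem:standard-tail-from-moments} applied to each $Z_i$ (or to $R$ directly, after the moment computation) and union bound, but the direct Markov computation above is cleaner since it handles the maximum in one shot.

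I do not anticipate a genuine obstacle here — the only mild subtlety is bookkeeping the constants so that the final bound matches the exact form stated (literal $B$ vs. $CB$), which is handled by choosing the auxiliary exponent $q$ to be $\log n$ (resp. $\log(n/\delta)$) rather than $p$ itself, so that the $n^{1/q}$ factor contributes only an $O(1)$ multiplicative constant per unit of $p$. The argument is otherwise a routine instance of the "moments control the max" principle, identical in spirit to the tail-from-moments lemma (\Cref{lem:standard-tail-from-moments}) already invoked elsewhere in the appendix.
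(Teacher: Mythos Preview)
Your proposal is correct and follows the standard moment/union-bound route. Note, however, that the paper does not actually supply a proof of this lemma: it is listed among the auxiliary technical facts in the appendix alongside other cited or standard inequalities, with no argument given. Your derivation (bound $\max_i |Z_i|^p$ by $(\sum_i |Z_i|^q)^{p/q}$, apply Jensen, then choose $q \approx \log n$ or $q \approx \log(n/\delta)$) is exactly the expected justification, and the residual constant $e$ you flag is consistent with how the lemma is used throughout the paper, where constants depending only on $k,\ell$ are freely absorbed.
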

\begin{lemma}[Lemma I.3 from \cite{damian2024computational}]\label{lem:sum-of-mean-zero} Let $X_1, \dots,X_n$ be independent mean zero random variables such that for all $p \geq 2$, we have $\norm{X_i}_p \leq B p^{k/2}$ for some $k$ and let $\sigma^2 = \sum_{i=1}^n \E[X_i^2]$. Let $Y=\sum_{i=1}^n X_i$. Then with pribability at least $1-\delta$, 
$$|Y|\lesssim_k \, \sigma \sqrt{\log(1/\delta)} +  B \log(1/\delta) \log(n/\delta)^{k/2}\,.$$
\end{lemma}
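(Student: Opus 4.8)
The plan is to prove this by the standard truncation-plus-Bernstein argument for sub-Weibull summands: the moment growth $\|X_i\|_p \le B p^{k/2}$ is exactly the signature of tails decaying like $\exp(-(t/B)^{2/k})$, so after removing a low-probability "large value" event the problem reduces to a classical Bernstein estimate for bounded variables, and the only real work is calibrating the truncation level.

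First I would fix a truncation level $R$ that is a sufficiently large ($k$-dependent) multiple of $B\,\log^{k/2}(n/\delta)$, and control the tail event via Markov's inequality plus a union bound. Choosing the moment order $p = \Theta(\log(n/\delta))$ in $\P(|X_i| \ge R) \le R^{-p}\,\E|X_i|^p \le (Bp^{k/2}/R)^p$ gives, once $R \ge e\,Bp^{k/2}$, that $\P(\max_{i\in[n]}|X_i| \ge R) \le \delta/3$; this is precisely \Cref{lem:max-of-n-iid-rv}. On the complementary event $\cE := \{\max_i |X_i| < R\}$ one has $X_i = X_i\,\mathbf{1}\{|X_i|\le R\}$ for all $i$, so it suffices to analyse the truncated sum on $\cE$.

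Next I would write $Y = Y' + b$ on $\cE$, where $Y' := \sum_{i=1}^n\big(X_i\mathbf{1}\{|X_i|\le R\} - \E[X_i\mathbf{1}\{|X_i|\le R\}]\big)$ is a sum of independent, mean-zero variables bounded by $2R$ with total variance $\sigma'^2 := \sum_i \Var\!\big(X_i\mathbf{1}\{|X_i|\le R\}\big) \le \sum_i \E[X_i^2] = \sigma^2$, and $b := \sum_i \E[X_i\mathbf{1}\{|X_i|\le R\}]$ is deterministic. Classical Bernstein's inequality gives $|Y'| \lesssim_k \sigma\sqrt{\log(1/\delta)} + R\log(1/\delta)$ with probability at least $1-\delta/3$, and with $R \asymp B\log^{k/2}(n/\delta)$ the second term is of the claimed order $B\log(1/\delta)\log^{k/2}(n/\delta)$. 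For the bias, mean-zeroness gives $b = -\sum_i \E[X_i\mathbf{1}\{|X_i|>R\}]$, hence $|b| \le \sum_i \E[|X_i|\mathbf{1}\{|X_i|>R\}] \le \sum_i R^{-(p-1)}\E|X_i|^p \le n\,R^{-(p-1)}(Bp^{k/2})^p$; substituting $p = \Theta(\log(n/\delta))$ and $R \asymp Bp^{k/2}$ makes this $\lesssim_k \delta\,B\log^{k/2}(n/\delta)$, which is dominated by the Bernstein term. A union bound over $\cE^c$ and the two $\delta/3$-probability events, followed by relabelling $\delta \leftarrow 3\delta$ (which only changes the implied constants), yields the claim.

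The main obstacle — and essentially the only nontrivial step — is the joint choice of $R$ and $p$: $R$ must be large enough that $\P(\max_i|X_i|\ge R)\le\delta$, which forces $R\gtrsim_k B\log^{k/2}(n/\delta)$ and is what turns the inner $\log(1/\delta)$ into $\log(n/\delta)$ in the second term, yet the Bernstein contribution $R\log(1/\delta)$ must still match the target rate and the bias correction must come out lower-order; all three constraints are satisfied by the single choice $p \asymp \log(n/\delta)$. A minor bookkeeping point is the hypothesis "$p\ge 2$": one takes $p = \max(2,\,c\log(n/\delta))$ so the moment bound is applicable, and in the degenerate regime $\log(n/\delta)=O_k(1)$ the stated inequality follows directly from Bernstein applied to the (then essentially bounded) $X_i$. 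Every implied constant depends only on $k$, consistent with the $\lesssim_k$ in the statement.
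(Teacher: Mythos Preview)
The paper does not prove this lemma; it is stated in Appendix~\ref{app:additional_technical} as a citation from \cite{damian2024computational} with no accompanying argument. Your truncation-plus-Bernstein proof is correct and is precisely the standard route for sums of sub-Weibull variables, so there is nothing to compare against in this paper and your proposal stands on its own.
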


\begin{lemma}[\cite{tropp2015introduction}]
    Let $\bX_k$ be i.i.d random matrices of dimensions $d_1\times d_2.$ Assume that each matrix is bounded by 
    \[ \forall k,
    \|\bX_k-\E[\bX_k]\|_{\op}\leq L.
    \]
    Consider $v(\bZ)=\max\{\|\sum_{k=1}^{n}\E[(\bX_k-\E[\bX_k])(\bX_k-\E[\bX_k])^\sT]\|,\|\sum_{k=1}^{n}\E[(\bX_k-\E[\bX_k])^\sT(\bX_k-\E[\bX_k])]\|\},$ then with probability at least $1-\delta,$
    \begin{equation*}
        \left\|\sum_{k=1}^{n}(\bX_k-\E[\bX_k]) \right\|_{\op}\leq \frac{L}{3}\log\left(\frac{d_1+d_2}{\delta}\right)+\sqrt{4v\log\left(\frac{d_1+d_2}{\delta}\right)}.
    \end{equation*}
\end{lemma}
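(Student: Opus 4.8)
The statement is the standard rectangular matrix Bernstein inequality, so the plan is to reproduce Tropp's proof via the matrix Laplace transform method. First I would reduce to the Hermitian case through the dilation trick: replace each centered summand $\bA_k := \bX_k - \E[\bX_k]$ by the self-adjoint matrix $\bY_k$ of size $(d_1+d_2)\times(d_1+d_2)$ whose two off-diagonal blocks are $\bA_k$ and $\bA_k^\sT$ and whose diagonal blocks vanish. Then the $\bY_k$ are independent, mean zero, $\|\bY_k\|_\op = \|\bA_k\|_\op \leq L$, $\lambda_{\max}\!\big(\sum_k \bY_k\big) = \|\sum_k \bA_k\|_\op$, and $\E[\sum_k \bY_k^2] = \diag\!\big(\sum_k \E[\bA_k\bA_k^\sT],\, \sum_k \E[\bA_k^\sT\bA_k]\big)$, whose operator norm equals $v$. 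Thus it suffices to prove the symmetric Bernstein tail $\P\{\lambda_{\max}(\sum_k \bY_k) \geq t\} \leq (d_1+d_2)\exp\!\big(\tfrac{-t^2/2}{v + Lt/3}\big)$ and then invert it.

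Second, I would apply the matrix Laplace transform bound: for $\theta > 0$, Markov's inequality together with $e^{\theta\lambda_{\max}(\sum_k\bY_k)} = \lambda_{\max}\!\big(\exp(\theta\sum_k\bY_k)\big) \leq \tr\exp(\theta\sum_k\bY_k)$ gives $\P\{\lambda_{\max}(\sum_k\bY_k) \geq t\} \leq e^{-\theta t}\,\E\big[\tr\exp(\theta\sum_k\bY_k)\big]$. The crucial step is subadditivity of matrix cumulant generating functions: using Lieb's concavity theorem (concavity of $\bA \mapsto \tr\exp(\bH + \log\bA)$ on the positive-definite cone) together with Jensen's inequality applied one summand at a time, $\E\big[\tr\exp(\theta\sum_k\bY_k)\big] \leq \tr\exp\!\big(\sum_k \log\E[e^{\theta\bY_k}]\big)$. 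This reduces matters to a per-summand bound on $\log\E[e^{\theta\bY_k}]$.

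Third, for the per-summand estimate I would transfer the scalar inequality $e^z \leq 1 + z + \tfrac{z^2/2}{1 - |z|/3}$ (valid for $|z| < 3$) to the matrix argument $z = \theta\bY_k$ on the range $0 < \theta < 3/L$; since $\E[\bY_k] = 0$ this yields $\E[e^{\theta\bY_k}] \preceq \bI + g(\theta)\,\E[\bY_k^2] \preceq \exp\!\big(g(\theta)\,\E[\bY_k^2]\big)$ with $g(\theta) = \tfrac{\theta^2/2}{1-\theta L/3}$, using $\bI + \bM \preceq e^{\bM}$. Taking logarithms (operator monotone), summing, and invoking monotonicity of $\tr\exp$ gives $\E\big[\tr\exp(\theta\sum_k\bY_k)\big] \leq (d_1+d_2)\exp\!\big(g(\theta)\,v\big)$, hence $\P\{\lambda_{\max} \geq t\} \leq (d_1+d_2)\exp\!\big(-\theta t + g(\theta)v\big)$. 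Choosing $\theta = t/(v + Lt/3) \in (0,3/L)$ produces the Bernstein tail $(d_1+d_2)\exp\!\big(\tfrac{-t^2/2}{v+Lt/3}\big)$; finally, setting the right-hand side equal to $\delta$ and solving the resulting quadratic in $t$, splitting the sub-Gaussian and sub-exponential regimes via $\sqrt{a+b}\leq\sqrt a+\sqrt b$, gives $t \leq \tfrac{L}{3}\log\tfrac{d_1+d_2}{\delta} + \sqrt{4v\log\tfrac{d_1+d_2}{\delta}}$, up to the stated constants, which is the claim.

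The main obstacle is Lieb's concavity theorem, which underlies the subadditivity of matrix CGFs and is the only genuinely deep ingredient — everything else is a chain of scalar inequalities lifted through the spectral mapping theorem and operator monotonicity. If one wished to avoid Lieb's theorem, one could substitute the more elementary Golden–Thompson inequality, at the price of replacing $v$ by $2v$ inside the exponent; pinning down the exact advertised constants $\tfrac13$ and $4$ also takes some care in the optimization and inversion step.
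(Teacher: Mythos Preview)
The paper does not prove this lemma at all: it is stated as the final item in the appendix of technical lemmas with only a citation to \cite{tropp2015introduction} and no argument, so there is no ``paper's own proof'' to compare against. Your proposal is a correct and faithful outline of Tropp's standard proof of the rectangular matrix Bernstein inequality---Hermitian dilation, matrix Laplace transform, Lieb's concavity for subadditivity of the CGFs, the scalar Bernstein MGF bound transferred spectrally, and then inversion of the tail---and the constants you arrive at match those stated.
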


\end{document}